\documentclass[11pt, oneside]{article}
\usepackage{geometry}

\usepackage{times}
\usepackage{comment}
\usepackage{enumitem}
\usepackage{amsmath, amsthm, amssymb, natbib, graphicx, url, algorithm2e}
\usepackage{bbm}
\usepackage{xcolor}
\usepackage{hyperref} 
\usepackage{url}  
\usepackage[nameinlink,capitalize]{cleveref}
\definecolor{dgreen}{rgb}{0,0.5,0}
\hypersetup{
  colorlinks=true,
  linkcolor=blue,
  filecolor=blue,
  citecolor = dgreen,      
  urlcolor=cyan,
}

\newcommand{\keygen}{\text{KeyGen}}
\newcommand{\verify}{\text{Verify}}
\newcommand{\sign}{\text{Sign}}
\newcommand{\msg}{\text{msg}}
\newcommand{\sig}{\text{sig}}
\newcommand{\pk}{\text{pk}}
\newcommand{\sk}{\text{sk}}
\newcommand{\R}{\mathbb{R}}
\newcommand{\Z}{\mathbb{Z}}
\newcommand{\N}{\mathbb{N}}
\newcommand{\supp}{\text{supp}}
\newcommand{\poly}{\text{poly}}
\newcommand{\polylog}{\text{polylog}}
\theoremstyle{plain}
\newtheorem{theorem}{Theorem}
\newtheorem*{theorem*}{Theorem}

\newtheorem{lemma}[theorem]{Lemma}
\newtheorem{corollary}[theorem]{Corollary}

\newtheorem{proposition}[theorem]{Proposition}

\theoremstyle{definition}
\newtheorem{definition}[theorem]{Definition}

\newtheorem{remark}[theorem]{Remark}

\newcommand{\Var}{\mathrm{Var}}
\newcommand{\E}{\mathbb{E}}
\newcommand{\CDH}{\mathrm{CDH}}
\newcommand{\norm}[1]{\| #1\|}
\newcommand{\valid}{\text{valid}}
\newcommand{\set}[1]{\{#1\}}

\begin{document}
\title{A computational phase transition for learning-to-sample\\
from Ising models}
\author{Andrej Risteski\thanks{Machine Learning Department, Carnegie Mellon University. \texttt{aristesk@andrew.cmu.edu}} \and Thuy-Duong Vuong\thanks{Department of Computer Science and Engineering, UC San Diego. \texttt{thvuong@ucsd.edu}}\footnote{Authors are listed in alphabetical order}}
\maketitle

\begin{abstract}%
We study \emph{learning-to-sample}---a basic algorithmic task underlying generative modeling---for Ising models, a standard testbed for algorithmic ideas in both theoretical
computer science and machine learning. Given i.i.d.\ samples of an unknown target distribution, the goal of learning-to-sample is to learn a computationally efficient generation procedure that produces new samples following approximately the same distribution.
We construct a family of Ising models of constantly bounded-width which lie just beyond the spectral threshold $\lambda_{\max}(J)-\lambda_{\min}(J)=1$, and show that learning-to-sample for this family is computationally hard under standard cryptographic assumptions, even when the learner is given both polynomially many i.i.d.\ samples from the model and explicit access to its parameters. Combined with results of \citep{AJKPV24,KLV25} showing tractability of learning-to-sample below the spectral threshold, this establishes a sharp computational phase transition at the spectral threshold. Moreover, combined with prior results on parameter learning for bounded-width Ising models \citep{KM17,WSD19,VML20}, this shows that learning-to-sample can be more difficult than parameter learning. Finally, we show that any efficient learner for these hard instances exhibits a natural memorization-hallucination dichotomy: the learner must either output configurations that, after a simple transformation, match the (transformed) training data or place substantial mass on configurations of negligible probability under the target distribution.
\end{abstract}

\section{Introduction}

Modern generative models---including GANs \citep{goodfellow2014generative}, variational autoencoders \citep{kingma2013auto}, and diffusion models \citep{song2020score}---can all be viewed, at a high level, as attempts to solve the same algorithmic task: from i.i.d.\ samples of an unknown target distribution, learn an efficient procedure that generates new samples following approximately the same law. We study this task in its most abstract form, known as \emph{learning-to-sample}\footnote{This task has been studied since the 1990s under the name ``learning a generator" \cite{KMRRSS94}; however, we adopt the more modern and popular term ``learning-to-sample," introduced by \cite{moitra2021learning}}. By adopting this viewpoint, we can study the underlying principles of generative modeling, and its core challenges such as memorization and hallucination, without being constrained by architecture-specific details.

From a complexity-theoretic perspective, learning-to-sample is distinct from both \emph{parameter learning} and \emph{density estimation}. In parameter learning, the goal is to estimate the underlying model from data; in density estimation, the goal is to output a function that approximates the target density. By contrast, learning-to-sample asks for an efficient generator of fresh samples. A natural strategy is to first learn the parameters and then sample from the learned model. But for many natural families, sampling remains computationally hard even when the parameters are known exactly. A classical example is the Ising model, for which approximate sampling is intractable in general \citep{sly2010computational,SS12}.

This raises a basic question: can access to training samples, together with the freedom to design a new sampler, circumvent the computational barriers that arise when one is given only the model parameters? We show that the answer is no. In fact, we prove a stronger statement: learning-to-sample remains computationally intractable even when the learner is given both polynomially many samples from the target distribution and explicit access to the model parameters. Our hard instances moreover lie in a regime where parameter learning is known to be efficient.

Precisely, our main result concerns Ising models\footnote{See \cref{def:Ising model} for further details} near the spectral threshold $\lambda_{\max}(J)-\lambda_{\min}(J)=1,$
which marks a tractable regime for sampling with parameter access \citep{EKZ20,AJKPV21} and, more recently, for learning-to-sample from polynomially many samples \citep{AJKPV24,KLV25}\footnote{See the second main result of \cite{KLV25} on learning-to-sample for Ising models, which applies for a broader class of Ising models that includes those where the eigenvalues of the interaction matrix lie in the interval $[0,1).$ Note that for any Ising model with $\lambda_{\max}(J)-\lambda_{\min}(J)< 1,$ by adding a multiple of the identity to the interaction matrix $J$, we can ensure that the eigenvalues of the interaction matrix lie in the interval $[0,1)$ while not changing the underlying distribution.}. We show that for every constant $\gamma>1$, there are Ising models whose interaction matrix $J$ and external field $h$ satisfy:
\[
1< \lambda_{\max}(J)-\lambda_{\min}(J)\le \gamma
\qquad\text{and}\qquad
\norm{J}_{\infty}+\norm{h}_{\infty}=O(1)
\]
for which learning-to-sample is still computationally hard. Thus, from the perspective of learning-to-sample with access to both samples and parameters, there is a sharp computational transition at the same spectral threshold.

\subsection{Contributions}

\noindent\textbf{Hardness arbitrarily close to the spectral threshold.}
Under standard cryptographic assumptions, we construct Ising models that lie an arbitrarily small additive factor beyond the tractable regime $\lambda_{\max}(J)-\lambda_{\min}(J)\le 1$, yet remain hard for learning-to-sample even when the learner is given the model parameters together with polynomially many training samples.

\medskip
\noindent\textbf{A separation from parameter learning.}
Our hard instances lie in a regime where parameter learning is efficient by known results \citep{KM17,WSD19,VML20}. To the best of our knowledge, this gives the first separation under standard cryptographic assumptions in which parameter learning is computationally easy but learning-to-sample is hard.

\medskip
\noindent\textbf{A formal memorization--hallucination dichotomy.}
One empirical concern in generative modeling is that a learned generator may either memorize its training set or hallucinate unrealistic outputs. We prove an analogue of this phenomenon for our hard Ising instances: any efficient learner must either output configurations that, after a simple transformation, match the transformed training data (i.e. memorize) or place substantial mass on configurations of negligible probability under the target distribution (i.e. hallucinate). Specifically, our hard instances are Ising models on $N$ vertices, where there exists a simple, efficiently computable map $\Psi:\set{\pm 1}^N \to \set{0,1}^d$ with $d= \Omega(N^{\Theta(1)})$, so that any efficient learner must either generate configurations whose images under $\Psi$ match those of the training data, or must output configurations with negligibly small mass under the target distribution.
In particular, this phenomenon rules out the possibility of an efficient algorithm that can reliably learn-to-sample\footnote{See \cref{remark:relating hard for learning to generalize with learning to sample,remark:relating hard for learning to generalize in family of distribution vs for single distribution,remark:learning pk from samples and relation to sample amplification}.\label{footnoteref:connection to learning to sample and existence of efficient discriminator}}, as well as the possibility of sample amplification \citep{AGSV20}\footnote{See \cref{remark:relating hard for learning to generalize with sample amplification,remark:relating hard for learning to generalize in family of distribution vs for single distribution,remark:learning pk from samples and relation to sample amplification}\label{footnoteref:connection to sample amplification}}.
\newpage
Informally, our main result is as follows (the formal result is in \cref{cor:main hardness of near critical Ising model from computational diffie hellman}): 

\begin{theorem*}[Informal]\label{thm:main-informal}
Assume a standard hardness hypothesis for the computational Diffie--Hellman problem. Then for every constant $\gamma>1$ there exists an explicit family of Ising models on $N$ vertices such that for every Ising model $\mu_{J,h}$ in this family, the parameters $J$, $h$ satisfy:
\[
1< \lambda_{\max}(J)-\lambda_{\min}(J)\le \gamma
\qquad\text{and}\qquad
\norm{J}_\infty+\norm{h}_\infty=O(1),
\]
but  no efficient algorithm can reliably learn to generate fresh
typical samples from a model $\mu_{J,h}$ randomly drawn from this family, even when the algorithm is
given both the parameters $(J,h)$ and polynomially many i.i.d.\ samples from
$\mu_{J,h}$.


Moreover, there exists an efficiently computable map $\Psi$ on $\{\pm1\}^N$ and for each distribution $\mu_{J,h}$, a set $\Lambda_{J,h}\subseteq\{\pm1\}^N$ with
\[
\mu_{J,h}(\Lambda_{J,h})\le \exp(-N^{\Omega(1)})
\]
such that for every efficient learner that takes the parameter $(J,h)$ of an Ising model randomly drawn from this family and a training multiset $X$ of polynomially many i.i.d.\ samples from $\mu_{J,h}$ as input, the output $y$ satisfies
\[
\Pr\!\Big[y\in \Psi^{-1}(\Psi(X))\cup \Lambda_{J,h}\Big]\ge 1-\exp(-N^{\Omega(1)}).
\]
\end{theorem*}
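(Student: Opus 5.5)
We will prove the statement by embedding a cryptographic "hard-to-generalize" distribution into an Ising model. Concretely, fix a group $\mathbb{G}$ of order $q$ with generator $g$ in which CDH is hard. A typical object is a tuple $(g^a, g^b, g^{ab})$; the public parameter is $g^{s}$ for a secret $s$, and a sample is $(g^{r}, g^{rs})$ with $r$ uniform. The "training data" reveals many pairs $(g^{r_i},g^{r_i s})$. Producing a fresh valid pair $(g^{r},g^{rs})$ with $g^r\notin\{g^{r_i}\}$ amounts, via standard random self-reducibility, to solving CDH on $(g^s, g^r)$. Here the reduction embeds a CDH challenge $(g^s,g^t)$: training samples are generated knowing exponents $r_i$ relative to $g^t$ in a way that keeps them simulable, and a new valid output is then rerandomized into $g^{st}$. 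Validity must be efficiently checkable, so I would use a group with a bilinear pairing (or a decisional oracle in the gap-DH setting) or, alternatively, an unforgeable signature scheme built from CDH. In that case $\Psi$ is "extract the message" and $\Lambda$ is the set of invalid-signature encodings. The notation \keygen, \sign, \verify{} suggests the latter: samples are $(\msg,\sig)$ with random $\msg$, and existential unforgeability gives that any efficient output is either an old message (memorization) or an invalid pair.

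The second step encodes the uniform distribution over valid $(\msg,\sig)$ strings as an Ising model. Write verification as a polynomial-size circuit $C_{\pk}$. We then build the Ising model in three layers:
\begin{itemize}
\item \emph{Gadgets.} Each bit of $(\msg,\sig)$, and each wire of the verification computation, is a block of $m$ spins. These are coupled ferromagnetically (Curie--Weiss-like blocks with inverse temperature slightly above $1$) so that each block is essentially magnetized $\pm$.
\item \emph{Gates.} Gate constraints are enforced by bounded-strength interactions between block magnetizations.
\item \emph{Output.} The output wire is pinned to "valid" by an external field.
\end{itemize}
The interaction matrix is block diagonal with Curie--Weiss blocks $\frac{\beta}{m}\mathbf{1}\mathbf{1}^\top$ plus small inter-block couplings of norm $O(\epsilon)$. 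This gives $\lambda_{\max}-\lambda_{\min}\le\beta+O(\epsilon)\le\gamma$ and bounded $\norm{J}_\infty+\norm{h}_\infty$. Note that only $\lambda_{\max}>1$ is needed to get metastable two-phase blocks, and $\beta$ can be taken arbitrarily close to $1$.

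Third, we argue that $\mu_{J,h}$ conditioned on the "typical" event (all blocks strongly magnetized, and the decoded wires consistent with a correct computation) projects, under $\Psi$ = majority decoding of the message blocks, to nearly the uniform valid distribution. We set $\Lambda_{J,h}$ to be the complement of the typical set intersected with "decodes to an invalid signature". Two estimates then give $\mu(\Lambda)\le\exp(-N^{\Omega(1)})$:
\begin{itemize}
\item Large deviations for Curie--Weiss blocks, where a wrong-majority or unmagnetized block costs $e^{-\Omega(m)}$.
\item An energy gap: violated gates cost $\Omega(\epsilon m)$ energy, against only $\poly(N)$ entropy of decodings.
\end{itemize}
We choose $m=N^{\Theta(1)}$. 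Sampling from $\mu$ is possible for the reduction, because the reduction knows how to generate $(\msg,\sig)$ pairs (or simulates them in the unforgeability game via the signing oracle) and can then fill in the blocks by sampling a Curie--Weiss block conditioned on its sign, exactly or to negligible error. Parameters $(J,h)$ depend only on $\pk$, so they are public.

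Finally, we combine these pieces. Given a learner whose output $y$ avoids $\Psi^{-1}(\Psi(X))\cup\Lambda$ with probability $\ge \exp(-N^{o(1)})$, i.e. non-negligibly, decoding $y$ yields a valid signature on a message not in the training set. This contradicts unforgeability, hence CDH. The failure to learn-to-sample follows, since the true distribution puts negligible mass on $\Psi^{-1}(\Psi(X))$ (the messages are random with $\omega(\log N)$ bits of entropy) and on $\Lambda$, whereas the learner's output lies in their union.

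The main obstacle is the third step: showing that the gadget Ising model's measure is concentrated, up to $\exp(-N^{\Omega(1)})$ error, on correctly decoded valid computations, while keeping the spectral width within $(1,\gamma]$ for $\gamma$ arbitrarily close to $1$. Gate couplings must be strong enough to enforce logic but small enough not to shift the spectrum. I would first try scaling the gate interactions as $\epsilon/m$ per spin pair. The enforcement energy is then $\Theta(\epsilon m)$, which still dominates for large $m$.
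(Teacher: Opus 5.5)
\textbf{Gap: simulating the training data.} Your spectral and width bookkeeping for Curie--Weiss blocks plus small inter-block couplings is fine. The step that fails is the claim that the reduction can produce samples of the near-critical model as follows: take a valid $(\msg,\sig)$ from the signing oracle, set the wire phases accordingly, and fill each block with an independent Curie--Weiss sample conditioned on its sign. That procedure does not sample from $\mu_{J,h}$, for two reasons.

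(i) Conditioned on the phase vector, the blocks are not independent Curie--Weiss blocks. With your scaling, gate couplings are $\epsilon/m$ per pair with $\epsilon$ a fixed constant, so that $\epsilon m$ beats the number of wires. The couplings then shift each block's magnetization by $\Theta(\epsilon)$, far outside its $O(1/\sqrt m)$ fluctuation window. Simulated and true configurations are therefore nearly disjoint in total variation.

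(ii) Even with exact conditional filling, the phase marginal of $\mu_{J,h}$ is not the uniform law on valid computations. The partition functions $Z(Y)$ differ between valid phase vectors $Y$, because the second-order shifts of the block magnetizations depend on the local pattern of phases around each gate. So you get at best multiplicative closeness per sample, and any constant per-sample discrepancy compounds over $k$ samples.

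The learner's guarantee holds only on genuine i.i.d.\ training tuples, so nothing forces it to forge on your simulated input. Weakening the couplings so that $\epsilon\sqrt m\ll 1$ makes both errors only polynomially small. That cannot give the $1-\exp(-N^{\Omega(1)})$ bound against arbitrarily large polynomial $k$ with a fixed polynomial-size family.

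This mismatch is exactly what the paper's main technical work handles. It first builds a dense model that is $e^{-w}$-close to uniform on valid encodings. It then builds a phase-gadget model whose phase marginal satisfies only $Z_{\mathcal H}(y)=(1\pm\epsilon)A\,\mu_H(y)$ with constant $\epsilon$, and bridges the two by rejection sampling. Each dense sample $y$ is accepted with probability $\approx Z_{\mathcal H}(y)/((1+\epsilon)A\mu_H(y))$, using a counting-to-sampling estimate of $Z_{\mathcal H}(y)$. It is then completed by sampling $\tilde\mu(\cdot\mid Y(\sigma)=y)$ to error $\delta$, in time polynomial in $N$ and $\log(1/\delta)$, via Jerrum--Sinclair.

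The gadget design is what makes these steps tractable. The interface spins $S_v$ are separate from the readout spins $R_v$, and inter-gadget edges are perfect matchings between interface spins. Hence pinning $\mathcal S$ decouples the model into independent Curie--Weiss pieces, and the conditional law of $\mathcal S$ is near-product and computable to a constant factor. Because acceptance is nearly uniform, the accepted sub-tuple is a near-uniform subsample. This is why the memorization/hallucination guarantee transfers with $\tilde k=\lceil (k+1)/2\rceil$ and only a constant-factor loss in $\eta$. Your proposal needs some analogue of this correction step.

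Two smaller issues. First, your initial candidate (public $g^s$, samples $(g^r,g^{rs})$) is trivially samplable: choose $r$ and output $(g^r,(g^s)^r)$. So only the signature route is viable. Second, for oracle-generated pairs with random messages to have the Ising model's law (uniform over valid pairs), the scheme must satisfy uniformity and exact regularity, as Waters' scheme does. You did not address this.
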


The set $\Lambda_{J,h}$ should be thought of as the \emph{hallucination set}: it has negligible mass under the target law $\mu_{J,h}$. The preimage $\Psi^{-1}(\Psi(X))$ captures the \emph{memorization set}. The full reduction in \cref{cor:main hardness of near critical Ising model from computational diffie hellman} is more general and tracks the tradeoff between the assumed hardness of CDH and the allowable sample/time budget of the learner (see also \cref{remark:concrete hardness parameters based on CDH}). Moreover, there exists an efficient discriminator that, given access to the training multiset, can distinguish between samples drawn from the target distribution and those produced by any efficient learner (see \cref{footnoteref:connection to learning to sample and existence of efficient discriminator}). The existence of such a discriminator is particularly relevant in the context of  Generative Adversarial Networks (GANs).  

We defer further discussions to \cref{sec:mainresults} and \cref{sec:puttingtogether}.

\subsection{Related work}
The first work to formalize the learning-to-sample problem is \cite{KMRRSS94}, albeit under the name ``learning a generator". Additionally, \cite{KMRRSS94} constructs a hard instance for this problem in the setting where the learner only has sample access to the target distribution, based on the existence of a pseudorandom function. Specifically, if $f:\set{0,1}^n \to \set{0,1}^n$ is a pseudorandom function, the hard instance is the distribution $\mathcal{D}_f$ on $\set{0,1}^{2n}$, where the first $n$ bits are uniformly random and the last $n$ bits are the result of applying $f$ to the first $n$ bits. Note that $\mathcal{D}_f$  is \emph{not} hard if the learner has an explicit description\footnote{Here, explicit description means a circuit that implements the function $f.$} of the function $f$. Furthermore, unlike with our hard instance, the algorithm that simply samples uniformly from $\set{0,1}^{2n}$ can convince any efficient discriminator that it has successfully learned to sample from $\mathcal{D}_f,$
since samples from $\mathcal{D}_f$ are computationally indistinguishable from uniformly random strings in $\set{0,1}^{2n}.$  

\cite{BMS08} introduced the idea of embedding a general circuit into Markov random fields (MRFs), a class of distributions that generalizes Ising models, but \cite{BMS08}'s results are on the hypothesis testing and total-variation estimation problems for \emph{censored} MRFs, i.e. the induced distribution on a subset of vertices in an MRF. More recently, \cite{moitra2021learning} revived the interest in the learning-to-sample problem, and they are also the first in the TCS literature to name it ``learning-to-sample''. Building on \cite{BMS08}, \cite{moitra2021learning} show the hardness of learning-to-sample on censored MRFs in the sample-only setting. 

Our result differs from and strengthens prior works in several ways: (1) we work with ordinary Ising models rather than censored ones; (2) our hard instance holds even when the learner is given the Ising model's parameters in addition to samples; (3) we establish a hard instance in the near-critical regime, and thus show a sharp computational phase transition for the learning-to-sample problem; (4) our hard instance enjoys an efficient discriminator that differentiates between samples from the true data distribution and those produced by any efficient learner.


A complementary result of \cite{KLV25} shows that learning-to-sample can be easier than parameter learning on certain Ising-model families. Our result goes in the opposite direction: we give a regime in which parameter learning is efficient but learning-to-sample is hard. This shows that these two tasks can be separated in both directions.

Recently, there has been interest in proving hardness for other, more specific generative frameworks ---mostly those closely related to diffusion models. For example, \citep{ghio2024sampling, song2024cryptographic, MV25, chewi2025ddpm} study the hardness of estimating the denoising score functions---the crucial object used to train and run a diffusion model---in the sample-only setting. Our focus here is different: rather than focusing on a particular generative model, we study the general learning-to-sample task. In particular, our result addresses the question of showing hardness for learning-to-sample raised by \cite[Open Problem 5]{chewi2025ddpm}, though for Ising models rather than for Gaussian mixtures.

The memorize-or-hallucinate property of our hard instances is also related to the notion of \emph{sample amplification}, first introduced by \cite{AGSV20}. In sample amplification, the goal is not to output a sampler, but merely to transform $k$ samples from an unknown distribution into a larger batch of $\ell>k$ samples whose joint law is close to that of $\ell$ i.i.d.\ draws. This is formally weaker than learning-to-sample, but is also ruled out for our hard instances; see \cref{footnoteref:connection to sample amplification}.

\subsection{Technical overview}
The construction of the Ising models in \cref{thm:main-informal} is based on two key ideas. 
The first idea is to embed a function into the Hamiltonian of an Ising model, where the function is computationally easy to evaluate, but sampling from its pre-image remains computationally hard, even with access to pre-image samples. This asymmetry matches the structure of digital signature schemes. Consequently, using the circuit embedding technique of \citep{moitra2021learning,BMS08}, we embed the verification algorithm of a signature scheme, which checks the validity of a message-signature pair, into a family of Ising models. The key insight is that sampling from these Ising models corresponds to generating valid message-signature pairs---or equivalently, forging signatures. By the standard security definition of signature schemes, no efficient algorithm can forge a signature for a new message $\msg^*$ after querying signatures for messages $\msg^1, \dots, \msg^k$, where $\msg^* \neq \msg^i$ for all $i \in \{1, \dots, k\}$. For our main result, we use a specific signature scheme that is secure under the computational Diffie-Hellman (CDH) assumption; consequently, the constructed Ising models are hard for learning-to-sample under the same assumption\footnote{Our techniques also apply to other signature schemes; see \cref{thm:main hardness of near critical Ising based on signature scheme} and \cref{remark:relaxing the regularity condition}.}.
We will sketch the idea, and we refer to \cref{sec:embed general signature scheme into Ising} for details of the construction and the formal proof.

Suppose there exists an algorithm $\mathcal{A}$ that can learn-to-sample Ising models from this family: we can then construct an adversary $\mathcal{B}$ that forges signatures. Namely, $\mathcal{B}$ computes the parameters of the target distribution using the code of the verification algorithm, queries for signatures of multiple messages, and provides these message-signature pairs as training samples  to $\mathcal{A}$, along with the model parameters. If $\mathcal{A}$ generates a valid message-signature pair for a new message not in the queried set, it breaks the security of the signature scheme. Otherwise, $\mathcal{A}$ either has to hallucinate (i.e. produce invalid message-signature pairs) or memorize (i.e. produce valid message-signature pairs only for previously queried messages).
We formalize this hallucination-memorization phenomenon as hardness for \emph{learning-to-generalize}, and discuss its relationship with learning-to-sample in \cref{subsec:learning to generalize prelim}.

The second idea is to embed an Ising model $\mu$ from this family into a new Ising model $\tilde{\mu}$ with constantly-bounded width that is near the phase transition threshold, while preserving the hardness of learning-to-generalize. We sketch this construction, and refer to \cref{sec:embed general Ising into near critical Ising} for full details.
While our construction draws inspiration from prior works \citep{sly2010computational, SS12, GKK24}, it is more general: we can handle an Ising model $\mu$ with arbitrary external fields and interactions, whereas \citep {sly2010computational, SS12, GKK24} focus on MAXCUT instances (i.e., no external fields and only anti-ferromagnetic interactions). 

We identify an Ising model with a weighted graph in which the interaction matrix specifies the edge weights and the external field specifies the vertex weights. Our construction transforms the weighted graph $H$ corresponding to an Ising model $\mu$ into the weighted graph $\mathcal{H}$ corresponding to $\tilde{\mu}$ as follows. For each vertex $v$ in $H,$ we construct a \emph{gadget graph} $G_v$ on $W_v = S^{\text{edge}}_v \cup S^{\text{vertex}}_v \cup R_v $, where $S^{\text{edge}}_v , S^{\text{vertex}}_v, R_v $ are pairwise disjoint, and $G_v$ is the complete graph on $W_v$ with edges between vertices in $S^{\text{edge}}_v \cup S^{\text{vertex}}_v $ removed.
Each vertex in $S^{\text{vertex}}_v$ receives a uniform vertex weight with sufficiently small absolute value and the same sign as $h_v,$ the vertex weight of $v$ in $H.$ 
The set $ S^{\text{edge}}_v$ is further partitioned into disjoint subsets $S_v^u$, one for each neighbor $u$ of $v$ in $H.$
For neighboring vertices $u,v$ in $H$, we require  $|S_v^u|=|S_u^v|$, and add a perfect matching between $S_v^u$ and $S_u^v;$ each matching edge has the same weight, where the weight has sufficiently small absolute value and the same sign as $J_{uv}$, the weight of the edge $\set{u,v}$ in $H.$
The graph $\mathcal{H}$ is then defined as the union of the gadget graphs $ G_v$ for all $v$ in $H$ together with the edges in the matchings.

For a configuration $\sigma$ of $\tilde{\mu},$ we let $y_v(\sigma)=\text{sign}(\sum_{i\in R_v}\sigma_i)$\footnote{We choose $ |R_v|$ to be odd, ensuring that $\sum_{i\in R_v}\sigma_i$ is never $0$.} be the \emph{phase} of the gadget graph $G_v,$ and let $Y(\sigma) = (y_v(\sigma) )_{v\in V_H}$ be the \emph{phase vector} of $\sigma$. For a given constant $\epsilon\in (0,1),$
by appropriately choosing the size of $S_v^{\text{vertex}}$, $S_v^u$ for neighbors $u$ of $v,$ and $R_v,$ we can ensure that the resulting model $\tilde{\mu}:\set{\pm 1}^N \to \R_{\geq 0}$ approximates $\mu: \set{\pm 1}^m\to \R_{\geq 0}$ \footnote{where $\mu, \tilde{\mu}$ are unnormalized density functions of the Ising model corresponding to $H, \mathcal{H}$ respectively, and $N$ has polynomial dependence on $m$ and $1/\epsilon$.} in the following sense:

\begin{equation}\label{eq:new Ising model approximate original Ising model informal}
 \forall y\in \set{\pm 1}^m:  (1-\epsilon) \Pr_{y\sim \mu}[y]\leq  \Pr_{\sigma \sim \tilde{\mu}}[Y(\sigma) = y] \leq (1+\epsilon) \Pr_{y\sim \mu}[y]
\end{equation}
 Moreover, denoting $Z_{\tilde{\mu}}(y) =\sum_{\sigma\in \set{\pm 1}^N: Y(\sigma) = y} \tilde{\mu}(\sigma),  $ we can compute a normalization factor $A$ so that:
\begin{equation}\label{eq:new Ising model approximate original Ising model without normalization informal}
  \forall y\in \set{\pm 1}^m:  (1-\epsilon) \mu(y) \leq  \frac{Z_{\tilde{\mu}}(y)}{A}  \leq (1+\epsilon) \mu(y)\footnote{Where $\tilde{\mu}(\sigma)$ and $\mu(y)$ refer to the unnormalized densities.}
\end{equation}

Next, we show that if the hallucination-memorization phenomenon occurs in $\mu$, it must also occur in $\tilde{\mu}$. Equivalently, we show that an algorithm $\tilde{\mathcal{A}},$ which uses $\tilde{k}$ samples from $\tilde{\mu}$ and avoids the hallucination-memorization phenomenon, can be used to construct an algorithm $\mathcal{A}$ which uses $k \approx \tilde{k}$ samples from $\mu$ while also avoiding the hallucination-memorization phenomenon. This reduction is a key technical contribution, and formally corresponds to part (3) of \cref{thm:set parameter choice}. A crucial step in the reduction is to show that, given $k$ samples from $\mu$, we can efficiently produce $\tilde{k}$ samples from $\tilde{\mu}$, with only (negligibly) small errors in total variation distance.

Motivated by \cref{eq:new Ising model approximate original Ising model informal}, a natural strategy is to treat a sample $y$ drawn from $\mu$ as a sample from $Y_*\tilde{\mu}$, the push-forward distribution of $\tilde{\mu}$ by the map $Y$, then to sample from the conditional distribution $\tilde{\mu}(\cdot |Y(\sigma) = y).$ However, this naive strategy fails since the distribution $Y_*\tilde{\mu}$ is not sufficiently close to $ \mu$: $ d_{TV}(Y_*\tilde{\mu}, \mu) \approx \epsilon,$ where we recall that $\epsilon = \Omega(1)$ is the constant from \cref{eq:new Ising model approximate original Ising model informal}. Hence, if $ \tilde{k} = \omega(1),$ then the distribution of the $\tilde{k}$ samples produced by this strategy will be very far from that of $\tilde{k}$ i.i.d. samples from $\tilde{\mu}.$ We circumvent this difficulty using rejection sampling. Rather than naively treating samples from $\mu $ as samples from $Y_*\tilde{\mu} ,$ we apply rejection sampling to a stream of samples from $\mu$: we accept a sample $y $ drawn from $\mu$ with probability $\approx \frac{Z_{\tilde{\mu}} (y) }{A\cdot \mu(y)},$ where $A$ is the normalization factor in \cref{eq:new Ising model approximate original Ising model without normalization informal}; implementing this boils down to approximately computing $Z_{\tilde{\mu}} (y)$. The accepted samples are (approximately) distributed according to $Y_*\tilde{\mu}$. Given an accepted sample $y,$ we then sample from the conditional distribution $\tilde{\mu}(\cdot |Y(\sigma) = y).$ 

Implementing rejection sampling and sampling from the conditional distribution are both technically non-trivial. First, in \cref{lem:main construction probability of phase gadget}, we show that the  \emph{conditional marginals} of $\tilde{\mu}(\cdot |Y(\sigma) = y)$ can be approximated within a constant multiplicative factor. Given access to these approximate \emph{conditional marginals}, in \cref{prop:sample from conditional distribution on phase}, we invoke the classical sampling-to-counting reduction of~\citep{JS89} to produce a sampler for $\tilde{\mu}(\cdot |Y(\sigma) = y).$ 

Next, we discuss how to implement rejection sampling, or equivalently, how to approximately compute $ Z_{\tilde{\mu}}(y)$ for a given $y\in \set{\pm 1}^m.$ Since  $ Z_{\tilde{\mu}}(y)$ is the partition function of the distribution $\tilde{\mu}(\cdot |Y(\sigma) = y),$ in \cref{prop:estimate phase probability}, we show how to approximately compute $Z_{\tilde{\mu}}(y)$ using the standard counting-to-sampling reduction of \citep{JS89}\footnote{see \cref{prop:counting-to-sampling} for a detailed statement.}, which involves sampling from the conditional distributions of $\tilde{\mu}(\cdot |Y(\sigma) = y) $ that are obtained by pinning vertices to arbitrary assignments. Similar to what we do in \cref{prop:sample from conditional distribution on phase},  we can sample from the conditional distributions of $\tilde{\mu}(\cdot |Y(\sigma) = y) $ by using the sampling-to-counting reduction of \citep{JS89} together with the approximate conditional marginals from \cref{lem:main construction probability of phase gadget}. 

It remains to bound the number of samples used by $\mathcal{A}$ and to relate the memorization-hallucination behavior of algorithm $\tilde{\mathcal{A}}$ operating on $\tilde{\mu}$ to that of the algorithm $\mathcal{A}$ which operates on $\mu$. This requires a subtle technical argument, which we describe in detail in \cref{subsec:proof of relating hardness of original to near critical Ising model}. The key observation is that each sample from $\mu$ is accepted by the rejection-sampling step with approximately uniform probability, i.e., \[\Pr_{y\sim \mu}[y \text{ is accepted}] \approx p\]
where $p$ is sufficiently large, so for $k \approx \tilde{k}$, the rejection sampling procedure described above will produce at least $\tilde{k}$ samples from $\tilde{\mu}.$
Furthermore, the approximately uniform acceptance probability implies that the accepted subset constitutes an approximately uniform random subsample of the training set. Hence, any memorization-hallucination--avoidance guarantee for $\tilde{\mathcal{A}}$ on the accepted samples lifts to a corresponding guarantee for $\mathcal{A}$ on the entire training corpus.

\section{Organization}
In \Cref{sec:preliminaries}, we will review and introduce the necessary concepts to formally state the results of the paper. In \Cref{sec:mainresults}, we formally state the main result. In \Cref{sec:embed general signature scheme into Ising}, we cover the first part of the proof: how to embed a signature scheme into an Ising model so that the learning-to-sample problem on the resulting Ising model is hard.
In \Cref{sec:embed general Ising into near critical Ising}, we cover the second part of the proof: how to embed a general Ising model into one near the critical threshold, while preserving the difficulty of the learning-to-sample problem. Finally, in \Cref{s:conclusion}, we briefly conclude and suggest some open directions.

\section{Preliminaries} 
\label{sec:preliminaries}
\subsection{Definitions related to distributions} \label{subsec:basic defs}
We first recall the definition of an Ising model.

\begin{definition}[Ising model]\label{def:Ising model}
For a symmetric matrix $J\in \R^{n\times n} $ and vector $h\in \R^n$, let the Ising model parameterized by $J$ and $h$ be the distribution $\mu_{J,h}$ supported on $ \set{\pm 1}^n,$ where
\begin{equation}\label{eq:Ising model}
    \mu_{J,h}(x) \propto \exp\left(\frac{1}{2}\langle x, J x \rangle + \langle h, x \rangle\right)
\end{equation}
We call $J$ the \emph{interaction matrix} and $h$ the \emph{external field} of the Ising model. We note the assumption $J$ is symmetric is without loss of generality, since replacing $J$ with $(J+J^\intercal)/2$ does not change the underlying distribution.

We let $ \norm{J}_{\infty} = \max_{i}\sum_{j} |J_{ij}|$ and $ \norm{h}_{\infty}=\max_i |h_i|.$ Let us also define the \emph{width} of the Ising model (see, e.g. \cite{KM17}) as: \[\operatorname{wd}(J,h) := \max_i (\sum_{j\neq i} |J_{ij}| + |h_i|) \leq \norm{J}_{\infty} + \norm{h}_{\infty}\] 
Since $J$ is symmetric, its eigenvalues are real. Let  $ \lambda_{\max} (J), \lambda_{\min}(J)$ be the maximum and minimum eigenvalues of $J$ and let us define the \emph{spectral width} as $\lambda_{\max} (J)-\lambda_{\min} (J)$.

\end{definition}

We recall several concepts related to distributions---namely pushforwards, tilts, and pinning/conditioning. Given a measurable map \(f : X \to Y\) (deterministic or random), the \emph{push-forward} of a measure \(\mu\) on \(X\) by $f$
is the measure \(f_*\mu\) on \(Y\) satisfying
\[
\int_Y g(y)\, d(f_*\mu)(y)
\;=\;
\int_X g(f(x))\, d\mu(x)
\]
for all measurable \(g : Y \to \mathbb{R}\).

For a measure $ \mu:\R^n \to\R_{\geq 0},$ the \emph{tilt} of $\mu$ by vector $ w\in \R^n$ is the measure $\mathcal T_w \mu: \R^n \to \R_{\geq 0}$ defined by $\mathcal T_w \mu (x) \propto \exp(\langle w, x\rangle) \mu(x) .$

Let $\mu: \{\pm 1\}^n \to \mathbb{R}_{\geq 0}$ be a measure, $S \subseteq [n]$ a subset of indices, and $\tau \in \{\pm 1\}^S$ an assignment to the variables indexed by $S$. The \emph{pinning} of $\mu$ to $\tau$ is the measure $\mu^\tau: \{\pm 1\}^n \to \mathbb{R}_{\geq 0}$ defined as:
\[
\mu^\tau(\sigma) = \mathbf{1}[\sigma_S = \tau] \mu(\sigma),
\]
where $\mathbf{1}[\sigma_S = \tau]$ is the indicator function that equals 1 if the restriction of $\sigma$ to $S$ matches $\tau$, and 0 otherwise.

The \emph{conditional distribution} of $\mu$ by $\tau$ is the distribution $ \mu (\cdot |\sigma_S =\tau) :\{\pm 1\}^{[n]\setminus S} \to \R_{\geq 0}$ defined by $  \mu (\sigma_{[n]\setminus S}|\sigma_S =\tau) = \frac{\mu^\tau(\sigma)}{Z}$ where $Z = \sum_{\sigma'}\mu^\tau(\sigma'). $ 

Throughout, we use the same notation for the unnormalized density function and its corresponding (normalized) distribution. The intended meaning is often clear from context.
\subsection{Learning-to-sample and related concepts}\label{subsec:learning to generalize prelim}

We begin by recalling the notion of ``learning to sample" (see \cite{moitra2021learning,KLV25} for details):
\begin{definition}[Learning to sample]\label{def:learn to sample}
    We say an algorithm $ \mathcal{A}$ $(k,\epsilon_{TV}, \delta)$-learns-to-sample from a distribution $\nu \equiv \nu_{\theta}:\set{\pm 1}^n \to \R_{\geq 0}$ parameterized by $\theta$, if, given the parameter $\theta$ and a tuple $X$ of $k$ i.i.d. samples from $\nu$, with probability $\geq 1-\delta$ over $X$, the output distribution of $\mathcal{A}$ conditioned on its input, denoted by $\mathcal{A}_{\theta,X},$ is $\epsilon_{TV}$-close to $\nu_{\theta}$ in total variation distance:
    \[d_{TV} (\mathcal{A}_{\theta,X}, \nu_{\theta})\leq \epsilon_{TV}.  \]
\end{definition}

Note that the distance between the output of $\mathcal{A}$ and $\nu_{\theta}$ is computed \emph{after} we condition on the input samples. Thus, $\mathcal{A}$ cannot simply parrot the input samples. 

We will in fact use a slightly more convenient concept we call learning-to-generalize, which captures the hallucination-memorization phenomenon:  
\begin{definition}[Learning-to-generalize]\label{def:hard distribution for learning to generalize}
    Consider a distribution $ \nu\equiv \nu_{\theta}: \Omega \to \R_{\geq 0}$ parameterized by $\theta$, a deterministic map $ \Psi$ on $\Omega,$ and a subset of configurations $ \Lambda \subseteq \Omega.$ 
    We say an algorithm $\mathcal{A}$ can $(\eta,k,t,\Psi, \Lambda)$-\emph{learn-to-generalize} $\nu$ if
    $\mathcal{A}$ takes in the parameter $\theta$ of $\nu$, a tuple $X= (x^{(i)})_i$ of $ k$ i.i.d. samples from $ \nu$, runs in $t$ time, and the probability that $ \mathcal{A}$ outputs a sample that is in $\Lambda $ or is in $\Psi^{-1} (\Psi(X))$ is $ \leq 1-\eta.$ 

      Formally, denote the output distribution of $\mathcal{A}$ on input parameter $\theta$ and tuple $X\in \Omega^k$ by $ \mathcal{A}_{\theta, X}$ then:
    \[ \Pr_{X \sim \nu_{\theta}^{\otimes k}, y \sim \mathcal{A}_{\theta, X}} [ y \in \Psi^{-1} (\Psi(X))\text{ or } y \in \Lambda  ] \leq 1-\eta.\]
    
    We say $\nu$ is $(\eta, \zeta, k, t, \Psi)$-\emph{hard for learning-to-generalize} if there exists $\Lambda\subseteq \Omega$ where
    \begin{enumerate}
        \item $\Pr_{y\sim \nu}[y\in \Lambda] \leq \zeta$
        \item No algorithm can $(\eta,k,t,\Psi, \Lambda)$-learn-to-generalize $\nu,$ i.e.,  for every algorithm $\mathcal{A}$ that takes in the parameter $\theta$ of $\nu\equiv \nu_{\theta}$, a tuple $X= (x^{(i)})_i$ of $ k$ i.i.d. samples from $ \nu$,  the output distribution $\mathcal{A}_{\theta, X}$ satisfies:
    \[ \Pr_{X \sim \nu_{\theta}^{\otimes k}, y \sim \mathcal{A}_{\theta,X}} [ y \in \Psi^{-1} (\Psi(X))\text{ or } y \in \Lambda ] > 1-\eta\]

    \end{enumerate}
    
\end{definition}

We make two remarks on this definition: 
\begin{remark}[Relationship with learning-to-sample]\label{remark:relating hard for learning to generalize with learning to sample}
If $ \nu \equiv \nu_\theta$ is $(\eta, \zeta, k, t, \Psi)$-hard for learning-to-generalize, then any algorithm that takes as input $\theta,$ $ k$ samples from $\nu$ and runs in $t$ time cannot sample from a distribution that is close to $ \nu$ in total variation distance---that is, cannot ``learn-to-sample'' in the sense of \cref{def:learn to sample}. 
Indeed, let $X$ be the tuple of $k$ input samples to the algorithm $\mathcal{A},$ and $\mathcal{A}_X \equiv \mathcal{A}_{\theta,X} $ be the output distribution. Then, we have:
\[ \E_{X\sim \nu^{\otimes k}} [d_{TV} (\mathcal{A}_X,\nu)] \geq \E_{X\sim \nu^{\otimes k}}[\mathcal{A}_X (\Psi^{-1}(\Psi(X)) \cup \Lambda  ) -\nu(\Psi^{-1}(\Psi(X)) \cup \Lambda )]\geq  1-\eta -\zeta - k (\Psi_*\nu)_{\max}, \]
where $(\Psi_*\nu)_{\max} = \max_{c} \Psi_*\nu (c)$ is the maximum mass of an element in the push-forward distribution $\Psi_*\nu.$ 

On the other hand, if $\mathcal{A}$ can $(k,\epsilon_{TV},\delta)$-learn-to-sample from $\nu $ then
\[\E_{X\sim \nu^{\otimes k}} [d_{TV} (\mathcal{A}_X,\nu)] \leq \delta + \epsilon_{TV}.\]
Hence, for $ \eta ,\zeta , \delta , \epsilon_{TV}, k (\Psi_*\nu)_{\max}$ sufficiently small so that $ \eta +\zeta +\delta + \epsilon_{TV}+ k (\Psi_*\nu)_{\max} < 1$,  $(\eta, \zeta, k, t, \Psi)$-hardness for learning-to-generalize implies $(k,\epsilon_{TV},\delta)$-hardness for learning-to-sample.
\end{remark}
\begin{remark}[Relationship with sample-amplification]\label{remark:relating hard for learning to generalize with sample amplification}
Suppose that the map $\Psi$ and membership testing for $ \Lambda$ can both be performed efficiently. 
If a distribution $\nu$ is hard for learning-to-generalize, then sample amplification (as defined in \cite{AGSV20}) is impossible as well.

To see why,
consider an algorithm $\mathcal{A}'$ that takes a tuple $X$ of $k $ i.i.d. samples from $ \nu,$ runs in time $t$ and outputs a tuple $Y\in \Omega^{\ell}$ for $\ell>k.$ We can then construct an algorithm
$\mathcal{A}$ that learns to generalize from $\nu$ by returning any element
$y \in Y \setminus \bigl(\Psi^{-1}(\Psi(X)) \cup \Lambda\bigr)$ if such a $y$
exists, otherwise outputs an arbitrary element of $\Omega.$

Let $t_{\text{check}}$ be the time to compute $ Y \setminus \bigl(\Psi^{-1}(\Psi(X)) \cup \Lambda\bigr)$ and $t' $ be the runtime of $\mathcal{A}',$ then the runtime of $\mathcal{A}$ is $ t=t' +t_{\text{check}}.$

Specifically, we say a tuple $X'\in \Omega^\ell$ is \emph{bad} if it exhibits at least one of the
following \emph{bad properties}:
\begin{enumerate}
    \item There exist $x, x' \in X'$ such that $\Psi(x) = \Psi(x')$.
    \item There exists $x \in X'$ such that $x \in \Lambda$.
\end{enumerate}
The probability that $X'\sim \nu^{\otimes \ell}$ does not exhibit the first bad property is at least
\[ \prod_{i=1}^{\ell-1} (1- i\cdot (\Psi_{*}\nu)_{\max})  \geq \exp(-O( \sum_{i=1}^{\ell-1} i\cdot (\Psi_{*}\nu)_{\max})) = \exp(-O(\ell^2(\Psi_{*}\nu)_{\max})) \geq 1 - O(\ell^2(\Psi_{*}\nu)_{\max}). \]
Thus, the probability that $ X'\sim \nu^{\otimes \ell}$ exhibits either bad property is $   O\!\left(\ell^{2}(\Psi_{*}\nu)_{\max} + \ell\,\zeta\right)$.

Now consider the output $Y$ of $\mathcal{A}'$. 
We argue that: if $Y$ is not bad, then
$Y \setminus \bigl(\Psi^{-1}(\Psi(X)) \cup \Lambda\bigr)$ is non-empty and
$\mathcal{A}$ successfully returns $y \in\Omega \setminus \bigl(\Psi^{-1}(\Psi(X)) \cup \Lambda\bigr)$.  

Indeed, suppose $Y$ is not bad and $ Y \setminus \bigl(\Psi^{-1}(\Psi(X)) \cup \Lambda\bigr)$ is empty, then   $\Psi(Y) \subseteq \Psi(X)$, and by the pigeonhole principle, there exists
$y, y' \in Y$ with $\Psi(y) = \Psi(y') = \Psi(x)$ for some
$x \in X$, contradicting the assumption that $Y $ is not bad. 

Suppose $\nu$ is $(\eta, \zeta, k, t, \Psi)$-hard for learning-to-generalize, the output $Y$ of $\mathcal{A}'$ must
be bad with high probability,
i.e.,
\[  \Pr[Y \text{ is not bad}]  \leq \Pr[\mathcal{A}\text{ returns } y \in \Omega\setminus \bigl(\Psi^{-1}(\Psi(X)) \cup \Lambda\bigr)]\leq \eta\]
On the other hand, a fresh
draw $X' \sim \nu^{\otimes\ell}$ is bad with probability at most
$O\!\left(\ell^{2}(\Psi_{*}\nu)_{\max} + \ell\,\zeta\right)$. Hence,
\[ \E_{X\sim \nu^{\otimes k}} [d_{TV} (\mathcal{A}'_X,\nu^{\otimes \ell})] \geq 1 -\eta -O\!\left(\ell^{2}(\Psi_{*}\nu)_{\max} + \ell\,\zeta\right)\]
In other words, the output
of $\mathcal{A}'$ is far from $\nu^{\otimes \ell}$ in total variation distance, ruling
out sample amplification.

\end{remark}
We extend the notion of ``hardness of learning-to-generalize'' for a single distribution in \cref{def:hard distribution for learning to generalize} to hardness for a family of distributions as follows:

\begin{definition}[Learning-to-generalize a family of distributions]\label{def:hard family of distribution for learning to generalize}
    Consider a family of distributions $ (\nu_{\theta})_{\theta \sim \mathcal{P}}$ on $\Omega$, where the parameter $\theta$ is drawn from a distribution $\mathcal{P},$ and a deterministic map $ \Psi$ on $\Omega.$ We say the family of distributions $ (\nu_{\theta})_{\theta \sim \mathcal{P}}$  is $(\eta, \zeta, k, t, \Psi)$-hard for learning-to-generalize if: For each $ \theta \in \supp(\mathcal{P}),$ there exists $ \Lambda_{\theta} \subseteq \Omega$ where 
\begin{enumerate}
    \item $\forall \theta\in \supp(\mathcal{P}): \Pr_{y\sim \nu_{\theta}}[y\in \Lambda_{\theta}] \leq \zeta$
    \item  No algorithm can $(\eta,k,t,\Psi, (\Lambda_\theta)_{\theta \sim \mathcal{P}})$-learn-to-generalize $(\nu_{\theta})_{\theta \sim \mathcal{P}},$ i.e., for any algorithm $\mathcal{A}$ that takes in the parameter $\theta$ of $\nu_{\theta}$, a tuple $X= (x^{(i)})_i$ of $ k$ i.i.d. samples from $ \nu_{\theta}$, and runs in $t$ time, the output distribution $\mathcal{A}_{\theta, X}$ of the algorithm $\mathcal{A}$ on such input satisfies:
    \[ \Pr_{\theta \sim \mathcal{P}, X \sim \nu_{\theta}^{\otimes k}, y \sim \mathcal{A}_{\theta,X}} [ y \in \Psi^{-1} (\Psi(X))\text{ or } y \in \Lambda_{\theta}  ] > 1-\eta\]
\end{enumerate}
\end{definition}
In the next remark, we discuss the relations to \cref{def:hard distribution for learning to generalize,remark:relating hard for learning to generalize with learning to sample,remark:relating hard for learning to generalize with sample amplification}. 
\begin{remark}\label{remark:relating hard for learning to generalize in family of distribution vs for single distribution}
Consider a family of distributions $ (\nu_{\theta})_{\theta \sim \mathcal{P}}$ that is $(\eta, \zeta, k, t, \Psi)$-hard for learning-to-generalize with respect to the family $(\Lambda_{\theta})_{\theta \sim \mathcal{P}}$ of subsets of $\Omega.$
For every fixed efficient learner $\mathcal{A}$, and $\eta_0,\eta_1\in (0,1),\eta=\eta_0\eta_1,$ with probability $ 1-\eta_0$ over the parameter $\theta,$ the algorithm $\mathcal{A}$ cannot $(\eta_1,k,t,\Psi, \Lambda_{\theta})$-learn-to-generalize $\nu_{\theta}.$ 

Next, we discuss the connection with learning-to-sample.
By a similar argument to that in \cref{remark:relating hard for learning to generalize with learning to sample}, we also have that for any efficient learner $ \mathcal{A}$:
\begin{align*}
    \E_{\theta\sim \mathcal{P}, X\sim \nu_\theta^{\otimes k}} [d_{TV} (\mathcal{A}_{\theta,X},\nu_{\theta})] &\geq \E_{\theta\sim \mathcal{P}, X\sim \nu_\theta^{\otimes k}}[\mathcal{A}_{\theta,X} (\Psi^{-1}(\Psi(X)) \cup \Lambda_{\theta}  ) -\nu_\theta(\Psi^{-1}(\Psi(X)) \cup \Lambda_{\theta} )]\\
    &\geq  1-\eta -\zeta - k \Psi_{\max},  
\end{align*}
where $\Psi_{\max} = \sup_{\theta,c} \Psi_*\nu_{\theta} (c).$ 
On the other hand, if $\mathcal{A}$ can $ (k, \epsilon_{TV},\delta)$-learn-to-sample $\nu_{\theta}$ with probability $ 1-\eta'$ over $\theta$ then
\[  \E_{\theta\sim \mathcal{P}, X\sim \nu_\theta^{\otimes k}} [d_{TV} (\mathcal{A}_{\theta,X},\nu_{\theta})] \leq \delta +\epsilon_{TV} + \eta'.\]

Finally, we discuss the connection to sample amplification.
Let $t_{\text{check}}$ be as defined in \cref{remark:relating hard for learning to generalize with sample amplification}. By a similar argument to that in \cref{remark:relating hard for learning to generalize with sample amplification}, we also have that for any sample-amplification algorithm $ \mathcal{A}'$ that takes $\theta$ and $ X\sim \nu_{\theta}^{\otimes k}$ as input, runs in time $t'=t-t_{\text{check}},$ and outputs a tuple $Y\in \Omega^\ell$:
\[  \E_{\theta\sim \mathcal{P}, X\sim \nu_\theta^{\otimes k}} [d_{TV} (\mathcal{A}'_{\theta,X},\nu_\theta^{\otimes \ell})] \geq 1 -\eta -O\!\left(\ell^{2}\Psi_{\max} + \ell\,\zeta\right)\]
\end{remark}
\subsection{Signature schemes and computational Diffie-Hellman} \label{subsec:prelim signature scheme}

Our reduction relies on the security of signature schemes. Recall, a signature scheme is defined as follows: 
\begin{definition}[Signature scheme] A signature scheme consists of three algorithms KeyGen, Sign and Verify: 
    \begin{itemize}
    \item \textbf{$\keygen$}: This algorithm generates a public key $\pk$ and a private key $\sk$. The public key is shared publicly, while the private key is kept secret.
    \item $\textbf{\sign}(\sk, \msg)$: for a given message $\msg,$  this algorithm produces a digital signature $\sigma = \sign(\sk, \msg)$ using the private key $\sk.$
    \item $\textbf{\verify}(\pk, \msg,\sigma)$: This algorithm takes the message $\msg$, the signature $\sigma$, and the public key $\pk$ as input and verifies whether the signature is valid for the given message $\msg$ and public key $\pk.$
\end{itemize}
\end{definition}

The standard notion of security for a signature scheme is \emph{existential unforgeability} under an adaptively chosen message attack \citep{GMR88}.  It is defined in terms of the following game between a challenger and an adversary~$\mathcal{A}$:
\begin{definition}[Adaptively chosen message attack]\label{def:adaptive CMA game}
    The setup is as follows: the challenger runs algorithm \textsf{KeyGen} to obtain a public key $\pk$ and a private key $\sk$. The adversary $\mathcal{A}$ is given $\pk$. 
    
    Proceeding adaptively, $\mathcal{A}$ requests signatures on at most $q$ messages of his choice $\msg^1, \ldots, \msg^{q} $, under $\pk$. The challenger responds to each query with a signature $\sigma_i = \sign (\sk,\msg^i)$. Eventually, $\mathcal{A}$ outputs a pair $(\msg, \sigma)$ and wins the game if $\msg$ is not any of $\msg^1, \ldots, \msg^{q}$, and $\verify(\pk, \msg, \sigma) = 1$.
\label{d:attackgame}    
\end{definition}

We define the advantage of an adversary $\mathcal{A}$ in attacking the signature scheme as
the probability that $\mathcal{A}$ wins the above game, taken over the random bits of the
challenger and the adversary:

\begin{definition}\label{def:EUF-CMA}
A signature scheme is said to be \emph{$(t,q,\varepsilon)$-existentially unforgeable under an adaptive chosen-message attack} (or \emph{$(t,q,\varepsilon)$-EUF-CMA secure}) if no adversary $\mathcal{A}$, running in time at most $t$ and making up to $q$ signature queries, can achieve an advantage of at least $\varepsilon$ in the adaptively chosen message attack game (Definition~\ref{d:attackgame}).
\end{definition}

We will consider signature schemes that satisfy two additional properties: uniformity and regularity.  

\begin{definition}[Uniformity]
For a key pair $ (\pk,\sk)$ generated by $\keygen,$ and a message $\msg,$ let
\[
    \Sigma(\pk,\msg)
    :=
    \{\sigma : \verify(\pk,\msg,\sigma)=1\}
\]
denote the set of signatures accepted by the verification algorithm. 

    We say the signature scheme satisfies \emph{uniformity} if for every key pair $(\pk,\sk)$ and message $\msg$, $\sign(\sk,\msg)$ outputs a uniformly random element of
$\Sigma(\pk,\msg)$.
\end{definition}
\begin{remark}
   Uniformity is closely related to the notion of re-randomizable signatures introduced by \cite[Definition 4.1]{HJK12}. A signature scheme is re-randomizable if there is an efficient algorithm $\mathrm{ReRand}$ such that,
given $\pk,\msg$ and any valid signature $\sigma\in\Sigma(\pk,\msg)$,
$\mathrm{ReRand}(\pk,\msg,\sigma)$
outputs a uniformly random element of $\Sigma(\pk,\msg)$.  For a re-randomizable scheme, the game in \cref{def:adaptive CMA game} is equivalent to the one in which the challenger returns a uniformly random element of $\Sigma(\pk,\msg),$ since the adversary can replace the challenger's response $\sigma= \sign(\sk,\msg)$ with the output of
$\mathrm{ReRand}(\pk,\msg,\sigma).$
\end{remark}
\begin{definition}[Regularity]
    We say the signature scheme satisfies \emph{exact regularity} if the following holds, for each public key $\pk$: there exists $ s$ such that for all message  $\msg$, $|\Sigma(\pk,\msg)|=s.$

\end{definition}

The specific signature scheme(s) discussed in this paper are constructed using \emph{bilinear maps}.  Below, we revisit the definition of a bilinear map.

\begin{definition}[Bilinear map]
    Let \(\mathbb{G}_1\) and \(G_2\) be two (multiplicative) cyclic groups of prime order \(p\).  
Let \(g_1\) be a generator of \(\mathbb{G}_1\) and \(g_2\) a generator of \(\mathbb{G}_2\). A \emph{bilinear map} is a function $e : \mathbb{G}_1 \times \mathbb{G}_2 \rightarrow \mathbb{G}_T$, 
where \(\mathbb{G}_T\) is a cyclic group of the same order \(p\), and \(e\) satisfies:

\begin{enumerate}
    \item \textbf{Bilinearity}: for all \(u \in \mathbb{G}_1\), \(v \in \mathbb{G}_2\) and all \(a,b \in \mathbb{Z}\), we have $e(u^a, v^b) = e(u, v)^{ab}.$ 
\item \textbf{Non-degeneracy}: $e(g_1, g_2) \neq 1.$ 
\item
 \textbf{Efficient computability}: the map $e$ and the group action in $\mathbb{G}_1, \mathbb{G}_2, \mathbb{G}_T$ can be computed efficiently.
\end{enumerate}
\end{definition}
\begin{definition}\label{def:bilinear friendly group}
    We say a group $ \mathbb{G}$ of prime order $p$ is \emph{bilinear-pairing-friendly} if it supports a bilinear map $ e: \mathbb{G} \times \mathbb{G} \to \mathbb{G}_T$ where $| \mathbb{G}_T| = p.$
\end{definition}

The computational hardness assumption underlying the signature schemes used in this work is the \emph{computational Diffie-Hellman problem}. 

\begin{definition}[Computational Diffie-Hellman problem] \label{def:computational diffie hellman assumption} Let $\mathbb{G}$ be a group of order \(p\). We say that algorithm \(\mathcal{A}\) \((\varepsilon, t)\)-solves the computational
Diffie-Hellman problem in \(\mathbb{G}\) if
\[
\Pr\!\left[\, \mathcal{A}(g, g^\alpha, g^\beta) = g^{\alpha \beta} \,\right] \ge \varepsilon,
\]
where the group elements \((g, g^\alpha, g^\beta) \in \mathbb{G}^3\) are randomly chosen, and \(\mathcal{A}\) runs in time \(t\). We say the $(\varepsilon,t)$-computational Diffie-Hellman ($(\varepsilon,t)$-CDH) assumption holds for $\mathbb{G} $ if no algorithm can $ (\varepsilon,t)$-solve the computational Diffie-Hellman problem in $\mathbb{G}.$
\end{definition}
We make several remarks about this assumption:
\begin{remark}\label{remark:advantage parameter in the CDH assumption}
    The algorithm that outputs a random element of $ \mathbb{G}$ can guess $g^{\alpha\beta}$ with probability $ 1/p,$ hence we can assume without loss of generality that in the $(\varepsilon,t)$-CDH assumption, $\varepsilon \geq 1/p.$ 
\end{remark}

\begin{remark}\label{remark:parameters for CDH}
The Computational Diffie-Hellman (CDH) assumption for bilinear-pairing-friendly groups is a standard cryptographic assumption that underlies the security of pairing-based cryptographic schemes (e.g. \cite{BF01,BLS01,BB04,water05,BSW06}).  In the generic group model, it has been shown that the $ (\varepsilon,t)$-CDH assumption holds in a group of prime order $p$ when $\varepsilon = \Omega(t^2/p)$ \cite{shoup1997lower}. 
\end{remark}

\begin{remark}
The well-known \cite{water05}'s signature scheme (see \cref{sec:signature}) satisfies uniformity and exact regularity (see \cref{lem:water scheme}), and is secure under the computational Diffie-Hellman assumption on bilinear-pairing-friendly groups (see \cref{thm:water scheme security}). Our main result is based on this scheme and thus relies on the hardness of the computational Diffie-Hellman assumption on bilinear-pairing-friendly groups.
\end{remark}

\section{Main result: formal statement} 
\label{sec:mainresults}

With these definitions in hand, we are finally ready to state the main result. We note this result is in fact a corollary of a more general result (see \cref{thm:main hardness of near critical Ising based on signature scheme} in \cref{sec:embed general Ising into near critical Ising}).  

\begin{theorem}[Main result] \label{cor:main hardness of near critical Ising model from computational diffie hellman}

Consider a bilinear-pairing-friendly group $ \mathbb{G}$ of prime order $p,$ for which the $(\varepsilon_{\CDH}, t_{\CDH
})$-computational Diffie-Hellman (CDH) assumption holds with $\varepsilon_{\CDH}\geq 1/p$\footnote{The assumption $\varepsilon_{\CDH}\geq 1/p$ is without loss of generality, see \cref{remark:advantage parameter in the CDH assumption}}. Let $ d= \lceil\log p\rceil.$ Fix $\gamma >1.$
There exists a family $(\tilde{\mu}^{\pk})_{\pk\sim \keygen}$ of Ising models on $N = \mbox{poly}(d)$ vertices along with a map $\tilde{\Psi}_{\msg}:\set{\pm 1}^N\to \set{0,1}^d $ that is computable in $O(N)$ time, so that
\begin{itemize}
    \item For every distribution $\tilde{\mu}^{\pk}$ in this family, its parameter $ (\tilde{J}^{\pk}, \tilde{h}^{\pk})$ satisfies 
     \[1< \lambda_{\max}(\tilde{J}^{\pk}) -\lambda_{\min}(\tilde{J}^{\pk}) \leq \gamma\quad \text{and} \quad \norm{\tilde{J}^{\pk}}_\infty  + \norm{\tilde{h}^{\pk}}_\infty = O(1).\]
     \item The family is $ (\tilde{\eta}, \tilde{\zeta}, \tilde{k}, \tilde{t}, \tilde{\Psi}_{\msg})$-hard for learning-to-generalize\footnote{This implies that distributions from this family are hard for learning-to-generalize and learning-to-sample on average; see \cref{def:hard family of distribution for learning to generalize} and \cref{remark:relating hard for learning to generalize in family of distribution vs for single distribution}.\label{footnotemark:about hard for family vs single distribution}} where  
\begin{align*}
    \tilde{\eta} &=C \varepsilon_{\CDH} \tilde{k} \sqrt{d}  = C \varepsilon_{\CDH} \tilde{k} N^{\Theta(1)} \leq 1\\
    \tilde{\zeta} &= \varepsilon_{\CDH} \exp(-d) =  \varepsilon_{\CDH} \exp(-N^{\Theta(1)})\\
    \tilde{t} &= O(t_{\CDH})\\
    \tilde{k} &\leq \min \left\{ c\, 2^d, \left(\frac{t_{\CDH}}{C_{\gamma} d^{\Theta(1)}}\right)^{1/3}, \frac{1}{C  \varepsilon_{\CDH}  \sqrt{d} } \right \} =\min  \left\{c\exp(N^{\Theta(1)}), \left(\frac{t_{\CDH}}{C_{\gamma} N^{\Theta(1)}}\right)^{1/3}, \frac{1}{C  \varepsilon_{\CDH}  N^{\Theta(1)} }  \right\}.
\end{align*}
\end{itemize}
     
where $C_{\gamma}>0$ is a constant dependent only on $ \gamma,$ and $c,C>0$ are absolute constants.
\end{theorem}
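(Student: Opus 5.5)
The plan is to obtain \cref{cor:main hardness of near critical Ising model from computational diffie hellman} as a direct instantiation of the general signature-based result (\cref{thm:main hardness of near critical Ising based on signature scheme}) with Waters' signature scheme. There are three steps, in order.

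\textbf{Step 1: fix the scheme.} Work in the bilinear-pairing-friendly group $\mathbb{G}$ of order $p$ and recall Waters' scheme. Messages are $d$-bit strings, $d=\lceil \log p\rceil$. Signatures are pairs of group elements, encoded in $O(d)$ bits. Verification checks a pairing equation and is computable by a $\poly(d)$-size circuit.

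\textbf{Step 2: collect its properties.} By \cref{lem:water scheme}, the scheme satisfies uniformity and exact regularity; both are needed to make the embedded Ising model $\mu^{\pk}$ uniform over valid message--signature pairs. By \cref{thm:water scheme security}, $(\varepsilon_{\CDH},t_{\CDH})$-CDH gives $(t,q,\varepsilon)$-EUF-CMA security with $t\approx t_{\CDH}-O(q\,\poly(d))$ and $\varepsilon\approx C\varepsilon_{\CDH}\, q\sqrt d$ (Waters' reduction loses a factor of $O(q\sqrt{d})$, or $O(qd)$ depending on the version). Here $q$ plays the role of the sample count. The factor $\sqrt d$ in $\tilde\eta$ and the constraint $\tilde k\le 1/(C\varepsilon_{\CDH}\sqrt d)$ come from exactly this loss; the latter keeps $\tilde\eta\le 1$.

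\textbf{Step 3: feed into the general theorem.} First, the circuit embedding of \cref{sec:embed general signature scheme into Ising} produces a family $\mu^{\pk}$ on $m=\poly(d)$ vertices that is hard for learning-to-generalize. The map $\Psi$ projects a configuration to its message bits. The hallucination set $\Lambda$ consists of configurations whose circuit gadgets are inconsistent or whose pair is invalid. Second, the near-critical gadget construction of \cref{sec:embed general Ising into near critical Ising}, with $\epsilon$ and $\gamma$ fixed, gives $\tilde\mu^{\pk}$ on $N=\poly(d)$ vertices with spectral width in $(1,\gamma]$ and $O(1)$ width. The map $\tilde\Psi_{\msg}$ is $\Psi\circ Y$, i.e., majority over each $R_v$ for the message vertices, and so takes $O(N)$ time.

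The parameters then follow mechanically. The bound $\tilde\zeta=\varepsilon_{\CDH}e^{-d}$ comes from the mass of $\Lambda$ under $\mu^{\pk}$: invalid configurations are penalized by $e^{-\Omega(d)}$ weights, and this is multiplied by a fudge absorbed since $\varepsilon_{\CDH}\ge 1/p\approx 2^{-d}$. The running time $\tilde t=O(t_{\CDH})$ holds after subtracting the simulation overhead. That overhead is rejection sampling plus the \citep{JS89} counting/sampling reduction, roughly $\tilde k^3\,C_\gamma\poly(d)$, which yields the $(t_{\CDH}/C_\gamma d^{\Theta(1)})^{1/3}$ cap on $\tilde k$. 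The cap $\tilde k\le c2^d$ ensures messages can be distinct and $k\Psi_{\max}$ is small.

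\textbf{Main obstacle.} The delicate step is the parameter bookkeeping across the two reductions. The general theorem must be invoked with $\epsilon$ chosen so that the rejection-sampling simulation has negligible TV error. We also need the number $k\approx\tilde k$ of signature queries to remain within the EUF-CMA regime and the simulation time to remain within $t_{\CDH}$. To handle this, I would first write the general theorem's output as functions $(\eta(k),\zeta,t(k))$ and then substitute Waters' security bound, checking each of the three caps on $\tilde k$ separately.
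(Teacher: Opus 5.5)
Your proposal is correct and matches the paper's argument. Like the paper, you instantiate the general signature-scheme theorem with Waters' scheme ($\ell_{\msg}=d$, $q'=2\tilde k$, $\varepsilon'=C'\varepsilon_{\CDH}q'\sqrt d$, $t'=c'\,t_{\CDH}$), and you read off the three caps on $\tilde k$ from the EUF-CMA loss, the $\tilde k^3 d^{\Theta(1)}$ simulation overhead, and $k\Psi_{\max}=O(1)$. Two points of precision:

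\begin{enumerate}
\item The gadget accuracy $\epsilon$ is simply fixed to the constant $0.04$ inside the general theorem. The simulation's TV error is controlled by the internal accuracy parameters of the rejection-sampling and counting steps, not by $\epsilon$.
\item The paper takes $w=C_0m+\log(1/\varepsilon_{\CDH})$, so $w$ must scale with the circuit size $m$, not just with $d$. This is still $\Theta(m)$ because $\varepsilon_{\CDH}\ge 1/p$, and it makes the general theorem's extra requirement $\tilde k\le \varepsilon' e^{w}/2$ automatic once $\tilde k\le c2^d$.
\end{enumerate}
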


We make a remark on the plausible range of the computational Diffie-Hellman parameters and their implications for our result: 
\begin{remark}\label{remark:concrete hardness parameters based on CDH}
We discuss the parameters $\varepsilon_{\CDH}, t_{\CDH}$ and their impact.
It is widely believed that there exists a constant $c_0>0$ such that the $(\varepsilon_{\CDH}, t_{\CDH})$-CDH assumption holds with $\varepsilon_{\CDH} = \exp(-d^{c_0})$ and $t_{\CDH} = d^C,$ for any constant $C>0.$ Under this assumption, the family $(\tilde{\mu}^{\pk})_{\pk\sim \keygen}$ is $ (\tilde{\eta}, \tilde{\zeta}, \tilde{k}, \tilde{t}, \tilde{\Psi}_{\msg})$-hard for learning-to-generalize with negligibly small $\tilde{\eta}, \tilde{\zeta} $ and polynomial $ \tilde{k}, \tilde{t}.$ Formally, we have:  \[\max\set{\tilde{\eta}, \tilde{\zeta}}= \exp(-N^{\Theta(1)}) ,\quad \max\set{\tilde{k},\tilde{t}} = \poly(N).\] 
Under the stronger but still plausible assumption that the $(\varepsilon_{\CDH}, t_{\CDH})$-CDH
  assumption holds against subexponential-time adversaries\footnote{This assumption holds in the generic group model, which is the primary framework for justifying the hardness of group-based cryptographic assumptions; see \cref{remark:parameters for CDH} for details}, the family $(\tilde{\mu}^{\pk})_{\pk\sim \keygen}$ is $ (\tilde{\eta}, \tilde{\zeta}, \tilde{k}, \tilde{t}, \tilde{\Psi}_{\msg})$-hard for learning-to-generalize with negligibly small $\tilde{\eta}, \tilde{\zeta} $ and sub-exponential $ \tilde{k}, \tilde{t}.$
Specifically,
  suppose the $(\varepsilon_{\CDH}, t_{\CDH})$-CDH
  assumption holds with $ \varepsilon_{\CDH} = \exp(-d^{c_0})$ and $ t_{\CDH} = \exp(d^{c_1})$ for some constants $c_0,c_1 >0,$ then $(\tilde{\mu}^{\pk})_{\pk\sim \keygen}$ is $ (\tilde{\eta}, \tilde{\zeta}, \tilde{k}, \tilde{t}, \tilde{\Psi}_{\msg})$-hard for learning-to-generalize with:
  \[\max\set{\tilde{\eta}, \tilde{\zeta}}= \exp(-N^{\Theta(1)}) ,\quad \max\set{\tilde{k},\tilde{t}} = \exp(N^{\Theta(1)}).\] 
\end{remark}
We defer further discussions to \cref{sec:puttingtogether}.

\section{Embedding a signature scheme into an Ising model}\label{sec:embed general signature scheme into Ising}

In this section, we show how to embed a signature verification task into an Ising model. Specifically, for a fixed public key $\pk$, we construct an Ising model such that its high-probability configurations encode valid message--signature pairs under $\pk$. Consequently, any learner that, given the model parameters and polynomially many training samples, could generate fresh typical samples from this model would also produce a fresh valid signature. This converts existential unforgeability into hardness of learning to generalize for a (dense) family of Ising models. 

The section is organized as follows. We first recall and adapt the machinery of embedding a circuit into an Ising model from \citep{moitra2021learning} in \Cref{subsec:generic embedding of circuit into Ising}. We then specialize it to the verification circuit and pin the public-key and acceptance bits, obtaining for each public key $\pk$ an Ising model $\mu^{\pk}$ that is exponentially close to the uniform distribution over valid encodings under $\pk$ in \Cref{subsec:embed signature scheme into a general Ising model}. 

\subsection{Embedding a circuit into an Ising model}\label{subsec:generic embedding of circuit into Ising}
Consider a Boolean-valued function $f:\set{0,1}^n\to \set{0,1}$ which can be implemented by a circuit $\mathcal{C}$ consisting of $r$ NAND gates, where each NAND gate has two input bits and one output bit. We concatenate the input together with the $r$ bits computed by the NAND gates, where the last bit corresponds to the output of $f$, to obtain the function $\Psi_f: \set{0,1}^n \to \set{0,1}^{n+r} .$ 
=

Let $ \Psi_{0} : \set{0,1}^* \to\set{\pm 1}^*$ be the map defined on each bit $b\in \set{0,1}$ by $\Psi_0(b) = 2b -1$, and let $\Phi_f = \Psi_0 \circ \Psi_f: \set{0,1}^n \to \set{\pm 1}^{n+r}.$ We say $\Phi_f$ is the \emph{representation} of the function $f.$

Following \cite{moitra2021learning}, we construct an Ising model that represents the function $f.$
\begin{lemma}\label{lem:embed general circuit into Ising}
Let $ m:= n+r.$
       For $w =\Omega(m),$  there exists an explicitly computable Ising model $\mu: \set{\pm 1}^{m}\to \R_{\geq 0 }$ parameterized by $(J,h)$
       so that:
       \begin{enumerate}
           \item  The interaction matrix $J =(J_{ij})_{i,j}$ and external field $h = (h_i)_i$ of this Ising model  can be computed in $O( m^2)$  time, and satisfy \[ \max\set{|J_{ij}|, |h_i|} \leq O( m w).\] 
           \item Let $\nu:\set{\pm 1}^m \to  \R_{\geq 0 }$ be the push-forward of the uniform distribution on $ \set{0,1}^n$ by the map $ \Phi_f$ then 
       \[d_{TV}(\nu, \mu) \leq \exp(-w).\]
       Moreover, for each $S \subseteq [m],$ and $\tau\in \set{\pm 1}^S$ where $\Pr_{x\sim \nu}[x_S = \tau]>0$, define $ \mathbf{\lambda}^{\tau}\in \R^m $ by \[\mathbf{\lambda}^{\tau}_i  =\begin{cases} 0 &\text{ if } i\not\in S \\
       w\tau_i &\text{ if } i\in S \end{cases} \] then the tilt of $\mu$ by $ \mathbf{\lambda}^{\tau}$ and the pinning of $\nu$ by $\tau$ are close in total variation distance, i.e.,
\[d_{TV} (\mathcal{T}_{\mathbf{\lambda}^{\tau}} \mu, \nu^\tau) \leq \exp(-w)\]
Note that $\mathcal{T}_{\mathbf{\lambda}^{\tau}} \mu$ is the Ising model parameterized by $ (J, h+ \mathbf{\lambda}^{\tau}).$
       \end{enumerate}
\end{lemma}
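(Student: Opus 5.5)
The plan is to build $\mu$ as a product of local ``gate penalties''. Each NAND gate $g$ has inputs $a,b$ and output $c$ (indices in $[m]$). Its constraint $x_c = \mathrm{NAND}(x_a,x_b)$ will be written as a quadratic polynomial energy $E_g(x_a,x_b,x_c)$ in $\pm 1$ variables, vanishing exactly on the $4$ valid triples and at least $1$ on the $4$ invalid ones. Such a quadratic exists: with bits $\beta_i=(1+x_i)/2$, take
\[
E_g=(1-\beta_c)(1-\beta_a\beta_b)+\beta_c\beta_a\beta_b .
\]
This is a priori cubic, so instead I would use $\beta_c + \beta_a\beta_b - 1$ penalized by $2\beta_c\beta_a+2\beta_c\beta_b - \dots$. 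Concretely, I would search over the $2^3$ truth table for a quadratic form $q(x_a,x_b,x_c)$, with linear and pairwise terms plus a constant, that is $0$ on valid triples and $\geq 1$ on invalid ones. A standard choice is
\[
q=\beta_a\beta_b+\beta_c(\beta_a+\beta_b)-2\beta_c... ,
\]
and I would verify such a form by direct enumeration. If no quadratic in three variables exists, I would fall back to adding one auxiliary spin per gate, as in \citep{moitra2021learning}, and then define things on the extended space. Since the lemma insists the model live on $\set{\pm1}^m$, I expect a quadratic form to exist. One check: $q=\beta_a\beta_b+\beta_c\beta_a+\beta_c\beta_b-2\beta_c+$ linear correction. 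The explicit coefficients are a routine finite check.

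Next, set $\mu(x)\propto \exp(-w'\sum_g E_g(x))$ with $w' = w + O(m)$. Expanding, $J$ and $h$ have entries of size $O(w')$ per gate. Since a variable appears in at most $O(m)$ gates (fan-out), each entry is $O(mw)$, and computing them takes $O(m^2)$ time. Valid configurations, i.e.\ images of $\Phi_f$, have energy $0$, so they all get equal weight; there are exactly $2^n$ of them, since the input determines the rest. Every invalid configuration has weight at most $e^{-w'}$, and there are at most $2^m$ of them. Hence the mass off $\Phi_f(\set{0,1}^n)$ is at most $2^m e^{-w'}/2^n\le \exp(-w)$ once $w'\geq w+m$. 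Conditioned on validity, $\mu$ is uniform on the image, which is exactly $\nu$. This gives $d_{TV}(\nu,\mu)\le \exp(-w)$.

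For the tilt, write $\mathcal{T}_{\lambda^\tau}\mu(x)\propto \exp(-w'\sum_g E_g + w\langle \tau, x_S\rangle)$. On valid $x$ with $x_S=\tau$ the weight is $e^{w|S|}$, uniform. These configurations exist by the assumption $\Pr_\nu[x_S=\tau]>0$, so there is at least one. On valid $x$ disagreeing with $\tau$ somewhere, the weight is at most $e^{w|S|-2w}$. On invalid $x$, the weight is at most $e^{w|S|-w'}$. Summing over at most $2^m$ configurations against a normalizer of at least $e^{w|S|}$, the bad mass is at most $2^m(e^{-2w}+e^{-w'})$. This is $\le e^{-w}$ provided $w\geq cm$ and $w'\geq w+m$ (hence the hypothesis $w=\Omega(m)$). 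The good part is uniform on the support of $\nu^\tau$, giving the bound.

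The main obstacle is the gadget step: certifying an exact quadratic (pairwise) penalty for NAND on three $\pm1$ spins without auxiliary variables. Everything else is a union bound. I would resolve it by solving the small linear feasibility system for the $7$ coefficients, which I expect to be feasible with small integer coefficients.
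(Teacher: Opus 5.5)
Your overall plan is the paper's: one pairwise gadget per NAND gate, so that the $2^n$ valid configurations $\Phi_f(\{0,1\}^n)$ all receive the same maximal weight. A union bound over at most $2^m$ configurations then controls both $d_{TV}(\nu,\mu)$ and the tilted statement. Your bookkeeping for entry sizes, the $O(m^2)$ running time, and the two union bounds is fine, and using a separate gate scale $w'=w+O(m)$ is harmless.

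The gap is the one step everything rests on: you never exhibit the quadratic penalty. The explicit candidate you write, $\beta_a\beta_b+\beta_c\beta_a+\beta_c\beta_b$ plus linear terms and a constant, cannot be completed. Requiring $E_g=0$ on the valid triples $(a,b,c)=(0,0,1),(1,0,1),(0,1,1),(1,1,0)$ forces the coefficients of $\beta_a$ and $\beta_b$ to be $-1$ and the constant to be $1$. Then the invalid triple $(1,0,0)$ also has energy $0$, so $\mu$ would not concentrate on $\Phi_f(\{0,1\}^n)$. Your fallback of an auxiliary spin per gate would not prove the lemma as stated, since $\mu$ must live on $\{\pm1\}^m$. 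Note also that the gadget of \citep{moitra2021learning} uses no auxiliary spin.

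The fix is to double the input--output couplings:
\[
E_g=\beta_a\beta_b+2\beta_c(\beta_a+\beta_b)-2\beta_a-2\beta_b-3\beta_c+3 .
\]
This vanishes on the four valid triples and equals $1,1,1,3$ on the invalid ones. In spins, $E_g=\bigl(3-A(x_a,x_b,x_c)\bigr)/4$ with $A=-x_ax_b-2x_c(x_a+x_b)+x_a+x_b+2x_c$. This is exactly the gadget the paper imports, with $A=3$ on valid triples and $A\le -1$ otherwise.

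The paper then takes $\mu(x)\propto\exp\bigl(w\sum_g A\bigr)$, which is your construction with $w'=4w$. An invalid configuration loses at least a factor $e^{-4w}$, and a valid one disagreeing with $\tau$ loses at least $e^{-2w}$. This gives the bounds $2^me^{-4w}$ and $2^me^{-2w}$, and $w=\Omega(m)$ finishes.
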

The above lemma follows from the same argument as in the proof of
\cite[Lemma 43]{moitra2021learning}. We defer the proof to \cref{sec:defer proofs from Section embedding signature into Ising}.

\subsection{Embed signature scheme into Ising model} \label{subsec:embed signature scheme into a general Ising model}

Fix a signature scheme $(\keygen,\sign,\verify)$ with public-key length $\ell_{\pk}$, message length $\ell_{\msg}$, and signature length $\ell_{\sig}$. Assume $\verify$ is deterministic and computable by a Boolean circuit with $r$ NAND gates.We write
\begin{equation}\label{eq:signature scheme param}
 n:=\ell_{\pk}+\ell_{\msg}+\ell_{\sig}
\qquad\text{and}\qquad
m:=n+r.   
\end{equation}

We view $(\pk,\msg,\sig)$ as an input in $\{0,1\}^n$, the verification algorithm as a deterministic map $\verify:\{0,1\}^n\to\{0,1\}$, and let $ \Phi_{\verify}: \set{0,1}^{n} \to \set{\pm 1}^{m}$ be the representation of $\verify$ obtained from \cref{subsec:generic embedding of circuit into Ising}.

The next lemma is a direct corollary of  \cref{lem:embed general circuit into Ising}. It realizes the distribution of valid message--signature pairs, for a fixed public key, as an explicit Ising model up to exponentially small total variation error:

\begin{lemma} \label{lem:dense-valid-pairs-fixed-pk}

For a public key $\pk$, let $\nu^{\pk}$ denote the uniform distribution over
\[
\Lambda^{\pk,\valid}
=
\left\{
\Phi_{\verify}(\pk,\msg,\sig):
\verify(\pk,\msg,\sig)=1
\right\}.
\]
For every $w=\Omega(m)$, there exists an explicit Ising model
$\mu^{\pk}$ on $\{\pm1\}^m$ parameterized by $(J,h^{\pk}),$ where $(J,h^{\pk})$ can be computed in $O(m^2)$ time,  satisfying
\[
    d_{\mathrm{TV}}(\mu^{\pk},\nu^{\pk})\le \exp(-w).
\]
Here $J$ is independent of $\pk$, $h^{\pk}$ depends on $\pk$, and
$   \max\set{|J_{ij}|, |h^{\pk}_i|} \leq O( m w). $
\end{lemma}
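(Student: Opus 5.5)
The plan is to apply \cref{lem:embed general circuit into Ising} to $f=\verify:\{0,1\}^n\to\{0,1\}$, which has $m=n+r$ representation coordinates, and then use its tilting statement to pin two groups of coordinates: the public-key coordinates and the output bit. First I invoke \cref{lem:embed general circuit into Ising} with the given $w=\Omega(m)$. This produces an Ising model $\mu$ on $\{\pm1\}^m$ with parameters $(J,h)$. These parameters depend only on the circuit for $\verify$, so $J$ is independent of $\pk$. They are computable in $O(m^2)$ time and have entries bounded by $O(mw)$. The lemma also guarantees that every tilt $\mathcal{T}_{\lambda^\tau}\mu$ is $\exp(-w)$-close to the pinning $\nu^\tau$ of $\nu=(\Phi_{\verify})_*\mathrm{Unif}(\{0,1\}^n)$, whenever $\tau$ has positive $\nu$-probability.

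Next, let $S\subseteq[m]$ consist of the first $\ell_{\pk}$ coordinates together with the last coordinate, which is the output of $\verify$. Set $\tau=(\Psi_0(\pk),+1)$. We then define $h^{\pk}=h+\lambda^{\tau}$ and $\mu^{\pk}=\mathcal{T}_{\lambda^\tau}\mu$, i.e.\ the Ising model with parameters $(J,h^{\pk})$.

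The bounds on the parameters are immediate. Computing $h^{\pk}$ adds only $O(m)$ time on top of computing $(J,h)$. The entries of $\lambda^\tau$ are $\pm w$, so $|h^{\pk}_i|\le O(mw)+w=O(mw)$.

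It remains to identify $\nu^\tau$, normalized, with $\nu^{\pk}$. Since $\Phi_{\verify}$ is injective (its first $n$ coordinates recover the input), $\nu$ is uniform on the image $\Phi_{\verify}(\{0,1\}^n)$. Conditioning $\nu$ on the pk-coordinates equalling $\Psi_0(\pk)$ and the output coordinate equalling $+1$ therefore gives the uniform distribution on $\{\Phi_{\verify}(\pk,\msg,\sig):\verify(\pk,\msg,\sig)=1\}=\Lambda^{\pk,\valid}$, which is exactly $\nu^{\pk}$. The conclusion $d_{TV}(\mu^{\pk},\nu^{\pk})\le\exp(-w)$ then follows directly from the tilt guarantee of \cref{lem:embed general circuit into Ising}.

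The one condition to check is the positivity requirement $\Pr_{x\sim\nu}[x_S=\tau]>0$, which holds iff $\Lambda^{\pk,\valid}\neq\emptyset$. For any $\pk$ produced by $\keygen$ this is true: correctness of the scheme means $\sign(\sk,\msg)$ yields a valid signature, and uniformity means $\Sigma(\pk,\msg)$ is nonempty. I would state this as the implicit assumption that $\pk$ lies in the support of $\keygen$, for which $\nu^{\pk}$ is well defined. This is the only point needing care; the rest is bookkeeping.
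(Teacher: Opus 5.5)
Your proposal is correct and follows the same route as the paper: apply \cref{lem:embed general circuit into Ising} to the verification circuit, then take $\mu^{\pk}$ to be the tilt that corresponds to pinning the public-key coordinates and the output bit to $(\Psi_0(\pk),+1)$. Your extra checks fill in details the paper leaves under ``clearly'': injectivity of $\Phi_{\verify}$ makes the pinned measure uniform on $\Lambda^{\pk,\valid}$, and nonemptiness of $\Lambda^{\pk,\valid}$ for $\pk$ in the support of $\keygen$ gives the lemma's positivity hypothesis.
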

\begin{proof}
    Let $ \nu $ be the push-forward distribution of $U(\set{0,1}^n)$ by $ \Phi_{\verify}.$ Let $S= [\ell_{\pk}]\cup \set{m}$ be the vertices corresponding to the public key and output bit of the verification circuit. Clearly, for each public key $\pk,$ $\nu^{\pk}$ is obtained by pinning $ \nu$ according to $\pk$ and pinning the output bit to $1.$  Following \cref{lem:embed general circuit into Ising}, let $\mu$ be the Ising model corresponding to $\nu$, and $\mu^{\pk}$ be the tilt of $\mu$ that corresponds to the pinning $\nu^{\pk}$ of $\nu$, then the claim follows.
\end{proof}

\begin{theorem}\label{lem:Ising model is hard to generalize if signature scheme is secure} 
    Suppose the signature scheme is $(t',q', \varepsilon')$-secure, has deterministic verification, and satisfies uniformity and exact regularity.
Recall the map $ \Psi_{0} : \set{0,1}^* \to\set{\pm 1}^*$ from \cref{subsec:generic embedding of circuit into Ising} i.e. the map defined on each bit $b\in \set{0,1}$ by $\Psi_0(b) = 2b -1.$
Let $\Psi_{\msg} : \set{\pm 1}^m\to \set{0,1}^{\ell_{\msg}}, 
\Psi_{\msg}(y) = \Psi_0^{-1}(y_{\ell_\pk+1:\ell_{\pk} +\ell_{\msg}})$ be the map that views $y \in \set{\pm 1}^m$  as a tuple of (public key, message, signature), and maps it to the corresponding message in $ \set{0,1}^{\ell_{\msg}}.$

    Suppose $w=\Omega(m),$
    then the family of Ising models $(\mu^{\pk})_{\pk \sim \keygen}$ from \cref{lem:dense-valid-pairs-fixed-pk} is $(\eta,\exp(-w), k,t, \Psi_{\msg} )$-hard for learning-to-generalize, where
    \[\eta = 2\varepsilon', \quad \min\set{q', \varepsilon' \exp(w) }\geq k, \quad t  = t' - C (mk + m^2),\]
    for some absolute constant $C>0.$
\end{theorem}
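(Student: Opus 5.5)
We argue by reduction to EUF-CMA security: from any learner $\mathcal{A}$ running in time $t$ that $(\eta,k,t,\Psi_{\msg},(\Lambda_{\pk})_{\pk})$-learns-to-generalize $(\mu^{\pk})_{\pk\sim\keygen}$, we build a forger $\mathcal{B}$ for the signature scheme. The hallucination set is the invalid set, $\Lambda_{\pk}:=\set{\pm1}^m\setminus\Lambda^{\pk,\valid}$. By \cref{lem:dense-valid-pairs-fixed-pk}, $\mu^{\pk}(\Lambda_{\pk})\le d_{TV}(\mu^{\pk},\nu^{\pk})\le\exp(-w)$, since $\nu^{\pk}$ puts no mass on $\Lambda_{\pk}$. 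This gives the $\zeta=\exp(-w)$ condition of \cref{def:hard family of distribution for learning to generalize}.

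The forger $\mathcal{B}$ works as follows.
\begin{enumerate}
\item On input $\pk$, it computes $(J,h^{\pk})$ in $O(m^2)$ time.
\item It draws $k$ uniformly random messages $\msg^1,\dots,\msg^k\in\set{0,1}^{\ell_{\msg}}$ and queries the challenger for signatures $\sig^i$. This requires $k\le q'$.
\item It forms $x^{(i)}=\Phi_{\verify}(\pk,\msg^i,\sig^i)$ by evaluating the circuit, in $O(mk)$ time.
\item It runs $\mathcal{A}$ on $(J,h^{\pk})$ and $X=(x^{(i)})_i$, obtains $y$, and outputs the (pk-part-dropped) message/signature coordinates of $y$.
\end{enumerate}
The total running time is $t+C(mk+m^2)=t'$.

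The key step is to show that $X$ is distributed exactly as $(\nu^{\pk})^{\otimes k}$. A uniform element of $\Lambda^{\pk,\valid}$ corresponds to a uniform valid pair $(\msg,\sig)$. Exact regularity gives $|\Sigma(\pk,\msg)|=s$ for every $\msg$, so the message marginal of that pair is uniform. Uniformity makes $\sign(\sk,\msg)$ uniform on $\Sigma(\pk,\msg)$, so conditioned on the message the signature is uniform as well. Hence each $x^{(i)}\sim\nu^{\pk}$, independently.

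It remains to pass from $\nu^{\pk}$ to $\mu^{\pk}$. Since the failure event is a bounded function of $X$ and $\mathcal{A}$'s output, its probability changes by at most $d_{TV}((\mu^{\pk})^{\otimes k},(\nu^{\pk})^{\otimes k})\le k\exp(-w)$. This is $\le \varepsilon'$ by the hypothesis $\varepsilon'\exp(w)\ge k$.

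Next, translate the learner's success into a forgery. If $\mathcal{A}$ succeeds, then $y\notin\Lambda_{\pk}$, so $y$ encodes a valid $(\pk,\msg,\sig)$ with $\verify=1$. Success also means $y\notin\Psi_{\msg}^{-1}(\Psi_{\msg}(X))$, so $\msg$ differs from every queried $\msg^i$. One subtlety is that $y\in\Lambda^{\pk,\valid}$ means the public-key coordinates of $y$ equal $\pk$ and the auxiliary coordinates are the consistent gate values, so the decoded pair is genuinely a valid signature under the challenger's key. Thus $\mathcal{B}$ wins the CMA game whenever $\mathcal{A}$ succeeds.

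Combining the pieces, the success probability under the real input distribution is greater than $\eta-k\exp(-w)\ge 2\varepsilon'-\varepsilon'=\varepsilon'$, averaged over $\pk\sim\keygen$. This contradicts $(t',q',\varepsilon')$-security. The main care points are the exact distributional equality (which uses both uniformity and regularity) and the bookkeeping of the $k\exp(-w)$ term.
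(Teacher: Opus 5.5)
Your proposal is correct and follows essentially the same reduction as the paper. The forger $\mathcal{B}$ queries $k$ uniformly random messages and uses uniformity plus exact regularity to conclude the training tuple is distributed exactly as $(\nu^{\pk})^{\otimes k}$; it then absorbs the $k\exp(-w)$ total-variation gap to $(\mu^{\pk})^{\otimes k}$ (the paper phrases this as a coupling), wins the CMA game with probability at least $\eta-k\exp(-w)\ge\varepsilon'$, and runs in time $t+O(mk+m^2)$, just as in the paper. Your explicit remark that membership in $\Lambda^{\pk,\valid}$ forces the public-key coordinates of $y$ to equal $\pk$ is a point the paper leaves implicit.
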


\begin{proof}
We show that an algorithm $\mathcal{A}$ that learns-to-generalize from the family $(\mu^{\pk})_{\pk \sim \keygen}$ in the sense of \cref{def:hard family of distribution for learning to generalize} can be used to construct an adversary $\mathcal{B}$ that wins the Existential Unforgeability Challenge game.
\begin{itemize}[leftmargin=*]
    \item \textbf{Setup:} The challenger runs algorithm $\keygen$ to obtain a public key $\pk$ and a private key $\sk$. The adversary $\mathcal{B}$ is given $\pk$. 
    \item \textbf{Queries:} The adversary issues random message queries $\msg^1, \cdots, \msg^k \sim \text{Uniform}(\mathcal{M})$ where $\mathcal{M}$ denotes the domain of messages. The challenger responds with $\sig^i =\sign(\sk, \msg^i).$ Note that by the uniformity and exact regularity conditions, $(\msg^i, \sig^i)$ is distributed according to $\nu^{\pk}$ defined in \cref{lem:dense-valid-pairs-fixed-pk} i.e. the uniform distribution over valid message-signature pairs under the public key $\pk$.

\item \textbf{Output:} Let $x^{(i)} = \Phi_{\verify}(\pk, \msg^i, \sig^i)$ for $i\in[k].$ The adversary $\mathcal{B}$ computes the Ising model parameter $(J,h^{\pk})$ from the public key $\pk.$
Next, the adversary calls the algorithm $\mathcal{A}$ with Ising model parameter $(J,h^{\pk})$ and training sample tuple $X = (x^{(i)})_{i\in [k]}$  to obtain output $ y.$

The adversary $\mathcal{B}$ outputs the message-signature pair that corresponds to $y$ i.e. $(\Psi_{\msg}(y), \Psi_{\sig}(y))$ where $ \Psi_{\sig}(y)= \Psi_0^{-1}(y_{\ell_\pk+\ell_{\msg}+1:n}).$ 
\end{itemize}
For a public key $\pk,$
let $\Lambda^{\pk,\text{invalid} }= \set{\pm 1}^{m}\setminus \Lambda^{\pk,\valid}.$ 
By Lemma \ref{lem:dense-valid-pairs-fixed-pk} and the definition of $ \nu^{\pk}$, 
\[  \mu^{\pk}(\Lambda^{\pk,\text{invalid}}) \leq \nu^{\pk}(\Lambda^{\pk,\text{invalid}}) +\exp(-w) = \exp(-w). \]

Let $X'=(x^{'(i)})_{i\in [k]}$ be a tuple of $k$ i.i.d. samples from $\mu^{\pk} .$

 Suppose for contradiction that $\mathcal{A}$ can $(\eta, k, t, \Psi_{\msg}, (\Lambda^{\pk,\text{invalid} })_{\pk \sim \keygen})$-learn-to-generalize $(\mu^{\pk})_{\pk \sim \keygen}.$ Then, $\mathcal{A}$ 
 runs in time $t$ and
 \[ \Pr_{\pk \sim \keygen, X' \sim (\mu_{\pk})^{\otimes k}, y \sim \mathcal{A}_{(J,h^{\pk}) ,X'}} [ y \in \Psi_{\msg}^{-1} (\Psi_{\msg}(X'))\text{ or } y \in \Lambda^{\pk, \text{invalid}}  ] \leq 1-\eta\]

By Lemma \ref{lem:dense-valid-pairs-fixed-pk} and union bound, we can couple $X$ and $X'$ together so that the coupling succeeds (i.e.  $X = X' $)  with probability $ \geq 1 - \exp(-w) k.$
Suppose the coupling succeeds. The above gives that, with probability $\geq \eta,$ the output $ y$ of $\mathcal{A}_{X'} = \mathcal{A}_{X}$ is in $\Lambda^{\pk,\valid}$ and satisfies $ \Psi_{\msg}(y) \not\in \Psi_{\msg}(X) = (\msg^i)_{i=1}^k,$ which means $ (\Psi_{\msg}(y), \Psi_{\sig}(y))$ is a valid message-signature pair under the public key $\pk$ where the message $\Psi_{\msg}(y)$ hasn't been queried. Hence, $\mathcal{B}$ wins the game with probability $\geq \eta - \exp(-w) k \geq \varepsilon',$ assuming $ \eta = 2\varepsilon'$ and $ k\leq \exp(w) \varepsilon'.$ 

Clearly, $\mathcal{B}$ uses $k\leq q'$ message queries.
Next, we bound the runtime $t'$ of $\mathcal{B}.$ 
$\mathcal{B}$ takes $ O(m^2)$ time to construct $(J, h^{\pk}),$ $O(m)$ time to construct each sample $ x^{(i)},$ and $t$ time to run $\mathcal{A}.$ Hence,
\[t'= O(mk + m^2) +t. \]
Taking the contrapositive finishes the proof.
\end{proof}

\section{From arbitrary Ising models to near-critical Ising models}
\label{sec:embed general Ising into near critical Ising}

This section transforms the (potentially dense) Ising model construction from \Cref{sec:embed general signature scheme into Ising} to an Ising model in the near-critical regime of the main theorem. Namely, starting from an arbitrary Ising model $\mu=\mu_{J,h}$, we build a new Ising model $\tilde{\mu}$ whose interaction matrix lies arbitrarily close to the tractable threshold
\[
\lambda_{\max}(\tilde{J})-\lambda_{\min}(\tilde{J})\le \gamma,
\qquad \gamma>1,
\]
and with width bounded by $O(1)$.

The construction involves a kind of phase-gadget simulation. Specifically, each vertex of the original model is replaced by a gadget with two dominant phases, and the majority phase of that gadget plays the role of the original spin. The main point is not only that the phase vector of a sample from $\tilde{\mu}$ approximately follows the original distribution $\mu$, but also that this approximation is strong enough to lift the hardness for learning-to-generalize from $\mu$ to $\tilde{\mu}$.

The section is organized as follows. We first describe the gadget construction to transform a generic Ising model into one in the near-critical regime in~\Cref{subsec:sparsifying Ising model construction}. 
In \Cref{sec:reduction preserves properties}, we state the main result of the section --- namely, that the Ising instance after applying the transformation preserves the relevant properties related to the hardness of learning-to-generalize; recall that hardness for learning-to-generalize implies hardness for learning-to-sample (see \cref{remark:relating hard for learning to generalize with learning to sample,remark:relating hard for learning to generalize in family of distribution vs for single distribution}). In the subsequent sections we prove this result. Precisely, in \Cref{sec:nearcritical} we show that the reduction produces an Ising instance with interactions in the near-critical regime. In \Cref{subsec:proof of relating hardness of original to near critical Ising model}, we show that the reduction preserves the hardness of learning-to-generalize. In \cref{sec:puttingtogether}, we put all the components together to prove \Cref{cor:main hardness of near critical Ising model from computational diffie hellman}.

\subsection{Phase-gadget construction}\label{subsec:sparsifying Ising model construction}

Let $H=(V_H,E_H)$ be the weighted graph of the original Ising instance, with $|V_H|=m$. 
The construction has a simple interpretation. For each original vertex $v$, we build a gadget $G_v$ with two stable phases. A subset of spins in the gadget will encode the external field at $v$, while other spins are reserved for simulating the couplings between $v$ and its neighbors. The majority phase of the gadget will later be read as the effective spin of $v$.

To formalize the construction, first we replace each node \(v \in V_H\) with a gadget graph $G_v$ on a set of vertices $W_v$. For each \(v \in V_H\), consider disjointed subsets  \(S^{\text{edge}}_v , S^{\text{vertex}}_v\) of $W_v$. Let $ S_v : =  S^{\text{edge}}_v\cup S^{\text{vertex}}_v $ and $R_v = W_v\setminus S_v.$ Let  $|S^{\text{vertex}}_v| =t^0_v$, $|W_v| =n_v$, $|R_v| = r_v$ and  $|S_v| =t_v = n_v-r_v.$ We enforce that $r_v$ is odd for each $v.$
For vertices $i,j\in W_v$ that are not both in $S_v$, we add an edge with weight $ \tilde{J}_{ij} =\frac{\beta}{r_v}$ where $\beta > 1;$ these edges include a self-loop of weight $ \frac{\beta}{r_v}$ for each $ i\in R_v.$
For each vertex $i\in S^{\text{vertex}}_v,$
  add an external field of value \[\tilde{h}_i =\tilde{h}_v := \varphi_{\mathrm{vertex},\beta}^{-1}\left(\frac{h_v}{t_v^0}\right),\]
  where $\varphi_{\mathrm{vertex},\beta}^{-1}$ is as in \cref{eq:inverse of edge and weight map function}.
  
  Let $\hat{\mathcal{H}}$ be the disjoint union of 
  the \(G_v\)'s, one for each \(v \in H\).  
  Let $\mu_{\hat{\mathcal{H}}}$ be the corresponding Ising model, and $\hat{J} $ be its interaction matrix. We now describe how to encode the edges of \(H\) using connections between the gadgets, and we let $\mathcal{H}$ be the graph obtained by adding these edges to $\hat{\mathcal{H}}$. 
  
  For each neighbor $v$ of $u,$ i.e., $ v\neq u$ where $J_{uv} \neq 0,$ let $S^v_u$ be a subset of $S^{\text{edge}}_u$ of size $e_{uv},$ so that the subsets $S^v_u$ partition $ S^{\text{edge}}_u.$  We enforce that $ e_{uv} = e_{vu}.$ For each edge $\set{u,v}\in E_H$, we add a perfect matching 
  between \(S^{v}_u\) and  \(S^u_{v}\). Let $E(S^{v}_u, S^{u}_v)$ be the set of edges in the perfect matching. For each edge $ (i,j) \in E(S^{v}_u, S^{u}_v),$ add an edge with weight \[\tilde{J}_{ij} =\tilde{J}_{uv}: = \varphi_{\mathrm{edge},\beta}^{-1} \left(\frac{J_{uv}}{e_{uv}}\right)  ,\]
    where $\varphi_{\mathrm{edge},\beta}^{-1}$ is as in \cref{eq:inverse of edge and weight map function}.
    
Let $\tilde{J}, \tilde{h}, \tilde{\mu} \equiv \mu_{\mathcal{H}} $ be the interaction matrix, external field, and Ising model corresponding to $\mathcal{H}$, respectively; note that $\tilde{h}$ is also the external field of $\hat{\mathcal{H}}.$

At a high level, the roles of the gadget components are as follows. The vertices in $S^{\text{vertex}}_v$ encode the original external field $h_v$. The blocks $S_u^v, S_v^u$ and the matchings between them encode the original edge weights $J_{uv}$. The phase readout map $Y$ (which we will subsequently define) will recover the effective spin configuration from these gadgets.

\subsection{Auxiliary notation and definitions}\label{subsec:notation for Ising embedding}
The next block of notation introduces quantities required to make precise properties of the gadget construction in Section~\ref{subsec:sparsifying Ising model construction}. 

For a set of vertices $V,$ a configuration $ \sigma \in \set{\pm 1}^{V}$ and a subset $S\subseteq V,$ let $ m_S(\sigma) =\sum_{i\in S} \sigma_i$ and $ y_S(\sigma) = \sign(m_S(\sigma)).$  

Let $\mathcal{V} = \bigcup_{v\in V_H} W_v$, $ \mathcal{S} = \bigcup_{v\in V_H} S_v,$ and $\mathcal{R} = \mathcal{V}\setminus \mathcal{S} = \bigcup_{v\in V_H} R_v.$ 
For a configuration $\sigma $ on $\mathcal{V},$ let 
\begin{equation}
\label{def:phase gadget map Y}    
Y(\sigma) = (y_{R_v}(\sigma))_{v\in V_H} \in \set{ \pm 1}^m
\end{equation}  be the vector 
of phases of all the gadgets for the configuration $\sigma.$ For $Y \in \set{\pm 1}^{V_H},$ let
\begin{equation}\label{eq:define phase total function for hat H}   
Z_{\hat{\mathcal{H}}}(Y) :=\sum_{\sigma\in\{-1,+1\}^{\mathcal{V}}} \mu_{\hat{\mathcal{H}}} (\sigma) \mathbf{1}\{Y(\sigma)=Y\}
\end{equation}

\begin{equation}\label{eq:define phase total function for  H}   
Z_{\mathcal{H}}(Y) :=\sum_{\sigma\in\{-1,+1\}^{\mathcal{V}}} \mu_{\mathcal{H}} (\sigma) \mathbf{1}\{Y(\sigma)=Y\}
\end{equation}
where \[ \mu_{\mathcal{H}} (\sigma) = \exp(\frac{1}{2}\sigma^\intercal \tilde{J} \sigma + \sigma^{\intercal} \tilde{h} ) \quad \text{and} \quad \mu_{\hat{\mathcal{H}}} (\sigma) =\exp(\frac{1}{2}\sigma^\intercal \hat{J} \sigma + \sigma^{\intercal} \tilde{h} ). \] For $J,h$ be the interaction matrix and external field that corresponds to the graph $H$, we also let \[ \mu_H (Y) =\exp (\frac{1}{2} Y^\intercal J Y + Y^\intercal h). \]

Relatedly, for any $\mathcal{T}\subseteq \mathcal{V}$ and a configuration $\tilde{\sigma}_{\mathcal{T}} \in \set{\pm 1}^{\mathcal{T}},$ we denote 
\begin{equation}\label{eq:partial configuration condition on phase gadget}
\begin{split}
\mu_{\hat{\mathcal{H}}}(\tilde{\sigma}_{\mathcal{T}}; Y ) &=\sum_{\sigma\in\{-1,+1\}^{\mathcal{V}}} \mu_{\hat{\mathcal{H}}} (\sigma) \mathbf{1}\{\sigma_{\mathcal{T}}= \tilde{\sigma}_{\mathcal{T}} \text{ and } Y(\sigma)=Y \} \\
   \mu_{\mathcal{H}}(\tilde{\sigma}_{\mathcal{T}}; Y ) &=\sum_{\sigma\in\{-1,+1\}^{\mathcal{V}}} \mu_{\mathcal{H}} (\sigma) \mathbf{1}\{\sigma_{\mathcal{T}}= \tilde{\sigma}_{\mathcal{T}} \text{ and } Y(\sigma)=Y \}  
\end{split}
\end{equation}

Finally, we introduce some technical notation involving trigonometric functions, which are useful for the analysis of our gadget construction. For $h \in \R,$ recall that \[\cosh(h) = \frac{\exp(h) + \exp(-h)}{2}, \tanh (h) = \frac{\exp(h)-\exp(-h)}{\exp(h) +\exp(-h)}\in [-1,1].\] We note that $\tanh$ is a strictly increasing function on $\R,$ is invertible with $\tanh^{-1}(a) =  \frac{1}{2}\log \left(\frac{1+a}{1-a}\right),$ and $\tanh^{-1}:(-1,1)\to \R$ is also a strictly increasing function. In addition, $\tanh$ is $1$-Lipschitz, since $|\tanh'(h)| \leq 1\forall h.$

Let us denote
    \begin{equation}
    \label{eq:binary entropy function H}
      H(\alpha) = -\alpha \ln \alpha - (1-\alpha)\ln(1-\alpha)  
    \end{equation}
   and for $\beta > 1$, let
    \begin{equation}\label{eq:magnetization function f}
      f_\beta(\alpha) := H(\alpha) + \frac{\beta}{2}(2\alpha - 1)^2.  
    \end{equation}

    Note that derivatives of $f_{\beta}$ satisfy
    \begin{align}
        f_\beta'(\alpha) &= \ln\!\left(\frac{1-\alpha}{\alpha}\right) + 2\beta(2\alpha - 1),  \label{eq:derivative of f} \\
        f_\beta''(\alpha) &= -\frac{1}{\alpha(1-
        \alpha)} + 4\beta \label{eq:second derivative of f}
    \end{align}

    For $\beta >1,$ the function $f_\beta(\alpha)$ has a minimum at $ \alpha = 1/2,$ and two maxima at $q^+_\beta \in (1/2,1), q^-_\beta\in (0,1/2)$ which satisfy:
    \begin{equation}\label{eq:def of q+ and q-}
        f_\beta'(\alpha) = \ln\!\left(\frac{1-\alpha}{\alpha}\right) + 2\beta(2\alpha - 1) = 0
    \end{equation}

Note that $q^+_\beta + q^-_\beta =1.$ For proof of these facts, see \cref{prop:property of q^+}. We omit the subscript $\beta$ when it is clear from context.

Finally, we define the functions $\varphi_{\mathrm{vertex},\beta}, \varphi_{\mathrm{edge},\beta}$ on $\R$ by 
  \begin{equation}\label{eq:define weight map function}
  \begin{split}
      \varphi_{\mathrm{vertex},\beta}(h) &= \frac{1}{2}\ln \left(\frac{\cosh(h +\beta (2 q^+_\beta-1) )}{\cosh (h +\beta (2 q^-_\beta -1) 
      )}\right)
      \\
      \varphi_{\mathrm{edge},\beta}(w) &= \frac{1}{2}\ln \left(\frac{((q^+_\beta)^{2}+(1-q^+_\beta)^{2})\,\exp(w)
      + 2q^+_\beta(1-q^+_\beta)\,\exp(-w) }{((q^+_\beta)^{2}+(1-q^+_\beta)^{2})\,\exp(-w)
      + 2q^+_\beta(1-q^+_\beta)\,\exp(w) }\right)
      \end{split}
  \end{equation}
  Note that  $ \varphi_{\mathrm{vertex},\beta}, \varphi_{\mathrm{edge},\beta}$ are invertible on their respective ranges, and their inverse functions are defined by:
  \begin{equation}\label{eq:inverse of edge and weight map function}
  \begin{aligned}
   \forall h, |\tanh (h)| < 2q^+_\beta-1: \varphi_{\mathrm{vertex},\beta}^{-1}(h) &= \tanh^{-1} (\frac{\tanh(h) }{2q^+_\beta-1} )\\
   \forall w, |\tanh (w)|< (2q^+_\beta-1)^2: \varphi_{\mathrm{edge},\beta}^{-1}(w) &= \tanh^{-1}(\frac{\tanh(w)}{(2q^+_\beta-1)^2}) 
       \end{aligned}
  \end{equation}
For justification of this fact, see \cref{prop:inverse function of edge and vertex weight}.

For $ Y_v\in\set{\pm 1}$  and $\tilde{\sigma}_i\in \set{\pm 1}$, we also define:
\begin{equation}\label{eq:define vertex bias}
    Q_i^{Y_v}(\tilde{\sigma}_i) = \frac{\exp(\tilde{\sigma}_i (\tilde{h}_i + Y_v\beta (2 q^+_\beta-1)) ) }{ 2\cosh ( \tilde{h}_i  +Y_v \beta (2 q^+_\beta-1) )  }
\end{equation}

\subsection{Gadget construction preserves hardness of learning-to-generalize}
\label{sec:reduction preserves properties}
In this section, we show that any learning-to-generalize algorithm for $\tilde{\mu}$ would imply a learning-to-generalize algorithm for $\mu.$ 
Throughout, we write $A = (1\pm \epsilon)^kB$ as shorthand for $(1-\epsilon)^k B\leq A\leq (1+\epsilon)^k B.$
\allowdisplaybreaks
\begin{theorem}[Properties of gadget construction]\label{thm:set parameter choice}

   Let $H$ be a weighted graph on $ V_H,$ with $ |V_H|=m,$ and let $\mu \equiv \mu_H: \set{\pm 1}^{V_H} \to \R_{\geq 0}, J\in \R^{m\times m}, h\in \R^m$ be the Ising model, interaction matrix, and external field corresponding to the graph $H.$

   Fix a parameter $\gamma>1.$ Let $\widetilde{\gamma} =\min\set{\gamma, 2}$ and $\beta =\frac{1+\widetilde{\gamma}}{2}\in (1,2].$
  Fix a parameter $\epsilon \in (0,0.05).$

   In the construction of $\mathcal{H}$ in \Cref{subsec:sparsifying Ising model construction}, for $u, v\in V_H$, set 
   \[ t^0_v = \max\set{\left\lceil \frac{2}{2 q^+_\beta -1}\cdot  |h_v|\right\rceil,1} ,\quad e_{uv}= \max\set{\left\lceil \frac{\max\set{2,1/\tanh(\frac{\widetilde{\gamma}-1}{8})}}{(2 q^+_\beta -1)^2}\cdot |J_{uv}|\right\rceil,1}, \]
   \[r_v = \min\set{r \in \N  \mid r\text{ is odd and } r \geq C_{\gamma} \cdot t_v^3 m^2 \epsilon^{-2} \log(m/\epsilon)}  ,\]
where  $ C_{\gamma}>0$ is a constant dependent only on $ \gamma.$ 

   Let $\tilde{\mu}\equiv \mu_{\mathcal{H}}: \set{\pm 1}^{V_{\mathcal{H}}} \to \R_{\geq 0}, \tilde{J}\in\R^{|V_{\mathcal{H}}|\times |V_{\mathcal{H}}|}, \tilde{h}\in \R^{|V_{\mathcal{H}}|}$ be the Ising model, interaction matrix, and external field corresponding to the graph $ \mathcal{H}.$ Let $W = \max\set{\norm{J}_\infty  + \norm{h}_\infty,m}.$ The following properties hold:
   
   \begin{enumerate}
\item The number of vertices in $\mathcal{H}$, interaction matrix $\tilde{J}$ and external field $\tilde{h}$  satisfy:
\[ N: = |V_{\mathcal{H}} | = O_{\gamma}\left(m^3  W^3 \epsilon^{-2} \log(m/\epsilon)\right).\]
\[1< \lambda_{\max}(\tilde{J}) -\lambda_{\min}(\tilde{J}) \leq \gamma \quad \text{and} \quad \norm{\tilde{J}}_\infty  + \norm{\tilde{h}}_\infty = O(1).\]

    \item The phase readout map $Y$ satisfies:
    \begin{equation}\label{eq:gadget approximate original Ising formal (inside main reduction thm)}
         \forall y \in \set{\pm 1}^{V_H}: \Pr_{\sigma\sim\tilde{\mu}}[Y(\sigma) = y] = (1\pm \epsilon) \Pr_{y\sim \mu}[y]
    \end{equation}
    Moreover, there exists a normalization factor $A$,  that only depends on $J$ and $\mathcal{H}$, can be computed in $O(N)$ time, and satisfies: 
   \begin{equation}
     \forall y\in \set{\pm 1}^{V_H}: Z_{\mathcal{H}}(y)= (1\pm \epsilon)\cdot A \cdot \mu_H(y)   
   \end{equation}
   \item
Let $\Psi$ be a deterministic map on $\set{\pm 1}^{V_H},$ and $(\Psi_*\mu)_{\max} = \max_{c} \Psi_*\mu (c)$ be the maximum mass of an element in the push-forward distribution $\Psi_*\mu.$ Suppose $ (\Psi_*\mu)_{\max}\leq 1/3.$ 
     If $\mu$ is $ (\eta,\zeta,k,t, \Psi)$-hard for learning-to-generalize, $\tilde{\mu}$ is $(\tilde{\eta},\tilde{\zeta},\tilde{k}, \tilde{t}, \tilde{\Psi})$-hard for learning-to-generalize where 
     \[ \tilde{\eta} =5  \exp (5k (\Psi_*\mu)_{\max}) \eta, \quad \tilde{\zeta} =  (1+\epsilon)\zeta, \quad \tilde{k}= \lceil (k+1)/2\rceil, \quad \tilde{\Psi} = \Psi\circ Y   \]
and
\begin{align*}
   t = &\tilde{t}  \\+O_{\gamma} &( k^3 m^{10} W^{11} \epsilon^{-4} \log^2(m/\epsilon)  \log (k/\eta) \log (m k W )  \log k 
   + k m^8 W^9  \epsilon^{-4} \log^2(m/\epsilon) \log (k/\eta) \log (mkW/\eta) ). 
\end{align*}

In particular, let $\Lambda \subseteq \set{\pm 1}^m$ be such that $ \Pr_{y\sim \mu}[y\in \Lambda] \leq \zeta$ and that no algorithm can $ (\eta, k,t,\Psi,
\Lambda)$-learn-to-generalize $ \mu.$ 

Denoting $ \tilde{\Lambda} = Y^{-1}(\Lambda)\subseteq \set{\pm 1}^N,$  then $\Pr_{\sigma\sim \tilde{\mu}} [\sigma \in \tilde{\Lambda}] \leq (1+\epsilon)\zeta$ and no algorithm can $ (\tilde{\eta}, \tilde{k},\tilde{t},\tilde{\Psi},
\tilde{\Lambda})$-learn-to-generalize $ \tilde{\mu}.$ 
\end{enumerate}

\end{theorem}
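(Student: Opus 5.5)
I would prove the three parts separately, in the order (1), (2), (3), since (3) uses the quantitative approximation from (2) together with efficient samplers for the conditional laws $\tilde{\mu}(\cdot\mid Y(\sigma)=y)$.

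\textbf{Part (1): size and spectral bounds.} The vertex count is $N=\sum_v (t_v+r_v)$ with $t_v\le t_v^0+\sum_u e_{uv}=O_\gamma(W)$, which gives $N=O_\gamma(m^3W^3\epsilon^{-2}\log(m/\epsilon))$. For the width, a vertex of $R_v$ sees $n_v$ edges of weight $\beta/r_v$. Since $n_v\le r_v+t_v\le 2r_v$, this row sum is $O(1)$. The matching weights are $\varphi_{\mathrm{edge},\beta}^{-1}(J_{uv}/e_{uv})$, and the choice of $e_{uv}$ keeps $|\tanh(J_{uv}/e_{uv})|\le \tanh((\widetilde\gamma-1)/8)\,(2q^+_\beta-1)^2$. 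Hence each such weight is at most $(\widetilde\gamma-1)/8$ in absolute value. The same reasoning bounds $|\tilde h_i|=O(1)$.

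For the spectrum, write $\tilde J=\hat J+M$, where $M$ is the block of matching edges. Each $\hat J$ block is $\frac{\beta}{r_v}(\mathbf{1}\mathbf{1}^\top - \mathbf{1}_{S_v}\mathbf{1}_{S_v}^\top)$ on $W_v$. This block is rank two, with eigenvalues $\lambda_{\pm}=\frac{\beta}{2r_v}\big(r_v\pm\sqrt{r_v^2+4r_vt_v}\big)$, that is, approximately $\beta(1+t_v/r_v)$ and $-\beta t_v/r_v$; all remaining eigenvalues are $0$. Because $M$ is a matching, $\|M\|\le(\widetilde\gamma-1)/8$. Weyl's inequality then gives
\[
\lambda_{\max}-\lambda_{\min}\in\big[\beta-O(t_v/r_v)-\tfrac{\widetilde\gamma-1}{4},\ \beta+O(t_v/r_v)+\tfrac{\widetilde\gamma-1}{4}\big].
\]
Since $\beta=(1+\widetilde\gamma)/2$ and $t_v/r_v\to0$, this interval lies strictly inside $(1,\widetilde\gamma]$.

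\textbf{Part (2): phase approximation.} Fix $Y$ and condition on the $R_v$ phases. Within gadget $v$ with phase $Y_v$, the magnetization $m_{R_v}/r_v$ concentrates at $\pm(2q^+_\beta-1)$. This follows from a Laplace/entropy estimate on $f_\beta$ with fluctuations $O(r_v^{-1/2})$, using $r_v$ odd and the fact that $f_\beta$ has maxima at $q^\pm$. Given $m_{R_v}$, the spins in $S_v$ are independent with field $\tilde h_i+\beta m_{R_v}/r_v$, so their law is approximately $Q_i^{Y_v}$. Summing out the $S^{\text{vertex}}_v$ spins contributes $\exp(Y_v\varphi_{\mathrm{vertex},\beta}(\tilde h_v))^{t^0_v}=\exp(Y_vh_v)$ up to a $Y$-independent constant. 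Summing out each matched pair contributes $\exp(Y_uY_v\varphi_{\mathrm{edge},\beta}(\tilde J_{uv}))^{e_{uv}}=\exp(J_{uv}Y_uY_v)$, again up to a constant. The error from replacing $m_{R_v}/r_v$ by its mode is $O(t_v\cdot r_v^{-1/2}\sqrt{\log})$ per gadget. Summed over $m$ gadgets, this is at most $\epsilon$ by the choice of $r_v$. The normalization $A$ is the product of the per-gadget partition functions at the mode, together with the pair normalizers, and it is computable in $O(N)$ time. Normalizing over $y$ gives \eqref{eq:gadget approximate original Ising formal (inside main reduction thm)}.

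\textbf{Part (3): lifting hardness.} Given $\tilde{\mathcal A}$, I build $\mathcal A$ for $\mu$. On input $(J,h)$ and $X=(x^{(1)},\dots,x^{(k)})$, it first constructs $\mathcal H$. For each $x^{(i)}$ it accepts with probability $Z_{\mathcal H}(x^{(i)})/((1+\epsilon)^2A\,\mu_H(x^{(i)}))\in[(1-\epsilon)/(1+\epsilon)^2,1]$, using an estimate of $Z_{\mathcal H}$. For each accepted $x$ it samples $\sigma\sim\tilde\mu(\cdot\mid Y=x)$. It then feeds the first $\tilde k$ of these $\sigma$ to $\tilde{\mathcal A}$ and returns $Y(\text{output})$.

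Two ingredients implement these steps. Conditional sampling comes from approximate conditional marginals (the product form $Q_i^{Y_v}$, accurate up to constants given pinnings) combined with the Jerrum--Valiant--Vazirani sampling-to-counting reduction. The estimate of $Z_{\mathcal H}(y)$ comes from the counting-to-sampling telescoping product, to relative accuracy $O(\epsilon)$ with failure probability $\eta/k$. This accounts for the $\log(k/\eta)$ factors and the $k^3$ runtime.

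Because the acceptance probability is nearly $1$, a Chernoff bound shows at least $\tilde k=\lceil (k+1)/2\rceil$ samples are accepted. The accepted samples are exactly i.i.d.\ from $Y_*\tilde\mu$ up to the estimation error. By \eqref{eq:gadget approximate original Ising formal (inside main reduction thm)}, the lifted $\sigma$ are i.i.d.\ from $\tilde\mu$. The final step shows that if $\tilde{\mathcal A}$ avoids $\tilde\Psi^{-1}(\tilde\Psi(\tilde X))\cup\tilde\Lambda$ with probability $\tilde\eta$, then $\mathcal A$ avoids $\Psi^{-1}(\Psi(X))\cup\Lambda$ with probability at least $\eta$.

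\textbf{Main obstacle.} The difficulty in this final step is that $\tilde{\mathcal A}$ sees only the accepted subsample, so its output could collide in $\Psi$-value with a rejected sample. I would handle this by noting that, conditioned on the accepted multiset, the rejected samples are (nearly) fresh independent draws. Each one hits a given $\Psi$-value with probability at most $(\Psi_*\mu)_{\max}$, so the probability of avoiding all of them is at least $(1-(\Psi_*\mu)_{\max})^{k}\ge\exp(-5k(\Psi_*\mu)_{\max})$, using $(\Psi_*\mu)_{\max}\le1/3$. Together with the constant-factor losses, this yields $\tilde\eta=5e^{5k(\Psi_*\mu)_{\max}}\eta$. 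Finally, $\tilde\mu(\tilde\Lambda)\le(1+\epsilon)\zeta$ is immediate from part (2).
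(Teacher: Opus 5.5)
Your parts (1) and (2), and the rejection-sampling construction in part (3), follow the paper's route. The gap is in the step you yourself flag as the main obstacle.

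The claim that, conditioned on the accepted multiset, each rejected sample lands in a given $\Psi$-class with probability at most $(\Psi_*\mu)_{\max}$ is false. Let $a(y)$ be the acceptance probability. A rejected sample then has law proportional to $\mu_H(y)(1-a(y))$, not $\mu_H$. Your own bound $a(y)\in[(1-\epsilon)/(1+\epsilon)^2,1]$ lets $1-a(y)$ range over $[0,O(\epsilon)]$ with no control on ratios between different $y$. Suppose $a(y)=1$ off a single class $\Psi^{-1}(c^*)$ and $a(y)<1$ on it. Then every rejected sample lies in $\Psi^{-1}(c^*)$, and an arbitrary $\tilde{\mathcal A}$ whose output satisfies $\tilde\Psi(\tau)=c^*$ collides with certainty as soon as one index is rejected.

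Consequently you cannot multiply a separate Chernoff bound for \emph{at least $\tilde k$ accepted} by a conditional per-sample collision bound. Falling back on a union bound, using only the unconditional fact $\Pr[\Psi(y^{(i)})=c,\ i\text{ rejected}]\le(\Psi_*\mu)_{\max}$, costs an additive $k(\Psi_*\mu)_{\max}$. That does not give the multiplicative $\tilde\eta=5e^{5k(\Psi_*\mu)_{\max}}\eta$ when $\eta$ is small.

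The missing idea is to never condition on the acceptance pattern. Work with the joint mass $R(y)=\Pr[y^{(i)}=y,\ i\text{ rejected}]\le\Pr_\mu[y]$, and expand the success probability as a sum over the number $\ell$ of accepted indices. With the output class $c=\tilde\Psi(\tau)$ fixed, this yields $\sum_{\ell\ge\tilde k}\binom{k}{\ell}(1-p)^\ell q^{k-\ell}$, where $p=\Pr[i\text{ rejected}]$ and $q=\sum_{y\notin\Psi^{-1}(c)}R(y)$. Since $1-p+q\le 1$, Chebyshev for the binomial with success probability $(1-p)/(1-p+q)\ge 1-p$ (bounded away from $1/2$, as $p=O(\epsilon)$) bounds this sum below by a constant times $(1-p+q)^k$; the paper gets $\frac{7}{16}$. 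Moreover $1-p+q=1-\sum_{y\in\Psi^{-1}(c)}R(y)\ge1-(\Psi_*\mu)_{\max}$. This handles \emph{enough samples accepted} and \emph{no collision with a rejected sample} jointly.

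Two further points. First, you need an extra factor $(1-(1+\epsilon)(\Psi_*\mu)_{\max})^k$ for accepted samples beyond the first $\tilde k$: $\tilde{\mathcal A}$ never sees them, but they are in the training set. Second, the estimate of $Z_{\mathcal H}(y)$ needs relative accuracy $\varepsilon=\Theta(1/k)$, not $O(\epsilon)$. With estimator accuracy $\varepsilon$, the accepted samples are i.i.d.\ from some $\tilde\mu'$ with $\tilde\mu'/\tilde\mu\in[e^{-3\varepsilon},e^{3\varepsilon}]$. This compounds to $e^{\pm3\tilde k\varepsilon}$ on the tuple fed to $\tilde{\mathcal A}$, and since $\tilde\eta$ may be tiny only a multiplicative comparison is usable. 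The choice $\varepsilon=1/(Ck)$ is exactly what produces the $k^3$ in the running time; an $O(\epsilon)$-accurate estimator would contribute only a factor $k$, so your runtime accounting is inconsistent.
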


\Cref{thm:set parameter choice} can be easily generalized into \cref{cor:sparisfy a family of Ising models}, which transforms a family of (potentially dense) Ising models into a family of bounded-width Ising models near the spectral threshold while preserving the hardness of learning-to-generalize.
\begin{theorem}\label{cor:sparisfy a family of Ising models}
     Let $(\mu_{J,h})_{(J,h)\sim \mathcal{P}}$  be a family of Ising models on $m$ vertices, where the model parameter $(J,h)$ is drawn from a distribution $\mathcal{P}.$ Suppose that there exists $\bar{J} =(\bar{J}_{uv})\in \R_{\geq 0}^{m \times m}, \bar{h}=(\bar{h}_u) \in \R_{\geq 0}^{m}$ where:
     \[\forall (J,h) \in \supp(\mathcal{P}): |J_{uv}|\leq \bar{J}_{uv}, |h_u|\leq \bar{h}_u \, \forall u,v\in [m] .\]
     Let $\bar{W} =\max\set{ \norm{\bar{J}}_\infty  + \norm{\bar{h}}_\infty,m}.$
     Fix a parameter $\gamma>1.$ Let $\widetilde{\gamma} =\min\set{\gamma, 2}$ and $\beta =\frac{1+\widetilde{\gamma}}{2}\in (1,2].$ Applying the construction in \cref{subsec:sparsifying Ising model construction} for each $(J,h) \in \supp(\mathcal{P})$ with $\epsilon = 0.04$ and
     \[ t^0_v =\max\set{  \left\lceil \frac{2}{2 q^+_\beta -1}\cdot  |\bar{h}_v|\right\rceil,1 } ,\quad e_{uv}= \max\set{\left\lceil \frac{\max\set{2,1/\tanh(\frac{\widetilde{\gamma}-1}{8})}}{(2 q^+_\beta -1)^2}\cdot |\bar{J}_{uv}|\right\rceil,1} , \]
   \[r_v = \min\set{r\in \N \mid r \text{ is odd and } r \geq C_{\gamma} \cdot t_v^3 m^2 \epsilon^{-2} \log(m/\epsilon)}  ,\]
where  $ C_{\gamma}>0$ is the constant dependent only on $ \gamma$ from \cref{thm:set parameter choice}, 
     we obtain $ (\tilde{J},\tilde{h})=\varphi(J,h)$ where
      $N= \tilde{O}_{\gamma}(m^3  \bar{W}^3 )$
      and  $\tilde{J}\in \R^{N\times N} , \tilde{h}\in \R^N$ satisfies
      \[1<\lambda_{\max}(\tilde{J}) -\lambda_{\min}(\tilde{J}) \leq \gamma\quad \text{and} \quad \norm{\tilde{J}}_\infty  + \norm{\tilde{h}}_\infty = O(1).\]
     Let  $ \tilde{\mathcal{P}} =\varphi_* \mathcal{P}$ be the push-forward distribution of $ \mathcal{P}$ by the map $\varphi.$ Let $\Psi$ be a deterministic map on $\set{\pm 1}^m,$ and $\Psi_{\max} = \max_{(J,h)\in \supp(\mathcal{P})}\max_{c} \Psi_*\mu_{J,h} (c)$ be the maximum mass of an element in the push-forward distribution $\Psi_*\mu_{J,h},$ taken over all distributions $ \mu_{J,h}$ in the family $(\mu_{J,h})_{(J,h)\sim \mathcal{P}}.$ Suppose $ \Psi_{\max}\leq 1/3.$ 
     
     Then, if the family of distributions $(\mu_{J,h})_{(J,h)\sim \mathcal{P}}$  is $ (\eta,\zeta,k,t, \Psi)$-hard for learning-to-generalize, the family of distributions $ (\tilde{\mu}_{\tilde{J}, \tilde{h}})_{(\tilde{J},\tilde{h}) \sim  \tilde{P}}$ is $ (\tilde{\eta},\tilde{\zeta},\tilde{k}, \tilde{t}, \tilde{\Psi})$-hard for learning-to-generalize where: 
     \[ \tilde{\eta} = 5 \exp (5k \Psi_{\max} ) \eta,  \quad \tilde{\zeta} =  (1+\epsilon)\zeta, \quad \tilde{k}= \lceil (k+1)/2\rceil, \quad \tilde{\Psi} = \Psi\circ Y   \]
    and: 
\[ t = \tilde{t} +O_{\gamma} \left( k^3 m^{10} \bar{W}^{11} \log^2(m)  \log (k/\eta) \log (m k \bar{W} )  \log k + k m^8 \bar{W}^9  \log^2(m) \log (k/\eta) \log (mk\bar{W}/\eta) \right)\]

In particular, for $ (J,h)\in \supp(\mathcal{P})$, let $\Lambda_{J,h} \subseteq \set{\pm 1}^m$ be such that $ \Pr_{y\sim \mu_{J,h}}[y\in \Lambda_{J,h}] \leq \zeta$ and that for any algorithm $\mathcal{A}$ that takes the model parameter $(J,h)\sim \mathcal{P}$  and   $k$  i.i.d. samples from $\mu_{J,h},$ and runs in time $t$:
 \[ \Pr_{(J,h) \sim \mathcal{P}, X \sim \mu_{J,h}^{\otimes k}, y \sim \mathcal{A}_{(J,h),X}} [ y \in \Psi^{-1} (\Psi(X))\text{ or } y \in \Lambda_{J,h}  ] > 1-\eta\]

Then, for $(\tilde{J},\tilde{h}) = \varphi(J,h) $, define $ \tilde{\Lambda}_{\tilde{J},\tilde{h}}  = Y^{-1}(\Lambda_{J,h})\subseteq \set{\pm 1}^N$. We have: $\Pr_{\sigma\sim \tilde{\mu}_{\tilde{J}, \tilde{h}} } [\sigma \in \tilde{\Lambda}] \leq (1+\epsilon)\zeta$ and for any algorithm $\tilde{\mathcal{A}}$  that takes the model parameter $(\tilde{J},\tilde{h})\sim \tilde{\mathcal{P}}$  and   $\tilde{k}$  i.i.d. samples from $\tilde{\mu}_{\tilde{J},\tilde{h}},$ and runs in time $\tilde{t}$:
\[ \Pr_{(\tilde{J},\tilde{h}) \sim \tilde{\mathcal{P}}, X \sim \tilde{\mu}_{\tilde{J}, \tilde{h}}^{\otimes \tilde{k}}, y \sim \tilde{\mathcal{A}}_{(\tilde{J},\tilde{h}),X}} [ y \in \tilde{\Psi}^{-1} (\tilde{\Psi}(X))\text{ or } y \in \tilde{\Lambda}_{\tilde{J},\tilde{h}}  ] > 1-\tilde{\eta}\]
\end{theorem}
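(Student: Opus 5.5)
The plan is to reduce \cref{cor:sparisfy a family of Ising models} to \cref{thm:set parameter choice}. The work is to check that every step of that proof goes through uniformly over $(J,h)\in\supp(\mathcal{P})$ when the gadget sizes are computed from the dominating parameters $(\bar J,\bar h)$ instead of $(J,h)$. Doing so buys two things. First, the vertex set, the partition into $S^{\text{vertex}}_v, S^u_v, R_v$, and the readout map $Y$ are the same for every member of the family, so $\tilde\Psi=\Psi\circ Y$ is one fixed map. Second, $\varphi$ is a deterministic function of $(J,h)$, so $(\tilde J,\tilde h)$ determines $(J,h)$ and $\tilde{\mathcal P}=\varphi_*\mathcal P$ is well defined.

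The first step is the structural claims, corresponding to part (1) of \cref{thm:set parameter choice}. The inequalities $|h_v|\le \bar h_v$ and $|J_{uv}|\le\bar J_{uv}$ give $|h_v|/t^0_v\le (2q^+_\beta-1)/2$ and $|J_{uv}|/e_{uv}\le \min\{(2q^+_\beta-1)^2/2,\ (2q^+_\beta-1)^2\tanh(\frac{\widetilde\gamma-1}{8})\}$. These are exactly the inequalities used in part (1) to keep $\varphi^{-1}_{\mathrm{vertex},\beta}$ and $\varphi^{-1}_{\mathrm{edge},\beta}$ well defined and to keep the matching weights $\tilde J_{uv}$ and fields $\tilde h_v$ small. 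I will re-read that proof to confirm it uses only these ratio bounds and never the exact sizes, so the spectral bound $1<\lambda_{\max}-\lambda_{\min}\le\gamma$ and the width bound $O(1)$ transfer verbatim. The vertex count becomes $N=\sum_v(t_v+r_v)=\tilde O_\gamma(m^3\bar W^3)$ with $\epsilon=0.04$. Part (2), the approximation $Z_{\mathcal H}(y)=(1\pm\epsilon)A\mu_H(y)$, uses the gadget sizes only through $r_v\ge C_\gamma t_v^3m^2\epsilon^{-2}\log(m/\epsilon)$, so it also holds for each member of the family.

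The second step is the hardness transfer, which I treat as a family version of part (3). Given $\tilde{\mathcal A}$ for the family $\tilde{\mathcal P}$, build $\mathcal A$ as in the proof of part (3). On input $(J,h)$ and $k$ samples, $\mathcal A$ computes $(\tilde J,\tilde h)=\varphi(J,h)$. It then runs rejection sampling with acceptance probability $\approx Z_{\mathcal H}(y)/(A\mu_H(y))$, completes each accepted phase vector $y$ via the conditional sampler for $\tilde\mu(\cdot\mid Y=y)$, feeds $\tilde k$ of the resulting samples to $\tilde{\mathcal A}$, and outputs $Y$ of $\tilde{\mathcal A}$'s output. The analysis in part (3) is pointwise in $(J,h)$. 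For fixed $(J,h)$ it bounds the probability that the output avoids $\Psi^{-1}(\Psi(X))\cup\Lambda_{J,h}$ from below by an expression in $\Pr[\tilde{\mathcal A}\text{ avoids }\tilde\Psi^{-1}(\tilde\Psi(\tilde X))\cup\tilde\Lambda]$, up to the factor $5\exp(5k(\Psi_*\mu_{J,h})_{\max})$ and negligible TV errors.

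Those errors are controlled by running the subroutines to accuracy $\eta/k$, which is where the $\log(k/\eta)$ terms in $t$ come from. I will replace $(\Psi_*\mu_{J,h})_{\max}$ by its uniform upper bound $\Psi_{\max}$ and replace $W$ by $\bar W$ in the runtime, since the runtime depends only on $N$ and the gadget sizes. Averaging the pointwise inequality over $(J,h)\sim\mathcal P$ is then just linearity of expectation, and it gives the stated $\tilde\eta$. The bound $\tilde\mu(\tilde\Lambda_{\tilde J,\tilde h})\le(1+\epsilon)\zeta$ follows from \eqref{eq:gadget approximate original Ising formal (inside main reduction thm)} summed over $y\in\Lambda_{J,h}$.

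I expect the main obstacle to be checking that the proof of part (3) really is pointwise. It must not use hardness for the single distribution $\mu$ in a way that fails under averaging; for example, it should not need the reduction's failure probability to be small for each $\theta$ separately. If the proof does need something like that, I would restate its key inequality as a per-$\theta$ inequality between success probabilities and integrate it over $\mathcal P$. The other thing to confirm is that every constant in the runtime and accuracy bounds depends on $(J,h)$ only through $\bar W$ and $m$.
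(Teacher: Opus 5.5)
Your proposal is correct and is essentially the paper's argument. The paper only says this theorem follows from the same argument as \cref{thm:set parameter choice}, and your plan spells out what that sentence leaves implicit: fix all gadget sizes from $(\bar J,\bar h)$ so that $Y$ and $\tilde\Psi=\Psi\circ Y$ are common to the whole family, then integrate over $\mathcal{P}$ the per-parameter inequality from part (3), which is linear in $\tilde{\mathcal{A}}$'s success probability with uniform factor $\tfrac{7}{16}e^{-5k\Psi_{\max}}e^{-3k\varepsilon}$ and additive loss $2\delta k$.

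One small slip: $\varphi$ being deterministic does not make it injective, and $\varphi_*\mathcal{P}$ is well defined regardless. What gives you a meaningful $\tilde\Lambda_{\tilde J,\tilde h}$ is that $\varphi^{-1}_{\mathrm{vertex},\beta}$ and $\varphi^{-1}_{\mathrm{edge},\beta}$ are strictly increasing on a fixed gadget layout, so $(\tilde J,\tilde h)$ recovers $h$ and the off-diagonal entries of $J$; in any case the reduction only uses the joint law of $(J,h)$ and $\varphi(J,h)$.
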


\begin{remark}
In the construction of the family of Ising models in \cref{cor:sparisfy a family of Ising models}, first, we set $t^0_v, e_{uv}, r_v$ for distinct $u,v \in [m]$ according to $(\bar{J},\bar{h}).$ For each vertex $v\in [m]$, construct the disjoint subsets $S^{\text{vertex}}_v$ of size $ t^0_v$, $R_v$ of size $r_v$, $S_v^u$ of size $e_{uv}$ for $u\neq v.$ Recall that $S^{\text{edge}}_v = \bigcup_{u\neq v}S_v^u ,$ $S_v = S^{\text{vertex}}_v  \cup S^{\text{edge}}_v .$ For each $v$, we add edges of weight $ \beta/r_v$ between $i,j$ in $R_v \cup S_v$ that are not both in $S_v,$ and set the external field of vertices in $ S^{\text{edge}}_v\cup R_v$ to $0.$ 
We add a perfect matching $E(S_u^v, S_v^u)$ between $ S_u^v$ and $S_v^u$ for each pair of distinct vertices $u,v$, with edge weight to be specified later. 

Then, for a given $(J,h),$ we construct $(\tilde{J},\tilde{h})$ according to \cref{subsec:sparsifying Ising model construction}, meaning that we set the external fields of vertices in $S^{\text{vertex}}_v$ according to $ h_v$ and set the weights of the edges in the perfect matching $E(S_u^v, S_v^u)$ between $ S_u^v$ and $S_v^u$ according to $ J_{uv}.$ Specifically:
\[\forall i \in S^{\text{vertex}}_v: \tilde{h}_i = \varphi_{\mathrm{vertex},\beta}^{-1}\left(\frac{h_v}{t_v^0}\right), \forall (i,j) \in E(S_u^v, S_v^u): \tilde{J}_{ij} = \varphi_{\mathrm{edge},\beta}^{-1} \left(\frac{J_{uv}}{e_{uv}}\right) \]
\end{remark}

\subsection{Proof of \cref{thm:set parameter choice}, parts (1) and (2)}
\label{sec:nearcritical}

We first prove the following generalization of \cite[Lemma 2.1]{GKK24}.
\begin{proposition}\label{prop:single gadget control}
Consider a set of vertices $V$ where $|V|=n$ and a subset $S\subseteq V$ where $ |S| = t,$ and $r= n-t$ is odd.
Consider the Ising model $\mu$ on $V$ with interaction matrix \[J =\frac{\beta}{r}( \mathbf{1}_V \mathbf{1}_V^\intercal -  \mathbf{1}_S   \mathbf{1}_S^\intercal)\footnote{where $\mathbf{1}_V$ denotes the all-1 vector, where $ \mathbf{1}_S = (\mathbf{1}[v\in S])_{v\in V}$ is the characteristic vector for $S.$}\] and  external field $\mathbf{h}= (h_v)_{v\in V}$ where $h_v =0 \forall v \not \in S.$ Consider a parameter $\epsilon \in (0,1).$
There exists a constant $c_\beta>0$ dependent only on $\beta$ so that if $r$ is odd and
\begin{equation}\label{eq:defining r in construction}
   r\geq c_\beta\cdot t^3 \epsilon^{-2} \log(1/\epsilon), 
\end{equation}
the following holds.
    Let
    \begin{equation}
    \begin{split}
        \lambda_{+}(\mathbf{h}) &=  \exp(r f_\beta(q^+_\beta)) \prod_{v\in S}  2\cosh ( h_v  +  \beta (2 q^+_\beta-1)  )\\ \lambda_{-}(\mathbf{h})  &=  \exp(r f_\beta(q^-_\beta)) \prod_{v\in S}  2\cosh ( h_v  + \beta (2 q^-_\beta -1)  )  
    \end{split}
    \end{equation}
        
Let
\begin{equation}\label{def:phase partition function}
        Z^+ =\sum_{ \sigma \in \{\pm 1\}^V : y(\sigma_{V\setminus S}) = +1 } \exp\!\left(\tfrac{1}{2}\, \sigma^\top J \sigma +\sigma^\top h \right ), \quad Z^- =\sum_{ \sigma \in \{\pm 1\}^V : y(\sigma_{V\setminus S}) = -1 } \exp\!\left(\tfrac{1}{2}\, \sigma^\top J \sigma +\sigma^\top h \right ) 
\end{equation}
    then there exists a constant $ a$ that can be computed in $ O(r)$ time such that:
    \begin{equation}\label{eq:phase partition function approximation}
       Z^+ = (1\pm \epsilon) \cdot a\cdot \lambda_{+} (\mathbf{h}), \quad   Z^- = (1\pm \epsilon) \cdot a\cdot \lambda_{-} (\mathbf{h})
    \end{equation}
Moreover, if for any $\tau_S\in \set{\pm 1}^S$, we denote
\begin{equation}\label{eq:product distr}
\begin{split}
    Q_S^+(\tau_S) &=\prod_{v\in S}  \frac{\exp(\tau_v (h_v +\beta (2 q^+_\beta -1)) ) }{ 2\cosh ( h_v +\beta (2 q^+_\beta-1) )  }\\
    Q_S^-(\tau_S) &=\prod_{v\in S}  \frac{\exp(\tau_v (h_v +\beta (2 q^-_\beta -1)) ) }{ 2\cosh( h_v +\beta (2 q^-_\beta -1) )  }
\end{split}
\end{equation}
it holds that:
\begin{equation}\label{eq:conditional distr on + become product}
   (1-\epsilon)  Q_S^+(\tau_S)\leq  \Pr[ \sigma_S = \tau_S| y_{V\setminus S} (\sigma) = +1]  \leq(1+\epsilon) Q_S^+(\tau_S)
\end{equation}
and
\begin{equation}\label{eq:conditional distr pn - become product}
   (1-\epsilon) Q_S^-(\tau_S)\leq  \Pr[ \sigma_S = \tau_S| y_{V\setminus S} (\sigma) = -1]  \leq(1+\epsilon)  Q_S^-(\tau_S)
\end{equation}

\end{proposition}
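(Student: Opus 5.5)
The plan is to reduce everything to a one-dimensional sum over the magnetization of $R := V\setminus S$ and then apply a Laplace-type estimate. First I would write the Hamiltonian explicitly. Since $J=\frac{\beta}{r}(\mathbf{1}_V\mathbf{1}_V^\intercal-\mathbf{1}_S\mathbf{1}_S^\intercal)$, put $M=m_R(\sigma)$ and $m_S=m_S(\sigma)$. Then
\[
\tfrac12\sigma^\intercal J\sigma=\tfrac{\beta}{2r}\big((M+m_S)^2-m_S^2\big)=\tfrac{\beta}{2r}M^2+\tfrac{\beta}{r}Mm_S .
\]
For fixed $\sigma_R$, the weight of $\sigma_S=\tau_S$ is therefore $\exp(\frac{\beta}{2r}M^2)\prod_{v\in S}\exp(\tau_v(h_v+\beta M/r))$. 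Summing over $\tau_S$ and grouping $\sigma_R$ by $\alpha=(M/r+1)/2$ (so the number of up-spins is $\alpha r$) gives
\[
Z^\pm=\sum_{\alpha:\,\pm(2\alpha-1)>0}\binom{r}{\alpha r}\exp\!\big(\tfrac{\beta r}{2}(2\alpha-1)^2\big)\prod_{v\in S}2\cosh\big(h_v+\beta(2\alpha-1)\big).
\]
The conditional law of $\sigma_S$ given $\alpha$ is exactly the product measure appearing in $Q_S^\pm$, with $2q^\pm-1$ replaced by $2\alpha-1$. Since $r$ is odd, $\alpha=1/2$ never occurs, so the split into $\pm$ is clean.

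Second, I would do the Laplace analysis on the $+$ side (the $-$ side is symmetric). Write $\binom{r}{\alpha r}e^{\frac{\beta r}{2}(2\alpha-1)^2}=\Theta(\text{poly})\cdot e^{rf_\beta(\alpha)}$ via Stirling. The function $f_\beta$ has a nondegenerate maximum at $q^+$ on $(1/2,1)$, since $f''_\beta(q^+)<0$ by \cref{prop:property of q^+}. So the sum concentrates on $|\alpha-q^+|\le \delta:=C\sqrt{\log(1/\epsilon')/r}$, and the tail outside this window costs a relative error $\epsilon'$.

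Third, I would define $a:=\sum_{\alpha>1/2}\binom{r}{\alpha r}e^{\frac{\beta r}{2}(2\alpha-1)^2}e^{-rf_\beta(q^+)}$. This is computable in $O(r)$ time, and by the symmetry $\alpha\leftrightarrow1-\alpha$ it is the same constant for both signs. The key is to control the $S$-factor inside the window. The function $\alpha\mapsto\sum_{v\in S}\ln 2\cosh(h_v+\beta(2\alpha-1))$ is $2\beta t$-Lipschitz, because $\tanh$ is bounded by $1$. Hence inside the window the ratio of $\prod 2\cosh(\cdot)$ to its value at $q^+$ is $\exp(\pm 2\beta t\delta)$.

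The main obstacle is making this error $\le\epsilon$. A crude Lipschitz bound gives $t\sqrt{\log(1/\epsilon)/r}\le\epsilon$, i.e.\ $r\gtrsim t^2\epsilon^{-2}\log(1/\epsilon)$, which is within the stated budget $t^3\epsilon^{-2}\log(1/\epsilon)$. There is a further subtlety: the $S$-tilt also shifts where the maximum sits, but as long as $t\delta\ll1$ the perturbation argument goes through, and the extra factor of $t$ in the hypothesis gives slack for this. There is one more issue: the tilt could in principle make the tail outside the window larger. The fix is that the $S$-factor is at most $e^{2\beta t}$ globally, while the tail decays like $e^{-cr\delta^2}$. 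Choosing $\delta$ with $r\delta^2\gtrsim t+\log(1/\epsilon)$ is affordable because $r\ge c_\beta t^3\epsilon^{-2}\log(1/\epsilon)$; with this choice we still have $t\delta\lesssim\epsilon$. This yields \eqref{eq:phase partition function approximation}.

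Finally, for \eqref{eq:conditional distr on + become product}, I would write $\Pr[\sigma_S=\tau_S\mid y=+1]$ as the same $\alpha$-sum with $\prod 2\cosh(\cdot)$ replaced by $\prod e^{\tau_v(h_v+\beta(2\alpha-1))}$, divided by $Z^+$. The numerator factor is also $t\beta$-Lipschitz in $2\alpha-1$ in log-scale, uniformly in $\tau_S$. So both numerator and denominator equal $a\,e^{rf(q^+)}\times(\text{value at }q^+)\times e^{\pm O(t\delta)}$, plus tails of relative size at most $\epsilon'e^{2\beta t}$ that are handled as before. Taking the ratio gives $(1\pm\epsilon)Q_S^+(\tau_S)$ after adjusting $c_\beta$.
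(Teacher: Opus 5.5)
Your proposal is correct and follows essentially the same route as the paper. Both arguments decompose by the magnetization of $V\setminus S$, use Stirling together with the quadratic drop of $f_\beta$ away from $q^\pm_\beta$ to confine the sum to a window of width $\delta$, and control the $S$-factor through its $2\beta t$-Lipschitz logarithm. Both also absorb the global $e^{O(\beta t)}$ tilt into the tail by requiring $r\delta^2\gtrsim t+\log(1/\epsilon)$ while keeping $t\delta\lesssim\epsilon$, which is exactly where $t^3\epsilon^{-2}\log(1/\epsilon)$ comes from.

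The differences are cosmetic. Your $a$ sums over all $\alpha>1/2$ rather than only the window, but the two differ by a negligible factor. You obtain the conditional law as a direct numerator/denominator ratio, whereas the paper compares pairs $\tau,\tau'$ and then normalizes. Your worry about the tilt shifting the maximum is unnecessary, since the window argument never uses the tilted maximizer. It is simplest to fix $\delta=\Theta(\epsilon/t)$, as the paper does, so that $r\delta^2$ also dominates the $O(\log r)$ Stirling losses without needing a sharper Gaussian-tail estimate.
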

We need the following helper propositions.
\begin{proposition}\label{prop:deviation bound for entropy function H alpha}
    Let the function $H $ be as in \Cref{eq:binary entropy function H}, then there exists an absolute constant $C'$ s.t. for any $\alpha \in (0,1)$ where $r\alpha \in \N: $
    \[r H(\alpha) - C' \ln  r \leq \ln  \binom{r}{\alpha r} \leq r H(\alpha) + C' \ln  r\]
\end{proposition}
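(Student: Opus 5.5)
The statement is the standard Stirling estimate for binomial coefficients, and I would prove it directly from Stirling's formula with explicit error bounds. The edge cases are handled first. If $\alpha r\in\{0,r\}$ then $\binom{r}{\alpha r}=1$, and with the convention $0\ln 0=0$ we also have $H(\alpha)=0$, so the claim is trivial (we may assume $r\ge 2$ so that $\ln r>0$, adjusting $C'$ for $r=1$). Since the proposition requires $\alpha\in(0,1)$, these degenerate values need only be mentioned. So assume $k:=\alpha r\in\{1,\dots,r-1\}$.

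The main step uses Robbins' form of Stirling's formula: for every integer $n\ge1$,
\[
n! = \sqrt{2\pi n}\,(n/e)^n e^{\theta_n}, \qquad \theta_n\in\left(\tfrac{1}{12n+1},\tfrac{1}{12n}\right).
\]
Write $\binom{r}{k} = r!/(k!(r-k)!)$ and substitute. The factors $e^{-r}$, $e^{k}$ and $e^{r-k}$ cancel. The power terms give
\[
\frac{r^r}{k^k (r-k)^{r-k}} = \alpha^{-k}(1-\alpha)^{-(r-k)} = \exp(rH(\alpha)).
\]
Hence
\[
\ln\binom{r}{k} = rH(\alpha) + \tfrac12\ln\frac{r}{2\pi k(r-k)} + \theta_r-\theta_k-\theta_{r-k}.
\]

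It remains to bound the correction terms by $O(\ln r)$. The $\theta$ terms are each at most $1/12$ in absolute value, so together they contribute at most $1/4$. For the square-root term, $1\le k(r-k)\le r^2/4$ gives
\[
\frac{r}{2\pi k(r-k)} \in \left[\frac{2}{\pi r},\ \frac{r}{2\pi}\right],
\]
so half its logarithm has absolute value at most $\tfrac12\ln r + O(1)$. For $r\ge 2$ all constants are absorbed into $C'\ln r$, for instance with $C'=3$. The case $r=1$ has no valid $\alpha\in(0,1)$, so it never arises.

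There is no real obstacle here; the only care needed is making the constant absolute, which Robbins' explicit bounds supply. An alternative that avoids Stirling entirely also works: the upper bound follows from $\binom{r}{k}\alpha^k(1-\alpha)^{r-k}\le1$, which gives $\ln\binom{r}{k}\le rH(\alpha)$ outright. The lower bound follows because the term $\binom{r}{k}\alpha^k(1-\alpha)^{r-k}$ is the largest of the $r+1$ terms in the binomial expansion of $(\alpha+(1-\alpha))^r=1$, since $k=\alpha r$ is a mode. That term is therefore at least $1/(r+1)$, so $\ln\binom{r}{k}\ge rH(\alpha)-\ln(r+1)$. I would present this second argument, as it is cleaner.
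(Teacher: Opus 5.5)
Your proposal is correct. Your first argument is the paper's proof with explicit constants. The paper only substitutes $\ln n! = n\ln n - n + O(\ln n)$ into $\binom{r}{\alpha r} = r!/((\alpha r)!\,((1-\alpha)r)!)$, and Robbins' bounds pin down the constant. They also fix a small imprecision: at $n=1$ the error in $\ln n!\approx n\ln n-n$ is $1$ while $\ln 1=0$, so one really needs $O(1+\ln n)$ combined with $r\ge 2$, exactly as you do.

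The argument you say you would present is genuinely different and more elementary. The upper bound comes from a single term of the expansion $1=(\alpha+(1-\alpha))^r$ together with $\alpha^k(1-\alpha)^{r-k}=e^{-rH(\alpha)}$. The lower bound uses that $k=\alpha r$ is the mode of $\mathrm{Bin}(r,\alpha)$, so that term is at least $1/(r+1)$, and $\ln(r+1)\le 2\ln r$ for $r\ge 2$. You should include the one-line check that $k$ is the mode: the ratio of consecutive terms, $\frac{r-j}{j+1}\cdot\frac{\alpha}{1-\alpha}$, is $>1$ exactly when $j\le k-1$.

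This route needs no asymptotics, gives an upper bound with no error term at all, and yields the explicit constant $C'=2$. What Stirling buys instead is the exact second-order term $\frac12\ln\frac{r}{2\pi k(r-k)}$. That is, the loss is $\frac12\ln r+O(1)$ when $\alpha$ is bounded away from $0$ and $1$, and only $O(1)$ when $\min\{k,r-k\}=O(1)$. This is more precision than the paper needs, since the proposition is only used to compare $\ln Z^{\ell/r}$ with $\ln Z^{k/r}$ up to an additive $O(\ln r)$ in the single-gadget analysis.
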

\begin{proof}
    The statement follows from $\binom{r}{\alpha r} = \frac{r!}{(\alpha r)! ((1-\alpha)r)!} $ and the Stirling estimation 
    \[ \ln n! = n \ln n - n  + O(\ln n)\]
\end{proof}

\begin{proposition}\label{prop:property of q^+}
    Let the function $f_{\beta}$ be as defined in \cref{eq:magnetization function f}. Suppose $\beta > 1.$ The function $f_\beta$ has two maximizers at $q^+_\beta\in (1/2,1), q^-_\beta\in (0,1/2)$ which satisfy:
    \begin{equation}
        f_\beta'(\alpha) = \ln\!\left(\frac{1-\alpha}{\alpha}\right) + 2\beta(2\alpha - 1) = 0
    \end{equation}
    Note that $q^+_\beta + q^-_\beta=1,$ and $ f_{\beta}''(q^+_\beta)< 0,f_{\beta}''(q^-_\beta)<0.$ The function $f_\beta$ is increasing in $(0,q^-_\beta),$ decreasing in $ (q^-_\beta, 1/2),$ increasing in $ (1/2, q^+_\beta)$ and decreasing in $ (q^+_\beta,1).$
    
    Moreover, $q^+_\beta$ is a strictly increasing function of $\beta$ i.e.  $q^+_\beta >q^+_{\beta'}\Leftrightarrow \beta >\beta'.$
\end{proposition}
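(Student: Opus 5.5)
The plan is to analyze $f_\beta$ on $(0,1)$ through its derivatives \cref{eq:derivative of f,eq:second derivative of f}. First, the symmetry $f_\beta(\alpha)=f_\beta(1-\alpha)$ follows because $H$ is symmetric under $\alpha\mapsto 1-\alpha$ and $(2\alpha-1)^2$ is invariant under it. This gives $f_\beta'(1-\alpha)=-f_\beta'(\alpha)$. So it suffices to study $(1/2,1)$ and then reflect, which will give $q^-_\beta=1-q^+_\beta$.

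On $(1/2,1)$, substitute $x=2\alpha-1\in(0,1)$. Since $\ln\frac{1-\alpha}{\alpha}=-2\tanh^{-1}(x)$, we get $f_\beta'(\alpha)=2(\beta x-\tanh^{-1}(x))$. Let $g(x)=\beta x-\tanh^{-1}(x)$. Then $g(0)=0$, $g'(x)=\beta-\frac{1}{1-x^2}$, and $g'$ is strictly decreasing, with $g'(0)=\beta-1>0$ and $g'(x)\to-\infty$ as $x\to1$. Hence $g$ is strictly concave on $[0,1)$, starts at $0$ with positive slope, and tends to $-\infty$. It therefore has exactly one zero $x^*_\beta\in(0,1)$, with $g>0$ on $(0,x^*_\beta)$ and $g<0$ on $(x^*_\beta,1)$.

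Set $q^+_\beta=(1+x^*_\beta)/2$. Then $f_\beta$ is increasing on $(1/2,q^+_\beta)$ and decreasing on $(q^+_\beta,1)$, and reflection gives the monotonicity on $(0,1/2)$. The point $1/2$ is a local minimum with $f_\beta''(1/2)=4\beta-4>0$, and $q^\pm_\beta$ are the two maximizers. For strict concavity at the maximizers, note $f_\beta''(\alpha)=4g'(x)$ when $\alpha=(1+x)/2$, because $\alpha(1-\alpha)=(1-x^2)/4$. Since $g(x^*)=0=g(0)$ and $g$ is strictly concave, the slope at $x^*$ must be negative, i.e. $g'(x^*)<0$. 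Therefore $f_\beta''(q^+_\beta)<0$, and the same holds at $q^-_\beta$ by symmetry.

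For monotonicity in $\beta$, I would note that $x^*_\beta$ is characterized by $\beta=\tanh^{-1}(x)/x$. The function $\tanh^{-1}(x)/x=\sum_{k\ge0}x^{2k}/(2k+1)$ is strictly increasing on $(0,1)$, with limit $1$ at $0$ and $+\infty$ at $1$. So it is a bijection $(0,1)\to(1,\infty)$, and its inverse $\beta\mapsto x^*_\beta$ is strictly increasing. Hence $q^+_\beta$ is strictly increasing in $\beta$, which gives the stated equivalence. The only mildly delicate step is uniqueness of the nontrivial zero, and concavity of $g$ settles it.
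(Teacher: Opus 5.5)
Your proof is correct and follows essentially the paper's route. Both arguments rest on the single sign change of $f_\beta''$ on $(1/2,1)$, which in your version is the strict concavity of $g(x)=\beta x-\tanh^{-1}(x)$, since $f_\beta''=4g'$. They combine it with $f_\beta'(1/2)=0$, $f_\beta'\to-\infty$ at $1$, and the reflection $f_\beta'(1-\alpha)=-f_\beta'(\alpha)$, and then obtain monotonicity of $q^+_\beta$ by inverting $x\mapsto\tanh^{-1}(x)/x$, which is the paper's $g(\alpha)=\ln(\alpha/(1-\alpha))/(2\alpha-1)$ up to a factor of $2$.

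The one real difference is that you show this map is strictly increasing via its positive-coefficient power series $\sum_{k\ge 0}x^{2k}/(2k+1)$, rather than the paper's derivative computation with the auxiliary function $h$. Your way is shorter and also gets the constant right: the paper writes $g(q^+_\beta)=\beta$ where it should be $2\beta$, which is harmless there.
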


\begin{proposition}\label{prop:deviation bound for f alpha for bounded beta}
    Let the function $f_{\beta}$ be as defined in \cref{eq:magnetization function f}, and $q^+_\beta \in (1/2,1)$ satisfies \cref{eq:def of q+ and q-} be the unique maximizer of $f_\beta$ on $(1/2,1).$
Suppose $\beta \in (1,2].$
There exist constants $C>0 $ and $ \delta_0 \in (0,1/2)$ that depend only on $\beta$  so that: for any $\delta \in (0,\delta_0]$  and $ \alpha, \alpha'\in [1/2,1)$ where $|\alpha' -q^+_\beta|\geq \delta$ and $ |\alpha - q^+_\beta| \leq \delta/2 :$
   \[ f_\beta(\alpha') - f_\beta(\alpha)\leq -C \delta^2\]
\end{proposition}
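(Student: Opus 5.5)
The plan is to exploit strict concavity of $f_\beta$ near its interior maximizer $q^+_\beta$ together with monotonicity away from it. We work on a neighborhood of $q^+_\beta$ on which $f_\beta''$ is bounded above by a negative constant. We then handle $\alpha'$ far from $q^+_\beta$ separately, using the monotonicity statements of \cref{prop:property of q^+}.

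\textbf{Step 1 (local quadratic behaviour).} By \cref{prop:property of q^+}, $f_\beta''(q^+_\beta)<0$. Since $f_\beta''(\alpha) = -\frac{1}{\alpha(1-\alpha)}+4\beta$ is continuous on $(0,1)$, choose $\delta_0\in(0,1/2)$ depending only on $\beta$ with two properties: $[q^+_\beta-2\delta_0, q^+_\beta+2\delta_0]\subset(1/2,1)$, and on this interval $-L\le f_\beta''\le -c$ for constants $c,L>0$ depending only on $\beta$. Since $f_\beta'(q^+_\beta)=0$, Taylor's theorem on this interval gives
\[
-\tfrac{L}{2}(x-q^+_\beta)^2 \;\le\; f_\beta(x)-f_\beta(q^+_\beta) \;\le\; -\tfrac{c}{2}(x-q^+_\beta)^2 .
\]
One could even take $L=0$ as a lower bound, but the two-sided form is convenient.

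\textbf{Step 2 (bound $f_\beta(\alpha)$ from below).} If $|\alpha-q^+_\beta|\le\delta/2$, then $f_\beta(\alpha)\ge f_\beta(q^+_\beta)-\frac{L}{8}\delta^2$.

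\textbf{Step 3 (bound $f_\beta(\alpha')$ from above).} Suppose $\alpha'\in[1/2,1)$ and $|\alpha'-q^+_\beta|\ge\delta$. We use that $f_\beta$ is increasing on $(1/2,q^+_\beta)$ and decreasing on $(q^+_\beta,1)$, and by continuity this also holds on $[1/2,q^+_\beta]$. Hence $f_\beta(\alpha')\le \max\{f_\beta(q^+_\beta-\delta), f_\beta(q^+_\beta+\delta)\}$, where $q^+_\beta-\delta\ge 1/2$ because $\delta\le\delta_0$. Both points lie in the Step 1 interval, so $f_\beta(\alpha')\le f_\beta(q^+_\beta)-\frac{c}{2}\delta^2$.

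\textbf{Step 4 (combine).} Combining gives $f_\beta(\alpha')-f_\beta(\alpha)\le -(\frac c2-\frac L8)\delta^2$, which requires $c/2>L/8$. That may fail, and this is the main obstacle: the curvature ratio need not be close to $1$. The remedy is to shrink $\delta_0$ so that $L/c$ is close to $|f''_\beta(q^+_\beta)|/|f''_\beta(q^+_\beta)|=1$. Concretely, pick $\delta_0$ small enough that $c\ge \frac{3}{4}|f_\beta''(q^+_\beta)|$ and $L\le \frac54|f_\beta''(q^+_\beta)|$. Then $\frac c2-\frac L8\ge(\frac38-\frac{5}{32})|f''_\beta(q^+_\beta)|>0$, and we set $C$ equal to this quantity. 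All constants depend only on $\beta$, because $q^+_\beta$ and $f''_\beta$ do.
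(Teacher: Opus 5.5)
Your proof is correct and essentially matches the paper's argument: both reduce via monotonicity of $f_\beta$ to $\alpha'=q^+_\beta\pm\delta$ and use the local quadratic behaviour at $q^+_\beta$. The only difference is how the Taylor error is controlled: the paper uses a cubic remainder with $\delta_0\le C_1/(3C_2)$, while you use the Lagrange form after shrinking the window until $f_\beta''$ stays in $[\frac54 f_\beta''(q^+_\beta),\frac34 f_\beta''(q^+_\beta)]$. Delete the aside that one could take $L=0$: it would assert $f_\beta(x)\ge f_\beta(q^+_\beta)$ near a strict maximizer, which is false, though nothing in your argument relies on it.
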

We leave the proofs of \cref{prop:property of q^+} and \cref{prop:deviation bound for f alpha for bounded beta} to \cref{sec:defer proof from ising sparsification}.

\begin{proof}[Proof of \cref{prop:single gadget control}]
Recall from the setup in \cref{thm:set parameter choice} that $\beta \in (1,2].$
We omit the subscript $\beta$ for convenience.
    Let $\alpha \in [0,1]$ be such that $\alpha r$ is an integer. For a configuration 
$\tau \in \{-1, +1\}^S$, let $Z^\alpha(\tau)$ be the contribution to the partition 
function by configurations $\sigma$ with $\alpha r$ spins from $V \setminus S$ set to $+1$, 
$(1-\alpha) r$ spins from $V \setminus S$ set to $-1$ and $\sigma_S = \tau$. Concretely, let
\begin{equation}
\begin{split}
    Z^\alpha(\tau) &= 
\sum_{\sigma \in \{\pm 1\}^V : \sigma_S = \tau,\; m_{V\setminus S}(\sigma) = (2\alpha - 1) r}
\exp\!\left(\tfrac{1}{2}\, \sigma^\top J \sigma +\sigma^\top h \right ) \\
&= \binom{r}{\alpha r} \exp\left(  \frac{\beta}{2}\cdot r (2\alpha-1)^2 + \sum_{v\in S}( h_v +  \beta (2\alpha-1) ) \tau_v\right).
\end{split}
\end{equation}

We also define
\begin{equation}
    \begin{split}
        Z^\alpha =\sum_{\tau\in \set{\pm 1}^S}  Z^\alpha (\tau) 
        &=  \binom{r}{\alpha r} \exp\left(\frac{\beta}{2}\cdot r (2\alpha-1)^2\right)\sum_{\tau\in \set{\pm 1}^S}  \exp\left( \sum_{v\in S}( h_v +  \beta (2\alpha-1) ) \tau_v \right)\\
        &= \binom{r}{\alpha r} \exp\left(\frac{\beta}{2}\cdot r (2\alpha-1)^2 \right)\prod_{v\in S}  2\cosh ( h_v  +  \beta (2\alpha-1)  )
    \end{split}
\end{equation}
Next, we define
\begin{equation}
    \begin{split}
        Z^+(\tau) =\sum_{k= \lceil r/2\rceil}^{r} Z^{k/r} (\tau)\quad &\text{and} \quad
         Z^-(\tau) =\sum_{k= 0}^ {\lfloor r/2\rfloor} Z^{k/r} (\tau)
    \end{split}
\end{equation}
We note that
\[  Z^+ =\sum_{k= \lceil r/2\rceil}^{r} Z^{k/r} \quad \text{and} \quad
         Z^- =\sum_{k= 0}^ {\lfloor r/2\rfloor} Z^{k/r} \]

Fix parameter $\delta >0$ to be chosen later.

Let \[L^+= (r/2,r]\cap \N\quad \text{and} \quad L^-=[0,r/2)\cap \N\],  \[K_\delta^+= \set{k\in L^+ : |k/r - q^+| \leq  \delta } \quad \text{and} \quad K_\delta^- =\set{k\in L^- : |k/r - q^-| \leq  \delta } .\]

Let $\delta_0, C $ be as defined in \cref{prop:deviation bound for f alpha for bounded beta}. Note that $\delta_0\in(0,1/2)$ and $C>0.$
Consider $\delta\in (0,\delta_0]$,  and $\ell, k\in L^+$ such that $ |\ell/r-q^+|>  \delta$ and $ |k/r- q^+|\leq \delta/2.$ By \cref{prop:deviation bound for entropy function H alpha} and \cref{prop:deviation bound for f alpha for bounded beta}, we have:
\begin{align*}
    \ln Z^{\ell/r} - \ln Z^{k/r} &\leq r (f(\ell/r)-f(k/r))+ \sum_{v\in S}\left| \ln\left(\frac{ \cosh ( h_v  + \beta (2\ell/r-1)  ) } {\cosh ( h_v  + \beta (2k/r-1) )}  \right)\right| +  O(\log r)\\
    &\leq -r \delta^2 C  + 2\beta  t \cdot \left|\frac{\ell-k}{r}\right| +O(\ln r)\\
    &\leq -r \delta^2 C + \beta t + O(\ln r)
\end{align*}
where we bound the second term using the fact that $ |S|=t$ and the boundedness of $\tanh:$
\[|\ln  \cosh(x) -\ln \cosh(y)| =|\int_{z\in (x,y)} \tanh(z) dz | \leq  |x-y|. \]
Assuming in addition that $ r \geq \frac{1}{\delta},$ then $K_{\delta/2}^+ \neq \emptyset. $ Indeed, if $K_{\delta/2}^+ = \emptyset$ then $ \lceil r q^+\rceil, \lfloor r q^+\rfloor$ are not in $K_{\delta/2}^+, $ thus
\[ r q^+ -  \lfloor r q^+\rfloor > \frac{\delta}{2} \cdot r\quad \text{and} \quad \lceil r q^+\rceil - r q^+ > \frac{\delta}{2} \cdot r \Rightarrow 1\geq  \lceil r q^+\rceil- \lfloor r q^+\rfloor >\delta r.  \]
Hence
\begin{align*}
    \frac{\sum_{k\in L^+\setminus K_\delta^+ } Z^{k/r} }{ \sum_{k \in K_\delta^+}Z^{k/r}} \leq \frac{(r- |K_\delta^+|) }{|K_{\delta/2}^+|} \cdot \exp(-r \delta^2 C + \beta t + O(\ln r)) \leq \exp(-\frac{8}{9}\cdot r\delta^2 C) 
\end{align*}
where we choose $ r \geq\max \set{ \frac{20\beta t }{\delta^2  C}, 10}.$ In this case,
\begin{equation}\label{eq:controlling concentration no pinning}
    1\leq \frac{\sum_{k\in L^+ } Z^{k/r} }{ \sum_{k \in K_\delta^+}Z^{k/r}} \leq (1+  \exp(-\frac{8}{9}\cdot r\delta^2 C) )
\end{equation}
Note that $ L^- = \set{r(1-k/r) | k \in L^+},$ and  since $q ^- = 1- q^+,$ $ K_{\delta}^- = \set{r(1-k/r) | k \in K_{\delta}^+}.$ By a similar argument,
\begin{equation}\label{eq:controlling concentration no pinning minus phase}
    1\leq \frac{\sum_{k\in L^- } Z^{k/r} }{ \sum_{k \in K_\delta^-}Z^{k/r}} \leq (1+  \exp(-\frac{8}{9}\cdot r\delta^2 C) )
\end{equation}
Next, for $k\in K_\delta^+$ we have:
\begin{equation}\label{eq:concentration on + phase non pinning}
\begin{split}
    \left|\ln \frac{Z^{k/r}}{\binom{r}{k} \exp\left(\frac{\beta}{2}\cdot r (2k/r-1)^2 \right) } - \ln \frac{\lambda_{+}(\mathbf{h})}{\exp(r f(q^+))}\right| &\leq  \sum_{v\in S} \left|\ln\left(\frac{ \cosh ( h_v  + \beta (2 k/r-1)  ) } {\cosh ( h_v  + \beta (2q^+-1) )}  \right) \right| \\
    &\leq 2\beta t |k/r-q^+|\leq 2 \beta t \delta. 
\end{split}
\end{equation}
Similarly, for $k\in K_{\delta}^-$
\begin{equation}\label{eq:concentration on - phase non pinning}
    \left|\ln  \frac{Z^{k/r}}{\binom{r}{k} \exp\left(\frac{\beta}{2}\cdot r (2k/r-1)^2 \right) } - \ln \frac{\lambda_{-}(\mathbf{h})}{\exp(r f(q^-))}\right| \leq  \sum_{v\in S} \left|\ln\left(\frac{ \cosh ( h_v  + \beta (2 k/r-1)  ) } {\cosh ( h_v  + \beta (2q^--1) )}  \right) \right|\leq 2 \beta t \delta. 
\end{equation}

Hence, by \cref{eq:controlling concentration no pinning}  and \cref{eq:concentration on + phase non pinning}: 
\begin{equation}
\begin{aligned}
    \sum_{k\in L^+ } Z^{k/r} &= (1 \pm \exp(-\frac{8}{9}\cdot r\delta^2 C) )   \sum_{k\in K_\delta^+ } Z^{k/r} \\
    &= (1 \pm \exp(-\frac{8}{9}\cdot r\delta^2 C) ) \exp(\pm 2\beta t \delta )\cdot \frac{\sum_{k\in K_\delta^+} \binom{r}{k} \exp\left(\frac{\beta}{2}\cdot r (2k/r-1)^2 \right)  }{\exp( rf (q^+)) }\cdot \lambda_{+}(\mathbf{h})
    \end{aligned}
\end{equation}

By \cref{eq:controlling concentration no pinning minus phase,eq:concentration on - phase non pinning}: 
\begin{equation}
\begin{aligned}
    \sum_{k\in L^- } Z^{k/r} &= (1 \pm \exp(-\frac{8}{9}\cdot r\delta^2 C) )   \sum_{k\in K_\delta^- } Z^{k/r} \\
    &= (1 \pm \exp(-\frac{8}{9}\cdot r\delta^2 C) ) \exp(\pm 2\beta t \delta )\cdot \frac{\sum_{k\in K_\delta^-} \binom{r}{k} \exp\left(\frac{\beta}{2}\cdot r (2k/r-1)^2 \right)  }{\exp( rf (q^-)) }\cdot \lambda_{-}(\mathbf{h})
    \end{aligned}
\end{equation}
Next, we set $\delta$ and $r:$
\begin{equation}\label{eq:parameter choice for single gadget}
\delta = \min \set{\frac{\epsilon}{24 \beta  t}, \delta_0}    \quad \text{and}\quad r =  \max \set{ \frac{20\beta t }{\delta^2  C}, \frac{2 \log(10/\epsilon) }{\delta^2  C} , \frac{1}{\delta},10} = \Theta_{\beta} (t^3 \epsilon^{-2} \log(1/\epsilon) ),
\end{equation}
then 
\begin{equation}\label{eq:bound approx error in construction}
    (1 + \exp(-\frac{8}{9}\cdot r\delta^2 C) )  \cdot \exp(4 \beta t \delta) \leq (1 + \epsilon/3)\, \text{and}\, (1 - \exp(-\frac{8}{9}\cdot r\delta^2 C) )  \cdot \exp(-4 \beta t \delta) \geq (1 - \epsilon/3)
\end{equation}
 Since $ f(q^-) = f(q^+)$ and $K_\delta^{-} = \set{r-k | k\in K_\delta^+},$ we have: \[ \sum_{k\in K_\delta^-} \binom{r}{k} \exp\left(\frac{\beta}{2}\cdot r (2k/r-1)^2 \right)  = \sum_{k\in K_\delta^+} \binom{r}{k}\exp\left(\frac{\beta}{2}\cdot r (2k/r-1)^2 \right).\] 
 
 Hence, set \[a: = \frac{\sum_{k\in K_\delta^+} \binom{r}{k}\exp\left(\frac{\beta}{2}\cdot r (2k/r-1)^2 \right) }{\exp( rf (q^+))} ,\] we can compute $a$ in $O(r)$ time and 
\begin{equation}\label{eq:approximate partition function of plus phase no pinning}
    \sum_{k\in L^+ } Z^{k/r} =  (1\pm \epsilon/3)  \cdot a \cdot \lambda_{+}(\mathbf{h}), \quad  \sum_{k\in L^- } Z^{k/r} = (1\pm \epsilon/3)\cdot a \cdot \lambda_{-}(\mathbf{h})
\end{equation}

This establishes \cref{eq:phase partition function approximation}.

Next, we show \cref{eq:conditional distr on + become product}.
Similar to above, using \cref{prop:deviation bound for entropy function H alpha} and \cref{prop:deviation bound for f alpha for bounded beta}, for any configuration $\tau = \tau_S$ of $S,$ and
for $\ell, k\in (r/2,r]$ such that $ |\ell/r-q^+|>  \delta$ and $ |k/r- q^+|\leq \delta/2,$ we have:
\begin{align*}
    \ln Z^{\ell/r}(\tau) - \ln Z^{k/r}(\tau)
    &\leq -r \delta^2 C + \beta t + O(\ln r)
\end{align*}
which implies
\begin{equation}\label{eq:controlling concentration of pinning}
    1\leq \frac{\sum_{k\in L^+ } Z^{k/r} (\tau) }{ \sum_{k \in K_\delta^+}Z^{k/r} (\tau) } \leq (1+  \exp(-\frac{8}{9}\cdot r\delta^2 C) )
\end{equation}

In addition, for any $k\in K_\delta^+$ and any two configurations $\tau, \tau'$ of $S,$ we have:
\begin{equation}\label{eq:controlling ratio between two pinning}
\begin{aligned}
    -4\beta t \delta \leq  \ln\frac{ Z^{k/r}(\tau)}{ Z^{k/r}(\tau') } - \ln \frac{Q_S^+(\tau)}{Q_S^+(\tau')}
     &= 2\beta (k/r-q^+) \sum_{v\in S}(\tau_v-\tau'_v)\leq 4 \beta t \delta 
\end{aligned}
\end{equation}

\cref{eq:controlling concentration of pinning} and \cref{eq:controlling ratio between two pinning} together imply:
\begin{equation}
\begin{split}
\frac{ \Pr[ \sigma_S = \tau\mid y_{V\setminus S} (\sigma) = +1] }{ \Pr[ \sigma_S = \tau'\mid y_{V\setminus S} (\sigma) = +1]  }   &=  \frac{\sum_{k\in L^+ } Z^{k/r} (\tau) }{ \sum_{k \in L^+}Z^{k/r} (\tau') } \\&=(1 \pm \exp(-\frac{8}{9}\cdot r\delta^2 C) )  \cdot \frac{\sum_{k\in K_\delta^+ } Z^{k/r} (\tau) }{ \sum_{k \in K_\delta^+}Z^{k/r} (\tau') }\\
   &= (1 \pm \exp(-\frac{8}{9}\cdot r\delta^2 C) )  \cdot \exp(\pm 4 \beta t \delta) \cdot \frac{Q_S^+(\tau)}{Q_S^+(\tau')}\\
   &= (1\pm \epsilon/3) \cdot \frac{Q_S^+(\tau)}{Q_S^+(\tau')}
   \end{split}
\end{equation}
where the last line follows from \cref{eq:bound approx error in construction}.
Hence
\begin{align*}
  \frac{1-\epsilon/3} {1+\epsilon/3}\cdot \frac{Q_S^+(\tau) }{\sum_{\tau'} Q_S^+(\tau')}    \leq  &\Pr[ \sigma_S = \tau| y_{V\setminus S} (\sigma) = +1]  \leq \frac{1+\epsilon/3}{1-\epsilon/3} \cdot \frac{Q_S^+(\tau) }{\sum_{\tau'} Q_S^+(\tau')} \\
\Rightarrow(1-\epsilon) Q_S^+(\tau)  \leq &\Pr[ \sigma_S = \tau| y_{V\setminus S} (\sigma) = +1]  \leq (1+\epsilon) Q_S^+(\tau)    
\end{align*}
This establishes \cref{eq:conditional distr on + become product}. \cref{eq:conditional distr pn - become product} follows from an analogous argument.
\end{proof}
As a consequence, we obtain the following corollary about the distribution of $Y(\sigma)$ where $\sigma \sim \mu_{\hat{\mathcal{H}}}.$
\begin{corollary}\label{cor:probability of configuration condition on phase in hat H}
    Let $ c_\beta >0$ be the constant in \cref{prop:single gadget control}. 
    Suppose that
    \begin{equation}\label{eq:defining r in full construction}
 \forall v\in V_H:  r_v \geq c_\beta\cdot t_v^3 \epsilon^{-2} \log(1/\epsilon)  \quad \text{and} \quad r_v \text{ is odd} 
\end{equation}
We can compute $A_0$ in $ O(|V_{\mathcal{H}}|)$ time where, for each configuration $Y\in \set{\pm 1}^{V_H},$ 
    
\[ Z_{\hat{\mathcal{H}}} (Y) =(1\pm\epsilon)^m  A_0 \cdot\exp(\sum_v h_v Y_v).\]
\end{corollary}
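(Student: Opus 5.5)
Since $\hat{\mathcal{H}}$ is the disjoint union of the gadgets $G_v$, and $\hat J$ has no edges between different gadgets, the unnormalized measure factorizes: $\mu_{\hat{\mathcal{H}}}(\sigma)=\prod_{v\in V_H}\mu_{G_v}(\sigma_{W_v})$. The event $\{Y(\sigma)=Y\}$ is also a product event, $\bigcap_v\{y_{R_v}(\sigma)=Y_v\}$. Hence
\[
Z_{\hat{\mathcal{H}}}(Y)=\prod_{v\in V_H} Z_v^{Y_v},
\]
where $Z_v^{\pm}$ are the phase partition functions of \cref{prop:single gadget control} applied to the gadget $G_v$. For this application I take $V=W_v$, $S=S_v$, $t=t_v$, $r=r_v$, and the field $\mathbf{h}$ equal to $\tilde h_v$ on $S^{\text{vertex}}_v$ and $0$ on $S^{\text{edge}}_v$. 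The interaction matrix of $G_v$ is exactly $\frac{\beta}{r_v}(\mathbf{1}_{W_v}\mathbf{1}_{W_v}^\intercal-\mathbf{1}_{S_v}\mathbf{1}_{S_v}^\intercal)$, self-loops on $R_v$ included, and the hypothesis \cref{eq:defining r in full construction} is exactly \cref{eq:defining r in construction}. So the proposition gives a constant $a_v$, computable in $O(r_v)$ time, with $Z_v^{\pm}=(1\pm\epsilon)\,a_v\,\lambda_\pm^{(v)}$.

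Next I compute the ratio $\lambda_+^{(v)}/\lambda_-^{(v)}$. Since $f_\beta(q^+)=f_\beta(q^-)$, the factors $\exp(r f_\beta(q^\pm))$ cancel. On $S^{\text{edge}}_v$ the field is $0$, and $\cosh$ is even while $2q^- -1=-(2q^+-1)$, so those factors also cancel. What remains is
\[
\frac{\lambda_+^{(v)}}{\lambda_-^{(v)}}=\left(\frac{\cosh(\tilde h_v+\beta(2q^+-1))}{\cosh(\tilde h_v+\beta(2q^- -1))}\right)^{t^0_v}=\exp\!\big(2t^0_v\,\varphi_{\mathrm{vertex},\beta}(\tilde h_v)\big)=\exp(2h_v),
\]
using the definition $\tilde h_v=\varphi^{-1}_{\mathrm{vertex},\beta}(h_v/t^0_v)$. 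This inverse is well defined because $t^0_v\ge \frac{2|h_v|}{2q^+-1}$, which gives $|\tanh(h_v/t^0_v)|\le |h_v|/t^0_v<2q^+-1$. Therefore $\lambda^{(v)}_{Y_v}=\sqrt{\lambda_+^{(v)}\lambda_-^{(v)}}\,\exp(h_vY_v)$.

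It follows that I can set $A_0:=\prod_v a_v\sqrt{\lambda_+^{(v)}\lambda_-^{(v)}}$, which gives $Z_{\hat{\mathcal{H}}}(Y)=(1\pm\epsilon)^m A_0\exp(\sum_v h_vY_v)$. Computing $A_0$ costs $O(r_v+t_v)$ per gadget (each $\lambda_\pm^{(v)}$ is a product of $t_v$ cosh factors times an explicit exponential), for $O(|V_{\mathcal{H}}|)$ in total.

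The only real care points are checking that each gadget matches the exact hypotheses of \cref{prop:single gadget control} (self-loops, zero field on $R_v$ and $S^{\text{edge}}_v$) and the cosh-ratio identity via $\varphi_{\mathrm{vertex},\beta}$. I expect no substantive obstacle.
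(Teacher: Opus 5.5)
Your proposal is correct and follows the paper's proof essentially step for step. You factorize $Z_{\hat{\mathcal{H}}}(Y)=\prod_v Z_v^{Y_v}$, apply \cref{prop:single gadget control} to each gadget, and cancel the $f_\beta$ and $S^{\text{edge}}_v$ factors to obtain $\lambda_{+,v}/\lambda_{-,v}=\exp(2h_v)$; you then set $A_0=\prod_v a_v(\lambda_{+,v}\lambda_{-,v})^{1/2}$, exactly as the paper does.
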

\begin{proof}
    For $v\in V_H,$ let $\nu_v$ be the Ising model corresponding to the graph $G_v$ in the construction, and $Z^+_v, Z^-_v$ be as defined in \cref{def:phase partition function} with respect to $G_v$.  Since there are no edges between the gadgets in $\hat{\mathcal{H}},$ $\mu_{\hat{\mathcal{H}}} = \bigotimes_{v\in V_H} \nu_v$ and $ Z_{\hat{\mathcal{H}}}(Y) = \prod_{v\in V_H} Z^{Y_v}_v.$ 
    Let
     \begin{equation}
    \begin{split}
        \lambda_{+,v} &= \exp(r_v f_\beta(q^+_\beta)) \prod_{i\in S_v}  2\cosh ( \tilde{h}_i  +  \beta (2 q^+_\beta-1)  )    \\ \lambda_{-,v} &=  \exp(r_v f_\beta (q^-_\beta))  \prod_{i\in S_v}  2\cosh ( \tilde{h}_i  + \beta (2 q^-_\beta -1)  )  
    \end{split}
    \end{equation}
    where we recall that $S_v =S^{\text{vertex}}_v\cup S^{\text{edge}}_v.$
    Since $ \tilde{h}_i = \tilde{h}_v \forall i\in S^{\text{vertex}}_v$ and $  \tilde{h}_i = 0 \forall i\in S^{\text{edge}}_v,$ we can compute $ \lambda_{+,v}$ and $\lambda_{-,v}$ in $O(1)$ time.
Note that $ f_\beta(q^+_\beta) = f_\beta (q^-_\beta).$ For $i\in S^{\text{edge}}_v$, $\tilde{h}_i=0$ and $(2 q^+_\beta-1) = -(2 q^-_\beta-1) $ so using $\cosh(x) = \cosh(-x)\forall x,$ we have: \[\cosh ( \tilde{h}_i  +  \beta (2 q^+_\beta-1)  )  =  \cosh ( \tilde{h}_i  +  \beta (2 q^-_\beta-1)  )  ,\] and $\forall i\in S^{\text{vertex}}_v: \tilde{h}_i = \tilde{h}_v = \varphi_{\mathrm{vertex},\beta}^{-1}\left(\frac{h_v}{t_v^0}\right).$   

Hence
 \[ \left(\frac{\lambda_{+,v}}{\lambda_{-,v}}\right)^{1/2} = \prod_{i\in S^{\text{vertex}}_v}  \left(\frac{\cosh ( \tilde{h}_i  +  \beta (2 q^+_\beta-1)  ) }{\cosh ( \tilde{h}_i  +  \beta (2 q^-_\beta-1)  ) } \right)^{1/2}= \exp( t^0_v\,  \varphi_{\mathrm{vertex},\beta} (\tilde{h}_i)) = \exp(h_v).\]

 Next,  by \cref{eq:phase partition function approximation} in  \cref{prop:single gadget control}, for each $v\in V_H,$ we can compute $a_v$ in $O(r_v)$ time so that $Z_v^{Y_v} = (1\pm \epsilon) a_v \lambda_{Y_v,v}.$ Hence: 
 \begin{align*}
    Z_{\hat{\mathcal{H}}} (Y) = (1\pm \epsilon)^m \prod_{v\in V_H}  (a_v \lambda_{Y_v,v} ) =  (1\pm \epsilon)^m A_0 \prod_{v\in V_H} \left(\frac{\lambda_{Y_v,v}}{\lambda_{-Y_v,v}}\right)^{1/2} =  (1\pm \epsilon)^m A_0\exp(\sum_v h_v Y_v)  
 \end{align*}
 where $A_0 = \prod_{v\in V_H}  (a_v (\lambda_{+,v} \lambda_{-,v})^{1/2}) .$ $A_0$ can be computed in $O(\sum_{v\in V_H}r_v) = O(|V_{\mathcal{H}}|)$ time.
\end{proof}

The following establishes part (2) of \cref{thm:set parameter choice}, and will also be used in the proof of part (3) of \cref{thm:set parameter choice}.
\begin{lemma}\label{lem:main construction probability of phase gadget}
Consider $ Y \in \set{\pm 1}^{V_H}. $  Recall that $\mathcal{S} =\bigcup_{v\in V_H} S_v.$ Recall the definitions of $\mu_{\mathcal{H}}(\tilde{\sigma}_{\mathcal{S}};Y )$ from \cref{eq:partial configuration condition on phase gadget}, $Z_{\mathcal{H}}(Y)$ from \cref{eq:define phase total function for  H}.

Let $ c_\beta >0$ be the constant that depends only on $\beta$ in \cref{prop:single gadget control}.
 Suppose that
    \begin{equation}
 \forall v\in V_H:  r_v \geq  30 c_\beta t_v^3 m^2 \epsilon^{-2} \log(m/\epsilon)  \quad \text{and} \quad r_v \text{ is odd} 
\end{equation}
Then we obtain the following properties:
\begin{itemize}
    \item  There exists a normalization factor $A$ that can be computed in $O(|V_{\mathcal{H}}| + m^2)$ time, where 
   \begin{equation}\label{eq:phase partition function approximate Ising model}
     \forall Y\in \set{\pm 1}^{V_H}: Z_{\mathcal{H}}(Y)= (1\pm \epsilon) \cdot A \cdot \mu_H(Y)   
   \end{equation}
   \item  
    $ \forall Y \in \set{\pm 1}^{V_H}: \Pr_{\sigma\sim\mu_{\mathcal{H}}}[Y(\sigma) = Y] = (1\pm \epsilon) \Pr_{Y\sim \mu_H}[Y]$
   \item For any
    $\mathcal{T}\subseteq \mathcal{S}$  and  configuration $ \tilde{\sigma}_{\mathcal{T}}\in \set{\pm 1}^{\mathcal{T}},$  can compute an approximation $\hat{\gamma} (\tilde{\sigma}_\mathcal{T}; Y)$ of $\gamma (\tilde{\sigma}_\mathcal{T}; Y)=\frac{\mu_{\mathcal{H}} (\tilde{\sigma}_\mathcal{T}; Y)}{Z_{\hat{\mathcal{H}}} (Y) } $ in $ O(|\mathcal{S}|^2)$ time where
    \[ \hat{\gamma} (\tilde{\sigma}_\mathcal{T}; Y) = (1\pm \epsilon/3) \gamma (\tilde{\sigma}_\mathcal{T}; Y)\]
    From \cref{cor:probability of configuration condition on phase in hat H}, \[\forall Y \in \set{\pm 1}^{V_H}: Z_{\hat{\mathcal{H}}} (Y) = (1\pm \epsilon/3) A_0 \exp(\sum_{v\in V_H}h_v Y_v) \] where the factor $A_0$ can be computed in  $ O(|V_\mathcal{H}|)  $ time. Consequently, \[\hat{\mu}_{\mathcal{H}} (\tilde{\sigma}_\mathcal{T}; Y) = (1\pm \epsilon) \mu_{\mathcal{H}} (\tilde{\sigma}_\mathcal{T}; Y)  \quad \text{where} \quad \hat{\mu}_{\mathcal{H}} (\tilde{\sigma}_\mathcal{T}; Y) =  \hat{\gamma} (\tilde{\sigma}_\mathcal{T}; Y) A_0 \exp(\sum_{v\in V_H}h_v Y_v) \] 
\end{itemize}

\end{lemma}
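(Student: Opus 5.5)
The plan is to treat the matching edges between gadgets as a perturbation of the product measure $\mu_{\hat{\mathcal{H}}}=\bigotimes_v \nu_v$, and to exploit that, conditioned on the phase vector $Y$, the spins on $\mathcal{S}$ are approximately a product measure by \cref{prop:single gadget control}. Concretely, write
\[
\mu_{\mathcal{H}}(\sigma)=\mu_{\hat{\mathcal{H}}}(\sigma)\prod_{\{u,v\}}\prod_{(i,j)\in E(S_u^v,S_v^u)}\exp(\tilde{J}_{uv}\sigma_i\sigma_j).
\]
Then
\[
\gamma(\tilde{\sigma}_{\mathcal{T}};Y)=\frac{\mu_{\mathcal{H}}(\tilde{\sigma}_{\mathcal{T}};Y)}{Z_{\hat{\mathcal{H}}}(Y)}=\E_{\sigma\sim\mu_{\hat{\mathcal{H}}}(\cdot\mid Y(\sigma)=Y)}\Big[\mathbf 1\{\sigma_{\mathcal{T}}=\tilde{\sigma}_{\mathcal{T}}\}\prod_{(i,j)}e^{\tilde{J}_{ij}\sigma_i\sigma_j}\Big],
\]
which depends only on $\sigma_{\mathcal{S}}$. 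Since the gadgets are independent under $\mu_{\hat{\mathcal{H}}}$, the law of $\sigma_{\mathcal{S}}$ given $Y$ is $\bigotimes_v \Pr[\sigma_{S_v}=\cdot\mid y_{R_v}=Y_v]$. Applying \cref{prop:single gadget control} with accuracy $\epsilon'=\epsilon/(10m)$ gives that this law equals $(1\pm\epsilon')^m\prod_v Q^{Y_v}_{S_v}$, pointwise on $\{\pm1\}^{\mathcal{S}}$. The hypothesis $r_v\ge 30c_\beta t_v^3m^2\epsilon^{-2}\log(m/\epsilon)$ is exactly what is needed for $r_v\ge c_\beta t_v^3\epsilon'^{-2}\log(1/\epsilon')$. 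The key step is that $(1+\epsilon/(10m))^m\le 1+\epsilon/3$, so $\gamma$ is within $(1\pm\epsilon/3)$ of the same expectation under the explicit product measure $Q^Y:=\prod_v\prod_{i\in S_v}Q_i^{Y_v}$. Because the integrand is nonnegative, a pointwise multiplicative bound on the measure transfers to the expectation.

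Under $Q^Y$ the computation is exact and cheap. The matching edges are disjoint pairs, so the expectation factorizes over pinned vertices, over single unpinned vertices not matched, and over matched pairs $(i,j)$. For a matched pair, the factor is $\sum_{\sigma_i,\sigma_j}Q_i(\sigma_i)Q_j(\sigma_j)e^{\tilde J_{ij}\sigma_i\sigma_j}$, restricted to the pinned values where applicable. Each factor takes $O(1)$ time, which gives $\hat\gamma$ in $O(|\mathcal{S}|)\subseteq O(|\mathcal{S}|^2)$ time.

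For the unpinned case $\mathcal{T}=\emptyset$, edge vertices have $\tilde h_i=0$, so $Q_i^{Y_u}$ has mean $Y_u\tanh(\beta(2q^+-1))$. Here I need the identity $\tanh(\beta(2q^+-1))=2q^+-1$, which is the fixed-point equation \cref{eq:def of q+ and q-} rewritten. A pair then gives $\cosh\tilde J_{uv}\,(1+(2q^+-1)^2Y_uY_v\tanh\tilde J_{uv})$. Writing $p=q^+$, the vertex marginals are $(p,1-p)$ or $(1-p,p)$, and one checks that this equals $c_{uv}\exp(Y_uY_v\,\varphi_{\mathrm{edge},\beta}(\tilde J_{uv}))$ with $c_{uv}$ independent of $Y$. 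This is precisely what \cref{eq:define weight map function} encodes, with $\tanh\varphi_{\mathrm{edge}}(w)=(2p-1)^2\tanh w$, consistent with \cref{eq:inverse of edge and weight map function}. Since $\varphi_{\mathrm{edge}}(\tilde J_{uv})=J_{uv}/e_{uv}$ and there are $e_{uv}$ edges, the product over the matching of $\{u,v\}$ is $c_{uv}^{e_{uv}}\exp(J_{uv}Y_uY_v)$. Combining this with \cref{cor:probability of configuration condition on phase in hat H}, run at accuracy $\epsilon/(10m)$ so that the factor is $(1\pm\epsilon/3)$, yields
\[
Z_{\mathcal H}(Y)=(1\pm\epsilon)\,A\,\mu_H(Y),\qquad A=A_0\prod_{u<v}c_{uv}^{e_{uv}}.
\]
The number of pairs $(u,v)$ is $O(m^2)$, which gives the computation time $O(|V_{\mathcal H}|+m^2)$; the paper's tolerance bookkeeping is $(1\pm\epsilon/3)^2\subseteq 1\pm\epsilon$. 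The estimate for $\hat\mu_{\mathcal H}$ is the same product of the two approximations.

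The probability statement follows by normalizing. Summing the first bullet over $Y$ gives $Z_{\mathcal H}=(1\pm\epsilon)AZ_H$, so the ratio is $(1\pm\epsilon)/(1\mp\epsilon)$. To land exactly at $(1\pm\epsilon)$, I will rerun the first bullet with $\epsilon/3$ in place of $\epsilon$; the slack in the constant $30$ absorbs this.

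The main obstacle is the error accounting, namely making sure that the $m$ multiplicative gadget errors compound only to $(1\pm\epsilon/3)$. That forces the per-gadget accuracy to be $\epsilon/m$, which is the source of the $m^2$ in $r_v$. The second delicate point is verifying the algebraic identity that turns the pair factor into $\exp(Y_uY_v\varphi_{\mathrm{edge}}(\tilde J))$. There, the consistency between $\tanh(\beta(2q^+-1))=2q^+-1$ and the definition of $\varphi_{\mathrm{edge},\beta}$ has to be checked carefully.
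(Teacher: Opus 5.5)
Your proposal is correct and is essentially the paper's proof: independence of the gadgets under $\mu_{\hat{\mathcal{H}}}$, \cref{prop:single gadget control} applied per gadget at accuracy $\Theta(\epsilon/m)$, exact factorization under the product measure over the matching pairs, and the identity $(\Psi_{+,uv}/\Psi_{-,uv})^{1/2}=\exp(e_{uv}\varphi_{\mathrm{edge},\beta}(\tilde{J}_{uv}))=\exp(J_{uv})$. Your identity $\tanh(\beta(2q^+_\beta-1))=2q^+_\beta-1$ is the paper's $Q_i^{Y_u}(a)=q^{aY_u}$, this gives $A=A_0A_1$, and any diagonal of $J$ is absorbed into $A$. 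The one slip is constant bookkeeping: the hypothesis with constant $30$ only supports per-gadget accuracy of roughly $\epsilon/(5m)$ (your $\epsilon/(10m)$ would need about $100$, and rerunning at $\epsilon/3$ about $900$, so that ``slack'' does not exist), and the fix is the paper's: keep the errors unexpanded as $(1\pm\epsilon_0)^{m}$, $(1\pm\epsilon_0)^{2m}$ and $(1\pm\epsilon_0)^{4m}$ with $\epsilon_0=\epsilon/(5m)$, which for $\epsilon<0.05$ already lie within $1\pm\epsilon/3$, $1\pm\epsilon$ and $1\pm\epsilon$ for the third, first and second bullets, rather than collapsing to $(1\pm\epsilon)/(1\mp\epsilon)$.
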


\begin{proof}[Proof of \Cref{lem:main construction probability of phase gadget}]
We  assume without loss of generality that the original interaction matrix $J$ corresponding to $\mu_H$ has zero diagonal; the nonzero diagonals do not change the distribution, and can be absorbed into the normalization factor $A.$

Let $\epsilon_0 >0$ to be chosen later. 
Suppose that $ \forall v\in V_H: r_v \geq c_\beta t_v^3 \epsilon_0^{-2} \log(1/\epsilon_0).$
Recall the definition of $Q_{i}^{ Y_v} (\tilde{\sigma}_{i})$ from \cref{eq:define vertex bias}. We first show that:
\begin{equation}\label{eq:single config condition on phase}
     \frac{\mu_{\mathcal{H}}(\tilde{\sigma}_{\mathcal{S}};Y )}{Z_{\hat{\mathcal{H}}}( Y) }  = (1\pm \epsilon_0)^m \left( \prod_{v\in V_H} \prod_{i\in S_v} Q_{i}^{ Y_v} (\tilde{\sigma}_{i})\right)  \prod_{\substack{\set{u,v} \in E_H\\ (i,j) \in E(S_u^v,S_v^u)}}\exp(\tilde{J}_{ij} \tilde{\sigma}_i \tilde{\sigma}_j) 
\end{equation}

For $v\in V_H,$ let $ \nu_v$ be the Ising distribution that corresponds to the  gadget graph $G_v.$
\cref{prop:single gadget control} implies that:
    \begin{align*}
        \frac{\mu_{\mathcal{H}}(\tilde{\sigma}_{\mathcal{S}};Y )}{Z_{\hat{\mathcal{H}}}( Y) } &= \frac{\mu_{\hat{\mathcal{H}}} (\tilde{\sigma}_{\mathcal{S}}; Y) }{Z_{\hat{\mathcal{H}}}( Y) } \cdot \prod_{{\set{i,j}\in E(\mathcal{H})\setminus E(\hat{\mathcal{H}})} } \exp(\tilde{J}_{ij}  \tilde{\sigma}_i \tilde{\sigma}_j) 
        \\
        & = \prod_{v\in V_H} \Pr_{x\sim \nu_v} [x_{S_v} = \tilde{\sigma}_{S_v}  | y_{R_v}(x) = Y_{v}  ]\prod_{\substack{\set{u,v} \in E_H\\ (i,j) \in E(S_u^v,S_v^u)}}\exp(\tilde{J}_{uv} \tilde{\sigma}_i \tilde{\sigma}_j)  \\
        &= (1\pm \epsilon_0)^m\left( \prod_{v\in V_H} \prod_{i\in S_v} Q_{i}^{ Y_v} (\tilde{\sigma}_{i})\right)  \left( \prod_{\substack{\set{u,v} \in E_H\\ (i,j) \in E(S_u^v,S_v^u)}} \exp(\tilde{J}_{uv} \tilde{\sigma}_i \tilde{\sigma}_j)\right) 
    \end{align*}
    This finishes the proof of \cref{eq:single config condition on phase}.
Next, we show \cref{eq:phase partition function approximate Ising model}. 

Combining \cref{eq:single config condition on phase} with the fact that
$Z_{\mathcal{H} }(Y) = \sum_{\tilde{\sigma}_{\mathcal{S}} \in\set{\pm 1}^{\mathcal{S}} } \mu_{\mathcal{H}}(\tilde{\sigma}_{\mathcal{S}}; Y), $
we have:
\begin{equation}\label{eq:edge weight of phase}
\begin{split}
    &\frac{Z_{\mathcal{H} }(Y)}{ Z_{\hat{\mathcal{H}}}(Y)} \\
    &= (1\pm \epsilon_0)^m \sum_{\tilde{\sigma}_{\mathcal{S}}} \left( \prod_{v\in V_H} \prod_{i\in S_v} Q_{i}^{ Y_v} (\tilde{\sigma}_{i})\right)  \prod_{\substack{\set{u,v} \in E_H\\ (i,j) \in E(S_u^v,S_v^u)}} \exp(\tilde{J}_{uv} \tilde{\sigma}_i \tilde{\sigma}_j) \\
    &= (1\pm \epsilon_0)^m  \left( \prod_{v\in V_H} \prod_{i\in S^{\text{vertex}}_v} (Q_{i}^{Y_v} (1) + Q_{i}^{Y_v} (-1) )\right)  \prod_{\substack{\set{u,v} \in E_H\\ (i,j) \in E(S_u^v,S_v^u)}} \left( \sum_{\tilde{\sigma}_i, \tilde{\sigma}_j} Q_i^{Y_u} (\tilde{\sigma}_i) Q_j^{Y_v} (\tilde{\sigma}_j) \exp(\tilde{J}_{uv} \tilde{\sigma}_i \tilde{\sigma}_j)\right)\\
    &=_{(1)} (1\pm \epsilon_0)^m  \prod_{\set{u,v} \in E_H} \left( \sum_{a,b \in \set{\pm 1}} q^{aY_u} q^{bY_v} \exp(\tilde{J}_{uv}ab)\right)^{e_{uv}}
    \end{split}
\end{equation}

In (1) we use the fact that $ Q_i^{Y_v} (1) + Q_i^{Y_v}(-1) =1\forall i,$ and that for $i\in S_{u}^v, j \in S_{v}^u,$ $ \tilde{h}_i=\tilde{h}_j= 0$ so $ Q_i^{Y_u} (a) = q^{a Y_u}$ and  $ Q_j^{Y_v} (b) = q^{b Y_v},$ where $q^{+1}\equiv q^+,q^{-1} \equiv q^-$ are as defined in \Cref{subsec:notation for Ising embedding}. Indeed, if $Y_u = +1$ and $ a=+1$ then by \cref{eq:define vertex bias} 
\[ Q_i^{Y_u}(a) = \frac{\exp( \beta(2q^+_\beta-1)) }{\exp( \beta(2q^+_\beta-1)) + \exp( -\beta(2q^+_\beta-1))} = \frac{\exp(2 \beta(2q^+_\beta-1)) }{\exp(2 \beta(2q^+_\beta-1)) + 1} = \frac{q^+_\beta/(1-q^+_\beta) }{ q^+_\beta/(1-q^+_\beta) + 1} = q^+_\beta\]
where the third equality is due to \cref{eq:def of q+ and q-}. The other cases follow from a similar argument.

Let
\begin{equation}
    \begin{split}
        \Psi_{+, uv} &= ( ((q^+)^2 + (q^-)^2 )\exp(\tilde{J}_{uv}) + 2 q^+ q^- \exp(-\tilde{J}_{uv}) )^{e_{uv}}\\ 
        \Psi_{-, uv} &= (2 q^+ q^-  \exp(\tilde{J}_{uv}) + ((q^+)^2 + (q^-)^2 ) \exp(-\tilde{J}_{uv}) )^{e_{uv}}
    \end{split}
    \end{equation}
    then by \cref{eq:define weight map function} and $ \tilde{J}_{uv} = \varphi_{\mathrm{edge},\beta}^{-1} (\frac{J_{uv}}{e_{uv}}), $ we have:
    \[ \left(\frac{ \Psi_{+, uv} }{ \Psi_{-, uv} }\right)^{1/2} = \exp ( e_{uv}  \varphi_{\mathrm{edge},\beta} (\tilde{J}_{uv}) ) = \exp(J_{uv}).\]
Let
\[ A_1 = \prod_{\set{u,v}\in E_H}   (\Psi_{+, uv} \Psi_{-, uv})^{1/2}\]
then 
\[ \frac{Z_{\mathcal{H} }(Y)}{ Z_{\hat{\mathcal{H}}}(Y)} = (1\pm \epsilon_0)^m A_1  \prod_{\set{u,v} \in E_H} \left(\frac{\Psi_{Y_u Y_v, uv}  }{\Psi_{-Y_u Y_v, uv}}\right)^{1/2} = (1\pm \epsilon_0)^m A_1\exp(\sum_{\set{u,v} \in E_H} J_{uv} Y_u 
Y_v) \]
and $A_1$ can be computed in $O(m^2)$ time.
Let $A_0$ be the constant in \cref{cor:probability of configuration condition on phase in hat H}, and $ A = A_0 A_1$ then
\[ Z_{\hat{\mathcal{H} }}(Y) = (1\pm \epsilon_0)^{m} A_0 \exp(\sum_{v\in V_H} h_v Y_v) \]
and 
\[Z_{\mathcal{H} }(Y) = (1\pm \epsilon_0)^{2m}  A \exp(\sum_{\set{u,v} \in E_H} J_{uv} Y_u Y_v + \sum_{v\in V_H} h_v Y_v)  = (1\pm \epsilon_0)^{2m} \, A\, \mu_H(Y)\]
and $A $ can be computed in $O(|V_{\mathcal{H}}| + m^2)$ time.

Next,
\[\Pr_{\sigma\sim\mu_{\mathcal{H}}}[Y(\sigma) = Y] = \frac{Z_{\mathcal{H} }(Y)}{\sum_{Y'} Z_{\mathcal{H} }(Y')  } = (1\pm \epsilon_0)^{4m} \frac{\mu_H(Y)}{\sum_{Y'}\mu_H(Y') } =  (1\pm \epsilon_0)^{4m} \Pr_{Y\sim \mu_H}[Y]  \]

For $ v\in V_H, i \in S_v,$ let $\mathcal{C}_i(\tilde{\sigma}_{\mathcal{T}}) =\begin{cases} \set{\pm 1} &\text{ if } i\not \in \mathcal{T} \\ \set{\tilde{\sigma}_i} &\text{ if } i\in \mathcal{T} \end{cases}.$

    By \cref{eq:single config condition on phase}, we have:
    \begin{align*}
         &\frac{\mu_{\mathcal{H}} (\tilde{\sigma}_\mathcal{T}; Y)}{Z_{\hat{\mathcal{H}}}(Y)}\\
         &= (1\pm \epsilon_0)^m \sum_{\sigma:\sigma_\mathcal{T} = \tilde{\sigma}_\mathcal{T} } \left( \prod_{v\in V_H} \prod_{i\in S_v} Q_{i}^{ Y_v} (\sigma_{i})\right)   \prod_{\substack{\set{u,v} \in E_H\\ (i,j) \in E(S_u^v,S_v^u)}} \exp(\tilde{J}_{uv} \sigma_i \sigma_j) \\
         &=(1\pm \epsilon_0)^m \prod_{v\in V_H} \prod_{i\in S^{\text{vertex}}_v} \left(\sum_{\sigma_i \in \mathcal{C}_i(\tilde{\sigma}_{\mathcal{T}}) }  Q_i^{Y_v} (\sigma_i) \right) \prod_{\substack{\set{u,v} \in E_H\\ (i,j) \in E(S_u^v,S_v^u)}} \left( \sum_{\substack{\sigma_i\in \mathcal{C}_{i}(\tilde{\sigma}_{\mathcal{T}})  \\
         \sigma_j\in\mathcal{C}_{j}(\tilde{\sigma}_{\mathcal{T}})}  } Q_i^{Y_u} (\sigma_i) Q_j^{Y_v} (\sigma_j) \exp(\tilde{J}_{uv} \sigma_i \sigma_j)\right)
    \end{align*}
    Let 
    \[\hat{\gamma} (\tilde{\sigma}_\mathcal{T}; Y) := \prod_{v\in V_H} \prod_{i\in S^{\text{vertex}}_v} \left(\sum_{\sigma_i \in \mathcal{C}_i(\tilde{\sigma}_{\mathcal{T}}) }  Q_i^{Y_v} (\sigma_i) \right) \prod_{\substack{\set{u,v} \in E_H\\ (i,j) \in E(S_u^v,S_v^u)}} \left( \sum_{\substack{\sigma_i\in \mathcal{C}_{i}(\tilde{\sigma}_{\mathcal{T}})  \\
        \sigma_j\in\mathcal{C}_{j}(\tilde{\sigma}_{\mathcal{T}})}  } Q_i^{Y_u} (\sigma_i) Q_j^{Y_v} (\sigma_j) \exp(\tilde{J}_{uv} \sigma_i \sigma_j)\right).\] 
        $\hat{\gamma} (\tilde{\sigma}_\mathcal{T}; Y) $ approximates $\gamma (\tilde{\sigma}_\mathcal{T}; Y)  $
    within $(1\pm \epsilon_0)^m$ factor, and can be computed in $O(|\mathcal{S}|^2)$ time, as it is a product of $ O(|\mathcal{S}|^2)$ terms, where each term can be computed in $O(1)$ time.
    Setting $ \epsilon_0 = \frac{\epsilon}{5m}$ finishes the proof.

\end{proof}

\begin{proof}[Proof of \cref{thm:set parameter choice}, parts (1) and (2)]
We assume without loss of generality that $\gamma \leq 2$ and thus $ \gamma = \widetilde{\gamma},$ otherwise we can replace $\gamma$ with $\widetilde{\gamma}.$
We first note that the construction is well-defined, since
\begin{align*}
  \left|\tanh \left(\frac{h_v}{t_v^0
}\right) \right|\leq \left|\frac{h_v}{t_v^0
}\right| \leq \frac{2 q^+_\beta-1}{2} &\Rightarrow \tilde{h}_v =  \varphi_{\mathrm{vertex},\beta}^{-1}\left(\frac{h_v}{t_v^0}\right) \text{ well-defined}\\
  \left|\tanh \left(\frac{J_{uv}}{e_{uv}}\right) \right|\leq \left|\frac{J_{uv}}{e_{uv}
}\right| \leq \frac{(2 q^+_\beta-1)^2}{2} &\Rightarrow \tilde{J}_{uv} =  \varphi_{\mathrm{edge},\beta}^{-1}\left(\frac{J_{uv}}{e_{uv}}\right) \text{ well-defined}
\end{align*}

    Recall that $\hat{J}$ is the interaction matrix that corresponds to $\hat{\mathcal{H}},$ then $ \hat{J}$ is a block matrix consisting of blocks $\mathbf{J}_v$ for $v\in V_H,$ where $\mathbf{J}_v = \frac{\beta}{r_v}( \mathbf{1}_{W_v} \mathbf{1}_{W_v}^\intercal -  \mathbf{1}_{S_v}   \mathbf{1}_{S_v}^\intercal) .$ It is easy to see that $\mathbf{J}_v$ has \[\beta \leq \lambda_{\max}(\mathbf{J}_v)-\lambda_{\min} (\mathbf{J}_v) \leq \beta (1+\frac{2t_v}{r_v}).\] Hence, for $ r_v \geq \frac{8\beta t_v}{ (\gamma-1)}\forall v \in V_H,$
    \[\beta \leq \lambda_{\max}(\hat{J}) -\lambda_{\min}(\hat{J}) \leq \beta + \frac{\gamma-1}{4}.\]
    The matrix $\tilde{J}-\hat{J}$ is a symmetric matrix that has at most one nonzero entry in each row, and the absolute value of this nonzero entry is bounded by \[ \max_{u,v} |\tilde{J}_{uv}|\leq \tanh^{-1} \left(\frac{(2 q^+_\beta -1)^2\min \set{\tanh(\frac{\gamma-1}{8}),0.5} }{(2 q^+_\beta -1)^2 }\right) \leq \min\set{\frac{\gamma-1}{8},\log 3},\] 
    where we use the fact that $ \tanh^{-1}$ is an increasing function. Similarly,
    \[\norm{\tilde{h}}_{\infty}= \max_{v} |\tilde{h}_v|\leq \tanh^{-1} \left(\frac{(2 q^+_\beta -1)0.5 }{(2 q^+_\beta -1) }\right) \leq\tanh^{-1}(0.5)\leq \log 3. \]
    Hence,  $\norm{\tilde{J}-\hat{J}}_{\mathrm{op}} \leq \frac{\gamma-1}{8}$ and by Weyl's eigenvalue perturbation bound,
    \[ 1 < \lambda_{\max}(\hat{J}) -\lambda_{\min}(\hat{J}) - 2\norm{\tilde{J}-\hat{J}}_{\mathrm{op}} \leq  \lambda_{\max}(\tilde{J}) -\lambda_{\min}(\tilde{J})\leq \lambda_{\max}(\hat{J}) -\lambda_{\min}(\hat{J}) + 2\norm{\tilde{J}-\hat{J}}_{\mathrm{op}} \leq \gamma. \]
Next, note that $\norm{\hat{J}}_{\infty}\leq \max_{v} \norm{\mathbf{J}_v}_{\infty} \leq \beta (1+ \frac{t_v}{r_v}) \leq 4$ for $r_v \geq t_v,$  and
\[ \norm{\tilde{J}}_{\infty} + \norm{\tilde{h}}_{\infty}\leq \norm{\hat{J}}_{\infty} +\max_{u,v} |\tilde{J}_{uv}| +  \norm{\tilde{h}}_{\infty}= O(1).\]
Clearly, for each $v\in V_H,$ $ t_v= O_{\gamma} (W).$
Let $ c_\beta >0$ be the constant that depends only on $\beta$ in \cref{prop:single gadget control}.
For $v\in V_H,$ set $r_v$ be the minimum odd number so that
\[r_v \geq \max\set{30 c_\beta t_v^3 m^2 \epsilon^{-2} \log(m/\epsilon) , \frac{8\beta t_v}{ (\gamma-1)} , t_v} .\]
then \[r_v = O_{\gamma}(t_v^3 m^2 \epsilon^{-2} \log(m/\epsilon) ) .\] 
The number of vertices in $\mathcal{H}$ is
\[ N= O\left(\sum_{v\in V_H} (r_v+t_v) \right) = O_{\gamma}\left(m^3 W^3\cdot \epsilon^{-2} \log(m/\epsilon)   \right) \]
Part (1) follows from the above analysis, and part (2) of \Cref{thm:set parameter choice} follows from \cref{lem:main construction probability of phase gadget}.
\end{proof}
\begin{remark}\label{remark:set zero diagonal entries}
The constructed matrix $\tilde{J}$
has nonzero diagonal entries. If an interaction matrix with zero diagonal entries is desired, we can consider the matrix obtained from $\tilde{J}$ by setting the diagonal to $0,$  which we denote by  $\tilde{J}^{o}.$ Replacing $\tilde{J}$ with $\tilde{J}^{o}$ does not change the distribution $\tilde{\mu},$ and only changes the normalization factor $A$ by a factor computable from $\tilde{J}$. Clearly, $\norm{\tilde{J}^{o}}_{\infty}\leq \norm{\tilde{J}}_{\infty},$ and
$ \norm{\tilde{J} - \tilde{J}^{o}}\leq \max_{i\in V_{\mathcal{H}}} |\tilde{J}_{ii}| \leq\max_{v\in V_H} \frac{\beta}{ r_v}.$ Assume $\gamma=\tilde{\gamma} \leq 2$ for simplicity.
The
 spectral width of $ \tilde{J}^{o}$ satisfies
 \[ \lambda_{\max}(\tilde{J}^{o}) -\lambda_{\min}(\tilde{J}^{o}) \leq \lambda_{\max}(\tilde{J}) -\lambda_{\min}(\tilde{J}) +  2\max_{v\in V_H} \frac{\beta}{ r_v} \leq \beta +2\max_{v\in V_H} \frac{\beta (1+ t_v)}{ r_v}  + \frac{\gamma-1}{4} \]
 and 
 \[ \frac{\gamma+ 3}{4} - 2\max_{v\in V_H} \frac{\beta}{ r_v}  \leq \lambda_{\max}(\tilde{J}^{o}) -\lambda_{\min}(\tilde{J}^{o})\]
By choosing $r_v$ sufficiently large, i.e., let $c'_\gamma >0$ be a constant dependent only on $\gamma$ so that $ t_v\leq c'_{\gamma} W\forall v\in V_H$, and set \[\forall v\in V_H: r_v = \min\set{r \in \N, r \text{ is odd}\mid  r\geq\max\set{ 30 c_{\beta }  (c'_\gamma W)^3 m^2 \epsilon^{-2} \log(m/\epsilon), \frac{ 16\beta (c'_\gamma W+1)}{\gamma -1} , c'_\gamma W} },\] 
we can ensure $  1< \lambda_{\max}(\tilde{J}^{o}) -\lambda_{\min}(\tilde{J}^{o}) \leq\gamma$ and $|V_{\mathcal{H}}| = O_{\gamma}\left(m^3 W^3\cdot \epsilon^{-2} \log(m/\epsilon)   \right).$ 
\end{remark}

\subsection{Proof of \cref{thm:set parameter choice},  part (3)} \label{subsec:proof of relating hardness of original to near critical Ising model}
Let $\Omega = \set{\pm 1}^{V_H}, \tilde{\Omega} =\set{\pm 1}^{V_{\mathcal{H}}}.$ Let $\mu \equiv\mu_{H}$ and $\tilde{\mu} \equiv \mu_{\mathcal{H}}$ be the unnormalized density of the Ising models corresponding to $H$ and $ \mathcal{H}$ respectively.
Let $Z_{\mathcal{H}} =\sum_{y\in \Omega} Z_{\mathcal{H}} (y) =\sum_{\sigma\in \tilde{\Omega}} \tilde{\mu}(\sigma)$ and $ Z_H = \sum_{y\in \Omega} \mu(y)$ be their partition functions.

Let $\Lambda \subseteq \Omega$ be s.t. $ \Pr_{y\sim \mu}[y\in \Lambda] \leq \zeta,$ and no algorithm can $ (\eta, k,t,\Psi,
\Lambda)$-learn-to-generalize $ \mu.$ Let $ \tilde{\Lambda} = Y^{-1}(\Lambda), $ then by \cref{thm:set parameter choice} part (2), we have \[ \Pr_{\sigma \sim \tilde{\mu}}[\sigma \in \tilde{\Lambda}]  \leq (1+\epsilon) \Pr_{y\sim \mu}[y\in \Lambda]  \leq (1+\epsilon)  \zeta. \]
Hence, we only need to show that no algorithm can $ (\tilde{\eta}, \tilde{k}, \tilde{t}, \tilde{\Psi}, \tilde{\Lambda})$-learn-to-generalize $\tilde{\mu}.$

Suppose for contradiction that there exists an algorithm $\tilde{\mathcal{A}}$ that  $ (\tilde{\eta}, \tilde{k}, \tilde{t}, \tilde{\Psi}, \tilde{\Lambda})$-learn-to-generalize $\tilde{\mu},$ that is: $\tilde{\mathcal{A}}$ takes the parameter $(\tilde{J},\tilde{h})$, a tuple $\Sigma= (\sigma^{(i)})_{i=1}^{\tilde{k}}\in \tilde{\Omega}^{\tilde{k}}$, runs in $\tilde{t}$ time, and its output distribution, denoted by $ \tilde{\mathcal{A}}_{\Sigma},$ satisfies:
\begin{equation}\label{eq:assumption on tilde A}
    \Pr [\Sigma \sim \tilde{\mu}^{\otimes \tilde{k}}, y \sim \tilde{\mathcal{A}}_{\Sigma}: y\not\in \tilde{\Psi}^{-1} (\tilde{\Psi}(\Sigma))\cup \tilde{\Lambda}  ] \geq \tilde{\eta}.
\end{equation}
We modify $\tilde{\mathcal{A}}$ so that it can handle input tuple $\Sigma= (\sigma^{(i)})_{i=1}^\ell$ of an arbitrary length $\ell$ as follows:
\begin{itemize}
\item If $ \ell \geq \tilde{k}$ then runs $ \tilde{\mathcal{A}}$ on the first $\tilde{k}$ elements of the tuple.
    \item If $ \ell <\tilde{k}$ then appends arbitrary elements of $\tilde{\Omega}$ to $\Sigma$ to create a tuple of length $\tilde{k},$ then runs $\tilde{\mathcal{A}}$ on the resulting tuple.
\end{itemize}

\subsubsection{Warm-up: An idealized setting}\label{subsubsection:warmup ideal setting for rejection sampling}
    First, we will handle an easier warm-up setting, in which for any given $Y \in \set{\pm 1}^{V_H}$ we can exactly compute $ Z_{\mathcal{H}} (Y) $ and exactly sample from $ \tilde{\mu} ( \cdot | Y(\sigma) = Y)$.
    
We construct an algorithm $\mathcal{A}$ that learns-to-generalize $\mu$. The algorithm takes as input the parameter $(J,h)$ and a tuple $U = (y^{(i)})_{i=1}^k \in \Omega^k$ and proceeds as follows:

\paragraph{Algorithm for idealized setting:}
Let $A $ be the normalization factor in \cref{lem:main construction probability of phase gadget}'s \cref{eq:phase partition function approximate Ising model}.
\begin{enumerate}
    \item  For each $i,$ with probability $ \frac{Z_{\mathcal{H}}(y^{(i)})}{(1+ \epsilon) \cdot A\cdot \mu(y^{(i)})}$, add $i$ to $\mathcal{I}_A.$
    \item For each $i$, sample $ \sigma^{(i)} \sim \tilde{\mu} ( \cdot | Y(\sigma) = y^{(i)})$ 
    \item Let $ \Sigma = (\sigma^{(i)})_{i\in \mathcal{I}_A}.$ Construct $(\tilde{J},\tilde{h})$ from $(J,h),$ run $\tilde{\mathcal{A}} $ on $(\tilde{J},\tilde{h})$ and $\Sigma,$ and obtain a sample $\tau\sim \tilde{\mathcal{A}}_{\Sigma},$ then output $x = Y(\tau).$
\end{enumerate}

We proceed to analyze this algorithm. For an input tuple $ U\in \Omega^k$ to $\mathcal{A}$ and $x \in \Omega,$ let $ \mathcal{A}_U(x)$ be the probability that $ \mathcal{A}_U $ outputs $x.$ Note that:
\begin{equation}
\begin{split}
  \Pr[y^{(i)}, i \in  \mathcal{I}_A ]  &=  \Pr_{\mu}[y^{(i)}] \cdot \Pr[i \in  \mathcal{I}_A |y^{(i)} ] = \frac{\mu(y^{(i)}) }{Z_H}\cdot \frac{Z_{\mathcal{H}}(y^{(i)})}{(1+ \epsilon) \cdot A\cdot \mu(y^{(i)})}  = \frac{Z_{\mathcal{H}} (y^{(i)}) } {(1+\epsilon) A \cdot Z_H}\\
  \Pr[i\in \mathcal{I}_A]   &=  \sum_{y^{(i)}}  \Pr[y^{(i)}, i \in  \mathcal{I}_A ]  =   
\frac{Z_{\mathcal{H}}  }{(1+\epsilon) A \cdot Z_H } \\
\Pr[y^{(i)} | i \in \mathcal{I}_A] &= \frac{\Pr[y^{(i)}, i \in  \mathcal{I}_A ] }{\Pr[i\in \mathcal{I}_A] } =\frac{Z_{\mathcal{H}} (y^{(i)} ) }{ Z_{\mathcal{H}} } 
\end{split}
\end{equation}

Let $p:= \Pr[i\not\in \mathcal{I}_A] = 1- \frac{Z_{\mathcal{H}}  }{(1+\epsilon) A \cdot Z_H }.$ 
For a realization $U = (y^{(i)})_{i=1}^k$, $ \mathcal{I}_A$ and $ \Sigma,$ we have: 
\begin{align*}
    &\Pr[U,\mathcal{I}_A , \Sigma  ]   \\
    &=  \prod_{i=1}^k \Pr_{\mu}[y^{(i)}] \left (\prod_{i\in \mathcal{I}_A} \Pr[ i\in \mathcal{I}_A | y^{(i)}] \Pr_{\sigma^{(i)} \sim \tilde{\mu} } [\sigma^{(i)} |Y(\sigma^{(i)}) = y^{(i)}]
    \right)\left(\prod_{i\in [k] \setminus \mathcal{I}_A} \Pr[ i\not\in \mathcal{I}_A | y^{(i)}]\right)\\
    &=  \left( \prod_{i\in \mathcal{I}_A}  \frac{Z_{\mathcal{H}} (y^{(i)} ) }{ (1+\epsilon) A\cdot Z_{H} } \cdot \frac{\tilde{\mu}(\sigma^{(i)}) }{Z_{\mathcal{H}}(y^{(i)}) } \right) \left(\prod_{i\in [k] \setminus \mathcal{I}_A } \Pr[y^{(i)}, i \not\in  \mathcal{I}_A ]  \right)\\
    &= (1-p)^{|\mathcal{I}_A|}\left(\prod_{i\in [k] \setminus \mathcal{I}_A } \Pr[y^{(i)}, i \not\in  \mathcal{I}_A ]  \right) \left( \prod_{i\in \mathcal{I}_A} \Pr_{\tilde{\mu}} [\sigma^{(i)}] \right) \\
    &= (1-p)^{|\mathcal{I}_A|}\left(\prod_{i\in [k] \setminus \mathcal{I}_A } R(y^{(i)})\right)\left( \prod_{i\in \mathcal{I}_A} \Pr_{\tilde{\mu}} [\sigma^{(i)}] \right) 
\end{align*}
where we let 
\[R(y^{(i)}) :=  \Pr[y^{(i)}, i \not\in  \mathcal{I}_A ] = \frac{\mu(y^{(i)}) }{Z_H } - \frac{Z_{\mathcal{H}} (y^{(i)}) } {(1+\epsilon) A \cdot Z_H} .\] Note that $R(y^{(i)})$ depends only on the value $y^{(i)}$ and is independent of the index $i.$

We note that in the algorithm above, $ x = Y(\tau) \not\in \Psi^{-1} (\Psi(U))$ iff $ \tilde{\Psi}(\tau) = \Psi \circ Y (\tau)\not\in \Psi(U),$ and $ x = Y(\tau) \not\in \Lambda$ iff $\tau \not\in Y^{-1}(\Lambda) = \tilde{\Lambda}.$
Hence, we can write
\allowdisplaybreaks
\begin{align*}
g(\mathcal{A}): &=\Pr[U\sim \mu^{\otimes k}, x\sim \mathcal{A}_U: x\not\in \Psi^{-1} (\Psi(U))\cup \Lambda ]\\
&=\Pr[U, \mathcal{I}_A, \Sigma, \tau \sim \tilde{\mathcal{A}}_{\Sigma}: \tilde{\Psi}(\tau)\not\in \Psi(U) \text{ and } \tau \not\in \tilde{\Lambda} ]\\
&= \sum_{U, \mathcal{I}_A, \Sigma} \Pr[U, \mathcal{I}_A, \Sigma] \sum_{\tau \in\tilde{\Omega}\setminus \tilde{\Lambda}: \tilde{\Psi}(\tau)\not\in \Psi(U) } \tilde{\mathcal{A}}_{\Sigma} (\tau)\\
&=\sum_{\substack{\mathcal{I}_A \subseteq [k]\\(y^{(i)})_{i} \in \Omega^{[k]\setminus \mathcal{I}_A}  \\(\sigma^{(i)})_{i} \in \tilde{\Omega}^{\mathcal{I}_A}  }}  (1-p)^{|\mathcal{I}_A|}  \left(\prod_{i\in [k] \setminus \mathcal{I}_A } R(y^{(i)})\right) \left( \prod_{i\in \mathcal{I}_A} \Pr_{\tilde{\mu}} [\sigma^{(i)}] \right) \sum_{\substack{\tau \in\tilde{\Omega}\setminus \tilde{\Lambda}\\ 
\tilde{\Psi}(\tau)\neq \Psi(y^{(i)})\forall i\not\in \mathcal{I}_A \\ \tilde{\Psi}(\tau) \neq  \tilde{\Psi}(\sigma^{(i)}) \forall i\in \mathcal{I}_A
}} \tilde{\mathcal{A}}_{(\sigma^{(i)})_{i\in \mathcal{I}_A}} (\tau)
\end{align*}

Suppose $\mathcal{I}_A = \set{i_1, \cdots, i_\ell}$ for   some $\ell \in\set{0,\cdots, k},$ and $i_1 < i_2 <\cdots < i_\ell.$ Let $ \tilde{\sigma}^{(r)} =\sigma^{(i_r)} $ for $ r\in [\ell].$ Similarly, write $[k] \setminus\mathcal{I}_A = \set{j_1, \cdots, j_{k-\ell}}  $ for $j_1 < j_2 < \cdots < j_{k-\ell} $ and let $\tilde{y}^{(s)}  = y^{(j_s)}.$ 
We note that each term in the summation above depends only on the $ \tilde{\sigma}^{(r)}$, $\tilde{y}^{(s)} $, and $\tau.$ 
By interchanging the order of summation, we can rewrite the above as:

\begin{align*}
g(\mathcal{A}) &= \sum_{\ell=0}^k \sum_{ (\tilde{\sigma}^{(r)})_{r=1}^{\ell}} 
\sum_{\substack{\tau \in\tilde{\Omega}\setminus \tilde{\Lambda} \\ 
\tilde{\Psi}(\tau) \neq  \tilde{\Psi}(\tilde{\sigma}^{(r)}) \, \forall r\in [\ell]
}}  \binom{k}{\ell} \left (\prod_{r=1}^\ell
\Pr_{\tilde{\mu}} [\tilde{\sigma}^{(r)}]\right)
\tilde{\mathcal{A}}_{(\tilde{\sigma}^{(r)})_{r=1}^{\ell}} (\tau)   (1-p)^\ell  \\
&\quad \times \left( \sum_{\substack{(\tilde{y}^{(s)})_s \in \Omega^{k-\ell} \\
\Psi(\tilde{y}^{(s)}) \neq  \tilde{\Psi}(\tau) \, \forall s\in [k-\ell] } }   
\prod_{s\in [k-\ell] } R(\tilde{y}^{(s)})\right)\\
&\geq \sum_{\ell=\tilde{k}}^k \sum_{ (\tilde{\sigma}^{(r)})_{r=1}^{\tilde{k}}} 
\sum_{\substack{\tau \in\tilde{\Omega}\setminus \tilde{\Lambda} \\ 
\tilde{\Psi}(\tau) \neq  \tilde{\Psi}(\tilde{\sigma}^{(r)}) \, \forall r\in [\tilde{k}]
}}  \binom{k}{\ell}    \left(\prod_{r=1}^{\tilde{k}}
\Pr_{\tilde{\mu}} [\tilde{\sigma}^{(r)}]\right)
\tilde{\mathcal{A}}_{(\tilde{\sigma}^{(r)})_{r=1}^{\tilde{k}}} (\tau)  (1-p)^\ell\\
&\quad \times \left( \sum_{ \substack{(\tilde{\sigma}^{(r)})_{r=\tilde{k}+1}^{\ell}\\\tilde{\Psi}(\tilde{\sigma}^{(r)}) \neq \tilde{\Psi}(\tau) \forall \tilde{k}< r \leq \ell}}  \prod_{r=\tilde{k}+1}^\ell  \Pr_{\tilde{\mu}} [\tilde{\sigma}^{(r)}] \right)\times \left( \sum_{\substack{(\tilde{y}^{(s)})_s \in \Omega^{k-\ell} \\
\Psi(\tilde{y}^{(s)}) \neq  \tilde{\Psi}(\tau) \, \forall s\in [k-\ell] } }   
\prod_{s\in [k-\ell] } R(\tilde{y}^{(s)})\right)
\end{align*}
where we use the fact for $\ell \geq \tilde{k},$ 
$\tilde{\mathcal{A}}_{(\tilde{\sigma}^{(r)})_{r=1}^\ell}$ is the same algorithm as $\tilde{\mathcal{A}}_{(\tilde{\sigma}^{(r)})_{r=1}^{\tilde{k}}}.$

Note that 
\begin{align*}
\sum_{ \substack{(\tilde{\sigma}^{(r)})_{r=\tilde{k}+1}^{\ell}\\\tilde{\Psi}(\tilde{\sigma}^{(r)}) \neq \tilde{\Psi}(\tau) \forall \tilde{k}< r \leq \ell}}  \prod_{r=\tilde{k}+1}^\ell \Pr_{\tilde{\mu}} [\tilde{\sigma}^{(r)}]  &=\left(\sum_{\hat{\sigma} \in \tilde{\Omega} \setminus \tilde{\Psi}^{-1}(\tilde{\Psi} (\tau)) } \Pr_{\tilde{\mu}}[\hat{\sigma}] \right)^{\ell-\tilde{k}}
\end{align*}
Using \cref{eq:gadget approximate original Ising formal (inside main reduction thm)} to upper bound  $\Pr_{\sigma\sim \tilde{\mu}}[Y(\sigma)=y]$ by $(1+\epsilon)\Pr_{y\sim \mu}[y],$ for each $\tau\in \tilde{\Omega},$ we have:
\begin{align*}
    \sum_{\hat{\sigma} \in \tilde{\Omega} \setminus \tilde{\Psi}^{-1}(\tilde{\Psi} (\tau)) } \Pr_{\tilde{\mu}}[\hat{\sigma}] &=  1- \Pr_{\sigma \sim \tilde{\mu}}[\tilde{\Psi}(\sigma) =\tilde{\Psi}(\tau) ] = 1- \Pr_{\sigma \sim \tilde{\mu}}[\Psi (Y(\sigma)) =\Psi(Y(\tau)) ]\\
&\geq 1- (1+\epsilon) \Pr_{y \sim \mu}[\Psi(y) =\Psi (Y(\tau)) ]\geq \exp(- 2(1+\epsilon)(\Psi_*\mu)_{\max} ), 
\end{align*}
where we use the inequality $ 1-a \geq\exp(-2a)$ for $a \in [0,1/2].$

Similarly,
\begin{align*}
   \sum_{\substack{(\tilde{y}^{(s)})_s \in \Omega^{k-\ell} \\
\Psi(\tilde{y}^{(s)}) \neq  \tilde{\Psi}(\tau) \, \forall s\in [k-\ell] } }   
\prod_{s\in [k-\ell] }
R(\tilde{y}^{(s)}) &= \left(\sum_{y \in \Omega \setminus \Psi^{-1}(\tilde{\Psi} (\tau)) } R(y) \right)^{k-\ell}
\end{align*}
Hence, we have:
\begin{align*}
    g(\mathcal{A}) &\geq  \exp(-2 k (1+\epsilon)(\Psi_*\mu)_{\max} )  \sum_{ (\tilde{\sigma}^{(r)})_{r=1}^{\tilde{k}}} 
\sum_{\substack{\tau \in\tilde{\Omega}\setminus \tilde{\Lambda} \\ 
\tilde{\Psi}(\tau) \neq  \tilde{\Psi}(\tilde{\sigma}^{(r)}) \, \forall r\in [\tilde{k}]
}}  \left(\prod_{r=1}^{\tilde{k}}
\Pr_{\tilde{\mu}} [\tilde{\sigma}^{(r)}]\right)
\tilde{\mathcal{A}}_{(\tilde{\sigma}^{(r)})_{r=1}^{\tilde{k}}} (\tau)  \\&\quad \times \left[\sum_{\ell=\tilde{k}}^k \binom{k}{\ell}  (1-p)^\ell  \left(\sum_{y \in \Omega \setminus \Psi^{-1}(\tilde{\Psi} (\tau)) } R(y) \right)^{k-\ell} \right]\\
&\geq_{(1)}\frac{7}{16}\times \exp(-4k (1+\epsilon)(\Psi_*\mu)_{\max} )  \sum_{ (\tilde{\sigma}^{(r)})_{r=1}^{\tilde{k}}} 
\sum_{\substack{\tau \in\tilde{\Omega}\setminus \tilde{\Lambda} \\ 
\tilde{\Psi}(\tau) \neq  \tilde{\Psi}(\tilde{\sigma}^{(r)}) \, \forall r\in [\tilde{k}]
}}  \left(\prod_{r=1}^{\tilde{k}}
\Pr_{\tilde{\mu}} [\tilde{\sigma}^{(r)}]\right)
\tilde{\mathcal{A}}_{(\tilde{\sigma}^{(r)})_{r=1}^{\tilde{k}}} (\tau)  
\\
&= \frac{7}{16}\times \exp(-4k (1+\epsilon)(\Psi_*\mu)_{\max} ) \Pr[\Sigma\sim\tilde{\mu}^{\otimes \tilde{k}}, \tau\sim \tilde{\mathcal{A}}_\Sigma: \tau\not\in \tilde{\Psi}^{-1} (\tilde{\Psi}(\Sigma))\cup \tilde{\Lambda} ]\\
&\geq \frac{7\tilde{\eta}}{16}\times \exp(-5k \cdot (\Psi_*\mu)_{\max} ) \\
&\geq \eta
\end{align*}
where in the penultimate inequality we use the assumption on $\tilde{\mathcal{A}}$ from \cref{eq:assumption on tilde A}, and in the last inequality we use the assumption on $ \tilde{\eta}.$

The inequality labeled as $\geq_{(1)}$ is due to the following standard concentration inequality for binomial random variables.

\begin{proposition}\label{prop:helper chernoff bound}
     Consider $p, q \in [0,1]$ where $1-p+q\leq 1.$ Let $\tilde{k} = \lceil (k+1)/2 \rceil ,$ and suppose $ p < 0.1$ then:
     \[ \sum_{\ell=\tilde{k}}^k \binom{k}{\ell}  (1-p)^\ell q^{k-\ell} \geq \frac{7}{16}\times (1-p+q)^k\]
 \end{proposition}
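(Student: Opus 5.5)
We normalize the sum into a binomial tail probability and then bound the lower tail with Markov's inequality. Set $s := 1-p+q$. The hypotheses $q\ge 0$ and $p<0.1$ give $s\ge 1-p>0.9>0$, so we may divide by $s^k$. Define $\pi := (1-p)/s$ and $1-\pi = q/s$. Since $1-p+q\le 1$ means $s\le 1$, we get $\pi\ge 1-p>0.9$, and in particular $\pi\in(0.9,1]$. Then
\[
\frac{1}{s^k}\sum_{\ell=\tilde{k}}^k \binom{k}{\ell}(1-p)^\ell q^{k-\ell} \;=\; \sum_{\ell=\tilde{k}}^k \binom{k}{\ell}\pi^\ell (1-\pi)^{k-\ell} \;=\; \Pr[B\ge \tilde{k}],
\]
where $B\sim \mathrm{Bin}(k,\pi)$. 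So it suffices to show $\Pr[B\ge \tilde k]\ge 7/16$ for every $k\ge 0$.

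The trivial case $k=0$ gives $\tilde k=1>k$. We read the statement for $k\ge 1$, since this is the only range used in the application. If $k=0$ must be covered, note that the right-hand side is then $7/16$ while the sum is empty, so that case would have to be excluded or handled by convention.

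For $k\ge 1$, let $F = k-B\sim\mathrm{Bin}(k,1-\pi)$ count the failures, with $\E[F]=k(1-\pi)<0.1k$. The event $B<\tilde k$ is the event $F\ge k-\tilde k+1$. Here $k-\tilde k+1 = k+1-\lceil (k+1)/2\rceil = \lfloor (k+1)/2\rfloor \ge k/2$. Markov's inequality then gives
\[
\Pr[B<\tilde k]\;\le\;\frac{\E[F]}{\lfloor (k+1)/2\rfloor}\;\le\; \frac{0.1k}{k/2} \;=\; 0.2 .
\]
Hence $\Pr[B\ge\tilde k]\ge 0.8\ge 7/16$. Multiplying back by $s^k = (1-p+q)^k$ gives the claimed inequality.

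There is no real obstacle here. The only points needing care are the normalization, which uses $q\ge 0$ and $s>0$, and the integer identity $k-\tilde k+1=\lfloor (k+1)/2\rfloor\ge k/2$. The constant $7/16$ is far from tight.
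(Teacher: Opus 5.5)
Your proof is correct. Its first step is exactly the paper's normalization: you divide by $(1-p+q)^k$ to turn the sum into $\Pr[B\ge\tilde k]$ for a binomial whose success probability is at least $1-p>0.9$, which uses $1-p+q\le 1$.

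The difference is the tail bound. The paper applies Chebyshev to the success count and gets $\Pr[Z\le k/2]\le \frac{\hat p(1-\hat p)}{(1/2-\hat p)^2k}\le \frac{9}{16k}$. This equals $9/16$ at $k=1$, which is where the constant $7/16$ comes from, and it decays like $1/k$. You instead apply Markov to the failure count with threshold $\lfloor (k+1)/2\rfloor\ge k/2$. That gives the $k$-independent bound $2(1-\pi)\le 2p<0.2$, hence the stronger constant $0.8$ for every $k\ge 1$. You lose the $1/k$ improvement, but the paper explicitly notes it does not need it.

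Your remark about $k=0$ is apt: the inequality is false there. The paper's Chebyshev bound also divides by $k$, so it implicitly assumes $k\ge 1$ as well.
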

 \begin{proof}
    Let \[ F: = (1-p+q)^{-k}\sum_{\ell=\tilde{k}}^k \binom{k}{\ell}  (1-p)^\ell q^{k-\ell} = \sum_{\ell = \tilde{k}}^k \binom{k}{\ell}  (1-\hat{p})^\ell \hat{p}^{k-\ell} \] for $\hat{p} = 1- \frac{1-p}{1-p+q} \leq p.$
    This is precisely the probability that a binomial random variable $Z\sim \text{Binomial}(k,1-\hat{p})$ satisfies $Z \geq \tilde{k}.$ Note that $ \E[Z] = k(1-\hat{p}) $ and $\Var(Z) = k (1-\hat{p})\hat{p}.$ Let $t = \E[Z]-k/2 = k(1/2-\hat{p}).$ The Chebyshev inequality yields:
    \[\Pr[Z \leq k/2]\leq \Pr[|Z- \E[Z]| \geq t ]\leq \frac{\Var(Z)}{t^2} \leq  \frac{\hat{p}(1-\hat{p})}{(1/2 -\hat{p})^2 k } \leq \frac{9}{16k}\leq \frac{9}{16}.\]
    Hence $F \geq \Pr[Z>k/2] \geq \frac{7}{16}.$
   We note that the bound improves for large $k,$ but the basic bound when $k=1$ suffices.
 \end{proof}
Apply \cref{prop:helper chernoff bound} for $p= 1- \frac{Z_{\mathcal{H}}  }{(1+\epsilon) A \cdot Z_H }$ and $q=\sum_{y \in \Omega \setminus \Psi^{-1}(\tilde{\Psi} (\tau)) } R(y) $ and note that
 \[ p = 1- \frac{Z_{\mathcal{H}}  }{(1+\epsilon) A \cdot Z_H }\leq 1- \frac{1-\epsilon}{1+\epsilon} < 2\epsilon \leq 0.1,\]
 and since $ \sum_{y\in\Omega} R(y) =\Pr[i\not\in \mathcal{I}_A]= p,$
\begin{align*}
    1- p + \sum_{y \in \Omega \setminus \Psi^{-1}(\tilde{\Psi} (\tau)) } R(y)  &= 1-p + \sum_{y\in\Omega} R(y) - \sum_{y \in \Psi^{-1}(\tilde{\Psi} (\tau)) } R(y) = 1 - \sum_{y \in \Psi^{-1}(\tilde{\Psi} (\tau)) }
\Pr_{\mu}[y, i\not\in \mathcal{I}_A]
\end{align*}
Clearly, $ 1- p + q =  1 - \sum_{y \in \Psi^{-1}(\tilde{\Psi} (\tau)) }
\Pr_{\mu}[y, i\not\in \mathcal{I}_A] \leq 1.$
Moreover:
\begin{align*}
1-p+q&\geq 1 - \sum_{y \in \Psi^{-1}(\tilde{\Psi} (\tau)) }
\Pr_{\mu}[y]= 1- \Pr_{y\sim \mu}[\Psi(y) = \Psi(Y(\tau)) ]  \geq \exp(- 2(\Psi_*\mu)_{\max} )
\end{align*}
This finishes the proof in the idealized setting.
\subsubsection{Real setting: handling approximation errors}
Next, we show how to adapt the proof for the actual setting, where we can only approximately compute $ Z_{\mathcal{H}}(y)$ and only approximately sample from $\tilde{\mu}(\cdot |Y(\sigma) = y).$ The reason we have access to the approximate partition function and samples are the following two propositions---the proofs of which we leave to the end of the subsection.

\begin{proposition}\label{prop:sample from conditional distribution on phase}
 Given an accuracy parameter $\delta >0, $ there exists an algorithm 
 that takes as input $y\in \set{\pm 1}^{V_H},$ runs in $ O( |\mathcal{S}|^3 (|\mathcal{S}|^2 +\sum_{v\in V_H}r_v^2) \log(|\mathcal{S}|/\delta) \log (1/\delta))$ time and outputs a sample from a distribution that is $\delta$-close to $\tilde{\mu} (\cdot | Y(\sigma) = y)$ in total variation distance.
\end{proposition}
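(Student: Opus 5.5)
The plan is to sample $\sigma \sim \tilde{\mu}(\cdot \mid Y(\sigma)=y)$ in two stages. First we sample the $\mathcal{S}$-part $\sigma_{\mathcal{S}}$ from its marginal under the conditional law. Then we sample $\sigma_{\mathcal{R}}$ given $\sigma_{\mathcal{S}}$ and the phase constraint.

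The second stage is exact and cheap. Once $\sigma_{\mathcal{S}}$ is fixed, the gadgets decouple: vertices of $R_v$ interact only with $W_v$, and the matching edges live inside $\mathcal{S}$. So the conditional law on $\sigma_{R_v}$ is a product over $v$. Within $R_v$ the weight depends only on the count $k$ of $+1$ spins, through
\[
\binom{r_v}{k}\exp\Big(\tfrac{\beta}{2r_v}(2k-r_v)^2 + \tfrac{\beta}{r_v}(2k-r_v)\, m_{S_v}(\sigma)\Big).
\]
We restrict $k$ to the side prescribed by $y_v$, namely $k>r_v/2$ iff $y_v=+1$. We compute these $O(r_v)$ weights with $O(r_v)$-bit arithmetic, sample $k$, and then place the $+1$'s uniformly at random. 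This costs $O(\sum_v r_v^2)$ per call, which matches the $\sum_v r_v^2$ term in the claimed runtime.

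For the first stage we use the self-reducibility (sampling-to-counting) scheme of \citep{JS89}. Order $\mathcal{S}=\{i_1,\dots,i_s\}$ and sample the spins one at a time. At step $j$, with $\tau_{<j}$ already fixed, we choose $\sigma_{i_j}=b$ with probability proportional to $\hat{\gamma}(\tau_{<j}, b; y)$ from \cref{lem:main construction probability of phase gadget}. This quantity is computable in $O(|\mathcal{S}|^2)$ time. By that lemma, $\hat{\gamma}$ approximates $\mu_{\mathcal{H}}(\tau_{<j},b;y)/Z_{\hat{\mathcal{H}}}(y)$ within a factor $(1\pm\epsilon_0)^m$. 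Here $\epsilon_0$ is a free accuracy parameter, and shrinking it enlarges $r_v$.

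This naive greedy product accumulates multiplicative error over $|\mathcal{S}|$ steps, which is too much. We fix it with rejection sampling. Let $P$ be the product of the proposal probabilities; it is a proper distribution on $\{\pm1\}^{\mathcal{S}}$ whose ratio to the true marginal is bounded by a constant. The only error in $\hat{\gamma}$ comes from the per-gadget product approximation $Q^{Y_v}$ of \cref{prop:single gadget control}, and $\hat{\gamma}$ on a full assignment equals the target up to a fixed $(1\pm\epsilon)$ factor. Telescoping the marginal ratios therefore gives
\[
\frac{P(\tau)}{\Pr[\sigma_{\mathcal{S}}=\tau \mid Y=y]} = e^{\pm O(\epsilon)}.
\]

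The main obstacle is evaluating the acceptance probability, which needs the true unnormalized weight $\mu_{\mathcal{H}}(\tau;y)$ of a full assignment $\tau\in\{\pm1\}^{\mathcal{S}}$. This weight is exactly computable. Given $\sigma_{\mathcal{S}}=\tau$, the matching-edge and field terms are explicit. The remaining factor is a product over $v$ of one-dimensional sums over $k$ restricted to the $y_v$ side, each computable in $O(r_v)$ time using $O(r_v\log r_v)$-bit numbers. The approximation $\hat{\gamma}(\tau;y)$ is also available. So we accept $\tau$ with probability
\[
\frac{\mu_{\mathcal{H}}(\tau;y)}{M\, A_0 e^{\langle h,y\rangle}\, P(\tau)},
\]
where $M=e^{O(\epsilon)}$ is a constant chosen to upper bound the ratio. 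The one point to verify carefully is that the ratio $\mu_{\mathcal{H}}(\tau;y)/(A_0e^{\langle h,y\rangle}P(\tau))$ is bounded pointwise. This follows from \cref{eq:conditional distr on + become product}, which gives per-configuration bounds, together with the exact expression for $\hat\gamma$ on partial assignments, whose ratios are exact for the product-form proxy.

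Each trial accepts with constant probability. Running $O(\log(1/\delta))$ independent trials, and outputting an arbitrary configuration if all fail, gives TV error $\delta$. We also carry enough numerical precision, $O(\log(|\mathcal{S}|/\delta))$ extra bits, that floating-point errors contribute at most $\delta/2$. One trial costs $|\mathcal{S}|$ steps at $O(|\mathcal{S}|^2)$ each, plus the exact weight computation and the $\mathcal{R}$-sampling in $O(\sum_v r_v^2)$. Bounding the arithmetic cost generously by a further factor $|\mathcal{S}|\log(|\mathcal{S}|/\delta)$ gives the stated $O(|\mathcal{S}|^3(|\mathcal{S}|^2+\sum_v r_v^2)\log(|\mathcal{S}|/\delta)\log(1/\delta))$.
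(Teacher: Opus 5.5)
Your proof is correct up to one normalization slip (below), and it takes a different route from the paper in both stages. For $\sigma_{\mathcal S}$, the paper calls the Jerrum--Sinclair sampling-to-counting reduction (\cref{thm:sampling to counting}) as a black box. It feeds that reduction the $(1\pm\epsilon)$-approximate marginals $\hat{\mu}_{\mathcal H}$ of \cref{lem:main construction probability of phase gadget} and the exact full weights from \cref{prop:pinning S}. This is the source of the $|\mathcal S|^3\log(|\mathcal S|/\delta)\log(1/\delta)$ oracle calls, each costing $|\mathcal S|^2+\sum_v r_v^2$. For $\sigma_{\mathcal R}$, the paper samples the count $k$ and then runs a down-up walk on a tilted uniform measure on $\binom{R_v}{k}$.

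You instead use the fact that $\hat{\gamma}(\cdot\,;y)$ is \emph{exactly} the marginal family of the product-form proxy $\pi(\tau)=\prod_v\prod_{i\in S_v}Q_i^{y_v}(\tau_i)\prod_{(i,j)}\exp(\tilde J_{ij}\tau_i\tau_j)$. Hence your sequential proposal samples $\pi$ exactly, with $P(\tau)=\hat{\gamma}(\tau;y)/\hat{\gamma}(\emptyset;y)$. Then \cref{eq:single config condition on phase} and \cref{cor:probability of configuration condition on phase in hat H} give a pointwise density ratio in $[(1-\epsilon_0)^{2m},(1+\epsilon_0)^{2m}]$. Rejection against the exactly computable $\mu_{\mathcal H}(\tau;y)$ makes the output exact conditional on acceptance.

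Contrary to your aside, the greedy product does not accumulate error here: the telescoping is exact, and rejection is needed only to go from $O(\epsilon)$ to $\delta$ in total variation. You also correctly observe that the field induced on $R_v$ is the constant $\frac{\beta}{r_v}m_{S_v}(\tau)$. So the tilt is trivial and placing the $+1$'s uniformly is exact, with no Markov chain needed.

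Your route exploits this specific structure. It is simpler and far faster than the stated bound, and it extends directly to the pinned conditionals of \cref{prop:sample from distribution on subset of S}, since pinning keeps the proxy in product form. The paper's route is generic: it needs only constant-factor approximate marginals, not that they be exact marginals of a tractable distribution.

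The slip concerns the acceptance probability. With $P$ normalized, $\mu_{\mathcal H}(\tau;y)/(A_0e^{\langle h,y\rangle}P(\tau))$ equals $\hat{\gamma}(\emptyset;y)$ times a factor in $[(1-\epsilon_0)^{2m},(1+\epsilon_0)^{2m}]$. Moreover,
\[
\hat{\gamma}(\emptyset;y)=\prod_{\set{u,v}\in E_H}\bigl(\textstyle\sum_{a,b}q^{ay_u}q^{by_v}e^{\tilde J_{uv}ab}\bigr)^{e_{uv}}=A_1\exp\bigl(\textstyle\sum_{\set{u,v}\in E_H}J_{uv}y_uy_v\bigr),
\]
which is not $e^{O(\epsilon)}$ in general. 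So with $M=e^{O(\epsilon)}$, your acceptance ``probability'' can exceed $1$. The fix is to replace $P(\tau)$ by the unnormalized $\hat{\gamma}(\tau;y)$, which is computable in $O(|\mathcal S|^2)$ time, and take $M=(1+\epsilon_0)^{2m}$. Each trial then accepts with probability at least $\bigl(\tfrac{1-\epsilon_0}{1+\epsilon_0}\bigr)^{2m}=e^{-O(\epsilon)}$, and the rest of your argument goes through.
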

\begin{proposition}\label{prop:estimate phase probability}
    Given accuracy parameters $\varepsilon, \delta \in(0,1), $ there exists an algorithm 
    that takes as input $y\in \set{\pm 1}^{V_H},$  runs in $O( \varepsilon^{-2} |\mathcal{S}|^5  (|\mathcal{S}|^2 +\sum_{v\in V_H}r_v^2) \log(|\mathcal{S}|/\varepsilon) \log (1/\varepsilon) \log (1/\delta))$ time and outputs $\hat{Z}_{\mathcal{H}}(y)$ where
   \[ \forall y: \Pr[\hat{Z}_{\mathcal{H}}(y) = (1\pm \varepsilon) Z_{\mathcal{H}}(y)] \geq 1 -\delta \]
   where the probability is taken over the randomness of the algorithm.
\end{proposition}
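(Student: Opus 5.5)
We use the counting-to-sampling reduction of \citep{JS89} (\cref{prop:counting-to-sampling}). Fix $y$ and enumerate $\mathcal{S}=\{s_1,\dots,s_M\}$ with $M=|\mathcal{S}|$. Choose a reference configuration $\tau\in\{\pm1\}^{\mathcal{S}}$, and write $\tau_{<j}$ for its restriction to $\{s_1,\dots,s_{j-1}\}$. Then telescope:
\[
Z_{\mathcal{H}}(y)=\mu_{\mathcal{H}}(\tau;y)\prod_{j=1}^{M}\Pr_{\sigma\sim\tilde\mu}\bigl[\sigma_{s_j}=\tau_{s_j}\,\big|\,\sigma_{\{s_1,\dots,s_{j-1}\}}=\tau_{<j},\,Y(\sigma)=y\bigr]^{-1}.
\]
The plan has three ingredients: compute the leading factor exactly, make every conditional probability bounded below by a constant, and estimate each conditional probability from samples.

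\textbf{Step 1: the leading factor is exact.} Once $\sigma_{\mathcal{S}}=\tau$ is fixed, all matching edges have fixed contributions. The only remaining randomness is in the sets $R_v$, and these interact only within their own gadget. Hence $\mu_{\mathcal{H}}(\tau;y)$ factorizes over $v\in V_H$. For each $v$, the sum over $\sigma_{R_v}$ with $y_{R_v}(\sigma)=y_v$ depends only on the magnetization $m_{R_v}$. As in the proof of \cref{prop:single gadget control}, it equals a sum over $k$ of terms $\binom{r_v}{k}\exp(\cdot)$, which we evaluate in $O(r_v)$ arithmetic operations. Multiplying by the fixed matching and field weights gives $\mu_{\mathcal{H}}(\tau;y)$ exactly in $O(|\mathcal{S}|^2+\sum_v r_v)$ time.

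\textbf{Step 2: each ratio is bounded below.} We choose $\tau$ greedily, one coordinate at a time. For $s_j$, compute $\hat\gamma(\tau_{<j},\pm1;y)$ from \cref{lem:main construction probability of phase gadget}. These estimates approximate the pinned masses within a $(1\pm\epsilon)$ factor with $\epsilon<0.05$, so their ratio approximates the true conditional marginal within a constant factor. Set $\tau_{s_j}$ to the value with the larger estimate. Then every conditional probability in the product is at least roughly $\frac{1-\epsilon}{2(1+\epsilon)}\ge 1/3$.

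\textbf{Step 3: estimate each ratio.} For each $j$, draw $O(M\varepsilon^{-2})$ samples from the pinned conditional $\tilde\mu(\cdot\mid\sigma_{\{s_1,\dots,s_{j-1}\}}=\tau_{<j},Y(\sigma)=y)$, using the sampler behind \cref{prop:sample from conditional distribution on phase} with TV accuracy $\delta'=\Theta(\varepsilon/M)$. That sampler rests on the same \citep{JS89} sampling-to-counting reduction and the same marginal estimates $\hat\gamma$, which are valid for any pinning $\mathcal{T}\subseteq\mathcal{S}$, so it applies equally to these pinned conditionals. The empirical frequency $\hat p_j$ of $\tau_{s_j}$ estimates $p_j$ within $(1\pm\varepsilon/(3M))$ with constant probability, because $p_j\ge1/3$. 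The TV bias is absorbed since $\delta'\ll\varepsilon p_j/M$. By Chebyshev applied to the product of independent estimators, in the standard JS89 way, $\prod_j\hat p_j^{-1}$ is within $(1\pm\varepsilon)$ of $\prod_j p_j^{-1}$ with probability $\ge3/4$. Taking the median of $O(\log(1/\delta))$ independent repetitions boosts the success probability to $1-\delta$. The cost is $M$ ratios times $O(M\varepsilon^{-2})$ samples times the sampler cost $O(M^3(M^2+\sum_v r_v^2)\log(M/\varepsilon)\log(1/\varepsilon))$ times $\log(1/\delta)$, which matches the claimed bound.

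\textbf{Main obstacle.} The main obstacle is Step 2 together with the sampler: guaranteeing that the pinned conditional distributions still admit constant-factor marginal estimates. This rests entirely on \cref{eq:single config condition on phase} holding uniformly over all pinnings of $\mathcal{S}$ with the phase fixed. We would verify that the $(1\pm\epsilon_0)^m$ error from \cref{prop:single gadget control} is uniform in $\tau_{\mathcal{S}}$, so that it survives arbitrary conditioning.
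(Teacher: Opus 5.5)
Your proposal is correct and follows essentially the paper's route. The paper applies the JS89 counting-to-sampling reduction (\cref{prop:counting-to-sampling}) to $\mu_{\mathcal{H}}(\sigma_{\mathcal{S}}=\cdot\mid Y(\sigma)=y)$, evaluates one full pinning exactly (as in \cref{prop:pinning S}), draws the pinned conditionals with the sampler of \cref{prop:sample from distribution on subset of S}, and boosts by a median. Your two departures are harmless and slightly cleaner: you choose $\tau$ deterministically from $\hat\gamma$ instead of by the empirical argmax, and you absorb the sampler's TV error as a bias with $\delta'=\Theta(\varepsilon/M)$ instead of coupling with $\delta'=1/(8K)$.

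One slip: with $O(M\varepsilon^{-2})$ samples, a single $\hat p_j$ is only accurate to relative error $\Theta(\varepsilon/\sqrt{M})$, not $\varepsilon/(3M)$. Nothing depends on that sentence, though, since the product-variance Chebyshev bound you invoke next is what actually carries the argument.
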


First, we show how to modify the construction of $\mathcal{I}_A$ and $\Sigma $ given $U = (y^{(i)})_{i=1}^k.$

\paragraph{Algorithm for real setting:}
Let $\varepsilon, \delta\in(0,1)$ be parameters to be fixed later. Let $A $ be the normalization factor in \cref{lem:main construction probability of phase gadget}'s \cref{eq:phase partition function approximate Ising model}, which can be computed in $O(N)$ time where $N = |V_{\mathcal{H}}|$. 
\begin{enumerate}
    \item 
   For each $i$, if  the input $y^{(i)}$ has not been previously seen, run 
   the algorithm from \cref{prop:estimate phase probability} on accuracy parameters $\varepsilon,\delta$ to obtain $\hat{Z}_{\mathcal{H}}(y^{(i)})$, and store $ \hat{Z}_{\mathcal{H}}(y^{(i)})$; if $y^{(i)}$ has been previously seen then retrieve the value $\hat{Z}_{\mathcal{H}}(y^{(i)})$ from storage.  
   
   Let $\hat{\rho}_{y^{(i)}}$ be the distribution of $\hat{Z}_{\mathcal{H}}(y^{(i)}).$  Since $\Pr[\hat{Z}_{\mathcal{H}}(y^{(i)}) = (1\pm \varepsilon) Z_{\mathcal{H}}(y^{(i)})] \geq 1-\delta$; in other words, there exists a distribution $\rho_{y^{(i)}}$ supported on $ [ (1-\varepsilon)Z_{\mathcal{H}}(y^{(i)}), (1+\varepsilon)Z_{\mathcal{H}}(y^{(i)})] $ and a coupling $\Pi_{0,y^{(i)}}$ between $\hat{\rho}_{y^{(i)}} $ and $\rho_{y^{(i)}}$ so that
    \[\Pr_{(\hat{Z}_{\mathcal{H}}(y^{(i)}), \hat{Z}'_{\mathcal{H}}(y^{(i)}))\sim\Pi_{0,y^{(i)}}}[\hat{Z}_{\mathcal{H}}(y^{(i)})\neq \hat{Z}'_{\mathcal{H}} (y^{(i)}) ]\leq \delta . \]

As for $ \hat{Z}_{\mathcal{H}}(y^{(i)}),$ sample $ \hat{Z}'_{\mathcal{H}}(y^{(i)})$ using the coupling and store for later use if $y^{(i)}$ has not been previously seen, otherwise retrieve $ \hat{Z}'_{\mathcal{H}}(y^{(i)})$ from storage.
      
    With probability $ \min \left\{\frac{\hat{Z}_{\mathcal{H}}(y^{(i)}) }{(1+\varepsilon)(1+ \epsilon) A\cdot \mu(y^{(i)})}, 1\right\}$, add $i$ to $\mathcal{I}_A.$ 
    With probability $ \frac{\hat{Z}'_{\mathcal{H}}(y^{(i)}) }{(1+\varepsilon)(1+ \epsilon) A\cdot \mu(y^{(i)})}$ (which is always at most 1, since $ \hat{Z}'_{\mathcal{H}}(y^{(i)})\sim \rho_{y^{(i)}}$ and $\supp(\rho_{y^{(i)}})\subseteq  [ (1-\varepsilon)Z_{\mathcal{H}}(y^{(i)}), (1+\varepsilon)Z_{\mathcal{H}}(y^{(i)})] $), add $i$ to $\mathcal{I}'_A.$ 
    \item For each $i$, run 
    the algorithm from \cref{prop:sample from conditional distribution on phase} on input $ y^{(i)}$ and accuracy parameter  $\delta$ to obtain a sample $ \hat{\sigma}^{(i)}$ from some distribution $\nu_{y^{(i)}}$ where \[d_{TV}(\nu_{y^{(i)}}, \tilde{\mu} ( \cdot | Y(\sigma) = y^{(i)})) \leq \delta.\]
    This TV-closeness implies the existence of  a coupling $ \Pi_{1,y^{(i)}}$ between  $\nu_{y^{(i)}}$ and $\tilde{\mu} ( \cdot | Y(\sigma) = y^{(i)})$  so that 
    $\Pr_{(\hat{\sigma}^{(i)}, \hat{\sigma}^{'(i)})\sim\Pi_{1,y^{(i)}}}[\hat{\sigma}^{(i)} \neq \hat{\sigma}^{'(i)}]\leq \delta.$ 
    \item Let $ \Sigma = (\hat{\sigma}^{(i)})_{i\in \mathcal{I}_A}.$ Similarly, let $ \Sigma' = (\hat{\sigma}^{'(i)})_{i\in \mathcal{I}'_A}.$
     Sample $\tau\sim \tilde{\mathcal{A}}_{\Sigma},$ then output $Y(\tau).$
\end{enumerate}
Note that $ \mathcal{I}'_A, \Sigma'$ are not part of the algorithm $\mathcal{A},$ but they are objects useful for our analysis.
\paragraph{Analysis:}
Using the couplings $ \Pi_{0,y^{(i)}}$ and $ \Pi_{1,y^{(i)}}$, we can build a grand coupling $\Pi$ between the distribution of $(\mathcal{I}_A, \Sigma)$ and that of $(\mathcal{I}'_A, \Sigma'),$ so that the coupling succeeds (i.e. $ (\mathcal{I}_A, \Sigma) = (\mathcal{I}'_A, \Sigma')$) with probability $ \geq 1-2\delta k.$


Suppose that the coupling succeeds, we can replace $\mathcal{I}_A$ with $ \mathcal{I}'_A$ and $ \Sigma$ with $\Sigma',$ and proceed similar to the ideal setting.

More concretely, using the same argument as in the ideal case, we have:
\begin{equation}\label{eq:relate between actual and ideal case via coupling}
    \begin{split}
       &\Pr[U\sim \mu^{\otimes k}, x\sim \mathcal{A}_U: x\not\in \Psi^{-1} (\Psi(U))\cup \Lambda ]\\= & \Pr[U, \mathcal{I}_A, \Sigma, \tau \sim \tilde{\mathcal{A}}_{\Sigma}: \tilde{\Psi}(\tau)\not\in \Psi(U) \text{ and } \tau \not\in \tilde{\Lambda} ] \\
       \geq&
\Pr[U, \mathcal{I}'_A, \Sigma', \tau \sim \tilde{\mathcal{A}}_{\Sigma'}: \tilde{\Psi}(\tau)\not\in \Psi(U) \text{ and } \tau \not\in \tilde{\Lambda} ] -2 \delta k 
    \end{split}
\end{equation}

Recall the definition of the distribution $ \rho_{y}$ above, and that $ \hat{Z}'_{\mathcal{H}} (y)$ is a sample from $ \rho_{y}.$ We fix an arbitrary realization of $  \hat{Z}'_{\mathcal{H}} (y)$ for every $y\in \set{\pm 1}^{V_H},$ and prove that for this realization, \[ 
\Pr[U\sim \mu^{\otimes k}, x\sim \mathcal{A}_U: x\not\in \Psi^{-1} (\Psi(U))\cup \Lambda ] \geq \eta.\]

Since this is true for an arbitrary realization, it is also true for $\hat{Z}'_{\mathcal{H}} (y)$ that are sampled from $ \rho_{y},$ thus we have shown
\[ \Pr[U\sim \mu^{\otimes k}, x\sim \mathcal{A}_U: x\not\in \Psi^{-1} (\Psi(U))\cup \Lambda ] \geq  \eta. \]
Next, we bound the runtime of the constructed algorithm  $ \mathcal{A}$ for learning-to-generalize $\mu.$ Step 1 and 2 together take
\begin{align*}
     &k \times O\left( |\mathcal{S}|^3 (|\mathcal{S}|^2 +\sum_{v\in V_H}r_v^2) \log (1/\delta) \left( \varepsilon^{-2} |\mathcal{S}|^2  \log(|\mathcal{S}|/\varepsilon) \log (1/\varepsilon) +  \log(|\mathcal{S}|/\delta) \right)\right) \\
     =  &O_{\gamma} \left( k^3 m^{10} W^{11} \epsilon^{-4} \log^2(m/\epsilon)  \log (k/\eta) \log (m k W )  \log k + k m^8 W^9  \epsilon^{-4} \log^2(m/\epsilon) \log (k/\eta) \log (mkW/\eta) \right)
\end{align*}
where we use the fact that $\forall v\in v_H: t_v = O_{\gamma}(W), r_v = O_{\gamma} (m^2 W^3 \epsilon^{-2} \log(m/\epsilon)).$
Hence, the runtime of $\mathcal{A}$ is
\[ t=\tilde{t} +O_{\gamma} \left( k^3 m^{10} W^{11} \epsilon^{-4} \log^2(m/\epsilon)  \log (k/\eta) \log (m k W )  \log k + k m^8 W^9  \epsilon^{-4} \log^2(m/\epsilon) \log (k/\eta) \log (mkW/\eta) \right). \]

We return to the analysis.
Below, we treat the $  \hat{Z}'_{\mathcal{H}} (y)$'s as fixed numbers. The argument is similar to in the ideal case.
Let $\hat{Z}'_{\mathcal{H}} := \sum_{y} \hat{Z}'_{\mathcal{H}} (y).$
Note that
\begin{equation}
\begin{split}
  \Pr[y^{(i)}, i \in  \mathcal{I}'_A ]  &=  \Pr_{\mu}[y^{(i)}] \cdot  \Pr[i \in  \mathcal{I}'_A |y^{(i)} ] = \frac{\hat{Z}'_{\mathcal{H}} (y^{(i)}) } {(1+\varepsilon) (1+\epsilon) A \cdot Z_H}\\
  \Pr[i\in \mathcal{I}'_A]   &=  \sum_{y^{(i)}}  \Pr[y^{(i)}, i \in  \mathcal{I}'_A ]  =   
\frac{\hat{Z}'_{\mathcal{H}}  }{(1+\varepsilon)(1+\epsilon) A \cdot Z_H } \\
\Pr[y^{(i)} | i \in \mathcal{I}'_A] &= \frac{\hat{Z}'_{\mathcal{H}} (y^{(i)} ) }{ \hat{Z}'_{\mathcal{H}} }
\end{split}
\end{equation}

Let $p':= \Pr[i\not\in \mathcal{I}'_A] =1- \frac{\hat{Z}'_{\mathcal{H}}  }{(1+\varepsilon)(1+\epsilon) A \cdot Z_H }.$ 

For a fixed $U = (y^{(i)})_{i=1}^k$, $ \mathcal{I}'_A$ and $ \Sigma',$ we have:
\begin{align*}
    &\Pr[U, \mathcal{I}'_A, \Sigma']\\
    &=  \prod_{i=1}^k \Pr_{\mu}[y^{(i)}] \left (\prod_{i\in \mathcal{I}'_A} \Pr[ i\in \mathcal{I}'_A | y^{(i)}] \Pr_{\hat{\sigma}^{'(i)} \sim \tilde{\mu}(\cdot \mid Y(\hat{\sigma}^{'(i)}) = y^{(i)}) } [\hat{\sigma}^{'(i)} ]
    \right)\left(\prod_{i\in [k] \setminus \mathcal{I'}_A} \Pr[ i\not\in \mathcal{I}'_A | y^{(i)}]\right)\\
    &=  \left( \prod_{i\in \mathcal{I}_A'}  \frac{\hat{Z}'_{\mathcal{H}} (y^{(i)} ) }{ (1+\varepsilon)(1+\epsilon) A\cdot Z_{H} } \cdot \frac{\tilde{\mu}(\hat{\sigma}^{'(i)}) }{Z_{\mathcal{H}}(y^{(i)}) } \right) \left(\prod_{i\in [k] \setminus \mathcal{I}'_A } \Pr[y^{(i)}, i \not\in  \mathcal{I}'_A ]  \right) \\
    &= (1-p')^{|\mathcal{I}'_A|} \left(\prod_{i\in [k] \setminus \mathcal{I}_A' } R'(y^{(i)})\right)\left( \prod_{i\in \mathcal{I}_A'} \Pr_{\tilde{\mu}'} [\hat{\sigma}^{'(i)}] \right) 
\end{align*}
where the distribution $\tilde{\mu}'$ is defined by
\[\Pr_{\tilde{\mu}'}[\sigma] = \frac{\hat{Z}'_{\mathcal{H}}(Y(\sigma))}{\hat{Z}'_{\mathcal{H}}}\cdot \frac{\tilde{\mu}(\sigma) }{Z_{\mathcal{H}}(Y(\sigma))}  \]
and we let
\[R'(y^{(i)}) :=  \Pr[y^{(i)}, i \not\in  \mathcal{I}'_A ] = \frac{\mu(y^{(i)}) }{Z_H } - \frac{\hat{Z}'_{\mathcal{H}} (y^{(i)}) } {(1+\varepsilon)(1+\epsilon) A \cdot Z_H} ,\] which depends only on the value $y^{(i)}$ and is independent of the index $i.$
Note that since $\hat{Z}'_{\mathcal{H}} (y) = (1\pm\varepsilon) Z_{\mathcal{H}} (y) ,$  we have $  \hat{Z}'_{\mathcal{H}} =  (1\pm\varepsilon) Z_{\mathcal{H}} $ where $Z_{\mathcal{H}} =\sum_y Z_{\mathcal{H}} (y) =\sum_{\sigma} \tilde{\mu}(\sigma), $ and
\[ \frac{\Pr_{\tilde{\mu}'}[\sigma]}{\Pr_{\tilde{\mu}}[\sigma] } = \frac{\hat{Z}'_\mathcal{H} (Y(\sigma))}{Z_\mathcal{H} (Y(\sigma))}\cdot \frac{Z_\mathcal{H} }{\hat{Z}'_\mathcal{H}}\in [\frac{1-\varepsilon}{1+\varepsilon}, \frac{1+\varepsilon}{1-\varepsilon}] \subseteq [\exp(-3\varepsilon), \exp(3\varepsilon)]  \]
hence
\[  \Pr[U, \mathcal{I}'_A, \Sigma'] \geq \exp(-3 k\varepsilon) (1-p')^{|\mathcal{I}'_A|} \left(\prod_{i\in [k] \setminus \mathcal{I}'_A } R'(y^{(i)})\right)\left( \prod_{i\in \mathcal{I}'_A} \Pr_{\tilde{\mu}} [\hat{\sigma}^{'(i)}] \right)  \]
Arguing the same as in the ideal case, we have:
\allowdisplaybreaks
\begin{align*}
&\Pr[U, \mathcal{I}'_A, \Sigma', \tau \sim \tilde{\mathcal{A}}_{\Sigma'}: \tilde{\Psi}(\tau)\not\in \Psi(U) \text{ and } \tau \not\in \tilde{\Lambda} ]\\
&= \sum_{U, \mathcal{I}'_A, \Sigma'} \Pr[U, \mathcal{I}'_A, \Sigma'] \sum_{\tau \in\tilde{\Omega}\setminus \tilde{\Lambda}: \tilde{\Psi}(\tau)\not\in \Psi(U) } \tilde{\mathcal{A}}_{\Sigma'} (\tau)\\
&\geq e^{-3k\varepsilon}\sum_{\substack{\mathcal{I}'_A \subseteq [k]\\(y^{(i)})_{i} \in \Omega^{[k]\setminus \mathcal{I}'_A}  \\(\hat{\sigma}^{'(i)})_{i} \in \tilde{\Omega}^{\mathcal{I}'_A}  }}  (1-p')^{|\mathcal{I}'_A|}  \left(\prod_{i\in [k] \setminus \mathcal{I}'_A } R'(y^{(i)})\right) \left( \prod_{i\in \mathcal{I}'_A} \Pr_{\tilde{\mu}} [\hat{\sigma}^{'(i)}] \right) \sum_{\substack{\tau \in\tilde{\Omega}\setminus \tilde{\Lambda}\\ 
\tilde{\Psi}(\tau)\neq \Psi(y^{(i)})\forall i\not\in \mathcal{I}'_A \\ \tilde{\Psi}(\tau) \neq  \tilde{\Psi}(\hat{\sigma}^{'(i)}) \forall i\in \mathcal{I}'_A
}} \tilde{\mathcal{A}}_{(\hat{\sigma}^{'(i)})_{i\in \mathcal{I}'_A}} (\tau)\\
&\geq e^{-3k\varepsilon} \sum_{\ell=\tilde{k}}^k \sum_{ (\tilde{\sigma}^{(r)})_{r=1}^{\tilde{k}}} 
\sum_{\substack{\tau \in\tilde{\Omega}\setminus \tilde{\Lambda} \\ 
\tilde{\Psi}(\tau) \neq  \tilde{\Psi}(\tilde{\sigma}^{(r)}) \, \forall r\in [\tilde{k}]
}}  \binom{k}{\ell} \left(\prod_{r=1}^{\tilde{k}}
\Pr_{\tilde{\mu}} [\tilde{\sigma}^{(r)}]\right)
\tilde{\mathcal{A}}_{(\tilde{\sigma}^{(r)})_{r=1}^{\tilde{k}}} (\tau)  (1-p')^\ell\\
&\quad \times \left( \sum_{ \substack{(\tilde{\sigma}^{(r)})_{r=\tilde{k}+1}^{\ell}\\\tilde{\Psi}(\tilde{\sigma}^{(r)}) \neq \tilde{\Psi}(\tau) \forall \tilde{k}< r \leq \ell}} \prod_{r=\tilde{k}+1}^\ell   \Pr_{\tilde{\mu}} [\tilde{\sigma}^{(r)}] \right)\times \left( \sum_{\substack{(\tilde{y}^{(s)})_s \in \Omega^{k-\ell} \\
\Psi(\tilde{y}^{(s)}) \neq  \tilde{\Psi}(\tau) \, \forall s\in [k-\ell] } }   
\prod_{s\in [k-\ell] } R'(\tilde{y}^{(s)})\right)\\
&\geq \frac{7}{16}\cdot \tilde{\eta} \cdot \exp(-5k \cdot (\Psi_*\mu)_{\max} ) \cdot \exp(-3k\varepsilon)  \\
&\geq \frac{8}{7}\cdot \eta 
\end{align*}
where the final inequality is by setting $\varepsilon= \frac{1}{C k}$ for some sufficiently large constant $C>0.$ The penultimate inequality follows from the same argument as in the ideal setting, and
\begin{align*}
    1\geq 1- p' + \sum_{y \in \Omega \setminus \Psi^{-1}(\tilde{\Psi} (\tau)) } R'(y) 
&= 1 - \sum_{y \in \Psi^{-1}(\tilde{\Psi} (\tau)) }
\Pr_{\mu}[y, i\not\in \mathcal{I}'_A]
\geq 1 - \sum_{y \in \Psi^{-1}(\tilde{\Psi} (\tau)) }
\Pr_{\mu}[y]\\
&\geq \exp(- 2 (\Psi_*\mu)_{\max} )
\end{align*}
Set $\delta = \frac{\eta}{14k},$ then the above together with \cref{eq:relate between actual and ideal case via coupling} implies
\[ \Pr[U\sim \mu^{\otimes k}, x\sim \mathcal{A}_U: x\not\in \Psi^{-1} (\Psi(U))\cup \Lambda ] \geq  \eta, \]
which finishes the proof.
    
Finally, we return to proving  \cref{prop:sample from conditional distribution on phase,prop:estimate phase probability}.

We need the following propositions.
\begin{proposition}\label{prop:pinning S}
    For a given assignment $ \tilde{\sigma}_{\mathcal{S}}\in \set{\pm 1}^{\mathcal{S}}:$
    \begin{itemize}
        \item There exists an algorithm that exactly computes $\mu_{\mathcal{H}}(\tilde{\sigma}_\mathcal{S}; Y) $ in  $O(|\mathcal{S}|^2+\sum_{v\in V_H} r_v^2) $ time.
        \item For a given accuracy parameter $ \delta >0$, there exists an algorithm that in $ O (|\mathcal{S}|^2 + \sum_{v\in V_H} r_v ^2 \log(m r_v/\delta) )$ time, outputs a sample from a distribution that is $\delta$-close to \[\mu_{\mathcal{H}} (\sigma_{\mathcal{R}}=\cdot | Y(\sigma) = y \, \text{and} \, \sigma_{\mathcal{S}} = \tilde{\sigma}_{\mathcal{S}})\] in total variation distance.
    \end{itemize}
    
\end{proposition}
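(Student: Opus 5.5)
The key observation is that once $\sigma_{\mathcal{S}}=\tilde{\sigma}_{\mathcal{S}}$ is pinned, the model on $\mathcal{R}$ decouples across gadgets. All inter-gadget edges lie in the matchings between the $S$-blocks, and within a gadget there are no edges between two vertices of $S_v$. So the only interactions touching $\mathcal{R}$ are the complete graph on $R_v$ (including self-loops) and the edges between $R_v$ and $S_v$, both with weight $\beta/r_v$; the vertices of $R_v$ carry no external field. Hence
\[
\mu_{\mathcal{H}}(\tilde{\sigma}_{\mathcal{S}};Y)=\exp\Big(E_{\mathcal{S}}(\tilde{\sigma}_{\mathcal{S}})\Big)\prod_{v\in V_H}F_v(Y_v),
\]
where $E_{\mathcal{S}}$ collects the matching-edge terms $\tilde J_{ij}\tilde\sigma_i\tilde\sigma_j$ and the field terms $\tilde h_i\tilde\sigma_i$ on $\mathcal{S}$. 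The factor $F_v(Y_v)$ is the sum over $\sigma_{R_v}$ with $\operatorname{sign}(m_{R_v})=Y_v$. Writing $m_{R_v}=2k-r_v$ and $s_v=m_{S_v}(\tilde\sigma)$, it equals
\[
F_v(Y_v)=\sum_{k:\ \operatorname{sign}(2k-r_v)=Y_v}\binom{r_v}{k}\exp\Big(\tfrac{\beta}{2r_v}(2k-r_v)^2+\tfrac{\beta}{r_v}(2k-r_v)s_v\Big).
\]
This is exactly the computation in the proof of \cref{prop:single gadget control}, with magnetization $\alpha=k/r_v$.

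For the first bullet, I compute $E_{\mathcal{S}}$ directly in $O(|\mathcal{S}|^2)$ time, which is ample. For each $v$, I compute $F_v$ by summing $O(r_v)$ terms. Each binomial coefficient can be obtained incrementally from the previous one, or via exact big-integer arithmetic; charging $O(r_v)$ per term for the arithmetic on $O(r_v)$-bit numbers gives $O(r_v^2)$. Summing over $v$ gives the stated bound $O(|\mathcal{S}|^2+\sum_v r_v^2)$. The only care needed is the exactness convention. Since the values involve $\exp$ of reals, I will interpret ``exactly'' in the same arithmetic model the paper uses elsewhere, for instance when computing $a$ in $O(r)$ time in \cref{prop:single gadget control}: the expression is a closed-form finite sum evaluated term by term.

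For the second bullet, the conditional law of $\sigma_{\mathcal{R}}$ is a product over $v$. Moreover, within gadget $v$ the weight of $\sigma_{R_v}$ depends only on $k=\#\{i\in R_v:\sigma_i=+1\}$. So I sample in two stages for each $v$. First I draw $k$ from the distribution $p_v(k)\propto \binom{r_v}{k}\exp(\cdots)$ on the admissible range. Then I draw a uniformly random subset of $R_v$ of size $k$ to be the $+1$ spins, which takes $O(r_v)$ time. To make stage one cheap and robust, I compute the $O(r_v)$ log-weights $\ln p_v(k)$ to additive precision $\delta/(m r_v)$. This keeps each $p_v$ within total variation $\delta/m$ of the truth. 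It requires $O(\log(m r_v/\delta))$ bits per term and roughly $O(r_v\log(mr_v/\delta))$ work per term, hence $O(r_v^2\log(mr_v/\delta))$ per gadget. I then sample by inverse-CDF. A union bound over the $m$ gadgets gives total variation at most $\delta$. Adding the $O(|\mathcal{S}|^2)$ cost of reading the pinning yields the claimed running time.

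The main obstacle I expect is the precision bookkeeping rather than any structural difficulty. I must check that approximating log-weights to additive error $\eta$ perturbs the normalized distribution by at most $O(\eta)$ in TV; this holds since each ratio moves by a factor $e^{\pm 2\eta}$. I must also check that the $\log\binom{r_v}{k}$ terms can be produced to that precision within the per-term budget, for example via $\ln\binom{r}{k+1}=\ln\binom{r}{k}+\ln\frac{r-k}{k+1}$, accumulated with $O(\log(mr_v/\delta))$ guard bits.
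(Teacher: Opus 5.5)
Your proposal is correct, but it takes a more direct route than the paper. The paper treats the conditional law on each $R_v$ as an Ising model with a general external field $\tilde h'_i=\tilde h_i+\sum_{j\in S_v}\tilde J_{ij}\tilde\sigma_j$. It writes each level sum $Z_v^{k/r_v}$ using the elementary symmetric polynomial $e_k\big((e^{2\tilde h'_i})_{i\in R_v}\big)$, computed by Newton's identities in $O(r_v^2)$ time. For the second bullet, it samples the level $k$ exactly, then samples the configuration within the level only approximately, via the down-up walk on the tilted uniform matroid, which relies on strong log-concavity. That mixing-time bound is where its $\log(m r_v/\delta)$ factor comes from.

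You instead use the fact that $R_v$ carries no field and every $i\in R_v$ is joined with the same weight $\beta/r_v$ to all of $S_v$. Hence the effective field $\tilde h'_i=\frac{\beta}{r_v}m_{S_v}(\tilde\sigma)$ is identical for all $i\in R_v$, and the spins of $R_v$ are exchangeable. Two things follow. First, $e_k$ collapses to $\binom{r_v}{k}e^{2k\tilde h'}$, which gives exactly your formula for $F_v$. Second, the law within level $k$ is exactly uniform on $k$-subsets.

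What your route buys is an exact sampler with no Markov chain and no log-concavity input. In the real-RAM model the paper uses elsewhere, it even runs in $O(|\mathcal S|+\sum_v r_v)$ time. Your $\log(mr_v/\delta)$ factor arises only from the optional finite-precision bookkeeping, and you handle that correctly: additive error $\eta$ in the log-weights costs $O(\eta)$ in total variation, and total variation is subadditive over the product across gadgets.

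The paper's route is more general: it would still work if the gadget were modified to give the $R_v$ vertices non-identical fields. For the construction as actually defined, your argument is simpler and slightly stronger.
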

\begin{proof}
    Removing $\mathcal{S}$ factors the graph $\mathcal{H}$ into $m$ connected components on $ R_v = W_v \setminus \mathcal{S} $ for $v\in V_H.$  
    Hence
\[ \mu_{\mathcal{H}} (\sigma_{\mathcal{R}}=\cdot | Y(\sigma) = y \, \text{and} \, \sigma_{\mathcal{S}} = \tilde{\sigma}_{\mathcal{S}}) = \bigotimes_{v\in V_H} \mu_{\mathcal{H}}(\sigma_{R_v} =\cdot | Y(\sigma) = y \, \text{and} \, \sigma_{\mathcal{S}} = \tilde{\sigma}_{\mathcal{S}}).\]
where $\mu_{\mathcal{H}}(\sigma_{R_v} =\cdot | Y(\sigma) = y \, \text{and} \, \sigma_{\mathcal{S}} = \tilde{\sigma}_{\mathcal{S}})$ is an Ising model on $R_v$, with interaction matrix $J_v:= \frac{\beta}{r_v}\cdot \mathbf{1}_{R_v} \mathbf{1}_{R_v}^\intercal$ and external field $\tilde{h}'_i =\tilde{h}_i + \sum_{j\in S_v} \tilde{J}_{ij}\tilde{\sigma}_j $ for $i\in R_v.$
For $k\in \set{0,\cdots, r_v}, $ let
    \begin{align*}
        Z_v^{k/r_v}(\tilde{\sigma}_{\mathcal{S}})  &=\sum_{\sigma \in \set{\pm 1}^{R_v}: m_{R_v}(\sigma) = 2k-r_v } \exp(\frac{1}{2} \sigma^\intercal J_v \sigma +  \sigma^\intercal \tilde{h}') \\
        &= \sum_{\sigma \in \set{\pm 1}^{R_v}: m_{R_v}(\sigma) = 2k-r_v }  \exp\left(\frac{\beta}{2 r_v} \cdot (2k-r_v)^2 + \sum_{i\in R_v} \tilde{h}'_{i} \sigma_i\right) \\
        &=  \exp\left(\frac{\beta}{2 r_v} \cdot (2k-r_v)^2 -\sum_{i\in R_v} \tilde{h}'_{i}\right)\cdot   e_{k} ((\exp(2\tilde{h}'_{i}) )_{i\in R_v}) 
    \end{align*}
    where $e_k((\exp(2 \tilde{h}'_{i}))_{i\in R_v}) =\sum_{U\in \binom{R_v}{k} } \exp(2\sum_{i\in U} \tilde{h}'_{i} )$ is the $k$-th symmetric polynomial.
Let
\begin{align*}
L_v^+ =  (r_v/2,r_v] \cap \N, &\quad L_v^- = [0,r_v/2) \cap \N\\
    Z_v^{+} = \sum_{k\in L_v^+ } Z_v^{k/r_v}, &\quad Z_v^{-} = \sum_{k\in L_v^-  } Z_v^{k/r_v}
\end{align*}
We have
    \begin{align*}
        \mu_{\mathcal{H}} (\tilde{\sigma}_\mathcal{S}; Y)=\exp\left(\frac{1}{2} \sum_{i, j\in \mathcal{S}} \tilde{J}_{ij}\tilde{\sigma}_i \tilde{\sigma}_j  +\sum_{i\in \mathcal{S}} \tilde{h}_{i}\tilde{\sigma}_i  \right) \prod_{v\in V_H} Z_v^{Y_v} (\tilde{\sigma}_{\mathcal{S}})
    \end{align*}

   For each $v \in V_H,$ we can compute all $e_{k}: = e_k((\exp(2\tilde{h}'_{i}))_{i\in R_v}) $ for $k\in \set{0,\cdots, r_v}$ in $ O(r_v^2 + t_v^2)$ time.
    
    Indeed, we can compute $ \tilde{h}'_i$ for $i\in R_v$ in $O(r_v t_v)$ time. For $k\in [r_v],$ let $ p_k = \sum_{i\in R_v} \exp(2 k \tilde{h}'_i)$ then Newton's identity gives:
    \[ k e_k =  e_{k-1}p_1 - e_{k-2}p_2 + e_{k-3}p_3 -\dots \pm p_k.\]
   We compute all $p_k$'s in $O(r_v^2)$ time, then recursively compute the $e_k$'s in $ O(r_v^2)$ time.

    Given the $e_k$'s, we compute each $ Z_v^{k/r_v}(\tilde{\sigma}_{\mathcal{S}}) $ in $O(1)$ time, $  Z_v^{Y_v} (\tilde{\sigma}_{\mathcal{S}})$ in $\Theta(r_v)$ time and $  \mu_{\mathcal{H}} (\tilde{\sigma}_\mathcal{S}; Y)$ in $ \Theta( |\mathcal{S}|^2 + m )$ time. 
    
    Hence, the total time to compute $ \mu_{\mathcal{H}} (\tilde{\sigma}_\mathcal{S}; Y)$ is
    $ \Theta(|\mathcal{S}|^2 + \sum_{v\in V_H} r_v^2).$ 

For part 2, we only need to approximately sample $\tilde{\sigma}_{R_v}$ from each $ \mu_{\mathcal{H}}(\sigma_{R_v} =\cdot | Y(\sigma) = y \, \text{and} \, \sigma_{\mathcal{S}} = \tilde{\sigma}_{\mathcal{S}}),$ then concatenate these $\tilde{\sigma}_{R_v}$ to obtain the output $\tilde{\sigma}_{\mathcal{R}}.$  

Let $L_v^+ = \set{\frac{r_v+1}{2}, \cdots, r_v}, L_v^- =\set{0, \cdots,\frac{r_v-1}{2}}.$ 
To sample from $ \mu_{\mathcal{H}}(\sigma_{R_v} =\cdot | Y(\sigma) = y \, \text{and} \, \sigma_{\mathcal{S}} = \tilde{\sigma}_{\mathcal{S}}),$
we sample $k\in L_v^{Y_v}$ with probability $ \frac{Z_v^{k/r_v}(\tilde{\sigma}_{\mathcal{S}}) }{Z_v^{Y_v}(\tilde{\sigma}_{\mathcal{S}}) },$ which we have computed earlier, then approximately sample from the distribution $\nu_v$ defined by:
\[  \nu_v:=\mu_{\mathcal{H}}(\sigma_{R_v} =\cdot | Y(\sigma) = y \, \text{and} \, \sigma_{\mathcal{S}} = \tilde{\sigma}_{\mathcal{S}} \, \text{and} \,  \sum_{i\in R_v}\sigma_i = 2k -r_v). \]
Let $ \nu'_v = \text{Uniform}(\binom{R_v}{k}) $ and $ \nu''_v = \mathcal{T}_{2\tilde{\mathbf{h}}'} \nu'_v$ be the tilt of $\nu'_v$ by the vector $2\tilde{\mathbf{h}}' = (2\tilde{h}'_i)_{i\in R_v};$ equivalently, $\forall D_v \in \binom{R_v}{k}: \nu''_v(D_v)= \prod_{i\in D_v} \exp(2 \tilde{h}'_i).$ 
The distribution  $ \nu_v$ is the push-forward of $\nu''_v: \binom{R_v}{k} \to \R_{\geq 0}$ by the map $S \mapsto  ((-1)^{\mathbf{1}[i\not\in S]})_{i\in R_v}.$   Since the set $\binom{R_v}{k}$ corresponds to the bases of a matroid,  $\nu''_v$ has a strongly log-concave generating polynomial (see e.g. \cite{logconcaveII}). Thus, using the down-up walk Markov chain (see e.g. \cite[Theorem 1]{logconcaveIV}), we can sample from within $\delta'$-total variation distance of $\nu''_v $ in $O(r_v \log(r_v/\delta'))$ Markov chain steps, where each step takes $O(r_v)$ time. Thus, we can sample from within $ \delta':=\frac{\delta}{m r_v}$-total variation distance of $  \nu_v$ in $O(r_v^2 \log(m r_v/\delta))$ time. Taking everything together, the output distribution of our algorithm is within $ \delta$ of the target distribution $ \mu_{\mathcal{H}} (\sigma_{\mathcal{R}}=\cdot | Y(\sigma) = y \, \text{and} \, \sigma_{\mathcal{S}} = \tilde{\sigma}_{\mathcal{S}}). $ The runtime of this algorithm is 
\[ O (\sum_{v\in V_H} (r_v t_v + r_v^2) +\sum_{v\in V_H} r_v^2 \log(m r_v/\delta) ) =  O(|\mathcal{S}|^2 + \sum_{v\in V_H} r_v^2 \log(m r_v/\delta) ) \]
\end{proof}
\begin{proposition}\label{prop:sample from distribution on subset of S}
    For a given accuracy parameter $\delta >0, $ there exists an algorithm that takes as input $y\in \set{\pm 1}^{V_H},$ and $\mathcal{T}\subseteq \mathcal{S}, \tilde{\sigma}_{\mathcal{T}}\in \set{\pm 1}^{\mathcal{T}},$  runs in $  O( |\mathcal{S}|^3 (|\mathcal{S}|^2 +\sum_{v\in V_H}r_v^2) \log(|\mathcal{S}|/\delta) \log (1/\delta))$ time and outputs a sample from a distribution that is $\delta$-close to $\mu_{\mathcal{H}} (\sigma_{\mathcal{S}\setminus \mathcal{T}}=\cdot | Y(\sigma) = y\, \text{and}\, \sigma_{\mathcal{T}} = \tilde{\sigma}_{\mathcal{T}})$ in total variation distance.
\end{proposition}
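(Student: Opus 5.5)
The plan is to sample the coordinates of $\mathcal{S}\setminus\mathcal{T}$ one at a time, using the sampling-to-counting reduction of \citep{JS89}. The conditional marginals will come from two sources: the exact pinned weights of \cref{prop:pinning S}, and the multiplicative approximations $\hat{\gamma}$ of \cref{lem:main construction probability of phase gadget}. Fix an ordering $i_1,\dots,i_s$ of $\mathcal{S}\setminus\mathcal{T}$, with $s\le|\mathcal{S}|$. At step $j$ we hold a partial assignment $\tilde{\sigma}_{\mathcal{T}_j}$, where $\mathcal{T}_j=\mathcal{T}\cup\{i_1,\dots,i_{j-1}\}$. The target conditional for the next spin is
\[
\Pr[\sigma_{i_j}=b\mid \cdot]=\frac{\mu_{\mathcal{H}}(\tilde{\sigma}_{\mathcal{T}_j},b;y)}{\mu_{\mathcal{H}}(\tilde{\sigma}_{\mathcal{T}_j};y)}.
\]
By \cref{lem:main construction probability of phase gadget}, the quantity $\hat{\gamma}(\cdot;y)$ approximates each of the numerator and the denominator within $(1\pm\epsilon/3)$, after dividing both by the common factor $Z_{\hat{\mathcal{H}}}(y)$. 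This gives a proposal $\hat{p}_j(b)$, computable in $O(|\mathcal{S}|^2)$ time, which is within a constant factor ($\le 2$ for $\epsilon<0.05$) of the true conditional marginal.

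A product of constant-factor errors over $s$ steps is useless by itself, so I would not sample from $\hat{p}$ directly. Instead I would use rejection sampling against exact weights. Draw a full proposal $\tau\in\{\pm1\}^{\mathcal{S}\setminus\mathcal{T}}$ sequentially from the $\hat{p}_j$. Its proposal probability is $q(\tau)=\prod_j \hat{p}_j(\tau_{i_j})$. Compute the exact target weight $\mu_{\mathcal{H}}(\tilde{\sigma}_{\mathcal{T}},\tau;y)$ in $O(|\mathcal{S}|^2+\sum_v r_v^2)$ time via \cref{prop:pinning S}. Accept with probability proportional to $\mu_{\mathcal{H}}(\tilde{\sigma}_{\mathcal{T}},\tau;y)/(M q(\tau))$. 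The envelope $M$ is obtained from the $\hat{\gamma}$ estimate of the normalizer $\mu_{\mathcal{H}}(\tilde{\sigma}_{\mathcal{T}};y)$ times a constant-factor slack. Both the ratio of target to proposal and $M$ would be controlled if the per-step errors telescope: the denominator at step $j$ is essentially the numerator at step $j-1$, both approximated by the same $\hat{\gamma}$ evaluations. Then the overall ratio $q(\tau)/\Pr[\tau]$ depends only on the errors at the first denominator and the last numerator, and is $(1\pm\epsilon)^{O(1)}$, a constant factor.

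The acceptance probability per trial is then a constant. Running $O(\log(1/\delta))$ independent trials makes the failure probability at most $\delta$, and an exactly rejection-sampled output has zero error otherwise.

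The main obstacle is making this telescoping rigorous. The $\hat{\gamma}$ approximations are multiplicative, not exact, so the normalized $\hat{p}_j(b)$ uses $\hat{\gamma}(\cdot,+1)+\hat{\gamma}(\cdot,-1)$ rather than $\hat{\gamma}(\cdot)$. If that mismatch were left uncontrolled it could compound to $(1\pm\epsilon)^{s}$. The fix I would try is to define the proposal unnormalized: sample $b$ with probability $\hat{\gamma}(\tilde{\sigma}_{\mathcal{T}_j},b)/\big(\hat{\gamma}(\tilde{\sigma}_{\mathcal{T}_j},+1)+\hat{\gamma}(\tilde{\sigma}_{\mathcal{T}_j},-1)\big)$. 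Then $q(\tau)$ is a telescoping product up to the normalizers. The normalizers are a known, computable quantity, so the importance weight is evaluated exactly rather than bounded.

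If bounding $M$ still fails, the fallback is the \citep{JS89} self-reducibility approach with Markov-chain refinement. This accounts for the $|\mathcal{S}|^3\log(|\mathcal{S}|/\delta)$ factor in the stated runtime. Per step, use $O(|\mathcal{S}|\log(|\mathcal{S}|/\delta))$ Metropolis corrections, each calling the exact pinned-weight oracle. A union bound over the $|\mathcal{S}|$ steps with per-step accuracy $\delta/|\mathcal{S}|$ then gives total variation error $\delta$. Finally, the $\mathcal{R}$-part is not needed here, since the statement only concerns $\sigma_{\mathcal{S}\setminus\mathcal{T}}$.
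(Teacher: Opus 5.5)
Your main route has a genuine gap at exactly the step you flag, and the proposed fix does not close it. Rejection sampling needs a uniform envelope $M\ge\max_\tau \mu_{\mathcal H}(\tilde\sigma_{\mathcal T},\tau;y)/q(\tau)$ that is within a constant factor of $\mu_{\mathcal H}(\tilde\sigma_{\mathcal T};y)$. Being able to evaluate $q(\tau)$ exactly for the drawn $\tau$ gives no such bound.

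Write $D_j=\hat\gamma(\tilde\sigma_{\mathcal T_j},+1)+\hat\gamma(\tilde\sigma_{\mathcal T_j},-1)$. Your proposal then has $q(\tau)=\frac{\hat\gamma(\tilde\sigma_{\mathcal T},\tau)}{D_1}\prod_{j=2}^{s}\frac{\hat\gamma(\tilde\sigma_{\mathcal T_j})}{D_j}$. If you use only the black-box guarantee $\hat\gamma=(1\pm\epsilon/3)\gamma$, each factor $\hat\gamma(\tilde\sigma_{\mathcal T_j})/D_j$ can sit anywhere in a constant window depending on $\tau$. The target-to-proposal ratio can then vary by a factor $e^{\Theta(\epsilon s)}$ over $\tau$, and the acceptance rate is exponentially small in $|\mathcal S|$. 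This compounding is precisely why \cite{JS89} do not sample sequentially.

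What rescues your route is that the specific $\hat\gamma$ built in the proof of \cref{lem:main construction probability of phase gadget} is exactly additive. It is the marginal of the unnormalized product measure $P(\sigma_{\mathcal S})=\prod_v\prod_{i\in S_v}Q_i^{y_v}(\sigma_i)\prod_{(i,j)}\exp(\tilde J_{ij}\sigma_i\sigma_j)$, where the last product runs over matching edges and every site of $\mathcal S$ lies in some $S^{\mathrm{vertex}}_v$ or in exactly one matching edge. Hence $D_j=\hat\gamma(\tilde\sigma_{\mathcal T_j})$, the product telescopes exactly, and $q=P(\cdot\mid\sigma_{\mathcal T}=\tilde\sigma_{\mathcal T})$, which is even directly samplable.

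\cref{eq:single config condition on phase} then gives $\mu_{\mathcal H}(\tilde\sigma_{\mathcal T},\tau;y)\le(1+\epsilon_0)^m Z_{\hat{\mathcal H}}(y)P(\tilde\sigma_{\mathcal T},\tau)$ for every $\tau$. So $M=(1+\epsilon_0)^m Z_{\hat{\mathcal H}}(y)P(\tilde\sigma_{\mathcal T})$ is a valid envelope, with acceptance probability at least $((1-\epsilon_0)/(1+\epsilon_0))^m=\Omega(1)$. You also need $Z_{\hat{\mathcal H}}(y)$ here, because $\hat\gamma$ approximates $\mu_{\mathcal H}/Z_{\hat{\mathcal H}}$ rather than $\mu_{\mathcal H}$. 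It is an exactly computable product of per-gadget sums, or you can use $A_0e^{\sum_v h_v y_v}$ with an extra $(1+\epsilon/3)$ slack. With these additions, $O(\log(1/\delta))$ trials, each using one exact evaluation from \cref{prop:pinning S}, prove the statement, and more simply and faster than the paper does.

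Your fallback matches the paper's proof only if you invoke \cref{thm:sampling to counting} as a black box. The paper feeds it three things:
\begin{itemize}
\item $\hat\nu(\tilde\sigma_{\mathcal T'})=\hat\gamma(\tilde\sigma_{\mathcal T'\cup\mathcal T};y)\,A_0e^{\sum_v h_v y_v}$ at partial configurations, where the rescaling puts them on the same scale as the leaves;
\item the exact values from \cref{prop:pinning S} at full configurations;
\item $\kappa=O(1)$.
\end{itemize}
This gives $O(|\mathcal S|^3\log(|\mathcal S|/\delta)\log(1/\delta))$ oracle calls, each costing $O(|\mathcal S|^2+\sum_v r_v^2)$.

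The mechanism you sketch instead, per-spin Metropolis corrections with accuracy $\delta/|\mathcal S|$ per step, is not a working algorithm as stated. A chain on the single spin $\sigma_{i_j}$ would need the unknown partial marginals as its target weights. A chain on full completions needs a mixing bound you have not supplied; for an independence sampler that bound is the same envelope question as above. The Jerrum--Sinclair reduction avoids this by running one random walk over the whole tree of partial assignments, using approximate counts at internal nodes and exact weights only at the leaves.
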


To prove \cref{prop:sample from distribution on subset of S}, we need the following classical sampling-to-counting reduction from \cite{JS89}.
\begin{theorem}[{\cite{JS89}}]\label{thm:sampling to counting}
   Consider an un-normalized density function $\nu: \set{\pm 1}^n\to \R_{\geq 0}.$ For $S \subseteq[n], x_S \in\set{\pm 1}^S,$ let
   \[ \nu(x_S) = \sum_{x' \in \set{\pm 1}^n:x'_S =x_S} \nu(x').\]
   Let $ [k]=\set{1,\cdots, k}.$
   Suppose we have $\hat{\nu} $ satisfying that, for some $\kappa \geq 1$
   \[\forall k \in [n], x_{[k]} \in\set{\pm 1}^{[k]}: \kappa^{-1} \leq \frac{\hat{\nu}( x_{[k]})}{\nu(x_{[k]}) }  \leq \kappa \quad \text{and}\quad \forall x\in \set{\pm 1}^n: \hat{\nu}(x) = \nu(x).\]
   Fix a parameter $\delta >0.$ Let $M := O(n^3\kappa^5 \log(n \kappa/\delta) \log \delta^{-1}).$ There exists an algorithm that runs in $O(M)$ time and uses $O(M)$ oracle calls to $\hat{\nu}(\cdot)$ and outputs a sample from a distribution that is $\delta$-close to $ \nu$ in total variation distance.
\end{theorem}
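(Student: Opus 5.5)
We prove the theorem as the classical Jerrum--Valiant--Vazirani self-reducibility sampler, made robust to multiplicative errors by rejection sampling. Write $\nu(x_{[k]})$ for the marginal mass of a prefix, with $\nu(x_{[0]}) = Z := \sum_x \nu(x)$. Then
\[
\nu(x)/Z = \prod_{k=1}^n \frac{\nu(x_{[k]})}{\nu(x_{[k-1]})}.
\]
The idealized sampler sets coordinates sequentially, fixing $x_k$ with probability $\nu(x_{[k]})/(\nu(x_{[k-1]},+1)+\nu(x_{[k-1]},-1))$. This outputs exactly $\nu$, but we only have $\hat\nu$.

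\textbf{Step 1 (proposal).} Run the same sequential procedure with $\hat\nu$ in place of $\nu$. Set $\hat p_k(x_k \mid x_{[k-1]}) = \hat\nu(x_{[k]})/(\hat\nu(x_{[k-1]},+1)+\hat\nu(x_{[k-1]},-1))$. Since every prefix value is within a factor $\kappa$, each conditional $\hat p_k$ is within $\kappa^{2}$ of the true conditional. This gives a proposal density $q(x) = \prod_k \hat p_k(x_k\mid x_{[k-1]})$ that can be evaluated exactly with $O(n)$ oracle calls.

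\textbf{Step 2 (correction).} Accept the drawn $x$ with probability proportional to $\nu(x)/q(x)$, which we compute exactly because $\hat\nu(x) = \nu(x)$ on full configurations. The ratio telescopes:
\[
\frac{\nu(x)}{q(x)} = \hat\nu(x) \prod_{k=1}^n \frac{\hat\nu(x_{[k-1]},+1)+\hat\nu(x_{[k-1]},-1)}{\hat\nu(x_{[k]})} .
\]
The danger is that the crude bound $\kappa^{2n}$ on this ratio would be exponential. To avoid it, follow Jerrum--Sinclair: do not use a single global rejection step. Instead, use the self-reducibility tree and run a Markov chain (a random walk on the tree of partial assignments). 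Its edge weights are the $\hat\nu$ prefix values, and its leaves carry the exact weights $\nu(x)$. Each node's weight estimate is within $\kappa$ of the true subtree mass, so the stationary distribution restricted to the leaves is proportional to $\nu$. The leaves collectively carry at least a $1/(n\kappa^{O(1)})$ fraction of the stationary mass.

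\textbf{Step 3 (mixing and amplification).} Prove that the walk has conductance $\Omega(1/(n\kappa^{2}))$. Every cut separating a subtree from its parent has edge weight comparable, up to $\kappa^{2}$ factors, to the subtree's total mass. This gives mixing time $O(n^2\kappa^4\log(n\kappa/\delta))$. Repeating until a leaf is hit costs about $n\kappa$ trials, and we boost success over $\log\delta^{-1}$ rounds, outputting arbitrarily on failure. The total is $O(n^3\kappa^5\log(n\kappa/\delta)\log\delta^{-1})$ steps, each using $O(1)$ oracle calls.

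The main obstacle is the conductance bound in Step 3. We have to check that the errors from the $\kappa$ approximations do not compound along a root-to-leaf path. Local detailed balance should handle this: each edge's weight is compared only with the masses of its two endpoints, never with products along the path. Since this is exactly the statement of \cite{JS89}, we may alternatively cite it directly, after verifying that its hypotheses match: exact leaf weights, and $\kappa$-approximate prefix sums.
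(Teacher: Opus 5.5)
Your proposal is correct and takes essentially the paper's approach. The paper gives no proof and simply invokes \cite{JS89}, and your Step~3 reconstructs that argument: a walk on the prefix tree with edge weights $\hat\nu(x_{[k]})$ and exact leaf weights $\nu(x)$, conductance $\Omega(1/(n\kappa^2))$ because subtree weights are bounded level by level rather than through products along paths, and $O(n\kappa\log\delta^{-1})$ restarts. This recovers the stated $O(n^3\kappa^5\log(n\kappa/\delta)\log\delta^{-1})$ bound, and Steps~1--2 are a detour you do not end up using.

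The one detail to make explicit is that every run starts at the root, whose stationary mass is at least $1/(2n\kappa^2)$. That is what yields the $\log(n\kappa/\delta)$ factor in your mixing-time estimate; the minimum stationary probability cannot serve instead, since it can be arbitrarily small for a general $\nu$.
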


\begin{proof}[Proof of \cref{prop:sample from distribution on subset of S}]
    Fix $ \mathcal{T}\subseteq \mathcal{S}.$ Let $ \mathcal{S}'= \mathcal{S}\setminus \mathcal{T}.$ Let $\nu \equiv  \mu_{\mathcal{H}} (\sigma_{\mathcal{S}\setminus \mathcal{T}}=\cdot | Y(\sigma) = y\, \text{and}\, \sigma_{\mathcal{T}} = \tilde{\sigma}_{\mathcal{T}}):\set{\pm 1}^{\mathcal{S}'}\to \R_{\geq 0}$ where
    \[\forall \tilde{\sigma}_{ \mathcal{S}'} \in \set{\pm 1}^{\mathcal{S}'} : \nu(\tilde{\sigma}_{ \mathcal{S}'} ) = \mu_{\mathcal{H}}(\tilde{\sigma}_{ \mathcal{S}' \cup \mathcal{T}}; Y) \]
    By \cref{prop:pinning S}, we can exactly compute $\nu(\tilde{\sigma}_{ \mathcal{S}'} )  $ in $O (|\mathcal{S}|^2 +\sum_{v\in V_H}r_v^2) $ time.
    
    We use the classical Jerrum-Sinclair sampling-to-counting reduction in \cref{thm:sampling to counting} to approximately sample from $\nu.$
    Note that for any subset $\mathcal{T}'\subseteq \mathcal{S}'$ and partial configuration $\tilde{\sigma}_{ \mathcal{T}'} \in \set{\pm 1}^{\mathcal{T}'} ,$ we have:
    \[\nu(\tilde{\sigma}_{ \mathcal{T}'} ) =\sum_{\tilde{\sigma}'_{ \mathcal{S}'}\in  \set{\pm 1}^{\mathcal{S}'} : \tilde{\sigma}'_{\mathcal{T'}} =\tilde{\sigma}_{\mathcal{T'}} }  \mu_{\mathcal{H}}(\tilde{\sigma}'_{ \mathcal{S}' \cup\mathcal{T}  }; Y) =\mu_{\mathcal{H}}(\tilde{\sigma}_{ \mathcal{T}' \cup\mathcal{T}  }; Y).\]
   By \cref{lem:main construction probability of phase gadget}, with  $O(N)$ pre-processing time to approximate $ Z_{\hat{\mathcal{H}}}(y)$, for each query $\tilde{\sigma}_{ \mathcal{T}'}$ where $\mathcal{T}'\subseteq \mathcal{S}'$ and $\tilde{\sigma}_{ \mathcal{T}'} \in \set{\pm 1}^{\mathcal{T}'}$, we can compute $\hat{\nu}(\tilde{\sigma}_{ \mathcal{T}'} ) \equiv \hat{\mu}_{\mathcal{H}}(\tilde{\sigma}_{ \mathcal{T}'\cup \mathcal{T}}; Y)$ that approximates $\nu(\tilde{\sigma}_{ \mathcal{T}'} ) $ within $(1\pm \epsilon)$-factor in $O(|\mathcal{S}|^2)$ time. For full configurations $ \tilde{\sigma}_{ \mathcal{S}'}\in  \set{\pm 1}^{ \mathcal{S}'},$  we re-define $\hat{\nu}(\tilde{\sigma}_{ \mathcal{S}'} )$ to be $ \nu(\tilde{\sigma}_{ \mathcal{S}'} ).$ Let $M:= |\mathcal{S}'|^3\log^2 (1/\delta).$ \cref{thm:sampling to counting} with $\kappa = \max \set{(1+\epsilon),(1-\epsilon)^{-1}} = O(1)$ implies that: the Jerrum-Sinclair algorithm takes  $O(M)$ time, $O(M)$ oracle calls to $ \hat{\nu}(\cdot),$ where each oracle call takes $ O(|\mathcal{S}|^2 +\sum_{v\in V_H}r_v^2)$ time.
   Thus, the total runtime is \[ O( |\mathcal{S}|^3 (|\mathcal{S}|^2 +\sum_{v\in V_H}r_v^2) \log(|\mathcal{S}|/\delta) \log (1/\delta)).\qedhere\]
\end{proof}

\begin{proof}[Proof of \cref{prop:sample from conditional distribution on phase}]
We first use the algorithm in \cref{prop:sample from distribution on subset of S} with $\mathcal{T}=\emptyset$ and accuracy parameter $ \delta/2$ to produce a sample $ \tilde{\sigma}_{\mathcal{S}}\in \set{\pm 1}^{\mathcal{S}},$ then use the algorithm in part 2 of \cref{prop:pinning S} with accuracy parameter $ \delta/2$  to produce a sample $ \tilde{\sigma}_{\mathcal{R}}\in \set{\pm 1}^{\mathcal{R}},$ then output $\tilde{\sigma}_{\mathcal{R}\cup\mathcal{S} } \in \set{\pm 1}^{\mathcal{V}}.$ The output distribution of this algorithm is within $\delta$ of $\tilde{\mu} (\cdot | Y(\sigma)=y) \equiv \mu_{\mathcal{H}}(\cdot | Y(\sigma)=y) $ in total variation distance.

The algorithm's runtime is $ O( |\mathcal{S}|^3 (|\mathcal{S}|^2 +\sum_{v\in V_H}r_v^2) \log(|\mathcal{S}|/\delta) \log (1/\delta)).$
\end{proof}
To prove \cref{prop:estimate phase probability}, we need the following classical counting-to-sampling reduction from \citep{JS89}.
\begin{proposition}[{\cite{JS89}}]\label{prop:counting-to-sampling}
Let $\nu: \{ \pm 1 \}^n \to \mathbb{R}_{\geq 0}$ be an unnormalized density function, and let $Z = \sum_{x \in \{ \pm 1 \}^n} \nu(x)$ denote its normalization factor. 

For any accuracy parameter $\varepsilon \in (0,1)$, there exists an algorithm that:
\begin{itemize}
    \item Makes one oracle call to $\nu(x)$ for $x \in \{ \pm 1 \}^n$,
    \item Uses $K = O\left(\frac{n^2}{\varepsilon^2}\right)$ samples from distributions of the form $\nu(\cdot \mid X_{[k]} = x_{[k]})$ (for any $k \leq n$ and any partial configuration $x_{[k]} \in \{ \pm 1 \}^{[k]}$),
    \item Runs in $O(K)$ time, and
    \item Outputs an estimate $\hat{Z}$ such that:
    \[
    \Pr[\hat{Z} = (1 \pm \varepsilon) Z] \geq \frac{3}{4}.
    \]
\end{itemize}

If the algorithm instead accesses samples from an approximate distribution $\hat{\nu}(\cdot \mid X_{[k]} = x_{[k]})$ satisfying $d_{TV}(\hat{\nu}(\cdot \mid X_{[k]} = x_{[k]}), \nu(\cdot \mid X_{[k]} = x_{[k]})) \leq \frac{1}{8K}$, then:
\[
\Pr[\hat{Z} = (1 \pm \varepsilon) Z] \geq \frac{5}{8}.
\]

The success probability can be amplified to $1 - \delta$ for any $\delta \in (0,1)$ by running the algorithm $\Theta(\log(1/\delta))$ times and returning the median of the estimates $\hat{Z}$.
\end{proposition}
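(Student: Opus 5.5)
The plan is the standard Jerrum--Sinclair telescoping estimator. Fix the variable order $1,\dots,n$. First draw one sample $x^\ast$ from $\nu$ (the case $k=0$ of the sample access) and use it as a fixed reference configuration. Then write
\[
Z=\frac{\nu(x^\ast)}{\prod_{k=1}^{n}\Pr_{x\sim\nu}\bigl[x_k=x^\ast_k \mid x_{[k-1]}=x^\ast_{[k-1]}\bigr]}.
\]
The numerator costs the single oracle call. Each factor $p_k$ in the denominator is a conditional marginal of $\nu$ pinned on a prefix, so I will estimate it empirically. I draw $K/n=O(n/\varepsilon^2)$ samples from $\nu(\cdot\mid X_{[k-1]}=x^\ast_{[k-1]})$ and set $\hat p_k$ to be the fraction of them with $x_k=x^\ast_k$. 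The output is $\hat Z=\nu(x^\ast)/\prod_k\hat p_k$.

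\textbf{Step 1: lower-bound each $p_k$.} The estimator has controlled variance only if every $p_k$ is bounded away from zero, and this is where the choice of $x^\ast$ enters. Because $x^\ast$ is itself random, the marginal $p_k$ with prefix pinned to $x^\ast_{[k-1]}$ is at least $1/2$ in expectation over $x^\ast$. More precisely, with probability $1-1/(16n)$ each $p_k\ge 1/(32n)$, so a union bound over $k$ gives that all $p_k\ge\Omega(1/n)$ simultaneously with probability $\ge 15/16$.

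I expect this to be the main obstacle. With only $O(n/\varepsilon^2)$ samples per coordinate, a lower bound of $p_k\ge\Omega(1/n)$ gives relative variance $O(n/p_k\cdot\varepsilon^2/n^2)$ per factor, which is too large. I will handle this in one of two ways. The first option is to reverse the role of the sign: for each $k$, estimate the marginal of the \emph{majority} value $\xi_k\in\{\pm1\}$, whose probability is at least $1/2$. This requires building $x^\ast$ sequentially as the empirical-majority path, so the pinned prefix is chosen adaptively, and each step then uses fresh samples from $\nu(\cdot\mid X_{[k-1]}=\xi_{[k-1]})$. With $\hat p_k\ge 1/2-o(1)$, the relative variance of $\hat p_k$ is $O(\varepsilon^2/n)$. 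By independence, $\mathrm{Var}\bigl(\prod\hat p_k\bigr)/\bigl(\prod p_k\bigr)^2\le(1+O(\varepsilon^2/n))^n-1=O(\varepsilon^2)$. The second option is to rely on the constant-sample fallback. I would commit to the adaptive majority path, with a small extra batch of samples per coordinate to decide the majority.

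\textbf{Step 2: success probability.} Chebyshev on $\prod\hat p_k$ yields $\Pr[\hat Z=(1\pm\varepsilon)Z]\ge 3/4$ after adjusting constants. The total sample count is $K=O(n^2/\varepsilon^2)$, and the time is $O(K)$.

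\textbf{Step 3: approximate samples.} Couple each approximate sample with an exact one; the coupling fails with probability at most $1/(8K)$ per sample. A union bound over the $K$ samples changes the success probability by at most $1/8$, giving $5/8$.

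\textbf{Step 4: amplification.} For the median trick, run the estimator $\Theta(\log(1/\delta))$ times independently and return the median. The median is outside $(1\pm\varepsilon)Z$ only if more than half the runs fail, and since each run succeeds with probability at least $5/8>1/2$, a Chernoff bound on the number of failures makes this happen with probability at most $\delta$.
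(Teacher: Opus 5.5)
Your committed route---building $x^\ast$ as the empirical-majority path, estimating each conditional marginal from fresh samples, applying Chebyshev to the product of estimators that are independent given the path, then coupling and the median trick---is exactly the paper's proof. The one step to state carefully is that the \emph{true} probability $W_k$ of the empirically chosen branch must be bounded below for all $k$ simultaneously; a lower bound on $\hat p_k$ does not control the variance. The paper shows $W_k\ge 1/4$ except with probability $8/t$ via Chebyshev on a decision batch of size $t=\Theta(n/\varepsilon^2)$ and then union-bounds over $k$, so under that worst-case argument a constant-size decision batch would not suffice. Incidentally, the option you abandoned (drawing $x^\ast\sim\nu$) also works once you average over the reference path instead of union-bounding: $\E[1/p_k\mid x^\ast_{[k-1]}]\le 2$, so the tower property gives $\E\bigl[\prod_k(1+\tfrac{1}{tp_k})\bigr]\le(1+2/t)^n$.
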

We include the proof of \Cref{prop:counting-to-sampling} for the sake of completeness in Appendix \ref{sec:defer proof from ising sparsification}.


\begin{proof}[Proof of \cref{prop:estimate phase probability}]
     Let $\nu \equiv  \mu_{\mathcal{H}} (\sigma_{\mathcal{S}}=\cdot | Y(\sigma) = y):\set{\pm 1}^{\mathcal{S}}\to \R_{\geq 0}$ where
    \[\forall \tilde{\sigma}_{ \mathcal{S}} \in \set{\pm 1}^{\mathcal{S}} : \nu(\tilde{\sigma}_{ \mathcal{S}} ) = \mu_{\mathcal{H}}(\tilde{\sigma}_{ \mathcal{S}}; Y). \]

    We compute $Z = \sum_{ \tilde{\sigma}_{ \mathcal{S}} \in \set{\pm 1}^{\mathcal{S}} } \nu(\tilde{\sigma}_{ \mathcal{S}})$ using the standard counting-to-sampling reduction in \cref{prop:counting-to-sampling}. 

    Wlog, assume $ \mathcal{S} = \set{1, \cdots, |\mathcal{S}|}.$ 

    Let $K:= O(|\mathcal{S}|^2/\varepsilon^2)
    $ as in \cref{prop:counting-to-sampling}.
     By \cref{prop:pinning S}, we can exactly compute $\nu(\tilde{\sigma}_{ \mathcal{S}} )  $ in $O (|\mathcal{S}|^2 +\sum_{v\in V_H}r_v^2) $ time.
      For any $k \leq |\mathcal{S}|,$ and partial configuration $x_{[k]} \in \set{\pm 1}^{[k]},$ the conditional distribution $ \nu(\cdot |X_{[k]} = x_{[k]})$ is exactly $\mu_{\mathcal{H}} (\sigma_{\mathcal{S}\setminus \mathcal{T}}=\cdot | Y(\sigma) = y\, \text{and}\, \sigma_{\mathcal{T}} = \tilde{\sigma}_{\mathcal{T}})$ for $\mathcal{T}=[k] \subseteq \mathcal{S}.$  Hence, by \cref{prop:sample from distribution on subset of S},  we can approximately sample from within $ \delta' = \frac{1}{8K}$ total variation distance of $\nu(\cdot |X_{[k]} = x_{[k]}) $ in time \[ O( |\mathcal{S}|^3 (|\mathcal{S}|^2 +\sum_{v\in V_H}r_v^2) \log(|\mathcal{S}|/\delta') \log (1/\delta')) = O( |\mathcal{S}|^3 (|\mathcal{S}|^2 +\sum_{v\in V_H}r_v^2) \log(|\mathcal{S}|/\varepsilon) \log (1/\varepsilon)).\] Hence, the total runtime is 
    \[ O( \varepsilon^{-2} |\mathcal{S}|^5  (|\mathcal{S}|^2 +\sum_{v\in V_H}r_v^2) \log(|\mathcal{S}|/\varepsilon) \log (1/\varepsilon) \log (1/\delta)).\qedhere\]
\end{proof}

\subsection{Putting things together: proof of \cref{cor:main hardness of near critical Ising model from computational diffie hellman}}
\label{sec:puttingtogether} 

In this subsection, we prove \cref{cor:main hardness of near critical Ising model from computational diffie hellman} by putting together all the components gathered so far.
We also discuss the implications of \cref{cor:main hardness of near critical Ising model from computational diffie hellman} on the learning-to-sample and sample-amplification tasks---see \cref{remark:learning pk from samples and relation to sample amplification}. In fact, we show the following more general theorem, which is a direct corollary of \cref{lem:Ising model is hard to generalize if signature scheme is secure} and \cref{cor:sparisfy a family of Ising models}. Below, we import the notations from \cref{sec:embed general signature scheme into Ising}.

\begin{theorem}[Main result, general signature scheme]\label{thm:main hardness of near critical Ising based on signature scheme}

    Consider a signature scheme that is $(t',q', \varepsilon')$-secure, has deterministic verification, and satisfies uniformity and exact regularity.
Let $m$ be as in \eqref{eq:signature scheme param}.
Suppose  $w = \Omega(\max \set{m, \log(1/\varepsilon') })$ and let $(\mu^{\pk})_{\pk \sim \keygen}$ be the family of Ising models on $m$ vertices, constructed from this signature scheme using the procedure in \cref{lem:Ising model is hard to generalize if signature scheme is secure}
with parameter $w$. 

Fix a parameter $\gamma > 1$. Let $(\tilde{\mu}^{\pk})_{\pk\sim \keygen}:\set{\pm 1}^N\to \R_{\geq 0}$ be the family of Ising models constructed from $(\mu^{\pk})_{\pk \sim \keygen}$ using the procedure in \cref{cor:sparisfy a family of Ising models} with  parameter $\gamma.$ Let $Y$ be the phase map in \cref{subsec:notation for Ising embedding}, $\Psi_{\msg}$ as in \cref{lem:Ising model is hard to generalize if signature scheme is secure}, and $ \tilde{\Psi}_{\msg} = \Psi_{\msg}\circ Y.$

Then $(\tilde{\mu}^{\pk})_{\pk\sim \keygen}$ is a family of Ising models on $N= \tilde{O}_{\gamma}(m^{9} w^3) $  vertices and is $(\tilde{\eta},\tilde{\zeta},\tilde{k},\tilde{t} , \tilde{\Psi}_{\msg} )$-hard for learning-to-generalize if: 
    \[\tilde{\eta} = 20 \varepsilon', \,\, \tilde{\zeta} = 1.05 \exp(-w) , \,\,  \min \set{\frac{q'}{2},c 2^{\ell_{\msg}}, \frac{\varepsilon' \exp(w)}{2}} \geq\tilde{k}, \,\,\tilde{t} = t' -C_{\gamma} \tilde{k}^3 m^{32} w^{12}  \mbox{polylog}(\tilde{k},m,w),\]
    for an absolute constant $c>0$ and a constant $ C_{\gamma} >0$ that depends only on $\gamma.$
    
    Furthermore, for all $ \pk,$ the parameter $ (\tilde{J}^{\pk}, \tilde{h}^{\pk})$ of $\tilde{\mu}^{\pk}$ satisfies 
     \[ 1< \lambda_{\max}(\tilde{J}^{\pk}) -\lambda_{\min}(\tilde{J}^{\pk}) \leq \gamma\quad \text{and} \quad \norm{\tilde{J}^{\pk}}_\infty  + \norm{\tilde{h}^{\pk}}_\infty = O(1).\]
\end{theorem}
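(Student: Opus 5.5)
We compose \cref{lem:Ising model is hard to generalize if signature scheme is secure} with \cref{cor:sparisfy a family of Ising models}. First, applying \cref{lem:Ising model is hard to generalize if signature scheme is secure} with $k := 2\tilde{k}-1$ (so that $\lceil (k+1)/2\rceil = \tilde{k}$), the family $(\mu^{\pk})_{\pk\sim\keygen}$ is $(\eta,\exp(-w),k,t,\Psi_{\msg})$-hard for learning-to-generalize. Here $\eta = 2\varepsilon'$ and $t = t' - C(mk+m^2)$. The hypothesis $\min\{q',\varepsilon'\exp(w)\}\ge k$ follows from $\tilde{k}\le \min\{q'/2,\varepsilon' e^w/2\}$.

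To invoke \cref{cor:sparisfy a family of Ising models}, we need entrywise bounds $(\bar{J},\bar{h})$ valid for every $\pk$. By \cref{lem:dense-valid-pairs-fixed-pk}, $J$ does not depend on $\pk$, and $|J_{ij}|,|h^{\pk}_i| = O(mw)$. So we may take $\bar{J}_{uv} = |J_{uv}|$ and $\bar{h}_u = O(mw)$, which gives $\bar{W} = O(m^2 w)$ and $N = \tilde{O}_\gamma(m^3\bar{W}^3) = \tilde{O}_\gamma(m^9w^3)$. The spectral-width and $\norm{\cdot}_\infty$ bounds are then read off directly from \cref{cor:sparisfy a family of Ising models}.

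The main quantitative step is bounding $\Psi_{\max} = \max_{\pk,c}\Pr_{y\sim\mu^{\pk}}[\Psi_{\msg}(y)=c]$. Under $\nu^{\pk}$, exact regularity makes the message uniform on $\{0,1\}^{\ell_{\msg}}$, so each value has mass $2^{-\ell_{\msg}}$. Since $d_{TV}(\mu^{\pk},\nu^{\pk})\le e^{-w}$, we get $\Psi_{\max}\le 2^{-\ell_{\msg}}+e^{-w}$. It remains to check $k\Psi_{\max} = O(1)$, and in particular $\Psi_{\max}\le 1/3$:
\begin{itemize}
\item $k\,2^{-\ell_{\msg}}\le 2c$ by $\tilde{k}\le c\,2^{\ell_{\msg}}$;
\item $k\,e^{-w}\le \varepsilon'\le 1$ by $\tilde{k}\le \varepsilon' e^w/2$.
\end{itemize}
Hence $\tilde{\eta} = 5\exp(5k\Psi_{\max})\cdot 2\varepsilon' = 10\exp(O(1))\,\varepsilon'$. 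Choosing $c$ small enough keeps the exponent below $\ln 2$, which gives $\tilde{\eta}\le 20\varepsilon'$. Also $\tilde{\zeta} = (1+0.04)e^{-w}\le 1.05e^{-w}$. The requirement $w=\Omega(\log(1/\varepsilon'))$ appears to be needed only so that $e^{-w}$ is small relative to $\varepsilon'$, and so that $\log(k/\eta)$ factors are polylogarithmic in the problem parameters.

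Finally, for the time bound we substitute $\bar{W}=O(m^2w)$ and $k=O(\tilde{k})$ into the overhead of \cref{cor:sparisfy a family of Ising models}. The dominant term $k^3 m^{10}\bar{W}^{11}$ becomes $\tilde{k}^3 m^{32}w^{11}$ times polylogarithmic factors, which is within the stated $\tilde{k}^3m^{32}w^{12}\,\mathrm{polylog}$. Adding the $O(mk+m^2)$ cost from the first step, the stated $\tilde{t}$ gives $t\le t' - C(mk+m^2)$, as required. The main obstacle is only bookkeeping: verifying that each logarithmic factor, such as $\log(k/\eta)$ with $\eta = 2\varepsilon'$ and $\log(1/\varepsilon') = O(w)$, is absorbed into $w$ or into the polylog term.
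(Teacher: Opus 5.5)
Your proposal is correct and follows essentially the same route as the paper. It composes \cref{lem:Ising model is hard to generalize if signature scheme is secure} with \cref{cor:sparisfy a family of Ising models}, takes $\bar W=O(m^2w)$, bounds $\Psi_{\max}$ via regularity plus the $e^{-w}$ TV error, and absorbs $\log(k/\eta)\le w$ into the time overhead.

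One small slip: from $ke^{-w}\le\varepsilon'$ you only get $5k\Psi_{\max}\le 10c+5\varepsilon'$. Shrinking $c$ cannot push this below $\ln 2$, nor guarantee $\Psi_{\max}\le 1/3$, unless $\varepsilon'$ is small. To fix it, either use $w=\Omega(m)\ge\Omega(\ell_{\msg})$ to get $e^{-w}\le 2^{-\ell_{\msg}}$, hence $\Psi_{\max}\le 2^{1-\ell_{\msg}}$, as the paper does; or observe that the claim is vacuous once $20\varepsilon'>1$.
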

\begin{remark}\label{remark:the map Psi}
The map $\widetilde{\Psi}_{\mathsf{msg}}:\{\pm1\}^N\to\{0,1\}^{\ell_{\msg}}$ is explicit
and computable in $O(N)$ time. If the verification procedure is efficient, then $ N = \poly(\ell_{\msg}).$ In particular, each output bit is a
linear-threshold function of the input coordinates.
\end{remark}

If desired, we can slightly modify the construction so that the resulting interaction matrices have zero diagonal entries (see \cref{remark:set zero diagonal entries}).

So far, we have allowed the Ising models in our constructions to have real-valued parameters. By applying a simple rounding step, we can also obtain Ising models with finite-bit parameters and qualitatively the same guarantees, so that the reduction from an algorithm for learning-to-generalize to an adversary for the signature scheme in \cref{lem:Ising model is hard to generalize if signature scheme is secure} can be carried out in the finite-bit RAM model. Hence, if the CDH assumption holds against adversaries in the finite-bit RAM model, then the resulting family of finite-bit Ising models is hard for learning-to-generalize against algorithms in the finite-bit RAM model.
\begin{remark}
Fix $\gamma >1.$
Consider a family of Ising model $ (\nu_{\theta})_{\theta \sim \mathcal{P}}$ that is $(\eta', \zeta', k, t, \Psi)$-hard for learning-to-generalize with respect to the family $(\Lambda_{\theta})_{\theta \sim \mathcal{P}},$ so that for each model $\nu_{\theta}$ in this family parameterized by $\theta \equiv (J',h'),$ $\lambda_{\max}(J')-\lambda_{\min}(J') \leq \gamma' $ and $ \norm{J'}_{\infty} + \norm{h'}_{\infty}\leq O(1).$ For each $\theta=(J',h')$, by rounding every entry of $J'$ and $h'$ within additive error $1/M$, we obtain an Ising model $ \nu_{\varphi(\theta)}$ parameterized by  $(J,h)$ where, by choosing $\gamma'=\frac{\gamma+1}{2}$ and $M = C_\gamma N^2 k$ for some sufficiently large constant $ C_{\gamma}>0,$ we ensure that:
\begin{align*}
    \lambda_{\max}(J)-    \lambda_{\min}(J)&\leq  \gamma'+O(N/M)\leq \gamma \\
    \norm{J}_{\infty} + \norm{h}_{\infty}&\leq O(1) +O(N/M)= O(1)\\
    \forall \sigma: \exp(\frac{-2N^2}{M}) \Pr_{\sigma \sim \nu_{\theta}}[\sigma]   &\leq \Pr_{\sigma \sim \nu_{\varphi(\theta)}}[\sigma] \leq \exp(\frac{2N^2}{M}) \Pr_{\sigma \sim \nu_{\theta}}[\sigma]
\end{align*}
Hence, $ \Pr_{\sigma \sim \nu_{\varphi(\theta)}} [\sigma \in \Lambda_{\theta}] \leq \exp(\frac{2N^2}{M}) \zeta'\leq 2\zeta' $ and no algorithm $\mathcal{A}$ can $(\eta, k, t, \Psi, (\Lambda_{\theta})_{\theta \sim \mathcal{P}})$-learn-to-generalize for $ (\nu_{\varphi(\theta)})_{\theta \sim \mathcal{P}}$ with $ \eta =\eta'  \exp(\frac{2k N^2}{M}) \leq 2\eta'. $ Suppose such a $\mathcal{A}$ exists, then the algorithm $\mathcal{A'},$ which runs $\mathcal{A}$ with input parameter $\varphi(\theta)$,  can $(\eta', k, t, \Psi, (\Lambda_{\theta})_{\theta \sim \mathcal{P}})$-learn-to-generalize for $ (\nu_{\theta})_{\theta \sim \mathcal{P}},$ a contradiction. Indeed:
\begin{align*}
    &\Pr_{\theta \sim \mathcal{P}, X\sim\nu_{\theta}^{\otimes k}, y\sim \mathcal{A'}_{\theta, X} } [y \not\in  (\Psi^{-1}(\Psi(X)) \cup \Lambda_{\theta} )]  \\
    &\geq \exp(\frac{-2k N^2}{M}) \Pr_{\theta \sim \mathcal{P}, X\sim\nu_{\varphi(\theta)}^{\otimes k}, y\sim \mathcal{A}_{\varphi(\theta), X} } [y \not\in (\Psi^{-1}(\Psi(X)) \cup \Lambda_{\theta} )]\\
    &\geq \eta'
\end{align*}
\end{remark}

If we establish this Theorem, then \cref{cor:main hardness of near critical Ising model from computational diffie hellman} follows directly from it by applying it with \cite{water05}'s signature scheme (see \cref{sec:signature}), which satisfies the conditions outlined in \cref{thm:main hardness of near critical Ising based on signature scheme}.
Specifically, \cite{water05}'s signature scheme has deterministic verification, and satisfies uniformity and exact regularity (see \cref{lem:water scheme}), and is secure under the computational Diffie-Hellman assumption on bilinear-pairing friendly groups (see \cref{thm:water scheme security}).

In \cref{remark:relaxing the regularity condition}, we discuss \cref{thm:main hardness of near critical Ising based on signature scheme}'s application to other signature schemes besides \citep{water05}.

We note that \cref{cor:sparisfy a family of Ising models} follows from the same argument as in the proof of \cref{thm:set parameter choice}. Next, we prove \cref{thm:main hardness of near critical Ising based on signature scheme}.
\begin{proof}[Proof of \cref{thm:main hardness of near critical Ising based on signature scheme}]
Consider an arbitrary public key $\pk.$
Note that by the uniformity and exact regularity of the signature scheme, $\Psi_{\msg}\ast\nu^{\pk}$ is the uniform distribution over $\set{0,1}^{\ell_{\msg}} ,$ hence $ (\Psi_{\msg}\ast\nu^{\pk})_{\max}\leq 2^{-\ell_{\msg}}.$ By the triangle inequality and data processing inequality for total variation distance:
    \[(\Psi_{\msg}\ast\mu^{\pk})_{\max} \leq(\Psi_{\msg}\ast\nu^{\pk})_{\max} + d_{TV}(\mu^{\pk},\nu^{\pk}) \leq 2^{-\ell_{\msg}} + \exp(-w) =   2^{-\ell_{\msg}+1}. \]
Hence for $\Psi \equiv \Psi_{\msg},$ $\Psi_{\max} \leq 2^{-\ell_{\msg}+1}. $

By \cref{lem:Ising model is hard to generalize if signature scheme is secure}, the family of distributions $(\mu^{\pk})_{\pk \sim \keygen}$ is $(\eta,\exp(-w), k,t, \Psi_{\msg} )$-hard for learning-to-generalize, where $\eta = 2\varepsilon'$, $t' =t+O(mk + m^2)$ and $k \leq \min\set{q', \varepsilon' \exp(w) }$. 

Recall from \cref{lem:dense-valid-pairs-fixed-pk}, that for any Ising model $\mu^{\pk}$ in this family, its interaction matrix $J$ and external field $ h^{\pk}$ satisfy:
\[ \max \set{|J_{ij}|, |h^{\pk}_i|} = O(m w).  \]

Let $\tilde{\zeta} = 1.05 \times \exp(-w),$ $\tilde{k}= \lceil (k+1)/2\rceil.$
By \cref{cor:sparisfy a family of Ising models}, the family of Ising models $(\tilde{\mu}^{\pk})_{\pk\sim \keygen}$ is $ (\tilde{\eta}, \tilde{\zeta}, \tilde{k}, \tilde{t}, \tilde{\Psi}_{\msg}) $-hard for learning-to-generalize, with $\tilde{\eta} =  5 \eta \exp(5k \Psi_{\max} ) = 20 \varepsilon', $ where we choose $ k \leq c 2^{\ell_{\msg}}$ for an appropriately small constant $c>0,$ and using the  fact that $\log(k/\eta) \leq O(w),$ due to the assumption $ \exp(w)\varepsilon' \geq k,$ we have:
\begin{align*}
   t &=  \tilde{t}+ O_{\gamma}(k^3 m^{32} w^{12} \log^2 m \log (m w k) \log k)
\end{align*}
Hence
\begin{align*}
    t' = t+O(mk + m^2) =\tilde{t} + C_{\gamma} \tilde{k}^3 m^{32} w^{12}\polylog(m,\tilde{k},w)
\end{align*}
for $C_{\gamma}$ dependent only on $\gamma.$

Next, we bound $N$, the number of vertices of the family of Ising models $(\tilde{\mu}^{\pk})_{\pk\sim \keygen}$.
The  bound on the number of vertices from \cref{cor:sparisfy a family of Ising models} implies:
    \[ N = \tilde{O}_{\gamma}(m^3  ( m^2 w)^3 ) =\tilde{O}_{\gamma}(m^{9} w^3) .\qedhere \]
\end{proof}
Finally we derive our main theorem, \cref{cor:main hardness of near critical Ising model from computational diffie hellman}.
\begin{proof}[Proof of \cref{cor:main hardness of near critical Ising model from computational diffie hellman}]
Consider  the \cite{water05}'s signature scheme  on $\mathbb{G}$ with message size $ \ell_{\msg} = d.$ By \cref{thm:water scheme security}, the signature scheme is $ (t',q',\varepsilon')$ secure for $t' = c' \cdot t_{\CDH}$ and $ \varepsilon'= C'\cdot\varepsilon_{\CDH} \cdot q' \cdot \sqrt{\ell_{\msg}}, $  for some constants $c',C'>0.$ 
    By the efficiency of the verification algorithm for the signature scheme, let $m$ be as in \eqref{eq:signature scheme param} then $ m = \poly(d).$
    
Set $ w = C_0 m + \log (1/\varepsilon_{\CDH})$ for a sufficiently large constant $C_0 >0.$ Clearly, $ w =\Omega(\max\set{m ,\log 1/\varepsilon'}).$ Let $(\tilde{\mu}^{\pk})_{\pk\sim \keygen}$ be the family of distributions obtained by applying the construction from \cref{thm:main hardness of near critical Ising based on signature scheme} to the above signature scheme with parameters $\gamma$ and $w.$

    Note that $w =\Theta(m),$ since we assume $ \varepsilon_{\CDH}\geq 1/p.$ By \cref{thm:main hardness of near critical Ising based on signature scheme}, $(\tilde{\mu}^{\pk})_{\pk\sim \keygen}$ is a family of Ising models on $N = \poly(d)$ vertices, and is $ (\tilde{\eta}, \tilde{\zeta}, \tilde{k}, \tilde{t}, \tilde{\Psi}_{\msg})$-hard for learning-to-generalize where, setting $q ' = 2\tilde{k},$ we have: 
\begin{align*}
    \tilde{\eta} &= 20 \varepsilon' = 20 C' \cdot \varepsilon_{\CDH} \tilde{k} \sqrt{d}  = C \varepsilon_{\CDH} \tilde{k} N^{\Theta(1)} \\
    \tilde{\zeta} &= 1.05 \exp(-w) \leq \varepsilon_{\CDH} \exp(-m) =  \varepsilon_{\CDH} \exp(-N^{\Theta(1)})\\
    \tilde{k}&\leq \min\set{ c 2^d, \varepsilon' \exp(w)/2}\\
    \tilde{t} &= t' -C'_{\gamma} \tilde{k}^3 m^{32} w^{12}  \polylog(k,m,w) = c'\cdot t_{\CDH} - C'_{\gamma} \tilde{k}^3 d^{\Theta(1)}
\end{align*}
for some absolute constant  $c>0$ and  constant $C'_{\gamma}>0$ dependent only on $\gamma.$

Hence, for
\begin{align*}
    \tilde{k} &\leq \min \left\{ c\, 2^d, \left(\frac{c' \cdot t_{\CDH}}{2 C'_{\gamma} d^{\Theta(1)}}\right)^{1/3} , \frac{1}{ C \varepsilon_{\CDH} \sqrt{d}} \right \} =\min  \left\{c\exp(N^{\Theta(1)}), \left(\frac{c'\cdot t_{\CDH}}{2C'_{\gamma} N^{\Theta(1)}}\right)^{1/3} ,  \frac{1}{ C \varepsilon_{\CDH} N^{\Theta(1)}} \right\},
\end{align*}
    we obtain $ \tilde{t}=\Theta( t_{\CDH}),$ and $\tilde{\eta}\leq 1$ as desired.
    
\end{proof}

\begin{remark}\label{remark:learning pk from samples and relation to sample amplification}

Recall the definition of $\Lambda^{\mathsf{pk},\mathsf{valid}}$ from \cref{subsec:embed signature scheme into a general Ising model}.
Let $\Lambda^{\mathsf{pk},\mathsf{invalid}}= \set{\pm 1}^m \setminus \Lambda^{\mathsf{pk},\mathsf{valid}},$ and $ \tilde{\Lambda}^{\mathsf{pk},\mathsf{invalid}} = Y^{-1}(\Lambda^{\mathsf{pk},\mathsf{invalid}}) .$ From the proof of \cref{lem:Ising model is hard to generalize if signature scheme is secure}, it is easy to see that $\Lambda^{\mathsf{pk},\mathsf{invalid}}$ plays the role of the hallucination set for $\mu^{\pk}$ in \cref{def:hard distribution for learning to generalize}. From \cref{thm:set parameter choice}, the set  $\tilde{\Lambda}^{\mathsf{pk},\mathsf{invalid}} $ plays the role of the hallucination set for $\tilde{\mu}^{\pk}.$

Clearly, the map $\widetilde{\Psi}_{\mathsf{msg}}$ can be implemented in $O(N)$ time. 
Moreover, given samples from $\tilde{\mu}^{\pk}$, one can recover
$\pk$ by, e.g., taking a majority vote among the samples for the sites corresponding to the public key (after applying the map $Y$). 
Given the public key $\pk,$ membership in the hallucination sets $\Lambda^{\mathsf{pk},\mathsf{invalid}}$, $\tilde{\Lambda}^{\mathsf{pk},\mathsf{invalid}}$
 can be efficiently tested.
 Hence, by \cref{remark:relating hard for learning to generalize in family of distribution vs for single distribution}, this family does not admit an efficient learning-to-sample   or a sample-amplification algorithm.
\end{remark}

\begin{remark}\label{remark:relaxing the regularity condition}
Besides \cite{water05}, several other signature schemes satisfy the preconditions of \cref{thm:main hardness of near critical Ising based on signature scheme} (see \cite{PS16,PS18}). In general, our techniques apply to any signature scheme with deterministic verification, which further satisfies that the distribution over $ (\pk,\msg, \sigma)$---obtained by sampling a uniformly random message $\msg$, then sampling  $ \sigma $ from  $\sign(\sk,\msg)$---is close to the uniform distribution over $\set{ (\msg,\sigma):\verify(\pk,\msg,\sigma)=1}$ in total variation distance. This condition can hold even when uniformity or exact regularity fails (see \cite{BB04} for an example).
\end{remark}
\section{Conclusion} 
\label{s:conclusion}
In this paper, we studied learning-to-sample as a clean algorithmic abstraction of the task underlying generative modeling, and showed that this problem can remain computationally intractable even when the learner is given both polynomially many training samples and explicit access to the model parameters. Under standard cryptographic assumptions, we construct bounded-width Ising models lying arbitrarily close to the spectral threshold for efficient sampling for which no efficient learner can generate fresh typical samples. This yields a cryptographic separation between parameter learning and learning-to-sample. Moreover, we formalize a memorize-or-hallucinate dichotomy: on our hard instances, any efficient learner must either replay information already present in the training data or place substantial mass on very rare configurations. 

There are many natural open directions. Most immediately, we prove hardness for every fixed $\gamma>1$. A natural question is what happens for $\gamma=1+o(1)$? Are there sharper quantitative limits on sample amplification or the memorize-or-hallucinate tradeoff? A broader question is whether there are other families of distributions for which a natural sampling tractability threshold coincides with the learning-to-sample threshold---and are there families of distributions in which these thresholds are different? 
\section*{Acknowledgements}
We thank Frederic Koehler and Niki Hasrati for insightful discussions. AR is supported in part by NSF awards IIS-2211907, CCF-2238523, IIS-2403275, an Amazon Research
Award, ONR award N000142512124, a Google Research Scholar Award, and an OpenAI Superalignment Fast Grant. The authors also thank the Simons Institute for the Theory of Computing, and part of this work was done while both authors were participants in the program on ``Modern Paradigms in Generalization''.
\bibliographystyle{plainnat}
\bibliography{references}

\appendix

\section{Relegated preliminaries on signature schemes}\label{sec:signature}

In this section, we recall the signature scheme of \cite{water05}, denoted by $(\keygen_{\mathrm{Wat}}, \sign_{\mathrm{Wat}}, \verify_{\mathrm{Wat}})$, and show that it satisfies the properties needed for our hardness reduction. The scheme is defined over a bilinear-pairing-friendly group $\mathbb{G}$ of prime order $p$, with generator $g$ and a bilinear map\footnote{See Subsection~\ref{subsec:prelim signature scheme}.} $e : \mathbb{G} \times \mathbb{G} \to \mathbb{G}_T$, where $\mathbb{G}_T$ is also of order $p$.

The scheme supports messages $m \in \set{0,1}^\ell, $ where $\ell\equiv \ell_{\msg}$ denotes the message length.

\begin{itemize}
    \item $\keygen_{\mathrm{Wat}}$:  
    The key generation samples
\[
h_0, h_1, \ldots, h_{\ell} \xleftarrow{\$} \mathbb{G}
\quad\text{and}\quad
\alpha, \beta \xleftarrow{\$} \mathbb{Z}_p.
\]
The public key is defined as
\[
\pk := (\mathbb{G}, g, g^\alpha, g^\beta, h_0, h_1, \ldots, h_{\ell}),
\]
and the secret key is $\sk := (pk, g^{\alpha\beta})$.

Next, we denote by $H : \{0,1\}^{\ell} \to \mathbb{G}$ the function mapping
\[
m \mapsto h_0 \prod_{i=1}^{\ell} h_i^{m_i},
\]
where for $i\in[\ell]$, we denote by $m_i \in \{0,1\}$ the $i$-th bit of $m$.
\item $\sign_{\mathrm{Wat}}(\sk, m):$ The signing algorithm takes as input a message $m \in \{0,1\}^{\ell}$.
The algorithm samples $r \xleftarrow{\$} \mathbb{Z}_p$ and computes
\[
\sigma_1 = g^r
\qquad\text{and}\qquad
\sigma_2 = g^{\alpha\beta} H(m)^r.
\]
Then it returns the signature $\sigma = (\sigma_1, \sigma_2)$.
\item $\verify_{\mathrm{Wat}}(\pk, m, \sigma):$ The verification algorithm returns $1$ (equivalently, $\valid$) if the equation
\[
e(g^\alpha, g^\beta)\cdot e(\sigma_1, H(m)) = e(g, \sigma_2)
\]
holds. Otherwise $0$ is returned.
\end{itemize}
The security of \cite{water05}'s signature scheme is based on the intractability of the computational Diffie-Hellman problem (see \cref{subsec:prelim signature scheme}).

\begin{theorem}[{\cite{water05,HK08}, restated in \cite[Theorem 2.3]{HJK12}}]\label{thm:water scheme security}
    Suppose there exists a forger $\mathcal{F}$ that $(t, q, \varepsilon)$-breaks the EUF-CMA security of $\mathsf{Sig}_{\mathsf{Wat}}$. Then there exists an algorithm $\mathcal{A}$ $(\varepsilon', t')$-solving the computational Diffie--Hellman problem in $\mathbb{G}$ in time $t' = O( t)$ with success probability $\varepsilon' =\Omega \left( \varepsilon \cdot \tfrac{1}{q \sqrt{\ell}  }\right)$.
\end{theorem}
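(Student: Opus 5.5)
The plan is the standard partitioning reduction of \cite{water05}, with the probability analysis replaced by the programmable-hash analysis of \cite{HK08}. Given a CDH instance $(g, g^a, g^b)$, the CDH solver $\mathcal{A}$ will simulate $\keygen_{\mathrm{Wat}}$ for the forger $\mathcal{F}$ by setting $g^\alpha := g^a$ and $g^\beta := g^b$. The implicit secret key is then $g^{\alpha\beta} = g^{ab}$, which is exactly what $\mathcal{A}$ must output.

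\textbf{Setting up the hash.} For each $i\in\{0,\dots,\ell\}$, sample a small integer $x_i$ and a uniform $y_i\in\Z_p$, and set $h_i := (g^b)^{x_i} g^{y_i}$. Following \cite{HK08}, take $x_1,\dots,x_\ell$ uniform in $\{-1,0,1\}$ and $x_0$ a suitable offset. Then
\[ H(m) = (g^b)^{F(m)} g^{G(m)}, \qquad F(m) = x_0 + \sum_{i=1}^{\ell} x_i m_i, \qquad G(m) = y_0 + \sum_{i=1}^{\ell} y_i m_i. \]
Because the $y_i$ are uniform, each $h_i$ is a uniform group element independent of the $x_i$. Hence the public key handed to $\mathcal{F}$ is perfectly distributed, and the vector $x=(x_i)$ stays information-theoretically hidden from $\mathcal{F}$.

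\textbf{Answering queries and extracting.} On a signing query $m$ with $F(m)\not\equiv 0 \pmod p$, pick $r'\in\Z_p$ uniformly and output
\[ \sigma_1 = g^{r'} (g^a)^{-1/F(m)}, \qquad \sigma_2 = H(m)^{r'} (g^a)^{-G(m)/F(m)}. \]
One checks that this equals an honest signature with randomness $r = r' - a/F(m)$, which is uniform. So each answer is distributed exactly as in the real game; in particular it is a uniform element of $\Sigma(\pk,m)$. If some query has $F(m)\equiv 0$, abort. When $\mathcal{F}$ outputs a valid forgery $(m^*,\sigma^*)$ with $F(m^*)\equiv 0$, we have $H(m^*) = g^{G(m^*)}$, and the verification equation gives $\sigma_2^* = g^{ab}\,(\sigma_1^*)^{G(m^*)}$. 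So $\mathcal{A}$ outputs $\sigma_2^*/(\sigma_1^*)^{G(m^*)}$. Every step costs $O(\ell)$ group operations per query, so the running time is $t'=O(t)$, treating $\ell,q$ as absorbed into $t$ as in \cite{HJK12}.

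\textbf{Main obstacle: the success probability.} It remains to show that, with probability $\Omega(1/(q\sqrt{\ell}))$, the partition is good: $F(m_i)\neq 0$ for all queried $m_i$ and $F(m^*)=0$. Since $x$ is independent of $\mathcal{F}$'s view, it suffices to lower bound this for every fixed tuple of distinct messages $m_1,\dots,m_q,m^*$.
\begin{itemize}
\item Over the integers, $F(m^*) - x_0$ is a lazy random walk of length $|m^*|$. By a local limit or anti-concentration bound, it hits any given value near the origin with probability $\Omega(1/\sqrt{\ell})$.
\item Choosing the offset $x_0$ appropriately at scale roughly $q$ gives $\Pr[F(m^*)=0]=\Omega(1/(q\sqrt{\ell}))$ after averaging over $x_0$.
\item Conditioned on $F(m^*)=0$, for each $i$ the difference $F(m_i)-F(m^*)$ involves at least one coordinate where $m_i$ and $m^*$ differ. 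So $\Pr[F(m_i)=0 \mid F(m^*)=0] \le O(1/q)$ (small enough constant), and a union bound over the $q$ queries keeps this conditional probability at least $1/2$.
\item Working over the integers with $|F|\le \ell+|x_0| \ll p$ ensures that vanishing mod $p$ coincides with vanishing over $\Z$.
\end{itemize}
This is precisely the $(q,1)$-programmability statement proved in \cite{HK08}; I would invoke it directly rather than reprove it, and the hitting-probability estimate there is the technical heart. Multiplying the probability of a good partition by $\varepsilon$ gives $\varepsilon'=\Omega(\varepsilon/(q\sqrt{\ell}))$.
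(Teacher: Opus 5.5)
Your embedding of $(g,g^a,g^b)$, the simulated signatures, the extraction $\sigma_2^*/(\sigma_1^*)^{G(m^*)}$, and the observation that $x$ is hidden from $\mathcal{F}$ are all correct. This is the Waters reduction the paper is citing; the paper itself gives no proof. The gap is in the distribution you chose for the partition. With $x_1,\dots,x_\ell$ uniform in $\{-1,0,1\}$, your third bullet is false. Let $m_i=m^*\oplus e_j$ (flip bit $j$) for some $j$ with $m^*_j=0$. Then $F(m_i)=F(m^*)+x_j$, and $x_j$ is independent of $F(m^*)$, so $\Pr[F(m_i)=0\mid F(m^*)=0]=1/3$, not $O(1/q)$. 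Now take a forger that queries $q$ such one-bit neighbours of its eventual forgery, using distinct $j$'s outside $\supp(m^*)$. This is legitimate, since only $m^*\notin\{m_1,\dots,m_q\}$ is required. For this forger your good-partition probability is exactly $(2/3)^q\Pr[F(m^*)=0]$, an exponential loss in $q$. No choice of the offset $x_0$ fixes this. The conditional collision probability is governed by the point masses of the individual coefficients on coordinates where the messages differ, and these must be $O(1/q)$. Invoking \cite{HK08} "directly" does not rescue the sketch as written, because their programmability theorem concerns a different trapdoor distribution from the one you specified.

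What \cite{HK08} do is make every coefficient, including $x_0$, a sum of $J=\Theta(q^2)$ independent uniform $\{-1,0,1\}$ steps. Then $\max_c\Pr[x_i=c]=O(1/\sqrt J)=O(1/q)$, and $F(m^*)$ is a lazy walk of length $J(|m^*|+1)$. A local limit theorem then gives $\Pr[F(m^*)=0]=\Theta(1/\sqrt{J(|m^*|+1)})=\Omega(1/(q\sqrt\ell))$. Note that you need a \emph{lower} bound here, which anti-concentration does not provide. The conditional bound then splits into two cases.
\begin{itemize}
\item If $\supp(m_i)\not\subseteq\supp(m^*)$, some coefficient in $F(m_i)$ is independent of the conditioning, and its point-mass bound gives $O(1/q)$.
\item If $\supp(m_i)\subsetneq\supp(m^*)$, the differing coordinates lie inside the conditioning event, so your ``involves a coordinate where they differ'' argument does not apply. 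Instead, write $F(m^*)=A+B$ with independent walks $A=F(m_i)$ and $B=F(m^*)-F(m_i)$. Then use $\Pr[A=0]\Pr[B=0]/\Pr[A+B=0]=O(1/\sqrt J)$, again from the local limit theorem.
\end{itemize}
With $J$ a large enough multiple of $q^2$, the union bound over queries keeps the conditional probability at least $1/2$. The integrality step uses $|F|\le J(\ell+1)<p$. If you want the setup cost to stay within $O(t)$, sample each $x_i$ directly from its law rather than as $J$ separate coin flips.
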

 
\begin{lemma}\label{lem:water scheme}
    The \cite{water05} signature scheme has deterministic verification and satisfies uniformity and exact regularity.
\end{lemma}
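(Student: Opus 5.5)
The plan is to verify the three properties directly from the algebraic description of the scheme, using only bilinearity and non-degeneracy of $e$ and the fact that $\mathbb{G}$ is cyclic of prime order $p$ with generator $g$.

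\emph{Deterministic verification.} $\verify_{\mathrm{Wat}}$ computes $H(m)=h_0\prod_i h_i^{m_i}$ from the public key and the message. It then evaluates three pairings and compares two elements of $\mathbb{G}_T$. None of these steps uses randomness, so the output is a deterministic function of $(\pk,m,\sigma)$ and can be compiled into a Boolean circuit.

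\emph{Characterizing $\Sigma(\pk,m)$.} Write $\sigma_1=g^{s}$ for the unique $s\in\mathbb{Z}_p$, which exists because $g$ generates $\mathbb{G}$. Also write $H(m)=g^{\theta}$ and $\sigma_2=g^{u}$. By bilinearity the verification equation reads
\[
e(g,g)^{\alpha\beta+s\theta}=e(g,g)^{u}.
\]
Non-degeneracy gives $e(g,g)\neq 1$, and $\mathbb{G}_T$ has prime order $p$, so $e(g,g)$ generates $\mathbb{G}_T$. Hence the equation holds iff $u\equiv \alpha\beta+s\theta \pmod p$, that is, iff $\sigma_2=g^{\alpha\beta}H(m)^{s}$. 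It follows that
\[
\Sigma(\pk,m)=\{(g^s,\,g^{\alpha\beta}H(m)^s): s\in\mathbb{Z}_p\}.
\]
The map $s\mapsto(g^s,\,g^{\alpha\beta}H(m)^s)$ is injective because its first coordinate already determines $s$. Therefore $|\Sigma(\pk,m)|=p$ for every message $m$ and every key, which is exact regularity with $s=p$. This holds even in the degenerate case $H(m)=1$, since injectivity comes from the first coordinate alone.

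\emph{Uniformity.} $\sign_{\mathrm{Wat}}$ draws $r$ uniformly from $\mathbb{Z}_p$ and outputs exactly the image of $r$ under the bijection $s\mapsto(g^s,\,g^{\alpha\beta}H(m)^s)$ onto $\Sigma(\pk,m)$. So its output is uniform on $\Sigma(\pk,m)$.

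The only step that needs care is the characterization of $\Sigma$. It must rule out valid signatures outside the image of the signing map, and it relies on two facts: the discrete-log representation in $\mathbb{G}$, and the fact that $e(g,g)$ has order exactly $p$. A minor encoding point remains. Group elements must be represented by fixed-length bit strings, and verification should reject strings that do not encode group elements. With that convention, $\Sigma$ as a set of bit strings coincides with the set above.
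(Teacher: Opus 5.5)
Your proof is correct and follows essentially the same route as the paper. Both show that every valid signature has the form $(g^r, g^{\alpha\beta}H(m)^r)$ for a unique $r\in\mathbb{Z}_p$, using non-degeneracy and prime order of the pairing; you phrase this with discrete logs, while the paper invokes its homomorphism and injectivity propositions. Both then read off $|\Sigma(\pk,m)|=p$ and uniformity from $r$ being uniform. Your remark that verification must reject bit strings not encoding group elements is a detail the paper leaves implicit.
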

We need the following simple facts about bilinear-pairings.
\begin{proposition}\label{prop:homomorphic}
    For $u\in \mathbb{G}_1$ and $v, v'\in \mathbb{G}_2,$
    \[ e(u, v) e(u, v') = e(u, v v')\]
    Similarly, for $u,u'\in \mathbb{G}_1$ and $v\in \mathbb{G}_2,$
    \[ e(u, v) e(u', v) = e(uu', v)\]
\end{proposition}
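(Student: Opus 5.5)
The plan is to reduce both identities to bilinearity by writing every group element as a power of the fixed generator of its cyclic group. Since $\mathbb{G}_1$ and $\mathbb{G}_2$ are cyclic of prime order $p$ with generators $g_1$ and $g_2$, I will write $u = g_1^{a}$, $v = g_2^{b}$ and $v' = g_2^{b'}$ for some integers $a,b,b'$. Then $vv' = g_2^{b+b'}$, because exponents add under the group operation in a cyclic group.

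For the first identity, I apply the bilinearity axiom $e(x^a,y^b)=e(x,y)^{ab}$, with $x=g_1$ and $y=g_2$, to each factor:
\[
e(u,v)\,e(u,v') = e(g_1,g_2)^{ab}\, e(g_1,g_2)^{ab'} = e(g_1,g_2)^{a(b+b')}.
\]
Applying bilinearity once more, the right-hand side equals $e(g_1^{a}, g_2^{b+b'}) = e(u, vv')$. The only point to check is that bilinearity is stated for all integer exponents, which it is, so the exponent $b+b'$ needs no reduction modulo $p$.

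The second identity is symmetric. I write $u=g_1^{a}$, $u'=g_1^{a'}$ and $v=g_2^{b}$, and compute
\[
e(u,v)\,e(u',v) = e(g_1,g_2)^{ab+a'b} = e(g_1,g_2)^{(a+a')b} = e(g_1^{a+a'}, g_2^{b}) = e(uu',v).
\]
I do not expect a real obstacle. The one step that needs care is justifying the discrete-log representation, and this follows directly from cyclicity of the groups. Non-degeneracy is not needed for this argument.
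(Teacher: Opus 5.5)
Your proof is correct. Writing each element as a power of the generator of its cyclic group and then applying the bilinearity axiom, which the paper states for all integer exponents, is the standard justification; the paper itself gives no proof and just calls these ``simple facts'' before using them in the proof of \cref{lem:water scheme}.
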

\begin{proposition}\label{prop:uniqueness}
    If $g_1$ is a generator of $\mathbb{G}_1,$ and $e(g_1,v) = e(g_1,v')$ for some $v,v'\in \mathbb{G}_2$ then $ v= v'.$

    Similarly, if $g_2$ is a generator of $\mathbb{G}_2,$ and $e(u,g_2) = e(u',g_2)$ for some $u,u'\in \mathbb{G}_1$ then $ u= u'.$ 
\end{proposition}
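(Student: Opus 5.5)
The plan is to reduce the statement to showing that $e(g_1,\cdot)$ has trivial kernel on $\mathbb{G}_2$, using \cref{prop:homomorphic} together with bilinearity and non-degeneracy.

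First I would fix the generators $g_1^\ast, g_2^\ast$ for which non-degeneracy $e(g_1^\ast,g_2^\ast)\neq 1$ is assumed. The given generator $g_1$ satisfies $g_1=(g_1^\ast)^a$ for some $a\not\equiv 0 \pmod p$. Bilinearity then gives $e(g_1,g_2^\ast)=e(g_1^\ast,g_2^\ast)^a$. The element $e(g_1^\ast,g_2^\ast)$ is a non-identity element of $\mathbb{G}_T$, which has prime order $p$, so it has order exactly $p$. Since $p\nmid a$, we get $e(g_1,g_2^\ast)\neq 1$, and this element also has order $p$.

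Next, set $w:=v^{-1}v'\in\mathbb{G}_2$, so that $v'=vw$. By \cref{prop:homomorphic}, $e(g_1,v')=e(g_1,v)\,e(g_1,w)$. The hypothesis $e(g_1,v)=e(g_1,v')$ then forces $e(g_1,w)=1$. Writing $w=(g_2^\ast)^b$ and using bilinearity once more gives $e(g_1,w)=e(g_1,g_2^\ast)^b$. This element equals $1$ only if $p\mid b$, because $e(g_1,g_2^\ast)$ has order $p$. Hence $w=1$ and $v=v'$.

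The second claim, for a generator $g_2$ of $\mathbb{G}_2$ and $u,u'\in\mathbb{G}_1$, follows by the symmetric argument. One uses the second identity of \cref{prop:homomorphic} and writes $u^{-1}u'=(g_1^\ast)^b$.

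I do not expect a real obstacle. The only subtle point is that non-degeneracy is stated only for the particular generators in the definition, not for an arbitrary one. This is why the first step transfers it to an arbitrary generator via the prime order of $\mathbb{G}_T$.
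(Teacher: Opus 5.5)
Your proof is correct and complete. You reduce to showing that $e(g_1,\cdot)$ has trivial kernel via \cref{prop:homomorphic}, and you transfer non-degeneracy from the generators fixed in the definition to an arbitrary generator using that $\mathbb{G}_T$ has prime order $p$. The paper gives no proof of this proposition; it is listed as a ``simple fact'' just before the proof of \cref{lem:water scheme}, so there is no alternative route to compare against. Your transfer step is what justifies the paper's use of the proposition in that proof with the scheme's generator $g$, which need not be the generator named in the non-degeneracy axiom.
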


\begin{proof}[Proof of \cref{lem:water scheme}]
    Clearly, verification is deterministic. We show that for any message $m$ and signature $ (\sigma_1, \sigma_2) \in \mathbb{G}\times \mathbb{G}$ where $ e(g^\alpha, g^\beta)\cdot e(\sigma_1, H(m)) = e(g, \sigma_2) ,$ then there exists a unique $ r\in\Z_p$ s.t.  $ \sigma_1= g^r $ and $\sigma_2 = g^{\alpha \beta} H(m)^r.$ Indeed, uniqueness of $r$ follows since $g$ is a generator of $\mathbb{G}.$ Now, using the homomorphic property in \cref{prop:homomorphic} of the pairing, we have
    \[ e(g^\alpha, g^\beta)\cdot e(\sigma_1, H(m))  = e(g, g^{\alpha\beta } H(m)^r) = e(g,\sigma_2) \Rightarrow \sigma_2 = g^{\alpha\beta } H(m)^r \]
    where we use \Cref{prop:uniqueness}. 

    Since $g$ is a generator, $g^r \neq g^{r'}\forall r\neq r',$ hence the support of $\sign(\sk, m)$ i.e. $ \set{ (g^r, g^{\alpha\beta} H(m)^r) |r\in \Z_p}$ has size exactly $ p.$ Clearly, $\sign(\sk, m)$ generates a uniformly random signature from the support of $\sign(\sk, m)$, since $g$ is a generator of $ \mathbb{G}$ and $r$ is sampled uniformly from $\Z_p.$
\end{proof}

\section{Relegated proofs from Section~\ref{sec:embed general signature scheme into Ising}} \label{sec:defer proofs from Section embedding signature into Ising}
We prove \cref{lem:embed general circuit into Ising} from Section~\ref{sec:embed general signature scheme into Ising}. We need the following definitions from \cite{moitra2021learning}.

\begin{definition}[NAND gadget graph,  {\cite[Definition 40]{moitra2021learning}}]\label{def:NAND gadget}
For a parameter $w > 0$, let $G_w$ be the weighted graph defined as follows.
$G_w$ has three vertices, $v_1, v_2,$ and $v_3$. There is an edge of
weight $-w$ between $v_1$ and $v_2$ and edges of weight $-2w$
between $v_1, v_3$ and $ v_2, v_3$. 
We give $v_1$ and $v_2$ biases of $w$ and $v_3$ a bias of $2w$.
\end{definition}

We can compose the NAND gadget graphs according to the following procedure.
\begin{definition}[{\cite[Lemma 42]{moitra2021learning}}]\label{def:composing NAND gate}
Fix a parameter $w > 0.$ Let $G$ be a weighted graph and $i$ and $j$ be two of its vertices. Next, let $G'$ be the weighted graph formed by making the following changes to $G$.
\begin{enumerate}
    \item The biases of $i$ and $j$ are increased
by $w$. 
\item Decrease the weight of the edge between $i$ and $j$ by $w$. If there is no edge between $i$ and $j$ in $G$, this means add an edge of weight $-w$ between $i$ and $j.$
\item Finally, $G'$ has a new vertex $k$ which has a bias of $2w$, is connected to
$i$ and $j$ by edges of weight $-2w$, and has no other edges. 
\end{enumerate}

For a circuit $\mathcal{C}$ on $n$ input bits consisting of $r$ NAND gates, let $ G_{\mathcal{C},w}$ be the weighted graph on $n+r$ vertices constructed by applying the above procedure to each NAND gate, starting from the empty graph on $n$ vertices.
\end{definition}

\begin{proof}[Proof of \cref{lem:embed general circuit into Ising}]
    Construct the weighted graph on $m:=n+r$ vertices corresponding to the circuit of $f$. Let $ \mu:\set{\pm 1}^m \to \R_{\geq 0}$ be the corresponding Ising model. Clearly, each edge weight and vertex weight has magnitude at most $ O(mw)$, so the entries of the interaction matrix and external field have magnitude at most $ O(mw).$ We label each NAND gate $g=((i,j)\to k)$ where $i,j$ are the two input
bits of gate $g$, and $k$ is its output bit.   The (un-normalized) density function of $\mu$ is given by
    \[ \mu(x) = \exp( w \sum_{g=((i,j)\to k)} A(x_i, x_j, x_k)) \]
    where the sum is over the NAND gates of the circuit, and 
    \[ A(x_i, x_j, x_k) = -x_i x_j -2x_k (x_i +x_j) + x_i + x_j + 2x_k.  \]
    Define $\mathrm{NAND}_{\pm} : \set{\pm 1}^2 \to \set{\pm 1}$ by 
    $
    \mathrm{NAND}_{\pm}(a,b)
    =
    \begin{cases}
        -1 &\text{ if } a=b=+1\\
        +1 & \text{ otherwise}
    \end{cases}$
then 
\begin{equation}\label{eq:NAND gadget log density}
  A(x_i,x_j,x_k) = \begin{cases}
        3 &\text{ if }  \mathrm{NAND}_{\pm}(x_i,x_j) =x_k\\
        \leq -1 & \text{otherwise}.
    \end{cases}  
\end{equation}
 Let  $\Lambda$ be the set of $x \in \set{\pm 1}^m,$ so that for each NAND gate $ g = ((i,j)\to k),$ $\mathrm{NAND}_{\pm}(x_i,x_j) =x_k.$
    Clearly, $ \Lambda \neq \emptyset.$ Observe that $ \nu$ is the uniform distribution over $ \Lambda.$ By \eqref{eq:NAND gadget log density}, $\mu(x) = W:= \exp(3 w r)$ for $x\in \Lambda,$ and $\mu(x)\leq W \exp(-4w)$ for $x\not\in \Lambda,$ so the conditional distribution of $\mu$ conditioned on the event $x\in \Lambda$ is exactly $\nu$ and
    \[ d_{TV}(\nu, \mu) \leq \Pr_{x\sim \mu} [x\not\in \Lambda] \leq \frac{\sum_{x\not\in \Lambda} \mu(x)}{\sum_{x\in \Lambda} \mu(x)} \leq 2^m \exp (-4 w ). \]

    Similarly, for $S\subseteq [m]$ and $\tau\in \set{\pm 1}^S,$ the (un-normalized) density function of $\mathcal{T}_{\mathbf{\lambda}^\tau} \mu$ is given by
    \[ \mathcal{T}_{\mathbf{\lambda}^\tau} \mu (x) = \exp( w \sum_{i,j,k} A(x_i, x_j, x_k) + w\sum_{i\in S} \tau_i x_i ). \]
     
      Let  $\Lambda^\tau$ be the intersection of $\Lambda$ with $\set{x: x_i = \tau_i \forall i\in S}.$ The assumption $ \Pr_{x\sim \nu}[x_S=\tau] >0$ implies $ \Lambda^{\tau} \neq \emptyset.$ Observe that $ \nu^{\tau}$ is the uniform distribution over $ \Lambda^{\tau}.$ By \eqref{eq:NAND gadget log density}, $\mathcal{T}_{\mathbf{\lambda}^\tau} \mu(x) =W':= \exp(3 w r + w |S|)$ for $x\in \Lambda^{\tau},$ and $\mathcal{T}_{\mathbf{\lambda}^\tau} \mu(x) \leq W'\exp(-2w)$ for $x\not\in \Lambda^\tau,$ so the conditional distribution of $\mathcal{T}_{\mathbf{\lambda}^\tau} \mu$ conditioned on the event $x\in \Lambda^{\tau}$ is exactly $\nu^{\tau}$ and
    \[ d_{TV}(\nu^{\tau}, \mathcal{T}_{\mathbf{\lambda}^\tau} \mu ) \leq \Pr_{x\sim \mathcal{T}_{\mathbf{\lambda}^\tau} \mu } [x\not\in \Lambda^{\tau}] \leq \frac{\sum_{x\not\in \Lambda^{\tau}} \mathcal{T}_{\mathbf{\lambda}^\tau} \mu(x)}{\sum_{x\in \Lambda^{\tau}} \mathcal{T}_{\mathbf{\lambda}^\tau} \mu(x)} \leq 2^m \exp (-2 w ). \]
    Choosing $ w = Cm$ for a sufficiently large constant $C>0,$  we obtain the desired conclusions.
    
\end{proof}

\section{Relegated proofs from Section~\ref{sec:embed general Ising into near critical Ising}}
\label{sec:defer proof from ising sparsification}

\begin{proof}[Proof of \cref{prop:property of q^+}]
Note that
\[ f_\beta'' (\alpha) = -\frac{1}{\alpha(1-\alpha)} + 4\beta \]
When $\beta >1,$ $f_{\beta}''$ has exactly two roots $ \alpha^+\in (\frac{1}{2},1), \alpha^-\in (0,\frac{1}{2})$ where $ \alpha^+ + \alpha^-= 1.$ Note that $f_{\beta}''(\alpha) >0$ for $ \alpha \in (\alpha^-, \alpha^+) $ and $f_{\beta}''(\alpha) <0$ for $\alpha \in (\alpha^+,1)\cup (0,\alpha^-).$ Note also that $ f_{\beta}'(1/2) =0$ and $f_{\beta}''(\alpha) >0$ for $ \alpha \in (\alpha^-, \alpha^+) $ hence $f_{\beta}'(\alpha^+)>0$ and  $ f_{\beta}'(\alpha^-)<0.$ 

Next, since $ \lim_{\alpha \to 1^-} f_{\beta}'(\alpha) = -\infty$ and $f'_{\beta}(\alpha)$ is strictly decreasing in $ (\alpha^+,1),$ the intermediate value theorem implies that there exists a unique $ q^+_{\beta}\in (\alpha^+,1)\subseteq (1/2, 1)$ where $f_{\beta}'(q^+_{\beta} )=0.$ Similarly, since $ \lim_{\alpha \to 0^+} f_{\beta}'(\alpha) = +\infty$ and $f'_{\beta}(\alpha)$ is strictly decreasing in $ (0,\alpha^-),$ the intermediate value theorem implies that there exists a unique $ q^-_{\beta}\in (0,\alpha^-)\subseteq (0,1/2)$ where $f_{\beta}'(q^-_{\beta} )=0.$ Since $ f_{\beta}'(1-q^+_\beta)=0,$ and $ 1-q^+_\beta \in  (0,\alpha^-),$ we must have that $ 1 -q^+_\beta = q^-_{\beta}.$

Since $q^+_{\beta}\in (\alpha^+,1)$, $ q^-_{\beta}\in (0,\alpha^-)$ and $f_{\beta}''(\alpha) <0$ for $\alpha \in (\alpha^+,1)\cup (0,\alpha^-),$ we have $ f_{\beta}'' (q^+_{\beta})<0$ and $  f_{\beta}'' (q^-_{\beta})<0.$ Since $f'_\beta$ is strictly decreasing in $(0, \alpha^-)$, $f'_\beta(\alpha) >f'_\beta(q^-_\beta) =0$ for $\alpha < q^-_\beta,$ and $f'_\beta(\alpha) <f'_\beta(q^-_\beta) =0 $ for $\alpha \in (q^-_\beta, \alpha^-].$ Since $ f'_\beta $ is strictly increasing in $ (\alpha^-,\alpha^+),$ $ f'_\beta(\alpha) < f'_\beta(1/2) = 0$ for $\beta \in (\alpha^-, 1/2).$ Hence, $ f_\beta$ is strictly increasing on $ (0,q^-_\beta),$ and strictly decreasing on $(q^-_\beta, 1/2).$ By an analogous argument, $ f_\beta$ is strictly increasing on $ (1/2, q^+_\beta),$ and strictly decreasing on $ (q^+_\beta, 1).$
This establishes the claim that $q^+_\beta$ and $q^-_\beta$ are maximizers of $ f_\beta.$

    Let $ g(\alpha) = \frac{  \ln\!\left(\frac{\alpha}{1-\alpha}\right)}{2\alpha -1 } $ then $ g(q^+_\beta) = \beta.$ To show that $ q^+_\beta$ is a strictly increasing function of $\beta,$ we only need to show that $ g$ is a strictly increasing function on $(1/2,1).$
    \begin{align*}
        g'(\alpha) &= \frac{(\frac{1}{\alpha} + \frac{1}{1-\alpha}) (2\alpha-1) -2 \ln\!\left(\frac{\alpha}{1-\alpha}\right)  }{(2\alpha-1)^2} :=\frac{h(\alpha)}{(2\alpha-1)^2} \\
     \text{where}\quad   h(\alpha) &= \frac{1}{1-\alpha}-\frac{1}{\alpha}-2\ln\alpha +2\ln(1-\alpha)\\
        h'(\alpha) &= \frac{1}{(1-\alpha)^2}+ \frac{1}{\alpha^2} -\frac{2}{\alpha}-\frac{2}{1-\alpha} = \frac{(2\alpha-1)^2}{(1-\alpha)^2 \alpha^2}
    \end{align*}
    Note that $h(1/2)=h'(1/2)=0$ and $h'(\alpha) > 0$ for $\alpha \in (1/2,1),$ hence $ h(\alpha)>0 $ for $\alpha \in (1/2,1),$ which implies $ g'(\alpha) >0$ for $\alpha\in (1/2,1).$ This establishes the second claim.
\end{proof}
\begin{proof}[Proof of \cref{prop:deviation bound for f alpha for bounded beta}]
We omit the subscript $\beta$ in the proof for convenience.
    First, we compute    
    \[ f^{(3)} (\alpha) =(f''(\alpha))' = (-\frac{1}{\alpha(1-
        \alpha)} + 4\beta)' = \frac{1}{\alpha^2} - \frac{1}{(1-\alpha)^2} .\]
        Since $ \beta \in (1,\beta_0)$ where $ \beta_0=2,$
        by \cref{prop:property of q^+},  $ q^+_\beta \leq q^+_{\beta_0} <1 .$

   By Taylor's theorem, there exists $\delta'_0>0$ that depends only on $\beta$ so that: for $\alpha\in [q^+_{\beta}-\delta'_0, q^+_{\beta} +\delta'_0]$, we have
    \[ |f(\alpha) - f(q^+_{\beta}) - \frac{1}{2} f''(q^+_{\beta}) (\alpha-q^+_{\beta})^2 |\leq |\alpha - q^+_{\beta}|^3 O(\max_{\alpha': |\alpha' - q^+_{\beta}| \leq \delta'_0}  |f^{(3)}(\alpha') |)\leq C_2 |\alpha - q^+_{\beta}|^3 , \]
    for some constant $C_2>0$ dependent only on $\beta.$
    
    Note that by \cref{prop:property of q^+}, $f''(q^+_{\beta})<0 . $ Choosing $ C_1 = - \frac{1}{2} f''(q^+_{\beta})>0$ yields: 
    \begin{equation}\label{eq:helper inequality for f deviation}
        |f(\alpha)  - f(q^+_\beta) + (\alpha-q^+_\beta)^2 C_1| \leq  C_2 |\alpha-q^+_\beta|^3 .
    \end{equation}

Let $\delta_0 = \min \set{\delta'_0, \frac{C_1}{3C_2}, q^+_{\beta}-1/2, 1- q^+_{\beta}},$ and take  $\delta\in (0, \delta_0].$ 

    By \cref{prop:property of q^+}, $ f$ is increasing on $ [1/2, q^+_{\beta})$ and decreasing on $(q^+_{\beta}, 1)$, so we only need to prove the claim for $ \alpha'\in \set{q^+_{\beta} -\delta, q^+_{\beta}+\delta}. $ Using \cref{eq:helper inequality for f deviation} for $\alpha'\in \set{q^+_{\beta}-\delta, q^+_{\beta} +\delta}$ and $\alpha \in [q^+_{\beta}-\delta/2, q^+_{\beta} +\delta/2]$, we have 
    \begin{align*}
         f(\alpha') - f(\alpha)&= f(\alpha')-f(q^+_{\beta}) + f(q^+_{\beta})-  f(\alpha)\\
         &\leq -\frac{3 C_1}{4}\cdot \delta^2 + \frac{9C_2}{8} \cdot \delta^3 \leq - C\delta^2  
    \end{align*}
    when we set $C= \frac{3C_1}{8}.$ 
\end{proof}
\begin{proof}[Proof of \Cref{prop:counting-to-sampling}]
We build a configuration $x^*\in \set{\pm 1}^n$ using the following recursive procedure.
For $k\in \set{1,\cdots, n}$ do the following
\begin{itemize}
    \item For $x_k \in \set{\pm 1},$ using $t$ i.i.d.\ samples from $\nu(\cdot\mid X_{[k-1]}=x_{[k-1]}^*)$, estimate $\nu(X_k=x_k\mid X_{[k-1]}=x_{[k-1]}^*)$ by the fraction of samples with coordinate $k$ equal to $x_k.$  Let $ \hat{\nu}(X_k=x_k\mid X_{[k-1]}=x_{[k-1]}^*)$ be the estimate. Note that for $k=1,$ this is equivalent to estimating the probability $\nu(X_1=x_1)$ for $x_1\in\set{\pm 1}.$  
    \item Choose $x_k^*= \arg\max_{x_k \in \set{\pm 1}} \hat{\nu}(X_k=x_k\mid X_{[k-1]}=x_{[k-1]}^*) $  (break ties arbitrarily).
\end{itemize}
    
    Let $ x^* = x_1^*x_2^*\cdots x_n^*\in \set{\pm 1}^n.$ For $1\leq k\leq n,$ let $ W_k = \nu(X_k = x_k^*|X_{[k-1]} = x_{[k-1]}^*)$ and $ \hat{W}_k $ be the empirical estimator of $W_k$ using $t$ fresh i.i.d. samples from $\nu(\cdot\mid X_{[k-1]}=x_{[k-1]}^*).$
 Let $W = \prod_{k=1}^n W_k = \frac{\nu(x^*)}{Z}$ and $\hat{W}= \prod_{k=1}^n \hat{W}_k, $
 then $ \E[\hat{W}]=W.$ The algorithm outputs $\hat{Z}= \frac{\nu(x^*)}{\hat{W}}$ as the estimator for the partition function $Z$ of $\nu.$
    
For the analysis, we need the following propositions.
\begin{proposition}[Unbiased mean-estimator theorem]\label{prop:unbiased mean estimator}
Let $X_1, \ldots, X_t$ be iid non-negative r.v.'s with common 
mean $\mu = \E[X_i]$ and variance $\sigma^2 = \Var(X_i)$, 
and let $Y$ be the sample mean $\frac{1}{t}\sum_{i=1}^t X_i$. 
Then we have
\[
\Pr\left[ |Y - \mu| \ge \varepsilon \mu \right] \le \frac{1}{4}
\]
provided $t \ge \frac{4}{\varepsilon^2}\frac{\sigma^2}{\mu^2}.$    
\end{proposition}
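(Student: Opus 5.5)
This is the standard Chebyshev argument. I will bound the relative deviation of the sample mean via its variance and then choose $t$ so that this variance is small enough.

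First, since the $X_i$ are i.i.d., linearity of expectation gives $\E[Y]=\mu$. Independence gives $\Var(Y)=\sigma^2/t$.

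Next, I will apply Chebyshev's inequality to $Y$ at deviation level $\varepsilon\mu$:
\[
\Pr\left[|Y-\mu|\ge \varepsilon\mu\right]\le \frac{\Var(Y)}{\varepsilon^2\mu^2}=\frac{\sigma^2}{t\,\varepsilon^2\mu^2}.
\]
Non-negativity of the $X_i$ is only used to make sense of the relative error. Wherever the statement is applied, $\mu>0$ is implicitly assumed; if $\mu=0$, then $X_i=0$ almost surely and the claim is trivial.

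Finally, I will substitute the hypothesis $t\ge \frac{4}{\varepsilon^2}\frac{\sigma^2}{\mu^2}$. This gives $\frac{\sigma^2}{t\varepsilon^2\mu^2}\le \frac14$, which is the claimed bound.

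I do not expect any step to be an obstacle. The only care needed is to track the factor $4$ correctly when choosing $t$, and to note the degenerate case $\mu=0$.
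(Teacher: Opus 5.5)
Your argument is correct and matches the paper's proof: compute $\E[Y]=\mu$ and $\Var(Y)=\sigma^2/t$, apply Chebyshev at deviation $\varepsilon\mu$, and substitute the lower bound on $t$. One correction to your aside: if $\mu=0$, the event $\{|Y-\mu|\ge\varepsilon\mu\}=\{|Y|\ge 0\}$ has probability $1$, so the claim does not hold trivially there; the hypothesis $t\ge\frac{4}{\varepsilon^2}\frac{\sigma^2}{\mu^2}$ is undefined ($0/0$), so the statement should be read as assuming $\mu>0$.
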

\begin{proof}
    Note that $\E[Y] = \mu$ and $\Var(Y) = \frac{\sigma^2}{t}$. 
Chebyshev's inequality then gives
\[
\Pr\left[ |Y - \mu| \ge \varepsilon \mu \right] 
    \le \frac{\mathrm{Var}(Y)}{(\varepsilon \mu)^2}
    = \frac{\sigma^2/t}{\varepsilon^2 \mu^2}. \]
\end{proof}
 Since the $\hat{W}_k$ are independent:
 \begin{equation}\label{eq:counting to sampling helper bound gamma hat W}
     \gamma(\hat{W}) = \frac{\Var(\hat{W})}{\E[\hat{W}]^2} = \frac{\E[\hat{W}^2] }{\E[\hat{W}]^2} -1 = \prod_{k=1}^n \frac{\E[\hat{W}_k^2] }{\E[\hat{W}_k]^2} -1 =\prod_{k=1}^n  (1+  \frac{\Var(\hat{W}_k)}{\E[\hat{W}_k]^2 }) - 1
 \end{equation}
 
 Since $ \hat{W}_k$ is the average of $t$ independent Bernoulli trials with mean $W_k,$ $\hat{W}_k$ is an unbiased estimator for $ W_k$ i.e. $ \E[\hat{W}_k] =W_k$ and moreover,
 \begin{equation}\label{eq:counting to sampling helper bound gamma hat W k}
  \gamma(\hat{W}_k):=\frac{\Var(\hat{W}_k)}{\E[\hat{W}_k]^2 } \leq \frac{W_k(1-W_k)/t}{W_k^2} \leq \frac{1}{t W_k}
 \end{equation}
We lower bound $W_k$ using the following proposition.
\begin{proposition}\label{prop:lower bound W k}
With probability at least $ 1- \frac{8}{t},$ $W_k \geq 1/4.$
\end{proposition}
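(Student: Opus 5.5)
The plan is to condition on the prefix $x^*_{[k-1]}$ already chosen and argue about the single step $k$ in isolation. Given the prefix, the $t$ samples used at step $k$ are fresh i.i.d.\ draws from $\nu(\cdot\mid X_{[k-1]}=x^*_{[k-1]})$. So the two empirical estimates $\hat{\nu}(X_k=+1\mid\cdot)$ and $\hat{\nu}(X_k=-1\mid\cdot)$ are complementary sample fractions, and they sum to $1$. Write $p_\pm=\nu(X_k=\pm1\mid X_{[k-1]}=x^*_{[k-1]})$, so $p_++p_-=1$.

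The key observation is deterministic. Since $x_k^*$ maximizes the empirical fraction and the two fractions sum to $1$, the empirical fraction of the chosen value satisfies $\hat{\nu}(X_k=x_k^*\mid\cdot)\ge 1/2$. Now suppose $W_k=p_{x_k^*}<1/4$. Then the value $s:=x_k^*$ has true probability $p_s<1/4$, yet its sample fraction is at least $1/2$. Hence the event $\{W_k<1/4\}$ is contained in
\[
\bigcup_{s\in\{\pm1\}:\, p_s<1/4}\Big\{\hat{\nu}(X_k=s\mid\cdot)\ge 1/2\Big\}.
\]
Since $p_++p_-=1$, at most one $s$ has $p_s<1/4$, so this is at most a single event and no union bound over two values is needed.

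For that single $s$, the sample fraction is the average of $t$ i.i.d.\ Bernoulli$(p_s)$ variables, with mean $p_s$ and variance $p_s(1-p_s)/t$. Reaching $1/2$ requires a deviation of more than $1/2-p_s>1/4$ above the mean. Chebyshev's inequality then bounds the probability by
\[
\frac{p_s(1-p_s)/t}{(1/4)^2}\le \frac{16\cdot(3/16)}{t}=\frac{3}{t}\le\frac{8}{t},
\]
using $p_s(1-p_s)\le 3/16$ for $p_s\le 1/4$. This bound holds for every realization of the prefix, so averaging over the prefix gives the unconditional statement $\Pr[W_k\ge 1/4]\ge 1-8/t$.

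I expect no real obstacle here. The only points needing care are two. First, $W_k$ depends on the same samples that determine $x_k^*$, so the argument must be phrased as containment of events, as above, rather than as a direct estimate of a fixed probability. Second, the conditioning on the previously chosen prefix must be handled correctly, which is legitimate because the samples at step $k$ are fresh. The slack in the constant ($3/t$ versus $8/t$) leaves room should the approximate-sample version later add a total variation error per sample.
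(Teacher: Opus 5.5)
Your proposal is correct and follows essentially the same route as the paper: apply Chebyshev's inequality to the empirical fraction of any value $c$ whose true conditional probability is below $1/4$, and combine this with the fact that the selected value $x_k^*$ has empirical fraction at least $1/2$. The paper union-bounds over both $c\in\{\pm1\}$ using the cruder variance bound $W_k^c/t$, which gives $4/t+4/t$; your observation that at most one value can have probability below $1/4$, together with your sharper variance bound, yields $3/t$, a harmless tightening.
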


\begin{proof}
    For $c \in \set{\pm 1},$ 
    let $W_k^c: =\nu(X_k = c |X_{[k-1]} = x_{[k-1]}^*)$ and let $\hat{W}_k^c$ be an empirical estimator of $ W_k^c$ using $t$ samples from $ \nu(\cdot |X_{[k-1]} = x_{[k-1]}^*). $ 
    Suppose $W_k^c < 1/4.$ By Chebyshev's inequality:
    \[\Pr[\hat{W}_k^c \geq 1/2] \leq \Pr[\hat{W}_k^c - W_k^c > 1/4]\leq \frac{\Var(\hat{W}_k^c)}{(1/4)^2} \leq \frac{W_k^c/t}{(1/4)^2}\leq \frac{4}{t}. \]
    Taking union over $ c\in \set{\pm 1},$ then with probability $ \geq 1-\frac{8}{t},$ $W_k \geq 1/4.$
\end{proof}
Let $\mathcal{E}$ be the event that $ W_k\geq \frac{1}{4}\forall k \in [n],$ which happens with probability $ 1-\frac{8n}{t}$ by a union bound.

Conditioned on $\mathcal{E}$, \cref{eq:counting to sampling helper bound gamma hat W} and \cref{eq:counting to sampling helper bound gamma hat W k} together implies that for $ t \geq \frac{128 n}{\varepsilon^2} $, $\gamma(\hat{W}) \leq \frac{\varepsilon^2}{32}$
and thus, by \cref{prop:unbiased mean estimator}, with probability $ \geq 1-1/8,$  $\hat{W} = (1\pm \varepsilon/2) W$ and thus $\hat{Z}= (1\pm \varepsilon ) Z.$ Hence, for $ t\geq  \frac{128 n}{\varepsilon^2} , $
\[ \Pr[\hat{Z} = (1\pm \varepsilon) Z] \geq \Pr[\mathcal{E}]-1/8 \geq 7/8 - \frac{8n}{t} \geq 3/4. \]
Take $t = \frac{128 n}{\varepsilon^2}.$ The total number of samples is $K = \Theta(nt) =\Theta(\frac{n^2}{\varepsilon^2}).$

If we only have access to samples from $\hat{\nu}(\cdot |X_{[k]} = x_{[k]}^*)$ that is $\delta'$-close to $\nu(\cdot |X_{[k]} = x_{[k]}^*) $ in total variation distance, then by a standard coupling argument, the samples from $\hat{\nu}(\cdot |X_{[k]} = x_{[k]}^*)$  are exactly equal to samples from $\nu(\cdot |X_{[k]} = x_{[k]}^*)$ for all $k\in [n]$ with probability $\geq 1-K\delta',$ hence the estimator $\hat{Z}$ constructed with the samples from $\hat{\nu}(\cdot)$ satisfies,
\[  \Pr[\hat{Z} = (1\pm \varepsilon) Z] \geq  3/4 - K\delta' \geq 5/8\]
when we choose $\delta'=\frac{1}{8K}.$
\end{proof}

\begin{proposition}\label{prop:inverse function of edge and vertex weight}
    Let $ \varphi_{\mathrm{vertex},\beta},  \varphi_{\mathrm{edge},\beta}$ be as defined in \cref{eq:define weight map function}, then their inverses are as defined in \cref{eq:inverse of edge and weight map function}.
\end{proposition}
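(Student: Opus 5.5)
The plan is a direct algebraic verification: show that each claimed inverse formula, composed with the forward map, gives the identity on the stated domain, and that the forward map is strictly increasing, hence injective. Throughout I use the fixed-point identity from \cref{eq:def of q+ and q-}. Write $a:=\beta(2q^+_\beta-1)>0$. Then \cref{eq:def of q+ and q-} gives $\ln(q^+_\beta/(1-q^+_\beta))=2a$, i.e. $\tanh(a)=2q^+_\beta-1$. Since $q^-_\beta=1-q^+_\beta$, we also have $\beta(2q^-_\beta-1)=-a$.

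\emph{Vertex map.} We have $\varphi_{\mathrm{vertex},\beta}(h)=\tfrac12\ln\frac{\cosh(h+a)}{\cosh(h-a)}$. Expand $\cosh(h\pm a)=\cosh h\cosh a\pm\sinh h\sinh a$ and divide numerator and denominator by $\cosh h\cosh a$. This gives
\[\varphi_{\mathrm{vertex},\beta}(h)=\tfrac12\ln\frac{1+\tanh h\tanh a}{1-\tanh h\tanh a}=\tanh^{-1}\bigl(\tanh(a)\tanh(h)\bigr)=\tanh^{-1}\bigl((2q^+_\beta-1)\tanh h\bigr).\]
This map is strictly increasing, with range $\{w:|\tanh w|<2q^+_\beta-1\}$. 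Solving $\tanh w=(2q^+_\beta-1)\tanh h$ for $h$ yields exactly the formula in \cref{eq:inverse of edge and weight map function}.

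\emph{Edge map.} Set $s=(q^+)^2+(1-q^+)^2$ and $d=2q^+(1-q^+)$, so $s+d=1$ and $s-d=(2q^+-1)^2=:b$. The argument of the logarithm in \cref{eq:define weight map function} is $\frac{se^w+de^{-w}}{se^{-w}+de^{w}}$. Substituting $e^{\pm w}=\cosh w\pm\sinh w$, the numerator becomes $(s+d)\cosh w+(s-d)\sinh w=\cosh w+b\sinh w$, and the denominator becomes $\cosh w-b\sinh w$. Hence
\[\varphi_{\mathrm{edge},\beta}(w)=\tfrac12\ln\frac{1+b\tanh w}{1-b\tanh w}=\tanh^{-1}\bigl((2q^+_\beta-1)^2\tanh w\bigr).\]
This is strictly increasing with range $\{|\tanh(\cdot)|<(2q^+_\beta-1)^2\}$, and inverting gives the stated formula.

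None of these steps is a real obstacle. The one point to be careful about is the $\tanh$ identity $\tanh(a)=2q^+_\beta-1$ derived from the stationarity condition; it is this identity that makes the vertex map collapse to a clean $\tanh^{-1}$ form. Finally, the domain conditions in \cref{eq:inverse of edge and weight map function} ensure that the arguments of $\tanh^{-1}$ lie in $(-1,1)$.
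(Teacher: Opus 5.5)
Your proposal is correct and follows essentially the same route as the paper: both show $\tanh\circ\varphi_{\mathrm{vertex},\beta}=(2q^+_\beta-1)\tanh$ and $\tanh\circ\varphi_{\mathrm{edge},\beta}=(2q^+_\beta-1)^2\tanh$, using $\tanh\bigl(\beta(2q^+_\beta-1)\bigr)=2q^+_\beta-1$ from the stationarity condition, and then invert. The paper gets there by writing out exponentials rather than hyperbolic addition formulas, and it leaves implicit the monotonicity and range check that you state explicitly.
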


\begin{proof}
Recall that:
  \begin{align*}
      \varphi_{\mathrm{vertex},\beta}(h) &= \frac{1}{2}\ln \left(\frac{\cosh(h +\beta (2 q^+_\beta-1) )}{\cosh (h +\beta (2 q^-_\beta -1) 
      )}\right)
      \\
      \varphi_{\mathrm{edge},\beta}(w) &= \frac{1}{2}\ln \left(\frac{((q^+_\beta)^{2}+(1-q^+_\beta)^{2})\,\exp(w)
      + 2q^+_\beta(1-q^+_\beta)\,\exp(-w) }{((q^+_\beta)^{2}+(1-q^+_\beta)^{2})\,\exp(-w)
      + 2q^+_\beta(1-q^+_\beta)\,\exp(w) }\right)
  \end{align*}
  Let $ a:= \varphi_{\mathrm{vertex},\beta}(h) $ and $u = \beta  (2 q^+_\beta-1) = -\beta  (2 q^-_\beta-1)$ then
  \begin{align*}
     \exp(2a) &=  \frac{\cosh(h +\beta (2 q^+_\beta-1) )}{\cosh (h +\beta (2 q^-_\beta -1) 
      )} = \frac{\exp(h +u) + \exp(- h - u) }{\exp(h - u ) + \exp(- h + u)  }\\
      \Rightarrow \tanh(a) = \frac{\exp(2a) -1}{\exp(2a)+1} &= \frac{(\exp(h) - \exp(-h)) (\exp(u) -\exp(-u)) }{(\exp(h) + \exp(-h)) (\exp(u) +\exp(-u))}  = \tanh (h) \tanh(u)\\
      \Rightarrow h = \tanh^{-1}\left(\frac{\tanh(a)}{\tanh(u)}\right) &=\tanh^{-1}\left(\frac{\tanh(a)}{2 q^+_\beta -1}\right) 
  \end{align*}
  For the last equality, recall from the definition of $q^+_\beta, q^-_\beta$ (see \cref{eq:def of q+ and q-}) that  $2u = \ln \frac{q^+_\beta}{q^-_\beta }$ and $q^+_\beta + q^-_\beta=1.$ Hence  \[\tanh (u) = \frac{e^{2u}-1}{e^{2u}+1} = \frac{\frac{q^+_\beta}{q^-_\beta } - 1}{ \frac{q^+_\beta}{q^-_\beta } + 1} = \frac{q^+_\beta - q^-_\beta}{q^+_\beta + q^-_\beta} = 2 q^+_\beta-1\]
Similarly, let $b := \varphi_{\mathrm{edge},\beta}(w) $ then
\begin{align*}
    \tanh(b) = \frac{\exp(2b)-1}{\exp(2b)+1} &= \frac{\left((q^+_\beta)^{2}+(1-q^+_\beta)^2 -  2q^+_\beta(1-q^+_\beta) \right) (\exp(w)-\exp(-w)) }{\left((q^+_\beta)^{2}+(1-q^+_\beta)^2 + 2q^+_\beta(1-q^+_\beta) \right) (\exp(w)+\exp(-w)) }\\
    &= (2 q^+_\beta -1)^2 \tanh(w)\\
    \Rightarrow w = \tanh^{-1}\left(\frac{\tanh(b)}{(2 q^+_\beta -1)^2}\right).
\end{align*}
\end{proof}


\end{document}